\documentclass[11pt]{article}
\usepackage[utf8]{inputenc}

\usepackage{natbib}
\usepackage{wustyle}

\usepackage[colorlinks,citecolor=blue,urlcolor=blue,linkcolor=blue,linktocpage=true,pagebackref=true]{hyperref}
\renewcommand*{\backref}[1]{}
\renewcommand*{\backrefalt}[4]{(\small%
    \ifcase #1 not cited%
          \or cited on page~#2%
          \else cited on pages #2%
    \fi%
    )}
    
\usepackage{multirow}
\usepackage{makecell}

\usepackage[none]{hyphenat}
\newcommand{\blfootnote}[1]{%
  \begingroup
  \renewcommand\thefootnote{}%
  \footnote{#1}%
  \addtocounter{footnote}{-1}%
  \endgroup
}

\title{Smoothness Adaptivity in Constant-Depth Neural Networks: Optimal Rates via Smooth Activations}

\author{
    Yuhao Liu$^{1}$\quad
    Zilin Wang$^{2}$\quad
    Lei Wu$^{2,3,4}$\quad
    Shaobo Zhang$^{2}$
      \\
    \\
  $^1$Department of Mathematical Sciences, Tsinghua University
    \\
  $^2$School of Mathematical Sciences, Peking University 
    \\
  $^3$Center for Machine Learning Research, Peking University
  \\
  $^4$AI for Science Institute, Beijing\\[.5em]
  \texttt{yh-liu21@mails.tsinghua.edu.cn},\quad  \texttt{wangzilin@stu.pku.edu.cn}\\ 
  \texttt{leiwu@math.pku.edu.cn},\quad  \texttt{zhangshaobo@stu.pku.edu.cn}\\ 
}
\date{\vspace*{-2em}}

\begin{document}

\maketitle
\blfootnote{All authors contributed equally, and the order follows the alphabetical convention.}

\begin{abstract}
Smooth activation functions are ubiquitous in modern deep learning, yet their theoretical advantages over non-smooth counterparts remain poorly understood. In this work, we study both approximation and statistical  properties of neural networks with smooth activations for learning functions in the Sobolev space $W^{s,\infty}([0,1]^d)$ with $s>0$.
We prove that constant-depth networks equipped with smooth activations achieve {\it smoothness adaptivity}: increasing width alone suffices to attain the minimax-optimal approximation  and estimation error rates (up to logarithmic factors).
In contrast, for non-smooth activations such as ReLU, smoothness adaptivity is fundamentally limited by depth: the attainable approximation order is bounded by depth, and higher-order smoothness requires proportional depth growth.
These results identify activation smoothness as a fundamental mechanism, complementary to depth, for achieving optimal rates over Sobolev function classes.
Technically, our analysis is based on a  multi-scale approximation framework that yields explicit neural network approximators with  controlled parameter norms and model size. This complexity control ensures statistical learnability under empirical risk minimization (ERM) and avoids the impractical $\ell^0$-sparsity constraints commonly required in prior analyses.
\end{abstract}

\doparttoc
\faketableofcontents
\part{}

\vspace*{-1.5em}
\section{Introduction}

Neural networks constitute a central model class in modern machine learning, with applications spanning computer vision, natural language processing, scientific computing, and generative modeling~\citep{krizhevsky2012imagenet,vaswani2017attention,raissi2019physics,radford2019language}. A central theoretical question concerns how architectural design gives rise to their observed performance. Critically, activation functions introduce the nonlinearities that enable  the composition of representations across  layers. Clarifying their structural role and their interaction with other architectural dimensions such as width and depth is therefore essential for a principled understanding of neural network expressivity and generalization.

In the early development of neural networks, sigmoid-type activations were introduced as differentiable surrogates of hard threshold units~\citep{mcculloch1943logical,rumelhart1986learning}.
However, their saturation in the tails leads to near-zero derivatives, resulting in vanishing gradients and hindering the training of deep networks~\citep{hochreiter1991untersuchungen,bengio1994learning}. 
To mitigate this issue, the non-smooth Rectified Linear Unit (ReLU) was introduced and quickly became dominant due to its simplicity and its avoidance of gradient saturation \citep{nair2010rectified,glorot2011deep}. Since then, ReLU has played a crucial role not only in influential architectures such as AlexNet~\citep{krizhevsky2012imagenet} and ResNet~\citep{he2016deep}, but also in much of the theoretical analysis of deep learning~\citep{schmidt2020nonparametric,weinan2019barron}.

Recently,  activation design has witnessed a renewed embrace of smooth activations after years of ReLU dominance---though not as a return to classical sigmoid-type functions. Contemporary choices such as Gaussian Error Linear Unit (GELU)~\citep{hendrycks2016gaussian} and Sigmoid Linear Unit (SiLU)~\citep{ramachandran2017searching, elfwing2018sigmoid}, together with their gated variants like SwiGLU~\citep{shazeer2020glu}, are designed to preserve smoothness while mitigating gradient saturation.
This emphasis on smoothness is particularly evident in scientific computing, where higher-order derivatives  are often required, for example in neural PDE solvers~\citep{raissi2019physics,weinan2018deep,lu2021learning}. Moreover, smooth activations have also become standard components of modern large-scale models. They are adopted in language models such as GPT~\citep{radford2019language}, LLaMA~\citep{grattafiori2024llama3herdmodels}, and DeepSeek~\citep{liu2024deepseek}, and are widely used in vision transformers and diffusion models~\citep{dosovitskiy2021image,ho2020denoising}.

This raises a natural question:
\begin{center}
{\it What are the theoretical advantages of smooth activations over their non-smooth counterparts?}
\end{center}
Early studies of neural networks with smooth activations were rooted in classical approximation theory~\citep{devore1989optimal,mhaskar1996neural,pinkus1999approximation}. These works established approximation error bounds for shallow networks, but typically without explicit control of parameter norms or model complexity. Thus, the approximation properties of smooth activations were not analyzed under complexity constraints that are essential for learning guarantees.
In contrast, much of modern deep learning theory has focused on networks with non-smooth activations, particularly ReLU~\citep{yarotsky2017error,schmidt2020nonparametric}. Consequently, the extent to which activation smoothness itself contributes to approximation efficiency under explicit complexity control has not been systematically characterized.

\subsection{Our Contributions}

In this work, we take a step toward addressing this gap. We study neural networks with smooth activation functions for learning  functions in the Sobolev space  $W^{s,\infty}([0,1]^d)$ for arbitrary  $s>0$. Our analysis is constructive: we explicitly build  network approximators with carefully controlled complexity and derive corresponding finite-sample estimation rates. Together, these results provide a unified characterization of approximation and statistical performance under smooth activations. The main contributions are summarized below.

\begin{itemize}
    \item \textbf{Smoothness adaptivity at constant depth.}
We prove that {\it constant-depth} (depth $6$ or $7$, depending on the metric) 
neural networks equipped with smooth activation functions 
achieve the optimal approximation rate
$
    O(N^{-s/d})
$
for arbitrary smoothness $s>0$, where $N$ denotes the total number of parameters.
Building upon this constructive approximation, we further establish that learning with ERM attains the minimax-optimal estimation rate
$
    O(n^{-2s/(2s+d)})
$
up to logarithmic factors, where $n$ is the sample size. Thus, both approximation and statistical optimality  are achieved at constant depth.
Unlike prior works \citep{bauer2019deep, schmidt2020nonparametric, suzukiadaptivity, ohn2019smooth}, this adaptivity is ``automatic''---it requires neither increasing the network depth  nor imposing intractable $\ell^0$-sparsity constraints.

    \item \textbf{Depth bottleneck for non-smooth activations.}
We establish an approximation lower bound for constant-depth ReLU networks. It shows that non-smooth activations 
cannot attain the minimax-optimal rate for arbitrary smoothness at fixed depth; rather, their achievable rate is intrinsically limited by depth. 
This yields a provable separation between smooth and non-smooth activations. Complementary numerical experiments  demonstrate that shallow networks with smooth activations exhibit faster generalization convergence when learning smooth targets, empirically supporting the theoretical separation.
\end{itemize}

In summary, our results offer a smoothness-adaptivity perspective on the approximation and generalization advantages of smooth activations over their non-smooth counterparts. This perspective provides a principled explanation for the widespread empirical adoption of smooth activations in modern architectures.

Technically, our analysis builds on two main ingredients: 
(i) a novel multiscale approximation scheme for piecewise constant functions that eliminates the need for sparsity constraints (Appendix~\ref{subsec:app_pwcf}); and 
(ii) a weighted superposition principle that lifts localized approximation guarantees to global $L^\infty$ error bounds (Appendix~\ref{subapp:L-infty}).

\paragraph*{Rethinking the role of depth.}
Our findings motivate a reconsideration of the role of depth in existing deep learning theory.
A substantial body of work establishes  generalization guarantees for deep ReLU networks~\citep{yarotsky2017error,liang2017deep,schmidt2020nonparametric,kohler2021rate,suzukiadaptivity}, reinforcing the view that increasing depth is essential for achieving smoothness adaptivity~\citep{telgarsky2016benefits,vardi2020neural}.
By contrast, we show that when smooth activations are employed, constant depth suffices to attain optimal rates over Sobolev classes.
Taken together, these results indicate that depth is not the only mechanism underlying smoothness adaptivity; activation regularity itself provides an alternative and theoretically sufficient route.

\section{Related Work}
The theoretical study of neural networks began with the universal approximation theorem~\citep{cybenko1989approximation,hornik1989multilayer,hornik1991approximation}, establishing that shallow networks can approximate continuous functions on compact domains arbitrarily well. Later work quantified the associated approximation and estimation rates along two main directions.

One line of work studies how neural networks mitigate the curse of dimensionality. A representative result is Barron’s theorem~\citep{barron1993universal}, which identifies function classes with dimension-independent approximation rates. Subsequent work has significantly extended this framework~\citep{devore1998nonlinear,kurkova2002bounds,bach2017breaking,ma2018priori,klusowski2018approximation,weinan2019barron,siegel2020approximation,wu2022spectral,caragea2023neural,siegel2024sharp,chen2025duality}.

Another line of research focuses on neural network approximation over classical function spaces, including Hölder and Sobolev spaces~\citep{mhaskar1995degree,mhaskar1996neural,pinkus1999approximation}. Here, the key question is whether networks can adapt to target function smoothness and attain the corresponding minimax-optimal rates. 
Our work contribute to this line by examining how activation smoothness interacts with depth and width in enabling smoothness adaptivity. We next provide a detailed comparison with prior work below, summarized in Table~\ref{tab:compare-results}.

\paragraph{Approximation results.}
Existing approximation results reveal two distinct mechanisms for achieving smoothness adaptivity.
\begin{itemize}
    \item One mechanism relies on increasing network depth. 
\citet{yarotsky2017error} and \citet{liang2017deep} show that deep ReLU networks with $N$ nonzero parameters achieve the optimal approximation rate $\tilde{O}(N^{-s/d})$ over $W^{s,\infty}([0,1]^d)$~\citep{devore1989optimal}. 
These results establish depth as a fundamental driver of smoothness adaptivity and have inspired a substantial body of subsequent work on deep network approximation~\citep{petersen2018optimal,Bolcskei2019optimal,shen2019nonlinear,Shen_2020,guhring2020error,kohler2021rate,lu2021deep,suzuki2021deep,gribonval2022approximation,hon2022simultaneous,kohler2022estimation,shen2022optimal,kohler2023estimation,siegel2023optimal,yang2023nearly,zhang2024classification,zhang2024deep,liu2025deep,yang2025deep}. 
However, the corresponding constructions require the network depth to grow with the target accuracy $\epsilon$ or smoothness $s$.
\item A second mechanism arises in classical studies of smooth activation functions~\citep{mhaskar1996neural,pinkus1999approximation}.  It is shown that shallow networks with infinitely differentiable, non-polynomial activations can attain optimal approximation rates. 
Yet these constructions lack explicit complexity control and often involve extremely large parameter magnitudes.
Recent works attempt to incorporate complexity control either through $\ell^0$-sparsity constraints~\citep{de2021approximation} or by restricting attention to higher-order non-smooth activations such as ReLU$^k$~\citep{mao2024approximation,yang2025optimal}. 
In these settings, smoothness adaptivity remains tied to depth growth or structural constraints.
\end{itemize}

In contrast, we show that constant-depth networks equipped with general smooth activations achieve full smoothness adaptivity—namely, the optimal approximation rate for arbitrarily high smoothness orders—while maintaining explicit norm control and without imposing intractable $\ell^0$-sparsity constraints. 
In our framework, smoothness adaptivity emerges from activation regularity rather than depth growth.

\paragraph{Generalization results.}
Beyond approximation, a natural question is whether these mechanisms also lead to optimal finite-sample estimation rates. 
For deep ReLU networks, minimax-optimal rates over Sobolev-type spaces have been established~\citep{schmidt2020nonparametric,suzukiadaptivity}. 
These guarantees, however, rely either on $\ell^0$-sparsity constraints or on increasing network depth; even constructions that avoid explicit sparsity still require depth growth~\citep{kohler2021rate}. 
For networks with smooth activations, generalization theory is comparatively less developed. 
Existing results typically impose $\ell^0$-sparsity constraints~\citep{bauer2019deep,ohn2019smooth} or provide non-adaptive guarantees, such as those for $\mathrm{ReLU}^k$ networks~\citep{yang2024nonparametric,yang2025optimal}. 
Building on our complexity-controlled approximation results, we establish minimax-optimal estimation rates for constant-depth networks with general smooth activations, without intractable $\ell^0$-sparsity constraint and without requiring depth growth.

\begin{table}[!t]
    \centering
    \renewcommand{\arraystretch}{1.45}
    \caption{\textbf{Comparison of approximation and learning results under different activation functions and architectural regimes.} 
    We compare prior work with our results along three key dimensions: depth requirement, $\ell^0$-sparsity constraint, and norm control. 
    Unless otherwise stated, the listed methods achieve the optimal approximation rate $O(N^{-s/d})$ and nearly optimal estimation rate $\tilde{O}(n^{-2s/(2s+d)})$, where $N$ and $n$ denote the number of non-zero parameters and samples, respectively. In summary, existing results typically require either (i) depth growing with target accuracy or smoothness, (ii) sparsity constraints, or (iii) smoothness saturation. 
    Our result is the only one that simultaneously achieves constant depth, explicit norm control, no sparsity constraints, and adaptivity to arbitrarily high smoothness orders.
}
    \vspace{6pt}
    \resizebox{0.95\textwidth}{!}{ 
    \begin{tabular}{|c|c|c|c|c|c|}
    \hline
    Activation & Reference & Depth &
    \makecell[c]{Free of\\ $\ell^0$-sparsity} &
    \makecell[c]{Norm\\ control} &
    Remark\\
    \hline
    \multirow{4}{*}{ReLU} & \makecell[c]{~\\[-9pt]\citet{yarotsky2017error}, \\[1pt] \citet{liang2017deep}, \\[1pt] \citet{schmidt2020nonparametric}\\[-10pt]~} & $O(\log(\frac{1}{\epsilon}))$ & $\times$ & $\checkmark$ &\\
    \cline{2-6}
    & \makecell[c]{~\\[-9pt] \citet{kohler2021rate}\\[-10pt]~} & $O(\log(\frac{1}{\epsilon}))$ & $\checkmark$ & $\checkmark$ & \\
    \cline{2-6}
    & \makecell[c]{\\[-9pt]\citet{petersen2018optimal}, \\[1pt] \citet{nakada2020adaptive}\\[-10pt]~} & $O(s\log(s))$ & $\times$ & $\checkmark$ & \\
    \cline{2-6}
    & \makecell[c]{~\\[-9pt] \citet{yang2024nonparametric}\\[-10pt]~} & $2$ & $\checkmark$ & $\checkmark$ & $s<\frac{d+3}{2}$ \\ 
    \hline
    {$\mathrm{ReLU}^k$} & \makecell[c]{~\\[-9pt]\citet{petrushev1998approximation}, \\[1pt] \citet{pinkus1999approximation}, \\[1pt] \citet{yang2025optimal}\\[-10pt]~} & $2$ & $\checkmark$ & $\checkmark$ & $s < \frac{2k+d+1}{2}$ \\
    \hline 
    \multirow{4}{*}{Smooth} & \makecell[c]{~\\[-9pt]\citet{mhaskar1996neural}, \\[1pt] \citet{pinkus1999approximation}\\[-10pt]~} & $2$ & $\checkmark$ & $\times$ & \makecell[c]{No learning\\ guarantees}\\
    \cline{2-6}
    & \makecell[c]{~\\[-9pt]\citet{de2021approximation}, \\[1pt] \citet{bauer2019deep}\\[-10pt]~} & $3$ & $\times$ & $\checkmark$ & \\
    \cline{2-6}
    & \makecell[c]{~\\[-9pt] \citet{ohn2019smooth}\\[-10pt]~} & $O(\log(\frac{1}{\epsilon}))$ & $\times$  & $\checkmark$ &  \\
    \cline{2-6}
    & \textbf{Ours} & $6,7$ & $\checkmark$  & $\checkmark$ & \\
    \hline
    \end{tabular}
    }
\label{tab:compare-results}
\end{table}

\section{Preliminaries}
\label{sec:preliminary}

\paragraph*{Notations.}

We write $\mathbb{N} \coloneqq \{0,1,2,\dots\}$ for the set of non-negative integers, and
$\mathbb{N}^d \coloneqq \{(\alpha_1,\dots,\alpha_d) : \alpha_i\in\mathbb{N},\, i=1,\dots,d\}$
for its $d$-fold Cartesian product. For a multi-index $\balpha=(\alpha_1,\dots,\alpha_d)\in\mathbb{N}^d$,
we define $|\balpha|\coloneqq \sum_{i=1}^d \alpha_i$.
For a positive integer $K$, we write $[K]\coloneqq\{1,2,\dots,K\}$ and $[K]^d$ for its $d$-fold Cartesian product.
For non-negative functions $f$ and $g$, we write $f(x)\lesssim g(x)$ (equivalently, $f(x)=O(g(x))$)
to mean that there exists a constant $C>0$ such that $f(x)\le C\,g(x)$ for all $x$ under consideration.
We write $f(x)=\tilde{O}(g(x))$ to suppress polylogarithmic factors, i.e.,
$f(x)=O\bigl(g(x)\polylog(x)\bigr)$.
Moreover, we write $f(x)\eqsim g(x)$ if both $f(x)\lesssim g(x)$ and $g(x)\lesssim f(x)$ hold.
For a region $D\subset\mathbb{R}^d$, the indicator function is denoted by $\mathbbm{1}_{D}(\cdot)$,
i.e., $\mathbbm{1}_{D}(\bx)=1$ if $\bx\in D$ and $0$ otherwise.

\paragraph*{The target function class.}
Throughout the paper, we fix the domain $\Omega = [0,1]^d$ and consider the Sobolev space $W^{s,\infty}(\Omega)$ with  $s > 0$, which serves as the target function class in our analysis. Concretely, $W^{s,\infty}(\Omega)$ is defined as follows:
    \begin{itemize}
        \item (Integer-order Sobolev spaces) For $s \in \mathbb{N}$, the space $W^{s,\infty}(\Omega)$ consists of functions $u \in L^{\infty}(\Omega)$ such that the weak derivatives $D^{\balpha} u \in L^{\infty}(\Omega)$ for all $\balpha \in \mathbb{N}^d$ with $|\balpha| \leq s$. The norm is given by
        \begin{equation*}
            \|u\|_{W^{s,\infty}(\Omega)} = \max_{|\balpha| \leq s} \|D^{\balpha} u\|_{L^{\infty}(\Omega)}, \quad u \in W^{s,\infty}(\Omega).
        \end{equation*}
        \item (Fractional-order Sobolev spaces) Let $s=m+\zeta$, where $m=\lfloor s \rfloor$ and $\zeta \in (0,1)$. 
For $u \in W^{m,\infty}(\Omega)$, define the Hölder seminorm
\[
[u]_{C^{m,\zeta}(\Omega)}
=
\max_{|\balpha|=m}
\sup_{\bx \neq \by \in \Omega}
\frac{|D^{\balpha} u(\bx)-D^\balpha u(\by)|}{\|\bx-\by\|_2^\zeta}.
\]
We define
\[
W^{s,\infty}(\Omega) = \{u\in W^{m,\infty}(\Omega)\,:\, [u]_{C^{m,\zeta}(\Omega)}<\infty\}
\]
equipped with the norm
\[
\|u\|_{W^{s,\infty}(\Omega)}
=
\|u\|_{W^{m,\infty}(\Omega)}
+
[u]_{C^{m,\zeta}(\Omega)}.
\]
    \end{itemize}

\paragraph*{The neural network model.}
We consider the model of fully-connected   networks
with activation function $\phi:\mathbb{R}\to\mathbb{R}$,
applied componentwise.
For depth $L\ge 1$, width $M\ge 1$, and parameter bound $B>0$, let
\begin{equation*}
\mathcal{H}^{\phi, L}(d_{\mathrm{in}},d_{\mathrm{out}},M,B)
=
\Big\{
\bx\mapsto 
\bW_L\,\phi\!\big(
\bW_{L-1}\phi(\cdots \phi(\bW_1\bx+\bb_1)\cdots)+\bb_{L-1}
\big)
+\bb_L
:\ (\bW_\ell,\bb_\ell)_{\ell=1}^L\in\Theta
\Big\},
\end{equation*}
where the parameter set $\Theta=\Theta(L,d_{\mathrm{in}},d_{\mathrm{out}},M,B)$ is given by
\[
\Theta :=
\Big\{
(\bW_\ell,\bb_\ell)_{\ell=1}^L:
\; \bW_1\in\mathbb{R}^{M\times d_{\mathrm{in}}},\;
\bW_\ell\in\mathbb{R}^{M\times M}\ (2\le \ell\le L-1),\;
\bW_L\in\mathbb{R}^{d_{\mathrm{out}}\times M},
\]
\[
\qquad\qquad
\bb_\ell\in\mathbb{R}^{M}\ (1\le \ell\le L-1),\;
\bb_L\in\mathbb{R}^{d_{\mathrm{out}}},
\;
\|\bW_\ell\|_{\infty,\infty}\le B,\;
\|\bb_\ell\|_\infty\le B\ (1\le \ell\le L)
\Big\},
\]
where  $\|\bW\|_{\infty,\infty} := \max_{i,j} |W_{ij}|$  and
$\|\bb\|_\infty := \max_i |b_i|$.

\paragraph*{The activation assumptions.}
Our primary objective is to understand how structural properties 
of the activation function influence the approximation power 
of neural networks. Specifically, we impose the following assumptions on $\phi$.
\begin{assumption}[Smoothness]
     $\phi$ is infinitely differentiable and is not a polynomial.
    \label{ass:smooth}
\end{assumption}

\begin{assumption}[Lipschitz continuity]
     $|\phi(x) - \phi(y)| \leq \|\phi\|_{\mathrm{Lip}} |x-y|$ for all $x, y\in \RR$. 
    \label{ass:Lip}
\end{assumption}
This  ensures that the activation grows at most linearly at infinity. It excludes activation functions with super-linear growth, such as SwiGLU~\citep{shazeer2020glu}, whose tail behavior is quadratic. 
We emphasize that this assumption is imposed for technical simplicity; our analysis can be extended to activations with polynomial growth tails with minor modifications.

\begin{assumption}
    The activation function $\phi$ satisfies one of the following:
    \begin{itemize}
        \item (Heaviside-like) There exists a constant $C_1 > 0$ such that 
        \begin{equation}
            \lvert \phi(t) - H(t) \rvert \leq C_1 \min \left\{ 1, \frac{1}{\lvert t \rvert} \right\}, \quad \forall\, t \in \mathbb{R},
            \label{eq:Heaviside-like}
        \end{equation}
        where  $H(t) = \mathbbm{1}_{[0,\infty)}(t)$ is the Heaviside function.
        \item (ReLU-like) There exists a constant $C_2 > 0$ such that 
        \begin{equation}
            \lvert \phi(t) - \mathrm{ReLU}(t) \rvert \leq C_2, \quad \forall\, t \in \mathbb{R},
            \label{eq:ReLU-like}
    \end{equation}
    where $\mathrm{ReLU}(t) = \max\{t,0\}$ is the ReLU function.
    \end{itemize}
    \label{ass:piece}
\end{assumption}

\paragraph*{Examples.}
Most activation functions used in practice satisfy the above assumptions. We list several representative examples together with their explicit forms:
\begin{align*}
{\mathrm{sigmoid}}(t) &= \frac{1}{1+e^{-t}},\qquad
\tanh(t) = \frac{e^{t}-e^{-t}}{e^{t}+e^{-t}},\\
\mathrm{SiLU}(t) &= t\,{\mathrm{sigmoid}}(t),\quad 
\mathrm{GELU}(t) = t\,\Phi(t),
\end{align*}
where $\Phi(t)=\int_{-\infty}^{t}\frac{1}{\sqrt{2\pi}}e^{-u^2/2}\,\dd u$ is the cumulative distribution function of $\cN(0,1)$.
In particular, $\mathrm{GELU}$ and $\mathrm{SiLU}$ are ReLU-like, whereas $\phi_{\tanh}(\cdot)=\tfrac12(1+\tanh(\cdot))$ and the sigmoid function are Heaviside-like.
See Appendix~\ref{app:verification} for detailed verification.

\section{Approximation Theory: Smoothness as an Alternative to Depth}
\label{subsec: construct-app}

The following theorem establishes an $L^2$-approximation  of functions in $W^{s,\infty}([0,1]^d)$ using constant-depth neural networks.
\begin{theorem}[$L^2$ approximation] \label{thm:app_L2}
    Let $\phi$ satisfy Assumptions~\ref{ass:smooth}--\ref{ass:piece}. For any $s>0$ and any $f^\star \in W^{s,\infty}([0,1]^d)$ with 
$\|f^\star\|_{W^{s,\infty}([0,1]^d)} \le 1$, and for any $\epsilon \in (0,1)$,  there exists a constant-depth neural network
\[
g \in \mathcal{H}^{\phi,L}(d,1,M_{\epsilon},B_{\epsilon})
\]
with
    \begin{equation}
       L=6, \qquad M_{\epsilon} \lesssim \left( \frac{1}{\epsilon}\right)^{\frac{d}{2s}}, \quad B_{\epsilon} \lesssim \left( \frac{1}{\epsilon}\right)^{\max \left\{\frac{d^2}{2s}+d, \frac{d}{s}+2, \frac{d+4}{2s}+5, \lceil s \rceil\right\}},
        \label{eq:norm_width_app_L2}
    \end{equation}
    such that 
    \begin{equation*}
        \left\| g - f^\star\right\|_{L^2([0,1]^d)} \leq \epsilon.
    \end{equation*}
\end{theorem}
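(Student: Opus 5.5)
The construction is a localized Taylor approximation in which smoothness is supplied by the activation rather than by depth. Set $K \eqsim \epsilon^{-1/s}$ and partition $[0,1]^d$ into $K^d$ cubes $Q_{\bbeta}$ of side $1/K$, indexed by $\bbeta\in[K]^d$. Let $m=\lceil s\rceil-1$. On each cube, $f^\star$ agrees with its degree-$m$ Taylor polynomial $P_{\bbeta}$ at the cube's corner up to $O(K^{-s})$ in $L^\infty$. Write
\[
f^\star(\bx)\approx \sum_{|\balpha|\le m} \frac{1}{\balpha!}\, c_{\balpha}(\bx)\,\bigl(\bx-\bx_{\bbeta(\bx)}\bigr)^{\balpha},
\]
where $c_{\balpha}(\bx)=D^{\balpha}f^\star(\bx_{\bbeta(\bx)})$ is piecewise constant on the grid. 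The network has to produce three things. First, the piecewise constant map $\bx\mapsto \bx_{\bbeta(\bx)}$ (the cube index). Second, the piecewise constant coefficients $c_{\balpha}$. Third, the monomials $(\bx-\bx_{\bbeta})^{\balpha}$ and their products with $c_{\balpha}$. This is an $L^2$ statement, so the network may fail on a thin boundary layer of measure $\delta$ around the grid hyperplanes, and that failure costs only $O(\sqrt{\delta})$ times a sup bound.

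\textbf{Step 1 (index/step functions).} For a Heaviside-like $\phi$, $\phi(R(t-k/K))$ approximates $\mathbbm{1}_{[k/K,\infty)}(t)$ away from a width-$R^{-1}$ layer. For a ReLU-like $\phi$, the difference $\phi(Rt)-\phi(Rt-1)$ gives the same, up to an $O(1)$ error scaled by $R^{-1}$ after a further rescaling. Summing $K$ such units per coordinate gives $\lfloor Kx_i\rfloor/K$ with width $O(dK)$. The step functions should be taken steep, with $R$ polynomial in $1/\epsilon$. This is one source of the large $B_\epsilon$.

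\textbf{Step 2 (coefficients; main obstacle).} A naive lookup table for the $K^d$ coefficient values costs width $K^d\eqsim\epsilon^{-d/s}$, which is too much: the stated width is $M_\epsilon\lesssim\epsilon^{-d/(2s)}$. I expect this to be the main difficulty. I would try the paper's multiscale scheme for piecewise constant functions (Appendix~\ref{subsec:app_pwcf}). Encode the cube index through a coarse level of $K^{d/2}$ cells and a fine level of $K^{d/2}$ subcells. Within each coarse cell, the coefficient table is then a function of only $K^{d/2}$ fine indices. Represent it through a one-dimensional encoding: map the fine index to a scalar $t\in\{1,\dots,K^{d/2}\}$, and realize each coarse cell's table as a sum of $O(K^{d/2})$ steep step functions in $t$, gated by the coarse indicator. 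A further useful saving comes from quantizing the coefficients and keeping the highest Taylor-order coefficients coarser. Every layer then has width $\eqsim K^{d/2}=\epsilon^{-d/(2s)}$, in the spirit of bit-extraction but with bounded depth. The multiscale decomposition is also what I expect to force the weight bound $\epsilon^{-(d^2/(2s)+d)}$, since the steepness has to separate $K^d$ distinct levels in the scalar encoding.

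\textbf{Step 3 (products and monomials).} This is where Assumption~\ref{ass:smooth} enters. Since $\phi$ is smooth and non-polynomial, there is a point $t_0$ with $\phi''(t_0)\neq0$. The combination
\[
\frac{\phi(t_0+hu)+\phi(t_0-hu)-2\phi(t_0)}{h^2\phi''(t_0)}=u^2+O(h^2u^4)
\]
gives squaring in one layer, and hence products via $uv=\tfrac14\bigl((u+v)^2-(u-v)^2\bigr)$. The error is $O(h^2)$ on bounded inputs, and the weights scale like $h^{-2}$. Monomials of degree at most $m$ in the local variable $\bx-\bx_{\bbeta}\in[0,1/K]^d$ can be produced at constant depth in the same way, using derivatives of order up to $m$ through finite differences of $\phi$ at a point where the relevant derivatives do not vanish. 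Their number is $\binom{m+d}{d}=O(1)$, so the width is negligible. The weights are $h^{-O(m)}$, which I expect to explain the $\epsilon^{-\lceil s\rceil}$ term in $B_\epsilon$. Multiplying by $c_{\balpha}$ costs one more product layer.

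\textbf{Step 4 (assembly and error).} Stack the layers: index extraction (1–2 layers), coefficient lookup (2 layers), and monomials and products (2 layers). Parallel channels are carried through with near-identity maps built from $\phi$ via a first-order difference quotient. This gives depth $6$. The total $L^2$ error is the sum of four terms: the Taylor error $K^{-s}\lesssim\epsilon$; the approximation errors of the products and quotients, each made $\le\epsilon$ by choosing $h$ and the quantization accordingly; and the boundary-layer error, which is $O(1)\cdot\sqrt{dK/R}$ because all quantities stay bounded. Taking $R\gtrsim K\epsilon^{-2}$ makes the boundary term $\le\epsilon$. Collecting the weight magnitudes from Steps 1–3 then gives the stated maximum exponent in $B_\epsilon$.
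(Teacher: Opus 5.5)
Your plan is essentially the paper's construction, with your $K$ playing the role of the paper's $K^2$. The shared ingredients are:
\begin{itemize}
\item piecewise polynomials on $\eqsim\epsilon^{-d/s}$ cells;
\item monomials and products from finite differences of $\phi$;
\item a coarse/fine factorization of the piecewise-constant coefficients;
\item an $L^2$ bound obtained by giving up a thin band on which everything stays bounded.
\end{itemize}
Your scalar encoding $t$ with one-dimensional steps is a harmless variant of the paper's refined-cell indicators.

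\textbf{The gap is in Step~1.} You threshold $\bx$ directly at the fine scale, computing $\lfloor Kx_i\rfloor/K$ with $K$ units per coordinate. You use this both for the local Taylor variable $\bx-\bx_{\bbeta}$ and for the fine index that feeds $t$. This costs width $dK\eqsim\epsilon^{-1/s}$. The width budget is $K^{d/2}$, so this is fine for $d\ge 2$. For $d=1$, however, it exceeds $M_\epsilon\lesssim\epsilon^{-1/(2s)}$ by a factor $K^{1/2}$, so your claim that every layer has width $\eqsim K^{d/2}$ is false in dimension one.

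The paper never thresholds $\bx$ at the fine scale. It locates the fine cell hierarchically. First it extracts the position relative to the coarse cell, using only $K^{1/2}$ thresholds per coordinate in your notation (Lemma~\ref{lem:relative_pos_app}). Then it applies another $K^{1/2}$ thresholds to that relative coordinate (Lemma~\ref{lem:relative_pos_indicator_app}). It also writes each cell polynomial in the global basis $\bx^{\balpha}$ (Lemma~\ref{lem:piece_poly_app}); the coefficients stay bounded because all cells lie in $[0,1]^d$, so no local variable is needed. You need this hierarchical extraction for the fine index. For the Taylor variable, you can either extract $\bx-\bx_{\bbeta}$ the same way or switch to global monomials. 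Either way the construction still fits in depth $6$.

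\textbf{Smaller points.}
\begin{itemize}
\item In Step~2, say explicitly that the steps $S_k(t)$ are shared by all coarse cells. The coarse-cell dependence should enter only through one dense $K^{d/2}\times K^{d/2}$ linear layer, followed by $K^{d/2}$ product gadgets. That matrix is where the $K^d$ coefficients live (cf.\ Lemma~\ref{lem:dD_pwfc_coarse_app}). Per-cell step functions would cost width $K^d$, and no quantization or bit extraction is needed.
\item The exponent $\frac{d^2}{2s}+d$ does not come from separating the levels of $t$; that needs weights only of order $K^{d/2}/\epsilon$ up to logarithms. It comes from the $d$-fold product network that forms coarse-cell indicators to accuracy $\epsilon K^{-d/2}$ (Lemmas~\ref{lem:dD_indicator_app} and~\ref{lem:dD_pwfc_coarse_app}).
\item With only $O(1/|t|)$ tails, a step $\phi(R(t-a))$ is accurate only outside a band of width $\delta$ with $R\delta\gg 1$. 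So in Step~4 the band width ($\delta\eqsim\epsilon^2/K$) and the steepness $R$ must be chosen separately. The weights still remain polynomial in $1/\epsilon$.
\end{itemize}
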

A proof sketch of this theorem is provided in Section~\ref{sec:proof_sketch_of_theorem_ref_thm_app_l2}, and the detailed proof can be found in Appendix~\ref{app:approximation}.

Specifically, to achieve approximation accuracy $\epsilon$, the required network width satisfies 
$M_\epsilon \lesssim \epsilon^{-\frac{d}{2s}}$. 
Since the depth is constant, the total number of parameters scales as 
$N = O(M_\epsilon^2) = O(\epsilon^{-\frac{d}{s}})$. 
Equivalently, in terms of the parameter budget $N$, this yields the approximation rate
\[
\|g - f^\star\|_{L^2([0,1]^d)} \lesssim N^{-s/d},
\]
which matches the optimal stable approximation rate for $W^{s,\infty}([0,1]^d)$~\citep{devore1998nonlinear}. In comparison, \citet{de2021approximation} obtain a comparable approximation order only under explicit $\ell^0$-sparsity constraints on the number of nonzero parameters, while the total parameter count grows at a strictly higher polynomial order in $\epsilon^{-1}$. Furthermore, our construction guarantees that the parameter norms are polynomially controlled, i.e., $B_\epsilon = O(\mathrm{poly}(\epsilon^{-1}))$, a vital property that ensures statistical learnability from finite samples.

\begin{remark}[Constant depth]
Theorem~\ref{thm:app_L2} shows that smooth activation functions enable constant-depth networks to adapt to arbitrarily high smoothness orders of the target function. Notably, the required depth remains fixed (here $L=6$), independent of both the target accuracy $\epsilon$ and the smoothness level $s$.
By contrast, existing approximation results for non-smooth activations typically require the network depth to grow either with the target accuracy (e.g., \citet{yarotsky2017error}) or with the smoothness level (e.g., \citet{petersen2018optimal,lu2021deep}).
\end{remark}

\begin{remark}[Width--norm trade-off]
Theorem~\ref{thm:app_L2} provides a sufficient joint scaling of network width and parameter norm to achieve a target approximation accuracy. In particular, it guarantees the existence of $(M_\epsilon, B_\epsilon)$ with polynomial dependence on $\epsilon^{-1}$.
It, however, does not fully characterize the trade-off between width and parameter norm. That is, we do not determine the best achievable approximation accuracy under prescribed width $W$ and parameter-norm bound $B$. Such a characterization would provide a more refined understanding of the approximation process. For our purposes, polynomial control of both width and norm suffices, as it enables the subsequent generalization analysis.
\end{remark}

As a refinement of Theorem~\ref{thm:app_L2}, we also establish an $L^{\infty}$ approximation result.
\begin{theorem}[$L^\infty$ approximation]
    Suppose $\phi$ and $f^\star$ satisfy the assumptions of Theorem~\ref{thm:app_L2}. Then, for any $\epsilon\in(0,1)$, there exists a neural network
    \begin{equation*}
        g \in \mathcal{H}^{\phi,L} (d, 1, M_{\epsilon}, B_{\epsilon}),
    \end{equation*}
    with 
    \begin{equation}
        L=7, \quad M_{\epsilon} \lesssim \left( \frac{1}{\epsilon}\right)^{\frac{d}{2s}}, \quad B_{\epsilon} \lesssim \left( \frac{1}{\epsilon}\right)^{\max \left\{ \frac{d^2}{2s}+d, \frac{d}{s}+2, \frac{d+4}{2s}+1, \lceil s \rceil, \frac{6}{s}+4\right\}} ,
        \label{eq:app-Linfty-width-norm}
    \end{equation}
    such that 
    \begin{equation*}
        \|g - f^\star\|_{L^{\infty}([0,1]^d)} \leq \epsilon. 
    \end{equation*}
    \label{thm:app_Linfty}
\end{theorem}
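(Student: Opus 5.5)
The plan is to upgrade the $L^2$ construction of Theorem~\ref{thm:app_L2} to a uniform bound with a weighted superposition (partition-of-unity) argument, at the cost of one extra layer. The weakness of Theorem~\ref{thm:app_L2} in $L^\infty$ is that its localized pieces, built from smooth approximations of piecewise constant indicators, are accurate only away from a thin neighborhood of the cell boundaries of a grid of mesh $1/K$. The $L^2$ error of that neighborhood is small because it has small measure, but pointwise it can be of order one. I would remove this defect by using several shifted grids.

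\textbf{Step 1 (shifted grids).} Fix $K\eqsim \epsilon^{-1/s}$ and a finite family of shifts $\bm\tau_j\in\{0,\tfrac{1}{2K}\}^d$, $j\in[2^d]$, or, more economically, $d+1$ shifts along the diagonal. Every $\bx\in[0,1]^d$ then lies at distance $\gtrsim 1/K$ from the cell boundaries of at least one shifted grid. For each shift, I would run the construction behind Theorem~\ref{thm:app_L2}, i.e.\ local Taylor polynomials of degree $\lceil s\rceil-1$ times approximate cell indicators. This gives networks $g_j$ of depth $6$ with $|g_j-f^\star|\le \epsilon$ on the ``good set'' $G_j$: points at distance $\ge \delta$ from the boundaries of grid $j$, with $\delta\ll 1/K$ polynomial in $\epsilon$. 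I also need a uniform bound $\|g_j\|_{L^\infty}\lesssim 1$ everywhere. This should follow because the approximate indicators are bounded, which is supplied by Assumption~\ref{ass:piece}.

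\textbf{Step 2 (weights).} Build smooth weights $w_j(\bx)=\prod_i \psi(\mathrm{dist}\text{-like}(x_i))$ with the following properties: $w_j\ge0$, $w_j$ vanishes (up to error $\epsilon$) off $G_j$, and $\sum_j w_j\approx 1$ uniformly. Each $w_j$ is realized by one or two smooth-activation layers. Heaviside-like or ReLU-like $\phi$ gives steep ramps with weights polynomial in $\epsilon^{-1}$; this is the source of the extra norm exponent $\tfrac{6}{s}+4$. Products are realized through the standard device $xy=\tfrac14((x+y)^2-(x-y)^2)$, with squares obtained from second finite differences of $\phi$ at a point where $\phi''\neq0$. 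Such a point exists by Assumption~\ref{ass:smooth}, and the construction uses weights $h^{-1},h^{-2}$ that again grow polynomially.

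\textbf{Step 3 (combine and count).} Output $g=\sum_j \mathrm{mult}(w_j,g_j)$, computed in parallel, with the multiplication adding the seventh layer. Then
\[
|g-f^\star|\le \sum_j w_j|g_j-f^\star| + \Big|1-\sum_j w_j\Big|\,|f^\star| + \text{mult. error}.
\]
On the support of $w_j$ the first sum is $\le\epsilon$. Off that support, $w_j\lesssim\epsilon$ multiplies the bounded $|g_j-f^\star|$. The remaining terms are $O(\epsilon)$. The width is multiplied only by the constant number of shifts, so $M_\epsilon\lesssim\epsilon^{-d/(2s)}$ is preserved. The norm bound is the maximum over the pieces, which gives \eqref{eq:app-Linfty-width-norm}.

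\textbf{Main obstacle.} The hardest step is Step 1: extracting from the $L^2$ proof a genuinely pointwise guarantee on $G_j$ together with a global $L^\infty$ bound on $g_j$. The $L^2$ argument may use the multiscale scheme in a way whose errors are controlled only in mean square. If so, I would re-run the piecewise-constant approximation with the transition width set to $\delta$, and verify that away from the transition regions the tail bounds in \eqref{eq:Heaviside-like}--\eqref{eq:ReLU-like} make the approximate indicators uniformly $\epsilon$-close to exact ones. That is where the modified exponent $\tfrac{d+4}{2s}+1$ would come from.
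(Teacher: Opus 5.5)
Your overall plan coincides with the paper's. It uses $2^d$ half-cell shifted grids and depth-$6$ networks that are $\epsilon$-accurate on the shifted interiors and uniformly bounded. This is exactly Theorem~\ref{thm:app_infinite_local} and its shifted version Theorem~\ref{thm:app_shifted}, so the $L^2$ construction is already pointwise on interiors and your ``main obstacle'' does not arise. It then takes a partition of unity that is negligible on the shifted bands, plus one product layer, giving depth $7$.

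The genuine gap is the width of the weight networks. In your notation the fine grid has $K\eqsim\epsilon^{-1/s}$ cells per coordinate. Each $w_j$ must be $O(\epsilon)$ within $\delta$ of each of the $\eqsim K$ grid hyperplanes in every coordinate. A one- or two-layer steep-ramp realization therefore spends at least one neuron per transition, hence width $\gtrsim K\eqsim\epsilon^{-1/s}$. Your claim that ``the width is multiplied only by the constant number of shifts'' overlooks this. For $d\ge 2$ it is harmless, since $\epsilon^{-1/s}\le\epsilon^{-d/(2s)}$. For $d=1$, however, the weight networks are wider than $M_\epsilon\eqsim\epsilon^{-1/(2s)}$ by a factor $\epsilon^{-1/(2s)}$. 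You would then have $\eqsim\epsilon^{-2/s}$ parameters, so neither the stated width nor the optimal rate $N^{-s/d}$ follows in one dimension.

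The missing idea is to apply the multiscale trick to the weights themselves, using their periodicity. Take $K$ a perfect square, as in the two-level grid. The paper first extracts the relative position of $x_l$ inside one of the $\sqrt K$ coarse intervals (Lemma~\ref{lem:relative_pos_app}). It feeds this into a shallow steep-ramp profile that only has to reproduce the $\sqrt K$ periods inside one coarse interval (Lemma~\ref{lem:app_wieght_band}). It then multiplies by a quasi-indicator that suppresses the output near coarse-interval endpoints, where the relative-position map is unreliable but the weight is itself negligible (Lemma~\ref{lem:app_univariate_weight_Linfty}). Finally, a $d$-fold product network assembles the multivariate weights (Lemma~\ref{lem:app-multi-weight-global}). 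This keeps the weight networks at width $O(\sqrt K)=O(\epsilon^{-1/(2s)})$ and depth $5$, so they run in parallel with the $g_j$.

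Two remarks on the norm exponents. The exponent $\frac{6}{s}+4$ arises from composing a profile of slope $\beta\eqsim\sqrt K/(\delta\epsilon)$ with this relative-position map, not from shallow ramps. The exponent $\frac{d+4}{2s}+1$ requires taking $\delta$ a fixed fraction of the fine mesh (the paper's choice is $\delta\eqsim 1/K$ in your notation), not polynomially smaller as in the $L^2$ proof. This is allowed here because the bands no longer need small measure.
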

Compared with the $L^2$ approximation guarantee in Theorem~\ref{thm:app_L2}, the $L^\infty$ result only requires a modest increase in depth from $6$ to $7$. The proof for $L^\infty$ approximation follows a strategy similar to that of Theorem~\ref{thm:app_L2}, with the addition of a single layer designed to implement a weighted superposition principle. This mechanism effectively upgrades localized approximation guarantees to a global $L^\infty$ bound. For a complete derivation, we refer the reader to Appendix~\ref{subapp:L-infty}.

\section{Learning Theory: Achieving Optimal Risk without Sparsity}
\label{subsec: generalization}

We now leverage the above constructive approximation results  to derive  generalization bounds  for learning target functions in $W^{s,\infty}([0,1]^d)$.

Let $\{(\bx_i,y_i)\}_{i=1}^n$ be i.i.d.\ samples generated according to
\begin{equation}
\label{eq:data_nonparametric}
y_i = f^\star(\bx_i) + \xi_i, 
\qquad i=1,\dots,n,
\end{equation}
where  $\bx_i \stackrel{iid}{\sim} \rho$ with $\rho$ being a distribution  supported on $[0,1]^d$ 
and the noises $\xi_i \sim \mathcal{N}(0,\sigma^2)$ are independent of $\bx_i$. We consider ERM over the network class:
\begin{equation} 
    \widehat{f}_n = \argmin_{f \in \mathcal{H}^{\phi,L}(d, 1,M_n,B_n)} \frac{1}{n} \sum_{i=1}^n \Big( y_i -(\mathbb{T}_F f)(\bx_i)\Big)^2.
    \label{eq:ERM}
\end{equation}
Here $\mathbb{T}_{F}$ denotes the truncation operator, defined for $F>0$ by
\begin{equation*}
    (\mathbb{T}_{F} f)(\bx) = \begin{cases}
        f(\bx), &\text{if} \;|f(\bx)| \leq F,\\
        \operatorname{sign}(f(\bx)) F, & \text{if}\; |f(\bx)|>F.
    \end{cases}
\end{equation*}
This truncation ensures uniform boundedness of the hypothesis class and is standard in generalization  analysis of ERM estimators.

\begin{theorem}
\label{thm:sample-com}
 Let $\phi$ satisfy Assumptions~\ref{ass:smooth}--\ref{ass:piece} and assume the noise variance $\sigma^2\gtrsim 1$. Fix any $s>0$ and let $f^\star \in W^{s,\infty}([0,1]^d)\cap C([0,1]^d)$ satisfy $\|f^\star\|_{W^{s,\infty}([0,1]^d)}\le 1$. Then, for any positive integer $n$, choosing
    \begin{equation}
        L= 7, \quad M_n \eqsim  n^{\frac{d}{4s+2d}} , \quad  B_n \eqsim n^{\max \left\{ \frac{d}{2}, 1, \frac{2s+d+4}{2(2s+d)}, \frac{s \lceil s\rceil}{2s+d},\frac{4s+6}{2s+d} \right\}} , \quad F = 2,
        \label{eq:est_LMBF}
    \end{equation}
    we have
    \begin{equation}
        \mathbb{E}_{} \left[ \left\| \mathbb{T}_{F} \widehat{f}_n - f^\star\right\|_{L^2(\rho)}^2  
        \right] \lesssim  n^{-\frac{2s}{2s + d}} \log n,
        \label{eq: gen_rate_upper}
    \end{equation}
     where  the expectation is taken over the sampling of the training data.
     \label{thm:est}
\end{theorem}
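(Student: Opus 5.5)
The proof follows the standard oracle-inequality route for truncated least-squares ERM: bound the risk by an approximation term plus an estimation term controlled by the covering entropy of the network class, then balance the two with the choices in \eqref{eq:est_LMBF}.

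\textbf{Step 1 (oracle inequality).} Since $|f^\star|\le 1 < F=2$, I would use a uniform-entropy bound for truncated least squares with Gaussian noise, e.g. the Györfi--Kohler--Krzyżak--Walk type result or the Schmidt-Hieber lemma. It states that
\[
\mathbb{E}\|\mathbb{T}_F\widehat f_n-f^\star\|_{L^2(\rho)}^2 \lesssim \inf_{f\in\mathcal H}\|f-f^\star\|_{L^\infty}^2+\frac{(\sigma^2+F^2)\,\log \mathcal N(1/n,\mathbb{T}_F\mathcal H,\|\cdot\|_\infty)}{n},
\]
where $\mathcal H=\mathcal{H}^{\phi,7}(d,1,M_n,B_n)$. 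The approximation term is measured in $L^\infty$ so that it controls $L^2(\rho)$ for an arbitrary $\rho$. This is also where continuity of $f^\star$ is used, since it makes pointwise evaluation meaningful.

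\textbf{Step 2 (approximation).} I apply Theorem~\ref{thm:app_Linfty} with $\epsilon=n^{-s/(2s+d)}$. This gives a depth-$7$ network of width $\lesssim \epsilon^{-d/(2s)}=n^{d/(4s+2d)}\eqsim M_n$. Its norm is at most $\epsilon^{-\max\{\dots\}}$, and multiplying each exponent by $s/(2s+d)$ shows this is at most the stated $B_n$; for instance $\lceil s\rceil\cdot s/(2s+d)$ and $(6/s+4)s/(2s+d)=(4s+6)/(2s+d)$ match the listed terms directly. The approximation term is therefore $\lesssim n^{-2s/(2s+d)}$.

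\textbf{Step 3 (covering number).} This is the main step. By Assumption~\ref{ass:Lip}, $\phi$ is Lipschitz, and on $[0,1]^d$ every hidden activation is bounded by $(M B)^{O(L)}$. A layer-by-layer perturbation argument then shows that the map from parameters to $\|\cdot\|_\infty$ on functions is Lipschitz with constant $(\|\phi\|_{\mathrm{Lip}}M_nB_n)^{O(L)}$. Covering the parameter box $[-B_n,B_n]^P$, with $P\lesssim M_n^2 L$, then gives
\[
\log\mathcal N(1/n)\lesssim P\,L\log(nM_nB_n)\lesssim M_n^2\log n,
\]
because $B_n$ and $M_n$ are polynomial in $n$ and $L=7$ is constant. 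Truncation is $1$-Lipschitz, so it does not increase covering numbers. The point of concern is that this step needs polynomial, not exponential, norm bounds. That is exactly what Theorem~\ref{thm:app_Linfty} supplies, and it is why no $\ell^0$ sparsity is needed: the count uses all $M_n^2$ parameters.

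\textbf{Step 4 (balance).} The estimation term is $M_n^2\log n/n = n^{d/(2s+d)-1}\log n=n^{-2s/(2s+d)}\log n$. Adding the approximation term from Step 2 gives \eqref{eq: gen_rate_upper}.

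The main obstacle is applying the oracle inequality correctly with unbounded Gaussian noise while keeping only a single $\log n$ factor. I would handle this by conditioning on the design, using the Gaussian sub-chaining (empirical process) bound for least squares, and then passing from the empirical norm to $L^2(\rho)$ via a standard ratio-type concentration for the bounded class $\mathbb{T}_F\mathcal H$.
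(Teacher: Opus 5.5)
Your proposal is correct and follows the paper's proof: it applies the Schmidt-Hieber oracle inequality (Lemma~\ref{lem:aos_lem}) to the truncated class, invokes Theorem~\ref{thm:app_Linfty} with $\epsilon \eqsim n^{-s/(2s+d)}$ (continuity of $f^\star$ turning the uniform bound into an $L^2(\rho)$ bound for arbitrary $\rho$), bounds the entropy by the parameter-perturbation argument of Lemma~\ref{lem:covering_number} together with the $1$-Lipschitz truncation, and balances the terms to obtain $n^{-2s/(2s+d)}\log n$. The differences are only cosmetic: the paper takes $\tau \eqsim n^{-2s/(2s+d)}$ rather than $1/n$, which is immaterial since $\log(1/\tau)\lesssim \log n$ either way, and the chaining re-derivation in your last paragraph is unnecessary because Lemma~\ref{lem:aos_lem} already handles the Gaussian noise with a single $\log n$ factor.
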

We provide a proof sketch of this theorem in Section~\ref{sec:proof_sketch_of_theorem_ref_thm_est}, with the complete derivation deferred to Appendix~\ref{app:est}. The fundamental insight of the proof lies in the fact that the constructive approximation in Theorem~\ref{thm:app_L2} allows for precise control over the complexity of the hypothesis class.

It is well known that the minimax optimal rate 
for learning functions in the Sobolev class 
$W^{s,\infty}([0,1]^d)$ 
is $O(n^{-\frac{2s}{2s+d}})$ 
\citep{stone1982optimal,tsybakov2009introduction}. Therefore,
Theorem~\ref{thm:sample-com} shows that, 
when equipped with smooth activation functions, 
constant-depth neural networks achieve this optimal rate 
(up to logarithmic factors). In contrast, for non-smooth activations,  achieving the same rate typically requires the network depth  to grow with the sample size~\citep{schmidt2020nonparametric,suzukiadaptivity,ohn2019smooth}. Compared to these prior works, we successfully remove the requirement of $\ell^0$-sparsity constraint, thereby rendering the ERM practically implementable.

In Theorem~\ref{thm:sample-com}, empirical risk minimization is performed over neural networks whose parameters satisfy an $\ell^\infty$-norm constraint. A slight modification of the proof shows that the same optimal risk bound holds under the more commonly used $\ell^2$-norm constraint (Theorem~\ref{thm:est-l2} in Appendix~\ref{app:est}). Given the close connection between $\ell^2$ regularization and weight decay techniques widely used in practice, this extension further aligns our theoretical guarantees with standard training procedures.

\begin{remark}
In Theorem~\ref{thm:sample-com}, no structural assumption is imposed on the input distribution $\rho$. 
This distribution-free guarantee stems from the $L^\infty$ approximation result (Theorem~\ref{thm:app_Linfty}). 
The underlying reason is that $L^\infty$ control provides uniform pointwise error bounds, which immediately control the $\rho$-weighted $L^2$ risk for any probability measure $\rho$. 
By contrast, $L^2$ approximation with respect to the Lebesgue measure only ensures average control under a reference measure and does not preclude the error from concentrating on regions where $\rho$ assigns substantial mass. 
Consequently, employing the $L^2$ approximation result (Theorem~\ref{thm:app_L2}) would in general necessitate the additional assumption that $\rho$ admits a uniformly bounded density in order to transfer the approximation result to a generalization guarantee.
\end{remark}

\section{The Depth Bottleneck for Non-Smooth Activations}
\label{sec:separation}

We first establish a quantitative limitation of constant-depth ReLU networks,  whose proof is deferred to Appendix~\ref{app:lower-bound}.

\begin{proposition}[Approximation lower bound for constant-depth ReLU networks]
\label{prop:relu_lower_bound}
    Fix a depth $L\ge 2$ and a smoothness parameter $s>0$. Then there exists a constant
    $C_{s,L}>0$, depending only on $s$ and $L$, such that for every $M\ge 2$,
    \[
    \sup_{\|f^\star\|_{W^{s,\infty}([0,1])}\le 1}\;\,
    \inf_{g\in \mathcal{H}^{\mathrm{ReLU},L}(1,1,M,\infty)}
    \|g-f^\star\|_{L^2([0,1])}
    \;\ge\;
    C_{s,L}\,(M\log M)^{-2\min\{L-1,s\}} .
    \]
\end{proposition}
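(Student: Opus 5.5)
Throughout, write $r := \min\{L-1,s\}$. The plan is to use the fact that a ReLU network of depth $L$ and width $M$ in one dimension is a continuous piecewise linear function with few pieces. We then show that piecewise linear functions with few pieces cannot approximate a suitably chosen smooth target well in $L^2$.

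\textbf{Step 1 (counting pieces).} Any $g\in\mathcal{H}^{\mathrm{ReLU},L}(1,1,M,\infty)$ is continuous piecewise linear on $[0,1]$. It has at most $K \lesssim (cM)^{L-1}$ pieces, where $c$ is a universal constant. This follows from the standard Telgarsky/Montúfar-type induction over hidden layers. Each hidden layer has $M$ neurons, and each neuron's pre-activation is piecewise linear with at most $K_{\ell}$ pieces. Applying ReLU to such a function adds at most $K_\ell$ breakpoints. Summing over the $M$ neurons gives $K_{\ell+1}\le (M+1)K_\ell$. With $L-1$ hidden layers we get $K\le (M+1)^{L-1}$. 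No bound on the weights is needed, which is consistent with $B=\infty$ in the statement.

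\textbf{Step 2 (hard target).} We pick a single bump-type target whose local curvature defeats every piecewise linear approximant with $K$ pieces. The reference rate is $K^{-\min\{s,2\}}$; we then translate this into $M$ and compare with the claimed exponent.

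\emph{Case $s\ge 2$.} Take $f^\star(x)=c_s x^2$, which has $W^{s,\infty}$ norm at most $1$ for small $c_s$. On any interval of length $h$, the best linear $L^2$ approximation of $x^2$ has squared error of order $h^5$. Summing over $K$ pieces and using convexity (Hölder) gives total squared error at least $\gtrsim K\,(1/K)^5=K^{-4}$. Hence the $L^2$ error is at least $\gtrsim K^{-2}\gtrsim M^{-2(L-1)}$.

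\emph{Case $s<2$.} We use a lacunary sum of bumps at scales $2^{-j}$ with amplitudes $2^{-js}$, so that the target lies in $W^{s,\infty}$. A piecewise linear function with $K$ pieces leaves at least $K$ of the $2K$ bumps at scale $1/(2K)$ untouched, since those bumps contain no breakpoint. A linear function cannot fit a bump on its support, which gives an error of order $K^{-s}$.

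\textbf{Step 3 (comparing exponents).} The claimed exponent $2\min\{L-1,s\}$ is weaker than what Step 2 gives, since $(M\log M)^{-2r}\le M^{-2r}$. If $L-1\le s$, the bound $K^{-2}$ with $K\le (M+1)^{L-1}$ gives exactly $M^{-2(L-1)}$. If $s<L-1$, the claimed bound reads $(M\log M)^{-2s}$. For $s\ge2$ this is implied by $M^{-2(L-1)}\ge M^{-2s}$ up to constants. For $s<2$, the bumps give $K^{-s}\gtrsim M^{-s(L-1)}$, but the comparison $M^{-s(L-1)}\ge M^{-2s}$ only holds when $L\le 3$. For larger $L$, I would instead use the cruder route of building from the quadratic case a bump of width $M^{-1}$. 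Concretely, I would place $\sim M$ bumps of width $1/M$ and amplitude $M^{-s}$ and then argue via a VC/pseudodimension lower bound. Bartlett et al.\ show that the pseudodimension of depth-$L$ ReLU networks with $W\sim M^2$ parameters is $\lesssim WL\log W$. Combined with the standard argument that approximation of all sign patterns of $N$ bumps to accuracy half the amplitude forces pseudodimension $\ge N$, this gives $N\lesssim M^2\log M$. With $N$ bumps of amplitude $N^{-s}$, the error is then $\gtrsim N^{-s}\gtrsim (M^2\log M)^{-s}\ge (M\log M)^{-2s}$.

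This VC argument handles all cases with exponent $2s$ uniformly, while Step 1 plus the quadratic target handles the $2(L-1)$ exponent. Taking the better of the two bounds gives $(M\log M)^{-2\min\{L-1,s\}}$.

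\textbf{Main obstacle.} The main obstacle is making the pseudodimension step rigorous in $L^2$ rather than $L^\infty$. An $L^2$ approximation only controls most bumps, not all of them. I would handle this with a Varshamov–Gilbert packing of sign vectors, as in Yarotsky's or Siegel's lower-bound arguments. The key requirement is that, for a $1/4$-fraction of the bumps, the approximant must have the correct sign at the bump centers. Getting this from an $L^2$ bound requires sampling points where the error is small, which can be done by averaging over the bump supports.
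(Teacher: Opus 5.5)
Your final argument is the paper's proof. The $(M+1)^{L-1}$ piece count together with the quadratic target gives the $(M+1)^{-2(L-1)}$ bound, a VC/pseudodimension bump-packing argument gives $(M^2\log M)^{-s}$, and taking the larger of the two yields the claim; the only difference is that the paper cites Lu et al.\ and Siegel for the second bound, whereas you sketch its proof. The lacunary-bump case is unnecessary, since $c\,x^2$ lies in the $W^{s,\infty}$ unit ball for every $s>0$; and in the $L^2$ packing step, make sure the evaluation points (obtained by averaging over a common shift) are the same for all sign vectors, so that the VC bound actually applies.
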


For fixed depth $L$, the total number of parameters satisfies 
$
N \asymp M^2 L \asymp M^2.
$
Therefore, up to logarithmic factors, the approximation rate is lower bounded by 
$N^{-\min\{L-1,s\}}$, which saturates at order $N^{-(L-1)}$ once $s > L-1$. 
This shows that constant-depth ReLU networks cannot achieve smoothness adaptivity for arbitrarily large $s$ by increasing width alone. This stands in sharp contrast to Theorem~\ref{thm:app_L2}, where constant-depth networks with smooth activations achieve approximation rates of order $N^{-s}$  for arbitrary $s>0$. Hence, smooth activations enable full smoothness adaptivity even at fixed depth, whereas ReLU networks exhibit an intrinsic smoothness ceiling determined by depth. For simplicity, we consider only the one-dimensional case. We expect analogous bounds to hold for general $d \in \mathbb{N}^+$, although a full treatment is left for future work.

Intuitively, this saturation phenomenon stems from the piecewise linear structure of ReLU networks. 
For fixed depth $L$, a ReLU network represents a piecewise linear function whose number of linear regions grows at most polynomially in the width $M$, with the polynomial degree determined by $L$ (independent of $s$). 
Consequently, the effective approximation order is fundamentally limited, preventing the network from fully exploiting higher-order smoothness.
If the depth $L$ is allowed to grow with the total number of parameters (or equivalently, with the target accuracy) or with the smoothness level $s$, then smoothness adaptivity can be recovered~\citep{yarotsky2017error,liang2017deep,lu2021deep}. 
For example, \citet{lu2021deep} establish that the approximation error of ReLU networks satisfies
$
O(M^{-2s/d}),
$
provided that the depth satisfies $L \gtrsim s^2$.

\paragraph*{Numerical evidence for generalization superiority.}
Having established a sharp approximation separation at constant depth, we now investigate its implications for finite-sample learning. 
Deriving a sharp \emph{generalization} lower bound for constant-depth neural networks with non-smooth activation functions---analogous to Proposition~\ref{prop:relu_lower_bound}---is technically challenging. 
The main difficulty lies in the fact that classical information-theoretic tools for lower bounds, such as Fano's inequality~\citep{cover1999elements,tsybakov2009introduction}, are formulated in a minimax framework over \emph{all} estimators and therefore do not directly capture model-specific structural constraints (e.g., fixed depth and non-smooth activations). 
Obtaining model-specific lower bounds of this type remains an interesting open direction.

Nevertheless, we provide empirical evidence  that supports the generalization separation.
We generate a smooth target function using random Fourier features and learn it using  two-layer neural networks equipped with various activation functions. Training is performed using full-batch Adam optimizer to minimize the empirical risk. For each activation, the learning rate and  $\ell^2$-regularization hyperparameter are tuned  over the same grid, and we report the best achieved performance. Further implementation details are deferred to  Appendix~\ref{subapp:est_lower}.

\begin{figure}[htbp]
    \centering
    \includegraphics[width=0.4\textwidth]{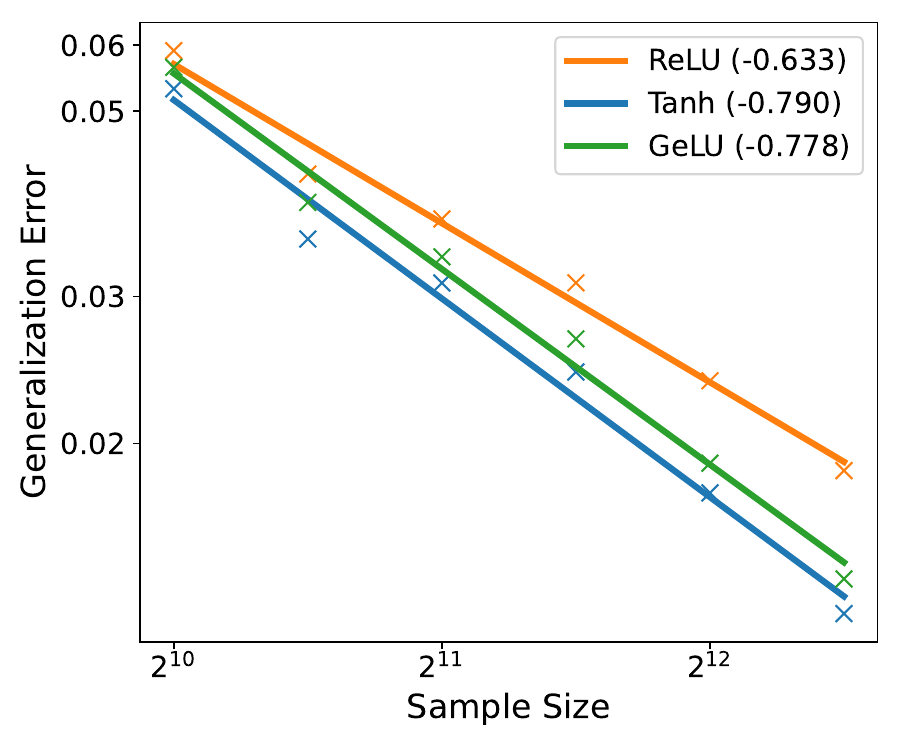}
    \caption{\textbf{Generalization error versus sample size for two-layer networks trained with different activation functions.} Markers denote the measured generalization errors at each sample size (averaged over 5 runs), and solid lines show least-squares fits of the form $E(n)\propto n^{-\alpha}$. The fitted exponents $\alpha$, reported in the legend, indicate a faster decay of the generalization error for smooth activations as the sample size increases.}
    \label{fig:log_log_convergence}
\end{figure}

Figure~\ref{fig:log_log_convergence} shows the log--log scaling of generalization error versus sample size. 
Smooth activations (tanh and GELU) exhibit a steeper decay slope than ReLU, consistent with our theory.
While optimization effects cannot be completely ruled out, these empirical results support that smooth activations enable constant-depth networks to better exploit target smoothness, thereby improving sample efficiency when learning smooth functions.

\section{Proof Sketches}
\subsection{Proof Sketch of Theorem~\ref{thm:app_L2}}
\label{sec:proof_sketch_of_theorem_ref_thm_app_l2}

We approximate $f^\star$ using piecewise polynomials as an intermediate representation. This reduces the problem to three building blocks: (i) monomials, (ii) piecewise constant functions, and (iii) products of these two components; see Figure~\ref{fig:app-L2} for an illustration. Steps (i) and (iii) are implemented via finite-difference approximations of derivatives, a classical technique in neural network approximation \citep{pinkus1999approximation}.

For step (ii), we employ a multiscale construction based on a coarse-to-refined grid partition. This allows us to represent a piecewise constant function with $K^{2d}$ refined cells using a constant-depth network of width $O(K^d)$. Concretely, let
\[
C(\bx)=\sum_{\bi\in[K]^d}\sum_{\bj\in[K]^d} c_{\bi,\bj}\,\mathbbm{1}_{\Omega^{K}_{\bi,\bj}} (\bx)
\]
be a piecewise constant function on the refined partition $\{\Omega^{K}_{\bi,\bj}\}_{\bi,\bj\in[K]^d}$, where $\{\Omega^{K}_{\bi}\}_{\bi\in[K]^d}$ denotes the coarse partition and, for each coarse cell $\Omega_{\bi}^{K}$, the sets $\{\Omega^{K}_{\bi,\bj}\}_{\bj\in[K]^d}$ form its $K^d$ refined subcells (see Figure \ref{fig:app-L2}). Denote by $\ba_{\bi}^{K}$ a fixed reference point of $\Omega_{\bi}^{K}$ (e.g., its lower-left corner). Then one can write
\[
C(\bx) =\sum_{\bj \in [K]^d} \left( \sum_{\bi \in [K]^d} c_{\bi, \bj} \mathbbm{1}_{\Omega_{\bi}^K}(\bx) \right) \mathbbm{1}_{\Omega_{ \bone,\bj}^{ K}} \left(\bx - \sum_{\bi \in [K]^d} \ba_{\bi}^K  \mathbbm{1}_{\Omega_{\bi}^K} (\bx)\right).
\]
where $\bx-\ba_{\bi}^{K}$ maps $\bx$ to its local position within that coarse cell. Observe that the constituent components include functions $\sum_{\bi \in [K]^d} c_{\bi, \bj} \mathbbm{1}_{\Omega_{\bi}^K} $ and $\sum_{\bi \in [K]^d} \ba_{\bi}^K \mathbbm{1}_{\Omega_{\bi}^K} $, which are piecewise constant with respect to the same coarse grid partition of size $K^d$, and the collection of $K^d$ indicator functions $\mathbbm{1}_{\Omega_{\bone, \bj}^{ K}} $ on refined cells. Consequently, each component admits an efficient approximation by a constant-depth network of width $O(K^d)$, thereby establishing an overall width bound of order $O(K^d)$ for the approximation of $C$. Figures~\ref{fig:app-L2}(b) and (c) provide a visual illustration of this approximation for $d = 1$ and $K = 2$.
See Appendix \ref{subsec:app_pwcf} for details. 

\begin{remark}
The multiscale decomposition in step (ii) is crucial for controlling the network width. A naive construction that directly assigns one unit (or a small block) to each of the $K^{2d}$ refined cells typically requires width $O(K^{2d})$, leading to a much larger parameter count and increased model complexity. In contrast, the multiscale strategy keeps the width at $O(K^d)$, which is necessary to obtain the optimal approximation rate without imposing intractable $\ell^0$-sparsity constraints. Similar ideas also appear in \citet{kohler2021rate}.
\end{remark}

\begin{figure}[!htbp]
    \centering
    \includegraphics[width=0.79\linewidth]{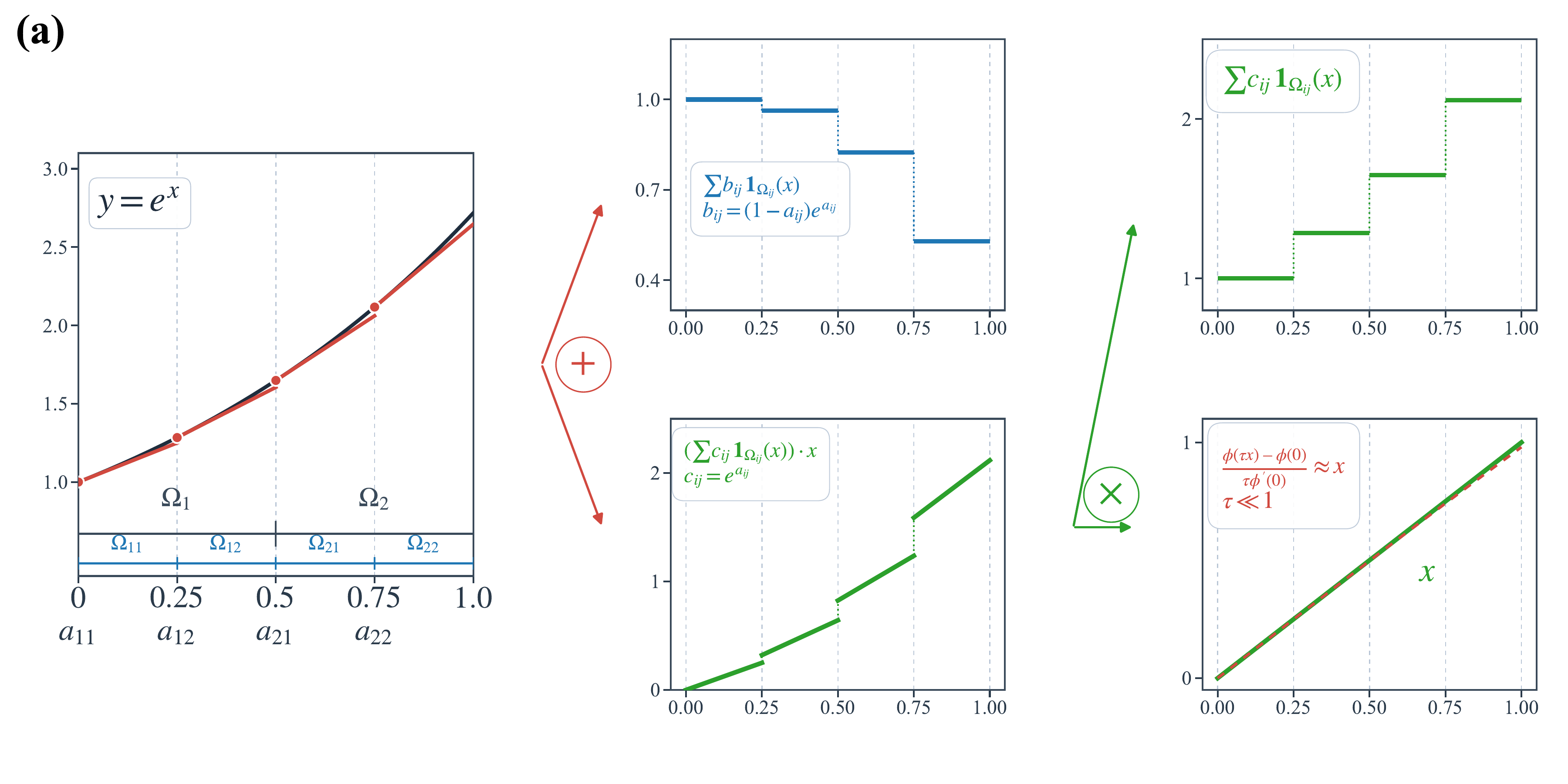} \\  

    \includegraphics[width=0.79\linewidth]{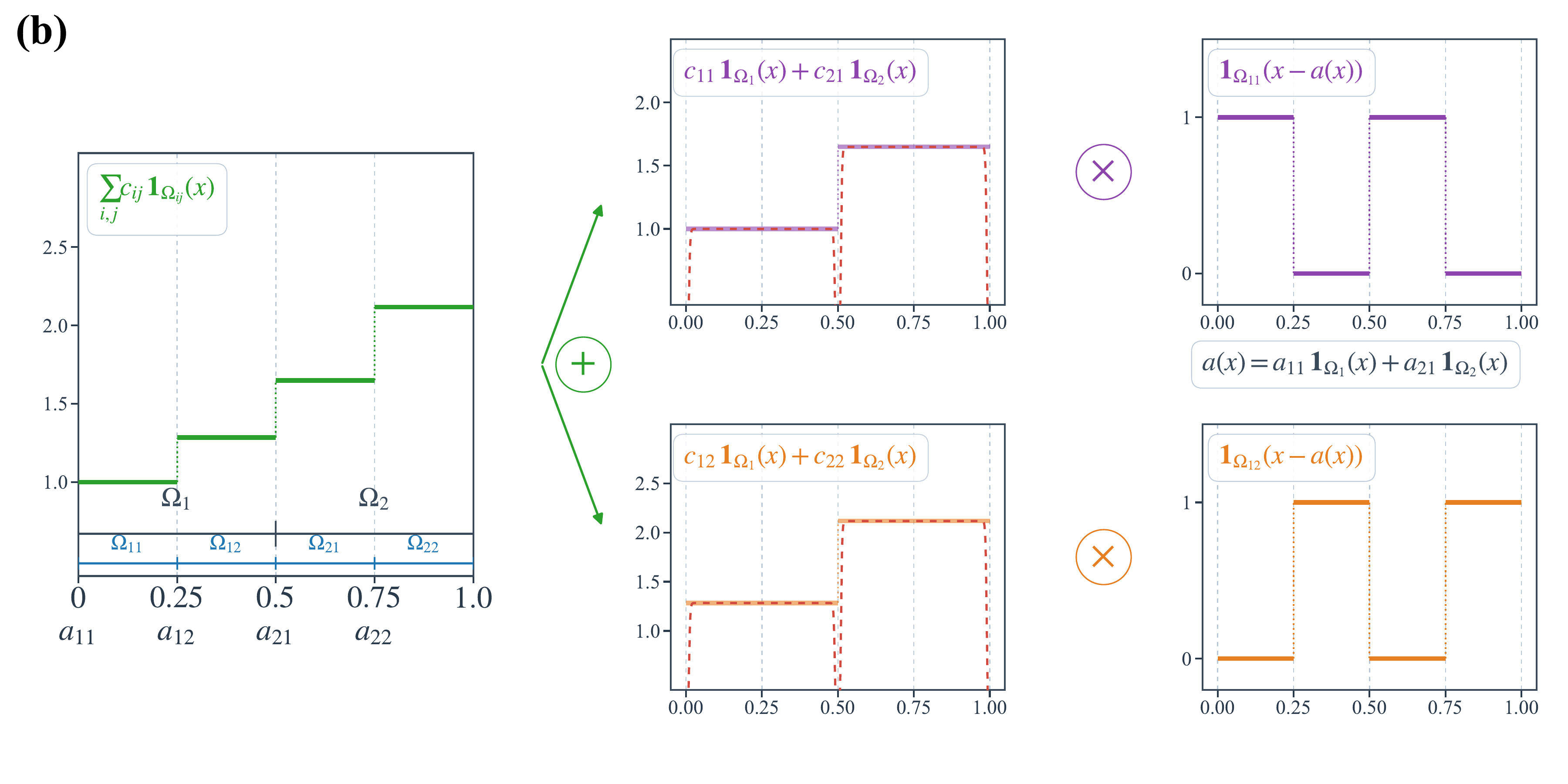} \\

    \includegraphics[width=0.79\linewidth]{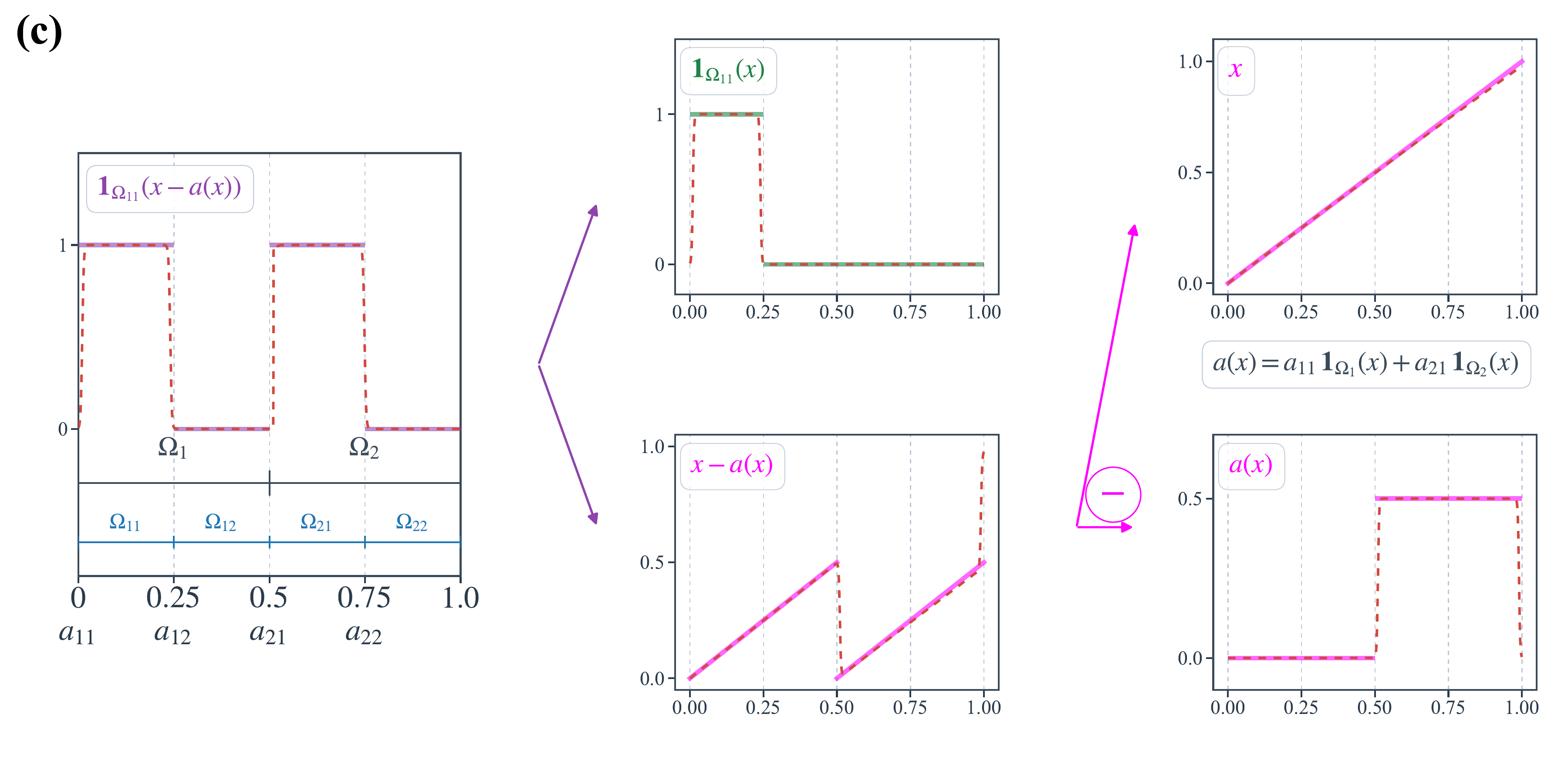}
    
    \caption{Illustration of the approximator construction for $f^\star$ in Theorem~\ref{thm:app_infinite_local} with $d=1$ and   
    $K = 2$. (a) Approximate $f^\star$ by piecewise polynomials, realized as the product of global polynomials and  piecewise constant functions. (b) The $4$-piece piecewise constant function on refined cells is decomposed into a summation of two $2$-piece functions defined over coarse cells, multiplied by refined-cell indicator functions.
    (c) The refined-cell indicator functions are realized by taking the extracted relative position information $x - a(x)$ as the input for the  reference indicators $\mathbbm{1}_{[0, 0.25]}$ and $\mathbbm{1}_{[0.25, 0.5]}$, which correspond to the refined cells contained within the leftmost coarse cell $[0, 0.5]$. }
    \label{fig:app-L2}
\end{figure}

\subsection{Proof Sketch of Theorem~\ref{thm:app_Linfty}}
\label{sec:proof_sketch_of_theorem_ref_thm_app_linfty}

Firstly, employing proof techniques analogous to those in Theorem~\ref{thm:app_L2}, we construct a family of $2^d$ neural networks $\{{g}_{\bv}\}_{\bv \in [2]^d}$ that achieve uniform approximation accuracy $\epsilon$ on the shifted interior regions $\Omega_{\mathrm{int}}^{K, \bv}(\delta)$, while maintaining bounded outputs on the complement (the shifted band regions $\Omega_{\mathrm{band}}^{K,\bv}(\delta)$). Simultaneously, we introduce a set of bounded weight functions $\{w_{\bv}\}_{\bv \in [2]^d}$ satisfying the partition of unity property, $\sum_{\bv \in [2]^d} w_{\bv}(\bx) = 1$ for $\bx \in [0,1]^d$, which vanish locally on their associated band regions (i.e., $w_{\bv}(\bx) = 0$ for $\bx \in \Omega_{\mathrm{band}}^{K, \bv}(\delta)$). The target function $f^\star$ is then decomposed as $f^\star = \sum_{\bv \in [2]^d} f^\star w_{\bv}$ and approximated by the weighted combination $\sum_{\bv \in [2]^d} g_{\bv} w_{\bv}$. The global error is controlled by this decomposition: for any fixed $\bv$, the approximation error is small on $\Omega_{\mathrm{int}}^{K, \bv}(\delta)$ due to the accuracy of $g_{\bv}$ and the boundedness of $w_{\bv}$, while on $\Omega_{\mathrm{band}}^{K, \bv}(\delta)$, the potentially large approximation error of $g_{\bv}$ is strictly suppressed because $w_{\bv}$ vanishes. The construction is completed by adding an additional layer to approximate the product $g_{\bv} w_{\bv}$. An illustration for the $d=1$ case is provided in Figure~\ref{fig:app-Linfty}, and the detailed proof is presented in Appendix~\ref{subapp:L-infty}.

\begin{figure}[htbp]
    \centering
    \includegraphics[width=0.95\linewidth]{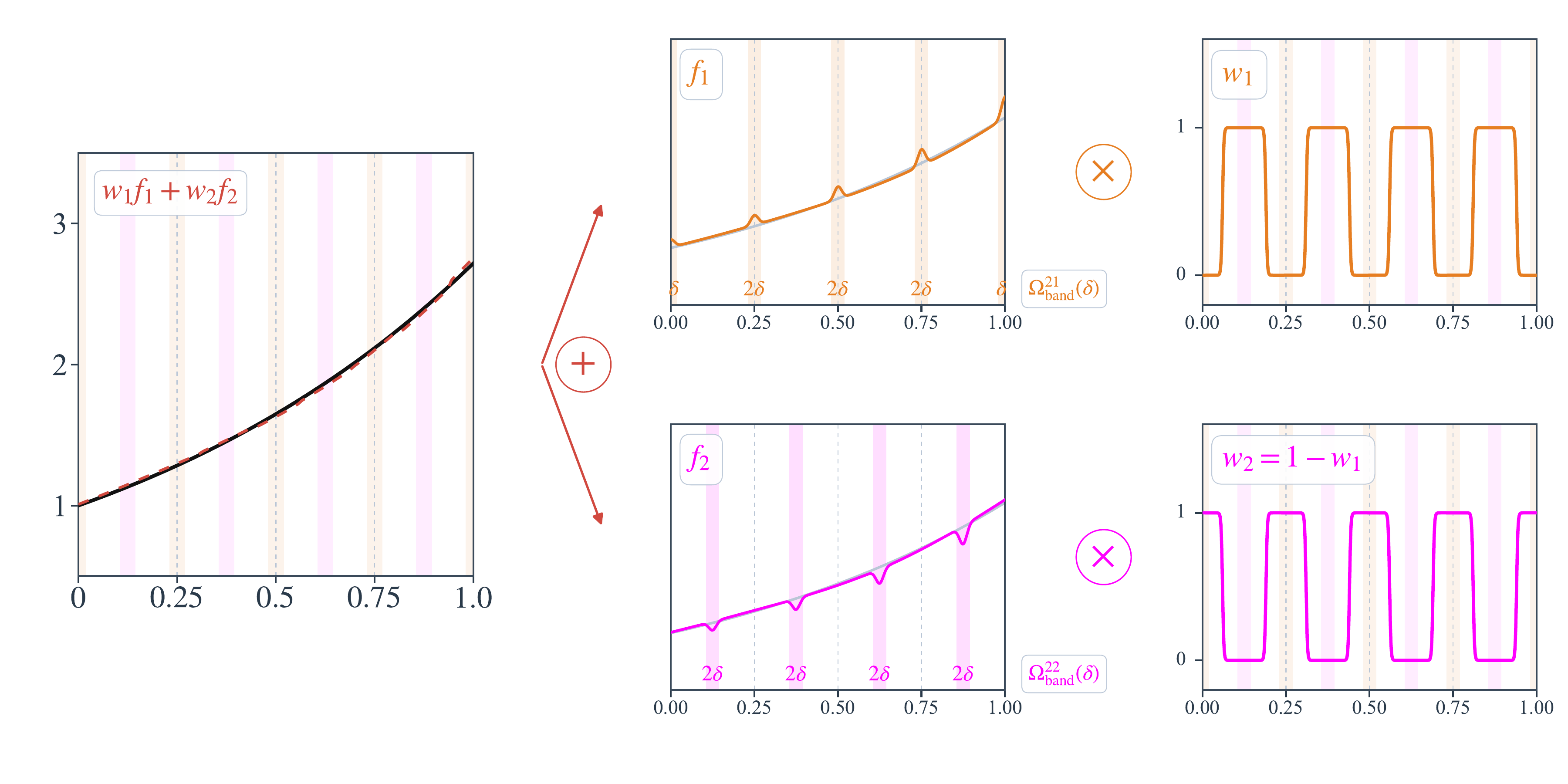}
    \caption{Illustration of the $L^{\infty}([0,1])$ approximation strategy for $f^\star$ detailed in Theorem~\ref{thm:app_Linfty}. Large approximation errors of $f_i$ on the bands $\Omega_{\mathrm{band}}^{2,i}(\delta)$ are nullified by the vanishing weight functions $w_i(x)$. Since the weights constitute a partition of unity, the global reconstruction $w_1f_1 + w_2f_2$ maintains the desired approximation accuracy across the entire domain $[0,1]$. }
    \label{fig:app-Linfty}
\end{figure}

\subsection{Proof Sketch of Theorem~\ref{thm:est}}
\label{sec:proof_sketch_of_theorem_ref_thm_est}

To achieve the approximation accuracy using the neural network approximation in Theorem~\ref{thm:app_Linfty}, the complexity of the model class with input $\bx \in [0,1]^d$, which contains the solution, can be characterized by the following logarithmic covering number 
\begin{equation}
    \log \mathcal{N} \left(\tau, \mathcal{H}^{\phi,7}(d, 1, M_{\epsilon}, B_{\epsilon}), \left\|\cdot\right\|_\infty \right) \lesssim \left( \frac{1}{\epsilon}\right)^{\frac{d}{s}} \left(\log \frac{1}{\tau} + \log \frac{1}{\epsilon} \right).
    \label{eq:est_sketch_log_covering}
\end{equation}
Next, applying Lemma~\ref{lem:aos_lem}, we derive the following bound on the generalization error: 
\begin{equation*}
    \mathbb{E} \left[ \left\| \mathbb{T}_{F} \widehat{f}_n - f^\star\right\|_{L^2(\rho)}^2  
    \right] \lesssim  \epsilon^2 + \frac{1}{n} \left( \frac{1}{\epsilon}\right)^{\frac{d}{s}}  \left( \log  \frac{1}{\tau} + \log \frac{1}{\epsilon}\right)+ \tau.
\end{equation*}
By balancing the tradeoff between approximation accuracy and model complexity, we choose $\epsilon \eqsim n^{-\frac{s}{2s+d}}, \tau \eqsim n^{-\frac{2s}{2s+d}}$, which leads to the following generalization error bound: 
\begin{equation*}
     \mathbb{E} \left[ \left\| \mathbb{T}_{F} \widehat{f}_n - f^\star\right\|_{L^2(\rho)}^2  
    \right] \lesssim n^{-\frac{2s}{2s+d}} \log n.
\end{equation*}
The detailed proof is provided in Appendix~\ref{app:est}.

\section{Conclusions}

We have developed a unified, constructive analysis of both approximation and generalization for  neural networks equipped with smooth activation functions over the Sobolev space $W^{s,\infty}([0,1]^d)$. 
We constructed explicit neural network approximators whose parameter norms are carefully controlled. 
These networks attain the minimax-optimal approximation rate for arbitrary smoothness $s>0$, thereby demonstrating \emph{smoothness adaptivity} at fixed depth. 
The norm-controlled construction enables a sharp statistical analysis, showing that empirical risk minimization over this model class achieves the minimax-optimal estimation rate (up to logarithmic factors).
Moreover, we established approximation lower bounds for ReLU networks, showing that their smoothness adaptivity is fundamentally limited by depth.
Taken together, these results reveal that depth is not the sole mechanism for achieving smoothness adaptivity; activation smoothness provides an alternative route.

Looking ahead, several important directions remain open. 
First, while our results establish statistical optimality in the noisy  regression setting, the  learning behavior  in the noiseless regime remains unclear~\citep{chen2025duality}.
Second, in scientific computing applications such as PDE solvers, performance is often evaluated under stronger norms, such as  Sobolev norms. 
The approximation and estimation rates of neural networks with smooth activations under these stronger norms, as well as their potential optimality in this regime, remain largely unexplored. 
Clarifying these questions would further illuminate the role of activation smoothness in high-accuracy numerical and scientific learning tasks.

\section*{Acknowledgments}
This work was supported by the National Key R\&D Program of China (No.~2022YFA1008200) and the National Natural Science Foundation of China (NSFC 12522120). The authors thank Juncai He, Juno Kim, and Zikai Shen for helpful discussions.

\bibliography{ref.bib}
\bibliographystyle{bibstyle}

\newpage

\appendix
\part{Appendix}
\parttoc

\section{Technical Preliminaries}

\subsection{Additional Notations}
\begin{itemize}
    \item Let $\mathbb{Z}$ denote the set of integers. Let $\mathbb{N} \coloneqq  \{0, 1, 2, \dots\}$ denote the set of natural numbers. Accordingly, $\mathbb{N}^d$ denotes the set of $d$-dimensional multi-indices $\bi=(i_1, \dots, i_d)$ where each component $i_\ell \in \mathbb{N}$ for $\ell=1, \dots,d$.
    We denote $\mathbb{N}_+ \coloneqq \mathbb{N} \setminus \{0\}$ as the set of positive integers.
    \item For any integer \( K \geq 1 \), we denote the set \( [K] \coloneqq  \{1, 2, \dots, K\} \) and the set \( [\tilde{K}] \coloneqq  \{0, 1, \dots, K\} \). Accordingly, \( [K]^d \) (and similarly \( [\tilde{K}]^d \)) denotes the set of \( d \)-dimensional multi-indices \( \bi = (i_1, \dots, i_d) \), where each component \( i_l \in [K] \) (or \( i_l \in [\tilde{K}] \)) for \( l = 1, \dots, d \).

    \item For non-negative functions $f$ and $g$, we write $f(x) \lesssim g(x)$ or $f(x) = O(g(x))$ to indicate that there exists a constant $C$ relying only on the dimension $d$, smoothness $s$, and activation function $\phi$ such that $f(x) \leq C g(x)$. We use $f(x) = \tilde{O}(g(x))$ to suppress polylogarithmic factors, i.e. $f(x) = O(g(x)\polylog(x))$. Also, we write $f(x)\eqsim g(x)$ if both $f(x)\lesssim g(x)$ and $g(x)\lesssim f(x)$ hold.
    \item For a given neural network $g$, let $\theta(g)$ denote the parameter vector comprising all its weight matrices and bias vectors.  We denote the maximum parameter magnitude by the infinity norm $\|\theta(g)\|_\infty$
    
    \item For a matrix $\bA = (a_{ij})_{i \in[m], j\in [n]}\in \RR^{m \times n}$, the norm $\|\cdot\|_{\infty,\infty}$ is given by $\|\bA\|_{\infty,\infty} = \max_{i \in [m],j \in [n]}  |a_{ij}|$ and the norm $\|\cdot\|_{1, \infty}$ is given by $\|\bA\|_{1, \infty} = \max_{i \in [m]}  \sum_{j=1}^{n} |a_{ij}|$.
    \item A $d$-dimensional multi-index is a tuple $\balpha = (\alpha_1, \dots, \alpha_d) \in \mathbb{N}^d$. Several related notations are listed below:
    \begin{itemize}
        \item $|\balpha| \coloneqq  \sum_{i=1}^d \alpha_i$;
        \item $\balpha! \coloneqq  \prod_{i=1}^d \alpha_i!$;
        \item $\bx^{\balpha} \coloneqq  x_1^{\alpha_1} \cdots x_d^{\alpha_d}$, where $\bx = (x_1, \dots, x_d) \in \mathbb{R}^d$;
        \item $D^{\balpha} \coloneqq  \frac{\partial^{|\balpha|}}{\partial x_1^{\alpha_1} \cdots \partial x_d^{\alpha_d}}$;
        \item $\binom{k}{\balpha} \coloneqq  \frac{k!}{\balpha!}$, where $k = |\balpha|$.
    \end{itemize}
    \item For any $x \in \mathbb{R}$, let $\lfloor x \rfloor \coloneqq  \max\{n : n \le x, n \in \mathbb{Z}\}$ and $\lceil x \rceil \coloneqq  \min\{n : n \ge x, n \in \mathbb{Z}\}$.
    \item Let $\mathbbm{1}_{\Omega}(\bx)$ denote the indicator function of the region $\Omega$, i.e., $\mathbbm{1}_{\Omega}(\bx)=1$ if $\bx \in \Omega$ and $0$ otherwise.
\end{itemize}

\subsection{Verification of Smooth Activation Assumptions}
\label{app:verification}
In this section, we formally verify that widely adopted smooth activation functions---including sigmoid, $\frac{1}{2}(\tanh + 1)$, SiLU, and GELU---satisfy Assumptions~\ref{ass:smooth},~\ref{ass:Lip} and~\ref{ass:piece}. The condition of being infinitely differentiable and non-polynomial, as stipulated in Assumption~\ref{ass:smooth}, is trivially satisfied for these aforementioned activations. Regarding Assumption~\ref{ass:Lip}, observe that for all considered functions, the first derivative $\phi^{\prime}$ satisfies the uniform bound $\sup_{t \in \mathbb{R}} |\phi^{\prime}(t)| < 2$. Consequently, by the mean value theorem, these activations possess the Lipschitz continuity property required by Assumption~\ref{ass:Lip} with $\|\phi\|_{\mathrm{Lip}} = 2$. The remainder of this section focuses on verifying the approximation properties outlined in Assumption~\ref{ass:piece}.


\paragraph{Heaviside-like case.}

We  verify that the sigmoid and $\frac{1}{2}(\tanh + 1)$ activation functions satisfy the Heaviside-like condition.
\begin{itemize}
    \item \textbf{Sigmoid function:} Consider the sigmoid activation $\sigma(t) = (1+e^{-t})^{-1}$. We observe
    \begin{itemize}
    \item For $t > 0$, 
    \begin{equation*}
        |\sigma(t) - H(t)| = \left| \frac{1}{1+e^{-t}} - 1 \right| = \frac{e^{-t}}{1+e^{-t}} < e^{-t}.
    \end{equation*}
    \item For $t < 0$, 
    \begin{equation*}
        |\sigma(t) - H(t)| = \left| \frac{1}{1+e^{-t}} - 0 \right| = \frac{1}{1+e^{-t}} = \frac{e^t}{1+e^t} < e^{-|t|}.
    \end{equation*}
\end{itemize}
    Note that $e^{-|t|} \leq  \min \{\frac{1}{e |t|},1\}$, the assumption holds with $C_1 = 1$.

    \item \textbf{Tanh-based function:} Consider the activation function $\phi(\cdot) = \frac{1}{2}(\tanh (\cdot)+ 1)$. We analyze its approximation to $H(\cdot)$ as follows:
    \begin{itemize}
    \item For $t > 0$,
    \begin{equation*}
        |\phi(t) - H(t)| = \left| \frac{1}{2}(\tanh(t) + 1) - 1 \right| = \frac{1}{2}(1 - \tanh(t)) = \frac{e^{-t}}{e^t + e^{-t}} < e^{-2t}.
    \end{equation*}
    \item For $t < 0$,
    \begin{equation*}
        |\phi(t) - H(t)| = \left| \frac{1}{2}(\tanh(t) + 1) - 0 \right| = \frac{1}{2}(\tanh(t) + 1) = \frac{e^t}{e^t + e^{-t}} < e^{2t} = e^{-2|t|}.
    \end{equation*}
\end{itemize}
    Note that $e^{-2|t|} \leq  \min \{\frac{1}{2e |t|},1\}$, the assumption holds with $C_1 = 1$.
\end{itemize}

\paragraph{ReLU-like case.}
We verify that the GELU and SiLU activation functions satisfy the ReLU-like condition.
\begin{itemize}
    \item \textbf{SiLU:} Consider the SiLU activation $\phi(t) = t\sigma(t)$, where $\sigma$ is the sigmoid function. The deviation is as follows:
    \begin{itemize}
    \item For $t \geq 0$, 
    \begin{equation*}
        |\phi(t) - \text{ReLU}(t)| = |t\sigma(t) - t| = t(1 - \sigma(t)) = \frac{t e^{-t}}{1+e^{-t}} = \frac{t}{1+e^t}.
    \end{equation*}
    \item For $t < 0$,
    \begin{equation*}
        |\phi(t) - \text{ReLU}(t)|= |t\sigma(t) - 0| = |t|\sigma(t) = \frac{|t|}{1+e^{-t}} = \frac{|t|}{1+e^{|t|}}.
    \end{equation*}
\end{itemize}
    Note that $f(u) = \frac{u}{1+e^u}\le 1$ for $u \geq 0$, the assumption holds with $C_2 = 1$.

    \item \textbf{GELU:} Consider the GELU activation $\phi(t) = t \Phi(t)$, where $\Phi$ is the cumulative distribution function of the standard normal distribution defined as
    \begin{equation*}
        \Phi(t) = \frac{1}{\sqrt{2\pi}} \int_{-\infty}^t e^{-x^2/2} dx.
    \end{equation*}
    The error satisfies
    \begin{itemize}
    \item For $t \geq 0$, 
    \begin{equation*}
        |\phi(t) - \text{ReLU}(t)| = |t\Phi(t) - t| = t(1 - \Phi(t)).
    \end{equation*}
    \item For $t < 0$,
    \begin{equation*}
        |\phi(t) - \text{ReLU}(t)| = |t\Phi(t) - 0| = |t|\Phi(t).
    \end{equation*}
\end{itemize}
    By symmetry, we have $\Phi(-|t|) = 1 - \Phi(|t|)$; thus, $|G(t) - \text{ReLU}(t)| = |t|(1-\Phi(|t|))$ for all $t \in \mathbb{R}$. 
    We estimate the tail integral as follows:
    \begin{equation*}
        1 - \Phi(t) = \frac{1}{\sqrt{2\pi}} \int_t^\infty e^{-x^2/2} \mathrm{d}x \leq \frac{1}{\sqrt{2\pi}} \int_t^\infty \frac{x}{t} e^{-x^2/2} \mathrm{d}x
        \leq \frac{1}{t\sqrt{2\pi}} \left[ -e^{-x^2/2} \right]_t^\infty = \frac{e^{-t^2/2}}{t\sqrt{2\pi}}.
    \end{equation*}
    Multiplying by $t$, we obtain $|\phi(t) - \text{ReLU}(t)| \leq \frac{1}{\sqrt{2\pi}} e^{-t^2/2} \leq \frac{1}{\sqrt{2\pi}}$, the assumption holds with $C_2 = (2\pi)^{-1/2}$.
\end{itemize}

\section{Approximation Theory: Proofs and Technical Details}
\label{app:approximation}
In this section, we present the construction for approximators for $f^{\star}$ given in Theorem~\ref{thm:app_L2} and ~\ref{thm:app_Linfty}.

\subsection{Domain Partition Construction}
Our analysis require the following partition of the input domain.
\begin{itemize}
    \item Let $K \in \mathbb{N}_{+}$. For any multi-index $\bi = (i_1, \cdots, i_d) \in [K]^{d}$, the $\bi$-th coarse cell of the uniform $K^d$ grid on the hypercube $[0,1]^d$ is the axis-aligned cell
     \begin{equation*}
        \Omega^{K}_{\bi} \coloneqq  \prod_{l=1}^d \left[ \frac{i_l-1}{K}, \frac{i_l}{K} \right),
    \end{equation*}
    whose lower-left corner is
    \begin{equation*}
        \ba_{\bi}^{K} \coloneqq  \left( \frac{i_1 -1}{K}, \cdots, \frac{i_d -1}{K}\right) \in \mathbb{R}^d.
    \end{equation*}
    For any $\bj = (j_1 , \cdots, j_d) \in [K]^d$, the $\bj$-th refined cell of the uniform $K^d$ subgrid of $\Omega_{\bi}^{K}$ is defined by 
    \begin{equation*}
        \Omega_{\bi, \bj}^{K} \coloneqq  \prod_{l=1}^d \left[ \frac{(i_l-1)K + j_l -1 }{K^2}, \frac{ (i_l-1)K + j_l }{K^2} \right).
    \end{equation*}
    Figure~\ref{fig:notation} depicts the spatial configuration of the cells $\Omega_{\bi}^K, \Omega_{\bi,\bj}^K$ and the corner point $\ba_{\bi}^K$.
    

    \item For $K \in \mathbb{N}_{+}$ and $\delta \in (0, \frac{1}{3K})$, we define the interior region associated with the coarse cell $\Omega_{\bi}^{K}$ by
    \begin{equation*}
        \Omega_{\bi,\text{int}}^{K}(\delta) \coloneqq  \prod_{l=1}^d \left( \frac{i_l-1}{K} + \delta , \frac{ i_l}{K} - \delta\right).
    \end{equation*}
    The corresponding band region is given by 
    \begin{equation*}
        \Omega_{\bi,\text{band}}^{K}(\delta) \coloneqq  \Omega_{\bi}^{K}  \setminus \Omega_{\bi,\text{int}}^{K}(\delta).
    \end{equation*}
    Similarly, for $\delta \in (0, \tfrac{1}{3K^{2}})$, we define the interior region associated with the refined cell $\Omega_{\bi,\bj}^{K}$ as
    \begin{equation*}
        \Omega_{\bi, \bj, \mathrm{int}}^{ K}(\delta) \coloneqq   \prod_{l=1}^d \left( \frac{(i_l-1)K + j_l -1 }{K^2} + \delta, \frac{(i_l-1)K + j_l }{K^2} - \delta \right),
    \end{equation*}
    and the associated band region is
    \begin{equation*}
        \Omega_{\bi, \bj , \mathrm{band}}^{ K}(\delta) \coloneqq  \Omega_{\bi, \bj}^{ K} \setminus \Omega_{\bi, \bj, \mathrm{int}}^{ K}(\delta) .
    \end{equation*}
    Figure~\ref{fig:notation} depicts the spatial configuration of the interior and band regions.

    \begin{figure}[htbp]
    \centering
    \includegraphics[width=0.95\textwidth]{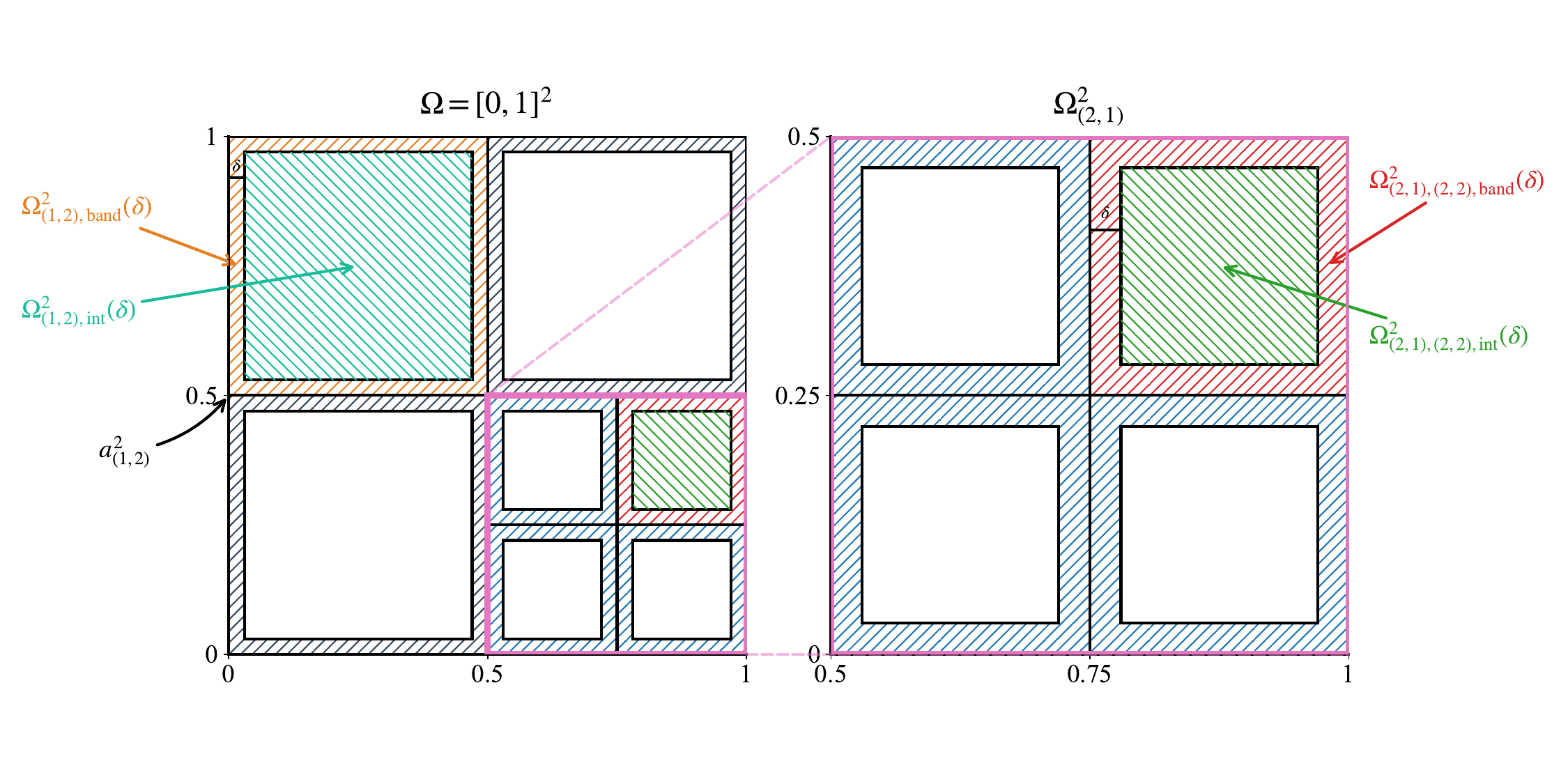}
    \caption{Visualization of the hierarchical grid structure ($K=2, d=2$), detailing the coarse and refined cells along with their respective interior and band regions.}
    \label{fig:notation}
    \end{figure}
    \item For $ K \in \mathbb{N}_{+}$, $\bv = (v_1, \cdots, v_d) \in [2]^d$ and $\bi , \bj \in [\tilde{K}]^d $, we define the shifted refined cell:
    \begin{equation*}
        \Omega_{\bi, \bj}^{ K, \bv} \coloneqq  \left(\prod_{l=1}^d \left[ \frac{\left[2(i_l-1)K + 2j_l +v_l -1 \right]}{2K^2},  \frac{\left[2(i_l-1)K + 2j_l +v_l +1 \right]}{2K^2}\right)\right) \bigcap[0,1]^d,
    \end{equation*}
    where $\bi, \bj \in \{0, 1, \cdots, K\}^d$.  For \( \delta \in \left( 0, \frac{1}{6K^2} \right) \), we define the associated interior region as:
    \begin{equation*}
         \Omega_{\bi, \bj, \mathrm{int}}^{ K, \bv}(\delta) \coloneqq  \left(\prod_{l=1}^d \left( \frac{\left[2(i_l-1)K + 2j_l +v_l -1 \right]}{2K^2} + \delta ,  \frac{\left[2(i_l-1)K + 2j_l +v_l +1 \right]}{2K^2} - \delta\right)\right) \bigcap[0,1]^d,
    \end{equation*}
    and the associated band region is:
    \begin{equation*}
        \Omega_{\bi, \bj, \mathrm{band}}^{ K, \bv}(\delta): =  \Omega_{\bi, \bj}^{ K, \bv} \setminus  \Omega_{\bi, \bj, \mathrm{int}}^{ K, \bv}(\delta).
    \end{equation*}
    Finally, we define the overall shift interior and band regions as:
    \begin{equation*}
        \Omega_{\mathrm{int}}^{K, \bv}(\delta): = \bigcup_{\bi, \bj \in [\tilde{K}]^d}  \Omega_{\bi, \bj, \mathrm{int}}^{ K, \bv}(\delta), \quad \Omega_{\mathrm{band}}^{K, \bv}(\delta): = \bigcup_{\bi, \bj \in [\tilde{K}]^d}  \Omega_{\bi, \bj, \mathrm{band}}^{ K, \bv}(\delta).
    \end{equation*}
\end{itemize}

\subsection{Bramble-Hilbert Lemma}
In this subsection, we approximate the target function $f^{\star}$ by a piecewise polynomial.
A closely related conclusion can be found in \citet{brenner2008mathematical}; for the self-contained presentation, we restate the argument and provide a proof.

\begin{lemma}[Bramble-Hilbert lemma]
    For $s>0$, let $f^{\star} \in W^{s, \infty}(\Omega)$. Then, there exists a piecewise polynomial $p$ of order $\lceil s \rceil -1$ on the partition $\{\Omega^{K}_{\bi,\bj}\}_{\bi,\bj \in [K]^d}$ such that
    \begin{equation*}
        \|f^{\star} - p\|_{L^\infty(\Omega)} \leq c_1(s,d) \|f^{\star}\|_{W^{s,\infty}(\Omega)} K^{-2s}.
    \end{equation*}
    Specifically, the piecewise polynomial $p$ can be written as
    \begin{equation*}
        p(\bx) = \sum_{\bi \in [K]^d, \bj\in[K]^d} p_{\bi, \bj}(\bx) \mathbbm{1}_{\Omega_{\bi,\bj}^K}(\bx),
    \end{equation*}
    where each polynomial $p_{\bi,\bj}(\bx) = \sum_{|\balpha| < \lceil s \rceil} a_{\balpha,\bi,\bj} \bx^{\balpha}
    $ has order $\lceil s \rceil -1$ and the coefficients satisfy
    \begin{equation*}
        |a_{\balpha,\bi,\bj}| \leq c_2(s,d) \|f^{\star}\|_{W^{s,\infty}(\Omega)}, \quad \forall \bi,\bj \in [K]^d, |\balpha| < \lceil s \rceil.
    \end{equation*}
    Here, $c_1(s,d)$ and $c_2(s,d)$ are two constants that depend only on the smoothness $s$ and dimension $d$. 
    \label{lem:piece_poly_app}
\end{lemma}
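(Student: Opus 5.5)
The plan is to build $p$ cell by cell from local Taylor polynomials, which gives both the error bound and the coefficient bound. Set $m=\lceil s\rceil-1$. Each refined cell $\Omega^K_{\bi,\bj}$ is an axis-aligned cube of side $h=K^{-2}$. On it I fix its lower-left corner $\bz_{\bi,\bj}$ and define
\[
p_{\bi,\bj}(\bx)=\sum_{|\balpha|\le m}\frac{D^{\balpha}f^\star(\bz_{\bi,\bj})}{\balpha!}(\bx-\bz_{\bi,\bj})^{\balpha}.
\]
A technical point is that $f^\star$ is only defined up to null sets. For $s>1$ I will use the fact that $W^{m,\infty}$ functions on the convex domain $\Omega$ have $C^{m-1,1}$ representatives, so point values of derivatives of order $\le m-1$ make sense. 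For the top order I have two options. In the fractional case $D^{\balpha}f^\star$ with $|\balpha|=m$ is Hölder continuous, so point values are fine. In the integer case $s=m+1$, I replace the point value by the cell average of $D^{\balpha}f^\star$, i.e.\ use an averaged Taylor polynomial. This is the standard Bramble--Hilbert device.

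\textbf{Error bound.} Use Taylor's formula with integral remainder along segments in the convex cell. In the fractional case $s=m+\zeta$, expand to order $m$ with remainder written as $\sum_{|\balpha|=m}\frac{m}{\balpha!}(\bx-\bz)^{\balpha}\int_0^1(1-t)^{m-1}\bigl[D^{\balpha}f^\star(\bz+t(\bx-\bz))-D^{\balpha}f^\star(\bz)\bigr]\,dt$. Bounding this with the Hölder seminorm gives $\lesssim \|\bx-\bz\|^{m+\zeta}\lesssim h^{s}=K^{-2s}$.

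In the integer case $s=m+1$, the remainder of the order-$m$ expansion is of order $\|D^{m+1}f^\star\|_\infty h^{m+1}$. Here $D^{m+1}f^\star\in L^\infty$ and the derivatives of order $m$ are Lipschitz. The averaging step changes the top-order coefficients by at most $\mathrm{Lip}(D^m f^\star)\,h$, which multiplies $h^m$, so it also contributes $O(h^{m+1})$. When $m=0$ (i.e.\ $0<s\le1$), the construction reduces to a constant approximation, and the bound follows directly from the Hölder or Lipschitz seminorm.

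Summing over the disjoint cells, with the indicator representation, gives $\|f^\star-p\|_{L^\infty(\Omega)}\le c_1 K^{-2s}\|f^\star\|_{W^{s,\infty}}$.

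\textbf{Coefficient bound.} Expand $(\bx-\bz)^{\balpha}$ in monomials $\bx^{\bbeta}$ by the binomial theorem. Since $\bz\in[0,1]^d$, every power $\bz^{\bgamma}$ is at most $1$ in absolute value. Each derivative value, or cell average, is bounded by $\|f^\star\|_{W^{s,\infty}}$. The number of terms depends only on $s$ and $d$. Hence $|a_{\balpha,\bi,\bj}|\le c_2(s,d)\|f^\star\|_{W^{s,\infty}}$, uniformly in $K$. The uniformity comes from anchoring at points of the unit cube rather than rescaling to a reference cell.

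\textbf{Main obstacle.} The delicate step is the pointwise meaning of derivatives for an $L^\infty$-class function, especially the top-order term in the integer case. I will handle it by working with the continuous representative, or equivalently by averaging the Taylor polynomial over a small ball inside the cell, as in Brenner--Scott. For averaging, the Sobolev representation of the remainder gives the same $h^{s}$ bound, with a constant depending only on the chunkiness of a cube and hence only on $d$ and $s$.
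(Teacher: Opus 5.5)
Your proposal is correct, but its local construction differs from the paper's. You take the classical Taylor polynomial of degree $\lceil s\rceil-1$ at the lower-left corner $z$ of each refined cell. The paper instead uses, in both the integer and fractional cases, the averaged Taylor polynomial $Q^{\lceil s\rceil-1}f^\star=\int_{\tilde{\Omega}}T_y^{\lceil s\rceil-1}f^\star\,\psi(y)\,dy$, where $\psi$ is a smooth nonnegative unit-mass bump supported in the cell. The coefficient bound is then obtained exactly as you propose: binomial expansion, with powers of points of $[0,1]^d$ bounded by $1$.

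What each route buys:
\begin{itemize}
\item The paper's device is the standard Bramble--Hilbert one, and it would extend to $W^{s,p}$ with $p<\infty$, where pointwise derivative values do not exist. However, in its integer case the remainder evaluates the order-$s$ derivatives, which are only $L^\infty$, along segments. This is justified only by the remark that it ``should be understood in the weak sense.''
\item Your route exploits $p=\infty$. The only derivatives you evaluate are those of order at most $\lceil s\rceil-1$, which have H\"older (fractional case) or Lipschitz (integer case) continuous representatives. So the classical Taylor formula with integral remainder applies verbatim. The error is at most a constant times the H\"older or Lipschitz constant of these derivatives times $|x-z|^{s}\lesssim K^{-2s}$, with no hand-waving needed.
\end{itemize}

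One simplification is available. When $s=m+1$ is an integer, you have $f^\star\in W^{m+1,\infty}$ and hence a $C^{m,1}$ representative; you only invoke the weaker $W^{m,\infty}\Rightarrow C^{m-1,1}$. So the top-order values $D^{\alpha}f^\star(z)$ with $|\alpha|=m$ are already well defined. You implicitly use this when you bound the averaging error by their Lipschitz constant. The cell-averaging of the top coefficient and the fallback to Brenner--Scott are therefore harmless but unnecessary: the pointwise construction works uniformly for every $s>0$.
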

\begin{proof}
    We construct a polynomial on each small cube $\Omega^{K}_{\bi,\bj}$ to approximate $f^{\star}$ locally. For ease of notation, we neglect the subscript $\bi,\bj$ when there is no confusion. Let $\tilde{\Omega}=\Omega_{\bi,\bj}^K$ be a cube with side length $h$ ($h = K^{-2}$ for partition $\{\Omega^{K}_{\bi,\bj}\}_{\bi,\bj \in [K]^d}$) and $f^{\star} \in W^{s,\infty}(\tilde{\Omega})$. We are going to construct a polynomial $p$ of order $\lceil s \rceil -1$ on $\tilde{\Omega}$ such that
    \begin{equation*}
        \|f^{\star} - p\|_{L^\infty(\tilde{\Omega})} \leq c_1(s,d) h^{s} \|f^{\star}\|_{W^{s,\infty}(\tilde{\Omega})},
    \end{equation*}
    and the coefficients of $p$ satisfy
    \begin{equation*}
        |a_{\balpha}| \leq c_2(s,d) \|f^{\star}\|_{W^{s,\infty}(\tilde{\Omega})}.
    \end{equation*}
     We then give the construction. Let $\psi \in C^\infty(\mathbb{R}^d)$ be a cut-off function satisfying the following conditions:
    \begin{itemize}
        \item $\psi$ is supported on $\tilde{\Omega}$;
        \item $\psi$ is non-negative, i.e., $\psi(\by) \geq 0$ for all $\by\in \mathbb{R}^d$;
        \item $\int_{\tilde{\Omega}} \psi(\by) \dd \by = 1$.
    \end{itemize}
    Then we define the averaged Taylor polynomial of order $\lceil s \rceil -1$ of $f^{\star}$ as
    \begin{equation*}
        \left(Q^{\lceil s \rceil -1} f^{\star} \right)(\cdot) = \int_{\tilde{\Omega}} T^{\lceil s \rceil -1}_{\by} f^{\star}(\cdot) \psi(\by) \dd \by,
    \end{equation*}
     where $T^{\lceil s \rceil -1}_{\by} f^{\star}$ is the Taylor polynomial of order $\lceil s \rceil -1$ of $f^{\star}$ at $\by$:
     \begin{equation*}
         \left(T_{\by}^{\lceil s \rceil -1} f^{\star} \right)(\cdot) = \sum_{|{\balpha}| < \lceil s \rceil} \frac{1}{{\balpha}!} D^{\balpha} f^{\star}(\by) (\cdot - \by)^{\balpha}.
     \end{equation*}
     We claim that $p= Q^{\lceil s \rceil -1} f^{\star}$ satisfies the desired approximation property and coefficient bound. In the following, we will treat the cases of integer $s$ and non-integer $s$ separately.

     \paragraph{Case 1: $s=m$ is an integer.} In this case, we have $\lceil s \rceil = m$. We note that the target function $f^{\star}$ admits the following Taylor expansion with integral remainder:
    \begin{equation*}
        f^{\star}(\bx) = \left(T^{m-1}_{\by} f^{\star} \right)(\bx) + \sum_{|{\balpha}| = m} \frac{1}{{\balpha}!} (\bx - \by)^{\balpha} \int_0^1 m t^{m-1} D^{\balpha} f^{\star}(\bx + t(\by - \bx)) \dd t.
    \end{equation*}
    The above Taylor formula is classical for $C^\infty$ functions. Since here $f^{\star} \in W^{m,\infty}(\tilde{\Omega})$ does not necessarily guarantee the pointwise existence of $D^{\balpha} f^{\star}$, it should be understood in the weak sense.
    By integrating the above equation against the cut-off function $\psi(\cdot)$ over $\tilde{\Omega}$, we obtain
    \begin{equation*}
        \begin{aligned}
            f^{\star}(\bx) -( Q^{m-1} f^{\star})(\bx) &= \int_{\tilde{\Omega}} \left( f^{\star}(\bx) - (T^{m-1}_{\by} f^{\star})(\bx) \right) \psi(\by) \dd \by \\
            &= \sum_{|{\balpha}| = m} \int_{\tilde{\Omega}} \frac{1}{{\balpha}!} (\bx - \by)^{\balpha} 
            \left(\int_0^1 mt^{m-1} D^{\balpha} f^{\star}(\bx + t(\by - \bx)) \dd t \right) \psi(\by) \dd \by\\
            &\leq \sum_{|{\balpha}| = m} \frac{1}{{\balpha}!} \sup_{\by} |\bx - \by|^{|{\balpha}|} \|f^{\star}\|_{W^{m,\infty}(\tilde{\Omega})} \int_{\tilde{\Omega}} |\psi(\by)| \dd \by \cdot \int_{0}^{1} m t^{m-1} \dd t\\
            &\leq c_1(m,d) h^{m} \|f^{\star}\|_{W^{m,\infty}(\tilde{\Omega})}.
        \end{aligned}
    \end{equation*}
    To bound the coefficients, we expand the polynomial $Q^{m-1} f^{\star}$ and rewrite it as
    \begin{equation*}
        \begin{aligned}
            (Q^{m-1} f^{\star})(\bx) &= \sum_{|{\balpha}| < m} \frac{1}{{\balpha}!} \left( \int_{\tilde{\Omega}} D^{\balpha} f^{\star}(\by) \psi(\by) \dd \by \right) (\bx - \by)^{\balpha} \\
            &= \sum_{|{\balpha}| < m} \sum_{{\bgamma}+{\bbeta}={\balpha}} \frac{1}{{\balpha}!} \binom{{\balpha}}{{\bgamma}} \bx^{\bgamma} (-1)^{|{\bbeta}|} \left( \int_{\tilde{\Omega}} \by^{\bbeta} D^{\balpha} f^{\star}(\by) \psi(\by) \dd \by \right)\\
            &= \sum_{|{\bgamma}| < m} \bx^{\bgamma} \underbrace{\left( \sum_{|{\balpha}| < m, {\balpha} \geq {\bgamma}} \frac{(-1)^{|{\balpha} - {\bgamma}|}}{({\balpha} - {\bgamma})! {\bgamma}!} \int_{\tilde{\Omega}} \by^{{\balpha} - {\bgamma}} D^{\balpha} f^{\star}(\by) \psi(\by) \dd \by \right)}_{\eqqcolon a_{\bgamma}}.
        \end{aligned}
    \end{equation*}
    The coefficients $a_{\bgamma}$ can be controlled as follows:
    \begin{equation*}
        \begin{aligned}
            |a_{\bgamma}| &\leq \sum_{|{\balpha}| < m, {\balpha} \geq {\bgamma}} \frac{1}{({\balpha} - {\bgamma})! {\bgamma}!} \int_{\tilde{\Omega}} |\by^{{\balpha} - {\bgamma}}| |D^{\balpha} f^{\star}(\by)| |\psi(\by)| \dd \by \\
            &\leq \sum_{|{\balpha}| < m, {\balpha} \geq {\bgamma}} \frac{1}{({\balpha} - {\bgamma})! {\bgamma}!}  \|D^{\balpha} f^{\star}\|_{L^\infty(\tilde{\Omega})} \int_{\tilde{\Omega}} |\psi(\by)| \dd \by \\
            &\leq c_2{(m,d)} \|f^{\star}\|_{W^{m,\infty}(\tilde{\Omega})}.
        \end{aligned}
    \end{equation*}   
    \paragraph{Case 2: $s=m+\zeta$ is a non-integer with $m=\lfloor s \rfloor$ and $\zeta \in (0,1)$.} In this case, we have $\lceil s \rceil = m+1$. Again, we expand $f^{\star}$ to order $m-1$:
    \begin{equation*}
         f^{\star}(\bx) = (T^{m-1}_{\by} f^{\star})(\bx) + \sum_{|{\balpha}| = m} \frac{1}{{\balpha}!} (\bx - \by)^{\balpha} \int_0^1 m t^{m-1} D^{\balpha} f^{\star}(\bx + t(\by - \bx)) \dd t.
    \end{equation*}
    By integrating against $\psi$, we have
    \begin{equation*}
        \begin{aligned}
            &f^{\star}(\bx) - (Q^{m} f^{\star})(\bx)\\
            =& \int_{\tilde{\Omega}} \left( f^{\star}(\bx) - (T^{m}_{\by} f^{\star})(\bx) \right) \psi(\by) \dd \by \\
            =& \sum_{|{\balpha}| = m} \int_{\tilde{\Omega}} \frac{1}{{\balpha}!} (\bx - \by)^{\balpha} 
            \left(\int_0^1 m t^{m-1} D^{\balpha} f^{\star}(\bx + t(\by - \bx)) \dd t \right) \psi(\by) \dd \by\\
            &- \sum_{|{\balpha}| = m} \int_{\tilde{\Omega}} \frac{1}{{\balpha}!} (\bx - \by)^{\balpha} D^{\balpha} f^{\star}(\by) \psi(\by) \dd \by \\
            =& \sum_{|{\balpha}| = m} \int_{\tilde{\Omega}} \frac{1}{{\balpha}!} (\bx - \by)^{\balpha} 
            \left(\int_0^1 m t^{m-1} \left(D^{\balpha} f^{\star}(\bx + t(\by - \bx)) - D^{\balpha} f^{\star}(\by)\right) \dd t \right) \psi(\by) \dd \by\\
            \leq& \sum_{|{\balpha}| = m} \int_{\tilde{\Omega}} \frac{1}{{\balpha}!} |\bx - \by|^{|{\balpha}|}
            \left(\int_0^1 m t^{m-1} \|f^{\star}\|_{W^{s,\infty}(\tilde{\Omega})} |\bx + t(\by - \bx) - \by|^{\zeta} \dd t \right) |\psi(\by)| \dd \by\\
            \leq& \sum_{|{\balpha}| = m} \frac{1}{{\balpha}!} \left(\sup_{\by} |\bx-\by|^{m+\zeta}\right) \|f^{\star}\|_{W^{s,\infty}(\tilde{\Omega})} \int_{\tilde{\Omega}} |\psi(\by)| \dd \by \cdot \int_{0}^{1} m t^{m-1} (1 - t)^{\zeta} \dd t\\
            \leq&\; c_1{(s,d)} h^{s} \|f^{\star}\|_{W^{s,\infty}(\tilde{\Omega})}.
        \end{aligned}
    \end{equation*}
    The bound for the coefficients can be obtained in the same way as in Case 1. In this way, we complete the proof of the lemma.
\end{proof}
Next, to approximate piecewise polynomials, we construct approximations for monomials, piecewise constant functions, and their products in the subsequent subsections.


\subsection{Approximation of Monomials}
\label{subsec:app_monomial}
In this subsection, we use a shallow neural network to approximate monomials by implementing a central difference scheme. Related constructions appear in \citet{pinkus1999approximation}. We restate the argument here to clarify that the method admits explicit norm control and to keep the presentation self-contained.
\begin{lemma}
    Let $m \in \mathbb{N}_{+}$ and $k \in \mathbb{N}$. For $\bq = (q_1, \cdots, q_m)$, define
    \begin{equation*}
        B_m(\bq;k) \coloneqq  \frac{1}{2^m} \sum_{\bnu \in \{ \pm1\}^m} \left( \prod_{i=1}^m \nu_i\right) \left[ S_{\bnu}(\bq) \right]^k,
    \end{equation*}
    where $S_{\bnu}(\bq)$ is defined by 
    \begin{equation*}
        S_{\bnu}(\bq)= \sum_{i=1}^m \nu_i q_i.
    \end{equation*}
    Then the following holds: 
    \begin{itemize}
        \item If $k<m$,  we have $B_m(\bq, k) = 0$.
        \item If $k=m$, we have $B_m(\bq, m) = m ! \prod_{i=1}^m q_i$.
    \end{itemize}
    \label{lemma:monomials}
\end{lemma}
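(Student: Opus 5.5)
We expand $[S_{\bnu}(\bq)]^k$ by the multinomial theorem and then average against the sign character $\prod_i \nu_i$. Concretely,
\[
[S_{\bnu}(\bq)]^k = \sum_{|\balpha| = k,\ \balpha\in\mathbb{N}^m} \binom{k}{\balpha} \prod_{i=1}^m \nu_i^{\alpha_i} q_i^{\alpha_i},
\]
so that, after exchanging the finite sums,
\[
B_m(\bq;k) = \sum_{|\balpha|=k} \binom{k}{\balpha} \bq^{\balpha} \cdot \frac{1}{2^m}\sum_{\bnu\in\{\pm1\}^m} \prod_{i=1}^m \nu_i^{\alpha_i+1}.
\]

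The inner average factorizes over coordinates:
\[
\frac{1}{2^m}\sum_{\bnu}\prod_{i}\nu_i^{\alpha_i+1}=\prod_{i=1}^m \frac{1}{2}\sum_{\nu_i=\pm1}\nu_i^{\alpha_i+1}.
\]
Each one-dimensional factor equals $1$ if $\alpha_i+1$ is even, i.e.\ $\alpha_i$ is odd, and $0$ otherwise. Hence only multi-indices $\balpha$ with every $\alpha_i$ odd survive, and in particular every $\alpha_i\ge 1$. Any surviving $\balpha$ therefore has $|\balpha|\ge m$.

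The two claims follow from this. If $k<m$, no $\balpha$ with $|\balpha|=k$ can have all entries $\ge1$, so $B_m(\bq;k)=0$. If $k=m$, the only admissible $\balpha$ is $(1,\dots,1)$, whose multinomial coefficient is $m!/1!^m=m!$. This gives $B_m(\bq;m)=m!\prod_i q_i$.

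There is no real obstacle here. The only point needing care is the parity bookkeeping $\nu_i^{\alpha_i+1}$, together with the convention $0^0=1$ when $\alpha_i=0$ (in which case the factor is $\nu_i$, which averages to $0$).
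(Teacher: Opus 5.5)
Your proposal is correct and follows essentially the same route as the paper: expand $[S_{\bnu}(\bq)]^k$ by the multinomial theorem and observe that averaging against $\prod_i \nu_i$ kills every multi-index that is not all-odd. The two cases $k<m$ and $k=m$ then follow exactly as you describe. The only small addition in your version is the explicit coordinatewise factorization of the sign average, which the paper states without writing out.
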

\begin{proof}
    By the multinomial theorem, we expand $[S_{\bnu}(\bq)]^k$ as follows:
    \begin{equation*}
        [S_{\bnu}(\bq)]^k = \sum_{|\balpha| = k} \binom{k}{ \balpha} \left( \prod_{i=1}^m \nu_i^{\alpha_i}\right) \bq^{\balpha},
    \end{equation*}
     Substituting this into the definition of $B_m$, we obtain
    \begin{equation*}
        B_m(\bq; k) = \sum_{|\balpha | = k} \left( \frac{1}{2^m} \binom{k}{\balpha} \sum_{\bnu \in \{ \pm1\}^m} \left( \prod_{i=1}^m \nu_i^{1 + \alpha_i} \right)\right) \bq^{\balpha}.
    \end{equation*}
     The inner sum over $\bnu$ vanishes unless each exponent $1+\alpha_i$ is even, i.e. unless $\alpha_i$ is odd for all $i=1,\dots,m$. Hence,
    \begin{equation*}
        B_m(\bq; k) = \sum_{\substack{|\alpha|=k \\ \alpha_{i} \text { all odd }}} \binom{k}{\balpha} \bq^{\balpha}.
    \end{equation*}
    If $k<m$, such a multi-index $\balpha$ cannot exist: requiring all $\alpha_i \geq 1$ forces $|\balpha| \geq m > k$. Thus $B_m(\bq;k)=0$ in this case. If $k=m$, the only admissible multi-index is $\balpha=(1,1,\dots,1)$. Therefore,
    \begin{equation*}
        B_m(\bq;m) 
        = m! \, q_1 q_2 \cdots q_m,
    \end{equation*}
    which completes the proof.
\end{proof}
Using Lemma~\ref{lemma:monomials}, we can employ the central difference scheme to approximate $q_1 q_2 \cdots q_m$. 
\begin{lemma}\label{lemma:Tm}
    Let $m \in \mathbb{N}_{+}$, 
    and suppose $\phi \in C^{m+1}(\mathbb{R})$ and $x_0 \in \mathbb{R}$
    satisfy $\phi^{(m)}(x_0) \neq 0$.
    For any vector $\bq = (q_1, \cdots, q_m) \in \mathbb{R}^m$ and  step size $0<h<1$, define the function $T_{m}^{(x_0)}$ as
    \begin{equation*}
        T_{m}^{(x_0)}(\bq, h) = \frac{1}{2^m h^m \phi^{(m)}(x_0)} \sum_{\bnu \in \{ \pm 1\}^m} \left( \prod_{i=1}^m \nu_i \right) \phi \left(x_0 + h \sum_{i=1}^m \nu_i q_i\right).
    \end{equation*}
    Then,
    \begin{equation*}
        \left|T_{m}^{(x_0)}(\bq, h) - \prod_{i=1}^m q_i \right| \leq \frac{h A^{m+1}}{ (m+1)!|\phi^{(m)}(x_0)|} \sup_{|t-x_0| \leq A} \left| \phi^{(m+1)}(t)\right|,
    \end{equation*}
    where $A\coloneqq \sum_{i=1}^m |q_i|$. 
\end{lemma}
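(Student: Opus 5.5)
The plan is to Taylor-expand $\phi$ around $x_0$ to order $m$ with a Lagrange remainder, substitute this expansion into the alternating sum, and then use Lemma~\ref{lemma:monomials} to kill every term of degree below $m$ and to identify the degree-$m$ term exactly.

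Fix $\bnu\in\{\pm1\}^m$ and write $u_{\bnu}=h S_{\bnu}(\bq)$, so that $|u_{\bnu}|\le hA\le A$ because $0<h<1$. Since $\phi\in C^{m+1}(\mathbb{R})$, Taylor's theorem gives
\[
\phi(x_0+u_{\bnu})=\sum_{k=0}^{m}\frac{\phi^{(k)}(x_0)}{k!}h^k\bigl[S_{\bnu}(\bq)\bigr]^k+\frac{\phi^{(m+1)}(\xi_{\bnu})}{(m+1)!}h^{m+1}\bigl[S_{\bnu}(\bq)\bigr]^{m+1}
\]
for some $\xi_{\bnu}$ lying between $x_0$ and $x_0+u_{\bnu}$. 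In particular $|\xi_{\bnu}-x_0|\le A$.

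Multiply this identity by $\prod_i\nu_i$ and sum over $\bnu$. For each $k\le m$, the polynomial part contributes $\frac{\phi^{(k)}(x_0)}{k!}h^k\,2^m B_m(\bq;k)$. By Lemma~\ref{lemma:monomials}, this vanishes for $k<m$. For $k=m$ it equals $\phi^{(m)}(x_0)h^m\,2^m\prod_i q_i$. Dividing by $2^m h^m\phi^{(m)}(x_0)$ then shows that $T_m^{(x_0)}(\bq,h)-\prod_i q_i$ is exactly
\[
\frac{h}{2^m(m+1)!\,\phi^{(m)}(x_0)}\sum_{\bnu}\Bigl(\prod_i\nu_i\Bigr)\phi^{(m+1)}(\xi_{\bnu})\bigl[S_{\bnu}(\bq)\bigr]^{m+1}.
\]

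For the remainder, bound each summand by $\sup_{|t-x_0|\le A}|\phi^{(m+1)}(t)|\cdot A^{m+1}$, using $|S_{\bnu}(\bq)|\le A$. There are $2^m$ summands, and this count cancels the prefactor $2^{-m}$, which yields the stated bound.

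The only step that needs any care is this remainder term. Because the points $\xi_{\bnu}$ differ from one $\bnu$ to another, the remainders cannot be combined through the cancellation identity. A crude triangle-inequality bound is therefore used instead, and it still gives exactly the claimed constant. The rest of the argument is bookkeeping.
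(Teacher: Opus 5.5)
Your proof is correct and takes essentially the paper's route. You Taylor-expand $\phi$ about $x_0$, use Lemma~\ref{lemma:monomials} to cancel the terms of order below $m$ and to extract $\prod_i q_i$ exactly, and bound the remainder crudely by the triangle inequality. The only difference is in how the remainder is handled: you expand to order $m$ with a Lagrange remainder, while the paper expands to order $m-1$ with an integral remainder, writes $\phi^{(m)}$ at the intermediate point as $\phi^{(m)}(x_0)$ plus a difference, bounds that difference by the mean value theorem, and evaluates $\int_0^1(1-s)^{m-1}s\,\mathrm{d}s=\frac{1}{m(m+1)}$. Both give the same constant, and your version is slightly shorter.
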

\begin{proof}
    By Taylor’s theorem with the integral remainder, for any $t \in \mathbb{R}$,
    \begin{equation*}
        \phi(x_0 + t)
        = \sum_{k=0}^{m-1} \frac{\phi^{(k)}(x_0)}{k!} t^k
        + \frac{t^m}{(m-1)!} 
          \int_0^1 (1-s)^{m-1} \phi^{(m)}(x_0 + s t)\, \mathrm{d}s.
    \end{equation*}
    Setting $t = h S_{\bnu}(\bq) $, where $S_{\bnu}(\bq)= \sum_{i=1}^m \nu_i q_i$, and inserting this expansion into the definition of $T_{m}^{(x_0)}$ gives
    \begin{equation}
         T_{m}^{(x_0)}(\bq,h) = \frac{1}{h^m \phi^{(m)}(x_0)} \sum_{k=0}^{m-1} \frac{ h^k\phi^{(k)}(x_0)}{k!} B_m(\bq; k) + R, 
         \label{eq:expansion_Tm}
    \end{equation} 
    where 
    \begin{equation*}
        B_{{m}}(\bq; k) = \frac{1}{2^m} \sum_{\bnu \in  \{\pm1\}^m} \left( \prod_{i=1}^m \nu_i\right) [S_{\bnu}(\bq)]^k,
    \end{equation*}
    and the remainder term $R$ is 
    \begin{equation*}
        R = \frac{1}{2^m (m-1)! \phi^{(m)}(x_0)} \sum_{\bnu \in \{\pm 1\}^m} \left( \prod_{i=1}^m \nu_i\right) [S_{\bnu}(\bq)]^m\int_0^1 (1 - s)^{m-1} \phi^{(m)}(x_0 + s h S_{\bnu}(\bq)) \mathrm{d } s. 
    \end{equation*}

    By Lemma~\ref{lemma:monomials}, one has $B_m(\bq; k) = 0$ for all $k<m$. Hence all lower-order contributions vanish, and only the remainder term in~\eqref{eq:expansion_Tm} remains. We decompose the argument of $\phi^{(m)}$ in $R$ as
    \begin{equation*}
        \phi^{(m)}(x_0 + s h S_{\bnu}(\bq)) = \phi^{(m)}(x_0)  +  \left[\phi^{(m)}(x_0 + s h S_{\bnu}(\bq)) - \phi^{(m)}(x_0) \right].
    \end{equation*}
    Using Lemma~\ref{lemma:monomials} again and the identities
    \begin{equation*}
        B_m(\bq;m) = \frac{1}{2^m} \sum_{\bnu \in \{\pm 1\}^m} \left( \prod_{i=1}^m \nu_i \right) [S_{\bnu}(\bq)]^m = m! \prod_{i=1}^m q_i, \quad \quad  \int_0^1 (1 - s)^{m-1} \mathrm{d} s = \frac{1}{m},
    \end{equation*}
    we obtain the following error estimate
    \begin{equation*}
        \begin{aligned}
             \left| T_{m}^{(x_0)}(\bq, h) - \prod_{i=1}^m q_i\right| \leq \frac{1}{2^m (m-1)! |\phi^{(m)}(x_0)|} & \sum_{\bnu \in \{ \pm 1\}^m}   \Bigg[|S_{\bnu}(\bq)|^m \int_0^1 (1 -s)^{m-1} \\
             &\times  \left|\phi^{(m)}(x_0 + s h S_{\bnu}(\bq)) - \phi^{(m)}(x_0) \right| \mathrm{d} s\Bigg] .
        \end{aligned}
    \end{equation*}
    By the mean value theorem,
    \begin{equation*}
        \left|
          \phi^{(m)}(x_0 + s h S_{\bnu}(\bq)) - \phi^{(m)}(x_0)
        \right|
        \leq s h |S_{\bnu}(\bq)|
          \sup_{|t - x_0| \leq s h |S_{\bnu}(\bq)|}
          \left| \phi^{(m+1)}(t) \right|
        \leq s h A
          \sup_{|t - x_0| \leq  A}
          \left| \phi^{(m+1)}(t) \right|,
    \end{equation*}
    where the last inequality comes from $0<h<1, 0<s<1$ and $|S_{\bnu}(\bq)| \leq A\coloneqq  \sum_{i=1}^m |q_i|$. Using the bounds $|S_{\bnu}(\bq)|^m \leq A^{m}$ and 
    \begin{equation*}
         \int_0^1 (1 - s)^{m-1} s \mathrm{d} s = \frac{1}{m (m+1)},
    \end{equation*}
    we finally obtain
    \begin{equation*}
        \left|
          T_{m}^{(x_0)}(\bq,h) - \prod_{i=1}^m q_i
        \right|
        \leq 
        \frac{h A^{m+1}}{(m+1)! \, |\phi^{(m)}(x_0)|}
        \sup_{|t - x_0| \leq  A}
        \left| \phi^{(m+1)}(t) \right|.
    \end{equation*}
\end{proof}
With Lemma \ref{lemma:monomials}, we can construct two-layer neural networks to approximate all monomials of the form $x_1^{\alpha_1} x_2^{\alpha_2} \cdots x_d^{\alpha_d}$. 
\begin{lemma}\label{lemma:all_monomials}
    Let $d \in \mathbb{N}_{+}$ and let $\balpha = (\alpha_1, \ldots, \alpha_d) \in \mathbb{N}^d$ satisfy
    $\|\balpha\|_0 \coloneqq  \sum_{j=1}^{d} \alpha_j = m \ge 1$.
    Assume $\phi \in C^{m+1}(\mathbb{R})$ is non-polynomial. Then for any $Q>0$ and 
    sufficiently small $\epsilon \in (0,1)$, there exists a neural network
    \begin{equation*}
        g \in \mathcal{H}^{\phi,2}(d, 1, 2^m , B_{\epsilon}),
    \end{equation*}
    with  $B_{\epsilon}$ satisfying
    \begin{equation}
        B_{\epsilon} = \left( \frac{1}{\epsilon}\right)^m  \frac{(mQ)^{m(m+1)} \left[\sup_{|t-t_0| \leq m Q} |\phi^{(m+1)}(t)| \right]^{m}}{ [2(m+1)!]^m |\phi^{(m)}(t_0)|^{m+1}}.
        \label{eq:B_eps_1}
    \end{equation}
    for some $t_0\in\RR$, such that
    \begin{equation*}
         \sup_{\bx\in[-Q,Q]^d}
    \bigl|g(\bx)-x_1^{\alpha_1}\cdots x_d^{\alpha_d}\bigr|<\epsilon .
    \end{equation*}
\end{lemma}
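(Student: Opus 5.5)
The plan is to reduce the monomial $x_1^{\alpha_1}\cdots x_d^{\alpha_d}$ to a product of $m$ scalar factors and apply Lemma~\ref{lemma:Tm}.

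\textbf{Step 1: rewrite the monomial as a product.} Define $\bq(\bx)\in\mathbb{R}^m$ by listing each coordinate $x_j$ exactly $\alpha_j$ times. Then $\prod_{i=1}^m q_i = x_1^{\alpha_1}\cdots x_d^{\alpha_d}$. The map $\bx\mapsto\bq(\bx)$ is linear with $0/1$ coefficients. For $\bx\in[-Q,Q]^d$ we have $A=\sum_i|q_i|\le mQ$.

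\textbf{Step 2: choose the anchor point $t_0$.} Since $\phi\in C^{m+1}(\mathbb{R})$ is not a polynomial, $\phi^{(m)}$ cannot vanish identically. Otherwise $\phi$ would be a polynomial of degree less than $m$. So we can pick $t_0$ with $\phi^{(m)}(t_0)\neq 0$.

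\textbf{Step 3: realize $T_m^{(t_0)}(\bq(\bx),h)$ as a two-layer network of width $2^m$.} Use one hidden neuron for each $\bnu\in\{\pm1\}^m$:
\begin{itemize}
\item the inner weight vector is $h\sum_i \nu_i \be_{j(i)}$, where $j(i)$ is the coordinate listed in slot $i$;
\item the bias is $t_0$;
\item the output weight is $\bigl(\prod_i\nu_i\bigr)/\bigl(2^m h^m\phi^{(m)}(t_0)\bigr)$;
\item the output bias is $0$.
\end{itemize}
Lemma~\ref{lemma:Tm} then gives
\[
\sup_{\bx\in[-Q,Q]^d}|g(\bx)-\bx^{\balpha}| \le \frac{h (mQ)^{m+1}}{(m+1)!\,|\phi^{(m)}(t_0)|}\sup_{|t-t_0|\le mQ}|\phi^{(m+1)}(t)|.
\]
Choose $h$ so that this right-hand side equals $\epsilon/2$:
\[
h=\frac{\epsilon\,(m+1)!\,|\phi^{(m)}(t_0)|}{2(mQ)^{m+1}\sup_{|t-t_0|\le mQ}|\phi^{(m+1)}(t)|}.
\]
This makes the error at most $\epsilon/2<\epsilon$. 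We need $h<1$, which is exactly where "sufficiently small $\epsilon$" is used.

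\textbf{Step 4: bound the parameters.} The dominant parameter is the output weight $1/\bigl(2^m h^m|\phi^{(m)}(t_0)|\bigr)$. Substituting $h$ gives
\[
\frac{(mQ)^{m(m+1)}\bigl[\sup|\phi^{(m+1)}|\bigr]^m}{\epsilon^m\,\bigl[(m+1)!\bigr]^m\,|\phi^{(m)}(t_0)|^{m+1}},
\]
up to the power-of-two bookkeeping. The $2^m$ from the finite difference cancels against the $2^m$ from choosing $\epsilon/2$, leaving the form~\eqref{eq:B_eps_1}.

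The other parameters are $h\le 1$ for the inner weights and $|t_0|$ for the bias. For small $\epsilon$ both are dominated by this expression, since it blows up as $\epsilon^{-m}$. If necessary, $t_0$ can be fixed once, depending only on $\phi$ and $m$, and absorbed this way.

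\textbf{Main obstacle.} There is no real analytic difficulty. The only delicate part is the constant bookkeeping needed to match the stated $B_\epsilon$ exactly: the factor of $2$ in the choice of $h$, and checking that the biases and inner weights are indeed below $B_\epsilon$.
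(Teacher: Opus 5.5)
Your construction is the paper's own: list each $x_l$ with multiplicity $\alpha_l$, pick $t_0$ with $\phi^{(m)}(t_0)\neq 0$, realize $T_m^{(t_0)}$ with $2^m$ neurons, apply Lemma~\ref{lemma:Tm} with $A\le mQ$, and let the output weight $1/(2^m h^m|\phi^{(m)}(t_0)|)$ dominate. The error is in the constant bookkeeping you flagged. Write $S=\sup_{|t-t_0|\le mQ}|\phi^{(m+1)}(t)|$. Your choice of $h$, with its extra factor $\tfrac12$, gives
\[
\frac{1}{2^m h^m|\phi^{(m)}(t_0)|}=\frac{(mQ)^{m(m+1)}S^m}{\epsilon^m\,[(m+1)!]^m\,|\phi^{(m)}(t_0)|^{m+1}}=2^m B_\epsilon .
\]
The two factors $2^m$ do cancel. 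However, \eqref{eq:B_eps_1} has $[2(m+1)!]^m=2^m[(m+1)!]^m$ in its denominator, so what remains is not the stated bound. Your network therefore lies in $\mathcal{H}^{\phi,2}(d,1,2^m,2^mB_\epsilon)$, not in $\mathcal{H}^{\phi,2}(d,1,2^m,B_\epsilon)$. The $2^m$ in \eqref{eq:B_eps_1} is exactly the finite-difference normalization, so there is no slack to spend on halving $h$.

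The paper takes $h=\epsilon\,(m+1)!\,|\phi^{(m)}(t_0)|/\bigl((mQ)^{m+1}S\bigr)$. This gives exactly $B_\epsilon$, but only the non-strict error bound $\le\epsilon$. The error bound is linear in $h$ while the output weight scales like $h^{-m}$, so the stated bound of Lemma~\ref{lemma:Tm} alone cannot give both the exact $B_\epsilon$ and the strict inequality. You have two options:
\begin{enumerate}
\item Additionally show that this bound is never attained on the compact set $[-Q,Q]^d$. This holds when $S>0$: for $m\ge2$, equality would force $|S_{\bnu}(\bq)|=A$ for every $\bnu$, and for $m=1$ it would force a jump of $\phi''$ at $t_0$. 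Continuity of the error then makes its maximum strictly below $\epsilon$.
\item Settle for $\le\epsilon$, as the paper's proof effectively does.
\end{enumerate}
None of this matters downstream, where only $B_\epsilon\lesssim\epsilon^{-m}$ is used. A smaller slip: the $l$-th inner weight is $h$ times a sum of $\alpha_l$ signs, so it can be as large as $mh$, not $h$. It is still dominated once $\epsilon$ is small.
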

\begin{proof}
    Since $\phi \in C^{m+1}(\RR)$ and is not a polynomial, there exists $t_0 \in \mathbb{R}$ such that $\phi^{(m)}(t_0) \neq 0$. Partition the index set ${1, \cdots, m}$ into disjoint subsets $I_1, \cdots, I_d$ defined by
   \begin{equation*}
       I_l = \left\{ i \;:\; 1+ \sum_{j=0}^{l-1}  \alpha_j \leq i \leq \sum_{l=0}^l \alpha_j \right\}, \quad 1 \leq l \leq d,
   \end{equation*}
   where  $\alpha_0 = 0$. Then $|I_l| = \alpha_l$ for every $l$. Define the neural network 
   \begin{equation*}
       g(\bx) = \frac{1}{2^m h^m \phi^{(m)}(t_0)} \sum_{\bnu \in \{\pm 1\}^m } \left( \prod_{i=1}^m \nu_i\right) \phi \left( t_0 + h \sum_{l=1}^d \left(\sum_{j \in I_l}  \nu_j \right) x_l  \right),
   \end{equation*}
   where $h>0$ is a parameter to be chosen. Setting
   \begin{equation*}
       q_j = x_l,  \quad j \in I_l , \quad \mathrm{for} \; l =1, \cdots, d,
   \end{equation*}
    then expression inside $\phi$ becomes $t_0 + h S_{\bnu}(\bq)$ with $S_{\bnu}(\bq) = \sum_{i=1}^m \nu_i q_i$. Thus,  by Lemma~\ref{lemma:Tm}, we have 
   \begin{equation*}
       |g(x) - x_1^{\alpha_2} x_2^{\alpha_2} \cdots x_d^{\alpha_d}| \leq \frac{h A^{m+1}}{ (m+1)!|\phi^{(m)}(t_0)|} \sup_{|t-t_0| \leq A} \left| \phi^{(m+1)}(t)\right|,
   \end{equation*}
   where $A = \sum_{i=1}^m |q_i| = \sum_{l=1}^d \alpha_l |x_l|$. Since $|x_l| \leq Q$, we have $A \leq mQ$. Choose
   \begin{equation*}
       h = \frac{\epsilon (m+1)! |\phi^{(m)}(t_0)| }{(mQ)^{m+1} [\sup_{|t-t_0| \leq mQ}|\phi^{(m+1)}(t)|]}. 
   \end{equation*}
   With this choice,
   \begin{equation*}
      \sup_{\bx \in [-Q,Q]^d} |g(\bx) - x_1^{\alpha_1} x_2^{\alpha_2} \cdots x_d^{\alpha_d} | \leq \epsilon.
   \end{equation*}
   There are $2^m$ neurons in the hidden layer of $g$ and all weights and biases in $g$ are bounded in magnitude by
   \begin{equation*}
       \max \left\{t_0,\; \frac{1}{2^m h^m |\phi^{(m)}(t_0)|},\; h \left| \sum_{j \in I_l} \nu _j \right|\right\} = \frac{1}{2^m h^m | \phi^{(m)}(t_0)|},
   \end{equation*}
   for sufficiently small $\epsilon$. This equals $B_{\epsilon}$ as stated in~\eqref{eq:B_eps_1}. We complete the proof.
\end{proof}

\begin{remark}
    In Lemma~\ref{lemma:all_monomials}, the condition ``sufficiently small $\epsilon \in (0,1)$'' implies the existence of a threshold $\epsilon_0 \in (0,1)$ such that the statement holds for all $0 < \epsilon < \epsilon_0$. Crucially, $\epsilon_0$ is independent of the parameters $K$ and $\delta$, but is allowed to depend on  problem specifications, such  as activation $\phi$, dimension $d$, smoothness $s$, among others.  We adopt this convention throughout the subsequent analysis.
\end{remark}

\begin{remark}
    If the input amplitude $Q$ in Lemma~\ref{lemma:all_monomials} is independent of $\epsilon, \delta, K$, then norm  $B_{\epsilon}$ defined in~\eqref{eq:B_eps_1} satisfies $B_{\epsilon} \lesssim (1 / \epsilon)^m$.
\end{remark}

The construction in Lemma~\ref{lemma:all_monomials} relies on the existence of a point $x_0$ at which $\phi^{(m)}(x_0) \neq 0$. The following lemma establishes that for any smooth nonpolynomial function $\phi$, there exists such a point for all $m \in \mathbb{N}$.

\begin{lemma}
[\citet{corominas1954condiciones,donoghue1969distributions}]\label{lemma:nonpoly_derivative_nonzero}
    Let $\phi \in C^{\infty}(\mathbb{R})$. 
    If $\phi$ is not a polynomial, then there exists a point $t_0 \in \mathbb{R}$ such that
    \[
        \phi^{(m)}(t_0) \neq 0, \quad m \in \mathbb{N}.
    \]
\end{lemma}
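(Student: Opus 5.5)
The plan is a Baire category argument, applied twice: once on $\mathbb{R}$ and once on a closed exceptional set. Suppose for contradiction that for every $t\in\mathbb{R}$ there is some $m\in\mathbb{N}$ with $\phi^{(m)}(t)=0$. Put $Z_m\coloneqq\{t:\phi^{(m)}(t)=0\}$. Each $Z_m$ is closed because $\phi\in C^\infty(\mathbb{R})$, and by assumption $\bigcup_m Z_m=\mathbb{R}$.

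Let $G$ be the union of all open intervals on which $\phi$ coincides with a polynomial, and let $F\coloneqq\mathbb{R}\setminus G$, which is closed.
\begin{itemize}
\item First, $F\neq\emptyset$. If $F=\emptyset$, then $\phi$ is locally polynomial on the connected set $\mathbb{R}$. Two polynomials that agree on an overlapping open interval are identical, so a chaining argument shows $\phi$ is a single polynomial, contradicting Assumption~\ref{ass:smooth}.
\item Second, $F$ has no isolated points. If $c$ were isolated in $F$, then $\phi=p$ on $(c-\eta,c)$ and $\phi=q$ on $(c,c+\eta)$ for polynomials $p,q$. Smoothness at $c$ forces $p^{(k)}(c)=q^{(k)}(c)$ for every $k$. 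Hence $p=q$ by Taylor expansion at $c$, so $c\in G$, a contradiction.
\end{itemize}

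Next, apply the Baire category theorem to the complete metric space $F$, written as $F=\bigcup_m (Z_m\cap F)$. Some $Z_m\cap F$ has nonempty interior relative to $F$. So there is an open interval $I$ with $\emptyset\neq F\cap I\subset Z_m$, that is, $\phi^{(m)}=0$ on $F\cap I$. I then upgrade this to $\phi^{(k)}=0$ on $F\cap I$ for every $k\ge m$. Take $c\in F\cap I$; since $F$ is perfect, $c$ is a limit of other points $c_n\in F\cap I$ at which $\phi^{(m)}$ vanishes. The difference quotients $\bigl(\phi^{(m)}(c_n)-\phi^{(m)}(c)\bigr)/(c_n-c)$ are then zero, so $\phi^{(m+1)}(c)=0$. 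Induction gives the claim for all $k\ge m$.

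Now I show $\phi^{(m)}\equiv 0$ on all of $I$.
\begin{itemize}
\item On $F\cap I$ this holds by the choice of $I$.
\item Let $J$ be a connected component of $G\cap I$, so that $\phi=p$ on $J$ for some polynomial $p$. Because $F\cap I\neq\emptyset$, $J$ is a proper subinterval of $I$, so at least one endpoint $c$ of $J$ lies in $F\cap I$.
\item By continuity of derivatives, $p^{(k)}(c)=\phi^{(k)}(c)=0$ for all $k\ge m$. Hence $\deg p<m$, and $\phi^{(m)}=0$ on $J$.
\end{itemize}
Therefore $\phi$ is a polynomial of degree $<m$ on $I$, so $I\subset G$. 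This contradicts $F\cap I\neq\emptyset$, which proves the lemma.

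The main obstacle is the step showing that all higher derivatives vanish on $F\cap I$. This is where perfectness of $F$ is essential: at an isolated point only $\phi^{(m)}$ would be known to vanish, and the polynomial pieces on either side could not be forced to have degree below $m$. So the ruling-out of isolated points of $F$ has to be done carefully, before the second application of Baire.
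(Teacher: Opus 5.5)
Your proof is correct. It is the classical Baire-category argument behind the Corominas and Donoghue references: the open set $G$ where $\phi$ is locally polynomial, its perfect complement $F$, Baire applied on $F$, and then propagating the vanishing of all derivatives of order $\ge m$ from $F\cap I$ into the polynomial pieces of $G\cap I$. The paper itself gives no proof of this lemma and only cites those sources.

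Two cosmetic points. Your overview announces two applications of Baire, but the argument as written uses it only once, on $F$. Also, the steps where you pass from ``locally polynomial'' to ``a single polynomial'' on $(c-\eta,c)$, on $(c,c+\eta)$ and on each component $J$ rely on the same chaining argument you gave for $F\neq\emptyset$.
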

In the subsequent analysis, if $\phi$ satisfies the smoothness assumption in Assumption~\ref{ass:piece}, we assume that the point $t_0$ is such that $\phi^{(m)}(t_0) \neq 0, m\in \mathbb{N}$.

As a corollary of Lemma~\ref{lemma:all_monomials}, we obtain an approximation of the identity function, which serves as a fundamental tool in the subsequent analysis.
\begin{corollary}
\label{corol:id-approx} Fix $Q>0$. Let $\phi$ be an activation function satisfying Assumption~\ref{ass:smooth} with $\phi'(t_0) \neq 0$. For any $L \geq 2$, and sufficiently small $\epsilon \in(0,1)$, there exists a neural network
\begin{equation*}
    g \in \mathcal{H}^{\phi,L}(1, 1, 2, B_{\epsilon}),
\end{equation*}
with $B_{\epsilon}$ satisfying
\begin{equation*}
    B_{\epsilon} \leq \frac{L-1}{\epsilon} \frac{Q^2 \sup_{|t- t_0| \leq Q} |\phi^{\prime \prime}(t)|}{4 |\phi^{\prime}(t_0)|^2},
\end{equation*}
such that
\begin{equation*}
    \sup_{x \in [-Q,Q]} |g(x) - x| < \epsilon.
\end{equation*}
\end{corollary}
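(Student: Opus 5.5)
For $L=2$ I would apply Lemma~\ref{lemma:all_monomials} with $d=1$, $m=1$, $\balpha=(1)$, and write the network explicitly. By Lemma~\ref{lemma:nonpoly_derivative_nonzero} there is $t_0$ with $\phi'(t_0)\neq 0$. Set
\[
g_h(x)=\frac{1}{2h\phi'(t_0)}\bigl[\phi(t_0+hx)-\phi(t_0-hx)\bigr].
\]
This is a width-$2$, depth-$2$ network. With $m=1$ and $A=|x|\le Q$, Lemma~\ref{lemma:Tm} gives
\[
|g_h(x)-x|\le \frac{hQ^2}{2|\phi'(t_0)|}\sup_{|t-t_0|\le Q}|\phi''(t)| .
\]
To make this at most $\eta$, I would take
\[
h=\frac{2\eta|\phi'(t_0)|}{Q^2\sup_{|t-t_0|\le Q}|\phi''(t)|}.
\]
The largest parameter is then the output weight
\[
\frac{1}{2h|\phi'(t_0)|}=\frac{Q^2\sup|\phi''|}{4\eta|\phi'(t_0)|^2}.
\]
The other parameters ($h$, $t_0$, and the $\pm1$ input weights after rescaling) are dominated by this quantity once $\eta$ is small. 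With $\eta=\epsilon$ this is exactly the claimed bound for $L-1=1$.

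For general $L\ge 2$, I would compose $L-1$ copies of $g_h$, each built with accuracy $\eta=\epsilon/(L-1)$. The parameter bound then becomes $(L-1)/\epsilon$ times the same constant, as claimed. The composition must still fit the architecture $\mathcal{H}^{\phi,L}(1,1,2,B)$. For this I would merge the output layer of copy $k$ with the input layer of copy $k+1$. The merged hidden-to-hidden weights are $\pm h\cdot\frac{1}{2h\phi'(t_0)}=\pm\frac{1}{2\phi'(t_0)}$ and the biases are $t_0$, both $O(1)$. So the width stays $2$, there are $L-1$ hidden layers, and only the last output weight is of size $(L-1)/\epsilon$ times the constant.

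For the error, let $G_k=g_h\circ\cdots\circ g_h$ ($k$ copies). I would telescope:
\[
|G_k(x)-x|\le |g_h(G_{k-1}(x))-G_{k-1}(x)|+|G_{k-1}(x)-x|\le \eta+(k-1)\eta .
\]
This needs no Lipschitz bound on $g_h$, only that each intermediate value $G_{k-1}(x)$ lies where $g_h$ is $\eta$-accurate.

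The main obstacle is exactly this domain issue. The intermediate values lie in $[-Q-\epsilon,Q+\epsilon]$, not in $[-Q,Q]$. I would handle it by applying Lemma~\ref{lemma:Tm} with $A\le Q+\epsilon$. Since $\epsilon$ is sufficiently small, the factors $(Q+\epsilon)^2$ and $\sup_{|t-t_0|\le Q+\epsilon}|\phi''|$ exceed their values at $Q$ by an arbitrarily small relative amount. This loss can be absorbed by the strict inequality and the slack in the ``sufficiently small $\epsilon$'' convention, or else accepted as a harmless change of constant in the bound on $B_\epsilon$.
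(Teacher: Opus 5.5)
Your proposal is correct and is essentially the paper's proof. Both compose $L-1$ copies of the depth-2, width-2 central-difference network from Lemma~\ref{lemma:all_monomials} with $m=1$, each accurate to $\epsilon/(L-1)$, and telescope the errors; your explicit check that the merged hidden-to-hidden weights are $\pm\frac{1}{2\phi'(t_0)}=O(1)$ fills in a step the paper only asserts.

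For the intermediate-range issue, the paper simply makes each factor accurate on $[-Q-1,Q+1]$, so its stated constant with $Q$ is itself slightly off. That is your ``change of constant'' option, and it is the one to rely on. Your first option does not work as stated: with equal step sizes, the $O(h)$ bound of Lemma~\ref{lemma:Tm} gives a total error of up to $\epsilon(1+O(\epsilon))$, which the strict inequality does not absorb; you would need the sharper $O(h^2)$ central-difference error for that.
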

\begin{proof}
   By Lemma~\ref{lemma:all_monomials}, there exist neural networks $\{\nu_i\}_{i=1}^{L-1}$ such that for each $i$,
    \begin{equation*}
        \sup_{x \in [-Q-1, Q+1]} |\nu_i(x) - x| \leq \frac{\epsilon}{L-1}.
    \end{equation*}
    Each network $\nu_i$ has depth 2, width 2, and parameter norm bounded by
    \begin{equation}
        \|\theta(\nu_i)\|_{\infty}\leq \frac{L-1}{\epsilon} \frac{Q^2 \sup_{|t- t_0| \leq Q} |\phi^{\prime \prime}(t)|}{4 |\phi^{\prime}(t_0)|^2}.
        \label{eq:id-norm-nui}
    \end{equation}
    We define the composite network $g \coloneqq \nu_{L-1} \circ \nu_{L-2} \circ \cdots \circ \nu_1$. Then for $x \in [-Q, Q]$, the approximation error is bounded by:
    \begin{equation*}
        |g(x) - x| \leq \sum_{l=2}^{L-1} |\nu_{l} \circ \cdots \circ \nu_1(x) - \nu_{l-1} \circ \cdots \circ \nu_1(x)| + |\nu_1(x) -x| \leq (L-1) \frac{\epsilon}{L-1} \leq \epsilon.
    \end{equation*}
    It follows that the depth of $g$ is $L$, its width is $2$. Since the parameter norm bound \eqref{eq:id-norm-nui} holds for each composition $\nu_i$, it also applies to $g$.
    This completes the proof.
\end{proof}

\subsection{Approximation of Piecewise Constants}
\label{subsec:app_pwcf}
In this subsection, we construct shallow neural network approximations for the following piecewise constant functions, consisting of $K^{2d}$ distinct pieces:
\begin{equation}
    C(\bx) = \sum_{\bi \in [K]^d, \bj \in [K]^d} c_{\bi, \bj} \mathbbm{1}_{\Omega^{K}_{\bi, \bj}} (\bx)
    \label{eq:pwcf}
\end{equation}
We begin by reformulating $C(\bx)$ using the following lemma. 
\begin{lemma}
    Let $C(\bx)$ be defined by~\eqref{eq:pwcf}. Then 
    \begin{equation}
        C(\bx) =\sum_{\bj \in [K]^d} \left( \sum_{\bi \in [K]^d} c_{\bi, \bj} \mathbbm{1}_{\Omega_{\bi}^K}(\bx) \right) \mathbbm{1}_{\Omega_{\bone, \bj}^{ K}} \left(\bx - \sum_{\bi \in [K]^d} \ba_{\bi}^K  \mathbbm{1}_{\Omega_{\bi}^K}(\bx)\right).
        \label{eq:reformulate_pwfc}
    \end{equation}
    Here, $\ba_{\bi}^K$ denotes the lower-left vertex associated with the cell $\Omega_{\bi}^K$. Specifically, for $\bi = (i_1,\ldots,i_d)$,
    \begin{equation*}
        \ba_{\bi}^{K} \coloneqq  \left( \frac{i_1 -1}{K}, \cdots, \frac{i_d -1}{K}\right) .
    \end{equation*}
    \label{lem:reformulat_pwfc}
\end{lemma}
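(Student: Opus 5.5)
Since both sides of \eqref{eq:reformulate_pwfc} are defined pointwise, we verify the identity pointwise. The coarse cells $\{\Omega_{\bi}^K\}_{\bi\in[K]^d}$ are half-open, so they are pairwise disjoint and their union is $[0,1)^d$. Fix $\bx\in[0,1)^d$ and let $\bi^\star\in[K]^d$ be the unique index with $\bx\in\Omega_{\bi^\star}^K$. Points on the faces $\{x_l=1\}$ lie in no cell; we treat them at the end.

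First, the inner sums collapse. Since $\mathbbm{1}_{\Omega_{\bi}^K}(\bx)=\delta_{\bi,\bi^\star}$, we get
\[
\sum_{\bi\in[K]^d} c_{\bi,\bj}\mathbbm{1}_{\Omega_{\bi}^K}(\bx)=c_{\bi^\star,\bj},
\qquad
\bx-\sum_{\bi\in[K]^d}\ba_{\bi}^K\mathbbm{1}_{\Omega_{\bi}^K}(\bx)=\bx-\ba_{\bi^\star}^K .
\]
The right-hand side of \eqref{eq:reformulate_pwfc} therefore reduces to $\sum_{\bj}c_{\bi^\star,\bj}\,\mathbbm{1}_{\Omega_{\bone,\bj}^K}(\bx-\ba_{\bi^\star}^K)$.

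Next, translating by $\ba_{\bi}^K$ moves refined cells inside the first coarse cell to refined cells inside $\Omega_{\bi}^K$. Directly from the definitions, in each coordinate
\[
\Big[\tfrac{j_l-1}{K^2},\tfrac{j_l}{K^2}\Big)+\tfrac{i_l-1}{K}=\Big[\tfrac{(i_l-1)K+j_l-1}{K^2},\tfrac{(i_l-1)K+j_l}{K^2}\Big),
\]
so $\Omega_{\bone,\bj}^K+\ba_{\bi}^K=\Omega_{\bi,\bj}^K$. Consequently
\[
\mathbbm{1}_{\Omega_{\bone,\bj}^K}(\bx-\ba_{\bi^\star}^K)=\mathbbm{1}_{\Omega_{\bi^\star,\bj}^K}(\bx).
\]

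The right-hand side is then $\sum_{\bj}c_{\bi^\star,\bj}\mathbbm{1}_{\Omega_{\bi^\star,\bj}^K}(\bx)$. The left-hand side $C(\bx)$ equals the same expression, because $\Omega_{\bi,\bj}^K\subset\Omega_{\bi}^K$ forces $\mathbbm{1}_{\Omega_{\bi,\bj}^K}(\bx)=0$ for $\bi\neq\bi^\star$. This proves the identity on $[0,1)^d$.

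It remains to handle $\bx$ with some $x_l=1$. There every indicator on either side vanishes, so both sides equal $0$. This set has measure zero and is harmless for the later $L^2$ arguments. There is no real obstacle in this lemma; the only care needed is to use the half-open cell convention consistently, so that the coarse and refined cells each form disjoint families.
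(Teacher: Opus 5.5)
Your proof is correct and uses the same two facts as the paper's. The first is that the half-open coarse cells are disjoint, so the inner sums select $\bi^\star$; the second is the translation identity $\Omega_{\bone,\bj}^K+\ba_{\bi}^K=\Omega_{\bi,\bj}^K$. The paper instead states the termwise identity $\mathbbm{1}_{\Omega_{\bi,\bj}^K}(\bx)=\mathbbm{1}_{\Omega_{\bi}^K}(\bx)\,\mathbbm{1}_{\Omega_{\bone,\bj}^K}\bigl(\bx-\sum_{\bi'}\ba_{\bi'}^K\mathbbm{1}_{\Omega_{\bi'}^K}(\bx)\bigr)$ for every $\bx$ and then sums over $\bi$ and $\bj$, so it needs no separate case for the faces $\{x_l=1\}$; otherwise the argument is the same.
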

\begin{proof}
    For any $\bi \in [K]^d, \bj \in [K]^d$, the following identity holds:
    \begin{equation*}
        \begin{aligned}
            \mathbbm{1}_{\Omega_{\bi, \bj}^K} (\bx) & = \mathbbm{1}_{\Omega_{\bi}^K} (\bx) \mathbbm{1}_{\Omega_{\bone, \bj}^{ K}} \left(\bx - \ba_{\bi}^K \right) \\
            & = \mathbbm{1}_{\Omega_{\bi}^K}(\bx)\mathbbm{1}_{\Omega_{\bone, \bj}^{ K}} \left(\bx - \sum_{\bi \in [K]^d} \ba_{\bi}^K  \mathbbm{1}_{\Omega_{\bi}^K} (\bx)\right).
        \end{aligned}
    \end{equation*}
    Substituting this representation of $\mathbbm{1}_{\Omega_{\bi,\bj}^K} (\bx )$ into~\eqref{eq:pwcf}, summing first over $\bi$ and subsequently over $\bj$, yields the desired expression for the piecewise constant function $C$ in~\eqref{eq:reformulate_pwfc}.
\end{proof}
With the reformulation of $C$ provided by Lemma~\ref{lem:reformulat_pwfc}, the problem of approximating a piecewise constant function with $K^{2d}$ pieces is reduced to approximating $K^d$ distinct piecewise constant functions, each of which consists of $K^d$ regions of the form $\{\Omega_{\bi}^K\}_{\bi \in [K]^d}$. More explicitly, for each $\bj \in [K]^d$, we consider the function
\begin{equation}
    C_{\bj}(\bx) = \sum_{\bi \in [K]^d} c_{\bi, \bj} \mathbbm{1}_{\Omega_{\bi}^K} \left( \bx \right), \quad \bj \in [K]^d.
    \label{eq:pwcf-coarse}
\end{equation}
It is important to note that all these $K^d$ piecewise constant functions share exactly the same partition $\{\Omega_{\bi}^K\}_{\bi \in [K]^d}$. 
Besides, we also need to extract the position of $\bx$ relative to the lower-left corner $\ba_{\bi}^K$ of the region $\Omega_{\bi}^K$ containing it, and subsequently determine which refined grid cell ${\Omega_{\bone, \bj}^{ K}}$ this relative coordinate falls into.

We first construct approximations of
indicator functions in one dimension.
\begin{lemma}
    Let $\phi$ satisfy Assumption~\ref{ass:piece}.
    For any $a<b$ and $\delta\in\bigl(0,\tfrac{b-a}{3}\bigr)$, and any $\epsilon >0$ sufficiently small, there exists a neural network
    \begin{equation*}
        g \in \mathcal{H}^{\phi,2} \left(1, 1, 2,   \frac{4C_1 (|a| + |b|+1)}{\epsilon \delta} \right),
    \end{equation*}
    where $C_1$ is the constant specified in Assumption~\ref{ass:piece}, such that 
    \begin{itemize}
        \item (Approximation) $|g(x) - \mathbbm{1}_{[a,b)} (x)| < \epsilon, \quad x \notin [a, a+ \delta] \cup [b-\delta, b]$.
        \item (Boundedness) $\|g\|_{L^{\infty}(\RR)} \leq 2 (C_1 + 1)$.
    \end{itemize}
    \label{lem:1D_indicator_app_Heaviside}
\end{lemma}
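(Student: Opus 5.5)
The plan is to write the indicator of $[a,b)$ as a difference of two shifted, steep sigmoidal units,
\[
g(x)=\phi\bigl(\lambda(x-a)\bigr)-\phi\bigl(\lambda(x-b)\bigr),
\]
with $\lambda>0$ a slope parameter to be chosen. This is a depth-2 network $\mathcal{H}^{\phi,2}(1,1,2,\cdot)$. Its first-layer weights are $\lambda,\lambda$, its biases are $-\lambda a,-\lambda b$, the output weights are $\pm1$, and the output bias is $0$. So every parameter is bounded by $\max\{1,\lambda(|a|+|b|+1)\}$.

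The key observation is that in the Heaviside-like case of Assumption~\ref{ass:piece} we have $H(x-a)-H(x-b)=\mathbbm{1}_{[a,b)}(x)$ for all $x$, and $H(\lambda t)=H(t)$ for $\lambda>0$. Therefore
\[
|g(x)-\mathbbm{1}_{[a,b)}(x)|\le |\phi(\lambda(x-a))-H(\lambda(x-a))|+|\phi(\lambda(x-b))-H(\lambda(x-b))|.
\]
By \eqref{eq:Heaviside-like}, each term is at most $C_1\min\{1,1/(\lambda|x-c|)\}$ for $c\in\{a,b\}$. This gives both conclusions.

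\textbf{Boundedness.} Each term is at most $C_1$ and $|\mathbbm{1}_{[a,b)}|\le1$. Hence $\|g\|_{L^\infty(\mathbb R)}\le 2C_1+1\le 2(C_1+1)$.

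\textbf{Approximation.} Take $x\notin[a,a+\delta]\cup[b-\delta,b]$. If $x\in(a+\delta,b-\delta)$, then $|x-a|>\delta$ and $|x-b|>\delta$. If $x<a$ or $x>b$, the distance to the near endpoint can be arbitrarily small, and this is the one delicate point. There I would check whether the intended exclusion set should be understood as $\delta$-neighbourhoods on both sides, i.e. $|x-a|,|x-b|\ge\delta$. As literally stated, the bound needs care for $x$ just left of $a$ (respectively just right of $b$), where $\phi(\lambda(x-a))$ need not be small. I would therefore first try to absorb this by shifting the breakpoints inward: use $\phi(\lambda(x-a-\delta/2))-\phi(\lambda(x-b+\delta/2))$. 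Then $H$ evaluated at the shifted arguments equals $\mathbbm{1}_{[a+\delta/2,\,b-\delta/2)}$, which coincides with $\mathbbm{1}_{[a,b)}$ outside $[a,a+\delta/2]\cup[b-\delta/2,b]$. Moreover, every admissible $x$ is at distance at least $\delta/2$ from both shifted breakpoints. The condition $\delta<(b-a)/3$ guarantees that the shifted interval is nondegenerate. With this choice each error term is at most $C_1\cdot 2/(\lambda\delta)$, so the total error is at most $4C_1/(\lambda\delta)$.

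\textbf{Choice of slope.} It remains to set $\lambda=4C_1/(\epsilon\delta)\cdot(1+\eta)$ with an arbitrarily small margin $\eta$ (or take strict inequality via a slightly larger constant) to get error $<\epsilon$. The biases are then $\lambda(a+\delta/2)$ and $\lambda(b-\delta/2)$, both bounded by $\lambda(|a|+|b|+1)$. For $\epsilon$ small the slope $\lambda\ge1$ dominates the output weights. This gives the stated norm bound $4C_1(|a|+|b|+1)/(\epsilon\delta)$, up to harmlessly adjusting constants.

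The main obstacle is bookkeeping rather than an idea: matching the exact constant $4C_1(|a|+|b|+1)/(\epsilon\delta)$ while keeping the approximation strict, and handling points near the endpoints on the outside of $[a,b)$. The inward shift by $\delta/2$ is what I would use to resolve both.
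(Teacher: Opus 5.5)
Your final construction, with breakpoints shifted inward to $a+\delta/2$ and $b-\delta/2$ and slope $4C_1/(\epsilon\delta)$, is exactly the paper's proof, and you are right that the unshifted version fails just outside $[a,b)$. You do not need the margin $\eta$: every admissible $x$ lies at distance strictly greater than $\delta/2$ from both shifted breakpoints. So $\lambda=4C_1/(\epsilon\delta)$ already makes the error strictly less than $\epsilon$ while meeting the stated norm bound exactly.
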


\begin{proof}
    Define 
    \begin{equation}
        g (x) = \phi \left( \beta \left( x - \left(a + \frac{\delta}{2}\right)\right) \right) - \phi \left( \beta \left( x - \left(b - \frac{\delta}{2}\right)\right) \right),
        \label{eq:construction_varphi1}
    \end{equation}
    where $\beta>0$ will be chosen later.
    For every $x \notin [a, a+ \delta] \cup [b-\delta, b]$, we have 
    \begin{equation*}
        \begin{aligned}
             \mathbbm{1}_{[a,b)} (x) &= \mathbbm{1}_{[a+\frac{\delta}{2},b-\frac{\delta}{2})} (x) \\
             &= H \left( \beta \left( x - \left(a + \frac{\delta}{2}\right)\right)  \right) - H\left( \beta \left( x - \left(b - \frac{\delta}{2}\right)\right)  \right),
        \end{aligned}
    \end{equation*}
    Hence, still for $x \notin [a, a+ \delta] \cup [b-\delta, b]$
    \begin{equation*}
            \left| g(x) - \mathbbm{1}_{[a,b)}(x)\right|  
             \leq \left| (\phi - H) \left( \beta \left( x - \left(a + \frac{\delta}{2}\right)\right)  \right) \right| + \left| (\phi - H) \left( \beta \left( x - \left(b - \frac{\delta}{2}\right)\right)  \right)\right|.
    \end{equation*}
    By the Heaviside-like assumption on $\phi$, each term is bounded by $2C_1 / (\beta \delta)$. Choosing $\beta = \frac{4C_1}{\delta \epsilon}$ therefore gives
    \begin{equation*}
         \left| g(x) - \mathbbm{1}_{[a,b)}(x)\right| \leq \epsilon, \quad x \notin  [a, a+ \delta] \cup [b-\delta, b].
    \end{equation*}
    Moreover, since $\|H\|_{L^{\infty}(\RR)} \leq 1$ and $\phi$ satisfies the Heaviside-like condition, we have $\|\phi\|_{L^{\infty}(\RR)} \leq C_1 + 1$. Consequently
    \begin{equation*}
        \|g\|_{L^{\infty}(\RR)} \leq 2 (C_1 + 1).
    \end{equation*}
    Finally, from the explicit construction \eqref{eq:construction_varphi1}, each parameter involved in $g$ is bounded in magnitude by 
    \begin{equation*}
        \max \left\{ \frac{4C_1 (|a| + |b|+1)}{\epsilon \delta}, 1\right\} = \frac{4C_1 (|a| + |b| + 1)}{\epsilon \delta},
    \end{equation*}
    for sufficiently small $\epsilon$. We complete the proof.
\end{proof}

\begin{lemma}
    With the same assumptions as in Lemma~\ref{lem:1D_indicator_app_Heaviside}, except that $\phi$ is a ReLU-like activation function, there exists a neural network
    \begin{equation*}
        g \in  \mathcal{H}^{\phi,2}\left(1, 1, 4,  \frac{4C_2 (|a|+|b|+1)}{\epsilon \delta} \right),
    \end{equation*}
    where $C_2$ is the constant specified in Assumption~\ref{ass:piece}, such that
    \begin{itemize}
        \item (Approximation) $|g(x) - \mathbbm{1}_{[a,b)} (x)| < \epsilon, \quad x \notin [a,a+\delta] \cup [b-\delta,b]$.
        \item (Boundedness) $\|g\|_{L^{\infty}(\RR)} \leq 2$ .
    \end{itemize}
    \label{lem:1D_indicator_app_ReLU}
\end{lemma}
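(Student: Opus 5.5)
The plan is to mirror the Heaviside-like construction of Lemma~\ref{lem:1D_indicator_app_Heaviside}. First we build a smooth ramp out of two ReLU-like units, then take the difference of two such ramps. For a ReLU-like $\phi$ and slope $\beta>0$, define
\[
r(x;c) = \phi\bigl(\beta(x-c)+\tfrac12\bigr) - \phi\bigl(\beta(x-c)-\tfrac12\bigr).
\]
If $\phi$ were exactly ReLU, $r(\cdot;c)$ would equal $0$ for $x\le c-\tfrac{1}{2\beta}$, equal $1$ for $x\ge c+\tfrac{1}{2\beta}$, and be linear in between. Set
\[
g(x) = r\bigl(x; a+\tfrac{\delta}{2}\bigr) - r\bigl(x; b-\tfrac{\delta}{2}\bigr).
\]
This uses $4$ hidden neurons. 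The output weights are $\pm1$. The inner weights are $\beta$, and the biases are $-\beta c\pm\tfrac12$ with $|c|\le |a|+|b|$.

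The difficulty is that \eqref{eq:ReLU-like} only gives $|\phi-\mathrm{ReLU}|\le C_2$. This does not decay at infinity, so bounding each error term separately cannot give accuracy $\epsilon$. I would therefore \emph{rescale the output}: replace $\tfrac12$ by a larger gap $\gamma$ and divide by $2\gamma$, so that
\[
r(x;c)=\tfrac{1}{2\gamma}\bigl[\phi(\beta(x-c)+\gamma)-\phi(\beta(x-c)-\gamma)\bigr].
\]
The ReLU part is then exactly the clipped ramp $\min\{\max\{(\beta(x-c)+\gamma)/(2\gamma),0\},1\}$. The error from replacing $\phi$ by ReLU in the two units is at most $2C_2/(2\gamma)=C_2/\gamma$. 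Choosing $\gamma = 4C_2/\epsilon$ makes each ramp's error at most $\epsilon/4$, so the total error is at most $\epsilon/2$.

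Next, the ramp must be saturated outside the transition bands. This holds when $|x-c|\ge\delta/2$ and $\beta\delta/2\ge\gamma$, so we take $\beta = 2\gamma/\delta = 8C_2/(\epsilon\delta)$. Then for $x\notin[a,a+\delta]\cup[b-\delta,b]$, the ReLU version of $g$ equals $\mathbbm{1}_{[a,b)}(x)$ exactly, and hence $|g-\mathbbm{1}_{[a,b)}|<\epsilon$. The output weights $\pm 1/(2\gamma)$ are small. The inner weight $\beta$ and biases $\beta|c|+\gamma$ are $O\bigl(C_2(|a|+|b|+1)/(\epsilon\delta)\bigr)$. Matching the exact constant $4C_2$ in the statement is a constant-tuning issue; for instance, take $\gamma=2C_2/\epsilon$ and accept error exactly $\epsilon$, or absorb a factor into the ``$\epsilon$ sufficiently small'' convention.

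The main obstacle is the boundedness claim $\|g\|_{L^\infty}\le 2$. The ReLU version of $g$ lies in $[-1,1]$, since each ramp lies in $[0,1]$. Adding the perturbation error $\epsilon/2$ gives $|g|\le 1+\epsilon/2\le 2$. This works because the error bound $C_2/\gamma$ holds for \emph{every} $x\in\mathbb{R}$, not only outside the bands, since \eqref{eq:ReLU-like} is global.
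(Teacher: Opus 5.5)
Your proposal is correct and is essentially the paper's proof: with $\beta = 2\gamma/\delta$, your two rescaled ramps become exactly the four units $\frac{1}{\beta\delta}\bigl[\phi(\beta(x-a)) - \phi(\beta(x-a-\delta)) - \phi(\beta(x-b+\delta)) + \phi(\beta(x-b))\bigr]$ of the paper's trapezoid. In both, the output factor $1/(\beta\delta)$ suppresses the non-decaying error $C_2$, while the positive homogeneity of ReLU leaves the exact profile unchanged, so your choice $\gamma = 2C_2/\epsilon$ reproduces the paper's $\beta = 4C_2/(\epsilon\delta)$, its error bound $4C_2/(\beta\delta)=\epsilon$, and its boundedness estimate $|g|\le 1+\epsilon\le 2$.
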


\begin{proof}
    Define 
    \begin{equation}
        \begin{aligned}
            \psi(x) & = \frac{1}{\beta \delta} \, \mathrm{ReLU}(\beta(x-a)) - \frac{1}{\beta \delta}  \, \mathrm{ReLU} (\beta (x - (a+ \delta))) \\
            & + \frac{1}{\beta \delta}  \,\mathrm{ReLU}(\beta (x - (b - \delta))) + \frac{1}{ \beta \delta}  \, \mathrm{ReLU} (\beta (x - b)), 
        \end{aligned}
        \label{eq:construction_ReLU}
    \end{equation}
    where $\beta >0$ is a parameter to be chosen. One checks directly that
    \begin{equation}
        \psi(x) = \mathbbm{1}_{[a,b)} (x) ,\quad  x \notin [a,a+\delta] \cup [b-\delta,b]
        \label{eq:devi_psi}
    \end{equation}
    and that $\|\psi\|_{L^{\infty}(\RR)} \leq 1$.

    Now construct $g$ by replacing each ReLU activation in ~\eqref{eq:construction_ReLU} with $\phi$. The ReLU-like condition implies the uniform approximation bound
    \begin{equation*}
       |\psi(x) - g(x)| \leq \frac{4 C_2 }{\beta \delta}, \quad x \in \RR.
    \end{equation*}
    Choosing $\beta = \frac{4 C_2}{\epsilon \delta}$ and combining~\eqref{eq:devi_psi} gives 
    \begin{equation*}
        |g(x) - \mathbbm{1}_{[a,b)}(x)| \leq \epsilon, \quad  x \notin [a,a+\delta] \cup [b-\delta,b].
    \end{equation*}
    Furthermore, since $\|\psi\|_{L^{\infty}(\RR)} \leq 1$, it follows that for sufficiently small $\epsilon$, 
    \begin{equation*}
        \|g\|_{L^{\infty}(\RR)} \leq 1 + \epsilon < 2  .
    \end{equation*}
    Finally, using the explicit construction of $g$, every parameter in the network is bounded in magnitude by
    \begin{equation*}
        \max \left\{  \frac{1}{\beta \delta},  \beta (|a| + |b| + 1)\right\} = \max \left\{ \frac{\epsilon}{4 C_2} , \frac{4C_2 (|a| + |b| +1)}{\epsilon \delta}\right\} = \frac{4C_2 (|a|+ |b|+1)}{\epsilon \delta}.
    \end{equation*}
    This completes the proof.
\end{proof}
We next construct approximations of indicator functions in general dimensions $d \geq 1$. The following two lemmas characterize neural network approximations of all indicator functions associated with the coarse grid.
\begin{lemma}
    Let $\phi$ satisfy Assumptions~\ref{ass:smooth}--\ref{ass:piece}. Fix $d\in\mathbb{N}_{+}$ and $K\in\mathbb{N}_{+}$ with $K$ sufficiently large.
    Then for any sufficiently small $\epsilon \in (0,1)$ and any $\delta \in (0, \frac{1}{3K})$, there exists a neural network 
    \begin{equation*}
        g \in \mathcal{H}^{\phi,3} \left( d, K^d, 2^{d+1} K^d, B_{\epsilon, \delta}\right),
    \end{equation*}
    with $B_{\epsilon,\delta}$ satisfying
    \begin{equation*}
        B_{\epsilon, \delta} \lesssim \max \left\{  \frac{3}{\epsilon \delta}, \left( \frac{1}{\epsilon}\right)^d\right\},
    \end{equation*}
    such that, for each $\bi\in[K]^d$, the $\bi$-th output $[g(\bx)]_{\bi}$ satisfies:
    \begin{itemize}
        \item (Approximation) $\left|  [g(\bx)]_{\bi} - \mathbbm{1}_{\Omega_{\bi}^{K}} (\bx) \right| \leq \epsilon, \quad \bx \notin \Omega^{K}_{\bi, \mathrm{band}}(\delta).$
        \item (Boundedness) For $\bx \in \RR^d$,
        \begin{equation}
            |[g(\bx)]_{\bi}| \le 
            \begin{cases}
                2^{d+1} (1+ C_1)^d, & \textrm{Heaviside-like} \; \phi ,\\
                2^{d+1}, & \textrm{ReLU-like} \; \phi,
            \end{cases}  
            \label{eq:dD_indicator_bound}
        \end{equation}
        where $C_1$ is the constant appearing in Assumption~\ref{ass:piece}.
    \end{itemize}
    \label{lem:dD_indicator_app}
\end{lemma}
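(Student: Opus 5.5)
The $d$-dimensional indicator $\mathbbm{1}_{\Omega_{\bi}^K}$ factors as a product of one-dimensional indicators,
\[
\mathbbm{1}_{\Omega_{\bi}^K}(\bx)=\prod_{l=1}^d \mathbbm{1}_{[\frac{i_l-1}{K},\frac{i_l}{K})}(x_l).
\]
The network therefore has two stages. The first hidden layer produces all one-dimensional indicators. The second hidden layer forms the $d$-fold products using the central-difference multiplier of Lemma~\ref{lemma:Tm} / Lemma~\ref{lemma:all_monomials}.

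\textbf{First layer.} For each coordinate $l\in[d]$ and each $k\in[K]$, apply Lemma~\ref{lem:1D_indicator_app_Heaviside} (Heaviside-like $\phi$) or Lemma~\ref{lem:1D_indicator_app_ReLU} (ReLU-like $\phi$). Take $[a,b)=[\frac{k-1}{K},\frac kK)$, band width $\delta$, and accuracy $\epsilon_1\eqsim\epsilon$. This gives a depth-$2$ network $g_{l,k}(x_l)$ with width at most $4$. The parameters are bounded by $4C(|a|+|b|+1)/(\epsilon_1\delta)\le 3C'/(\epsilon\delta)$, and the output satisfies $|g_{l,k}|\le\Lambda:=2(C_1+1)$ or $2$.

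The $dK$ functions $g_{l,k}$ are computed in parallel. Their outputs are affine combinations of first-layer neurons, so the output weights can be absorbed into the next layer's weights. If $\bx\notin\Omega^K_{\bi,\mathrm{band}}(\delta)$, then every coordinate $x_l$ avoids the two bands of $[\frac{i_l-1}{K},\frac{i_l}{K})$. Hence $|g_{l,i_l}(x_l)-\mathbbm{1}(x_l)|\le\epsilon_1$ for all $l$.

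One subtlety: for $\bx$ outside $\Omega_{\bi}^K$ the approximation must still hold. For coordinates with $x_l$ inside the $l$-th interval this is fine. For coordinates lying near the boundary of a different interval the factor is only bounded, not accurate. However, some coordinate $l$ has $x_l\notin[\frac{i_l-1}{K},\frac{i_l}{K})$. Since $\bx\notin\Omega^K_{\bi,\mathrm{band}}$ is interpreted as including points outside the cell, that coordinate lies outside the interval. The lemma's approximation statement requires only that $x_l\notin$ the band, and points outside $[a,b]$ are not in the band. So that factor is $\le\epsilon_1$, and the others are bounded by $\Lambda$. The product is then $\le\Lambda^{d-1}\epsilon_1$, which is enough.

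\textbf{Second layer (multiplication).} For each $\bi$, feed $\bq=(g_{1,i_1},\dots,g_{d,i_d})$ into
\[
T_d^{(t_0)}(\bq,h)=\frac{1}{2^d h^d\phi^{(d)}(t_0)}\sum_{\bnu\in\{\pm1\}^d}\Bigl(\prod_l\nu_l\Bigr)\phi\bigl(t_0+h\textstyle\sum_l\nu_l q_l\bigr).
\]
This uses $2^d$ neurons per $\bi$. The inner pre-activation $t_0+h\sum_l\nu_l g_{l,i_l}$ is an affine function of first-layer neurons, so the weights are $h$ times the first-layer output weights, and these stay bounded. The final linear layer carries the coefficient $\pm(2^dh^d\phi^{(d)}(t_0))^{-1}$.

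By Lemma~\ref{lemma:Tm} with $A\le d\Lambda=O(1)$, the multiplication error is $\lesssim h$. Choosing $h\eqsim\epsilon$ gives output weights $\lesssim\epsilon^{-d}$, matching Lemma~\ref{lemma:all_monomials} with $Q=\Lambda$. The total width is about $2^dK^d$ in the second hidden layer and $4dK\le 2^{d+1}K^d$ in the first, for $K$ large. The depth is $3$, and the parameter bound is $\max\{3/(\epsilon\delta),\epsilon^{-d}\}$ up to constants.

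\textbf{Error and boundedness.} The error off the band is at most the product-propagation error plus the multiplication error. The product-propagation error is bounded by the telescoping estimate $|\prod q_l-\prod\mathbbm{1}_l|\le d\Lambda^{d-1}\epsilon_1$. Tuning $\epsilon_1$ and $h$ by constants gives total error $\epsilon$.

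The global bound comes from $|T_d|\le|\prod q_l|+O(h)\le\Lambda^d+1$. This yields the stated $2^{d+1}(1+C_1)^d$, or $2^{d+1}$ in the ReLU-like case, with room to spare.

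The main delicate point is tracking that the multiplier's input range $A$ stays $O(1)$ uniformly in $\bx\in\RR^d$, which the boundedness parts of the 1D lemmas guarantee. This uniformity is what keeps the bounds independent of $K$ and $\delta$.
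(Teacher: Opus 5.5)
Your proposal is correct and follows essentially the same route as the paper. It places the one-dimensional indicator approximators of Lemma~\ref{lem:1D_indicator_app_Heaviside}/Lemma~\ref{lem:1D_indicator_app_ReLU} in the first hidden layer and the $2^d$-neuron central-difference multiplier of Lemma~\ref{lemma:all_monomials} in the second, with the same two-case error analysis: all factors within $\epsilon_1$ of $1$ on $\Omega^K_{\bi,\mathrm{int}}(\delta)$, and off $\Omega^K_{\bi}$ one factor within $\epsilon_1$ of $0$ with the rest bounded by $\Lambda$.

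One slip: your early sentence claiming that every coordinate avoids its one-dimensional band whenever $\bx\notin\Omega^K_{\bi,\mathrm{band}}(\delta)$ is false for points outside the cell. Your next paragraph handles exactly that case correctly, so you only need to restrict the telescoping bound to the interior case.
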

\begin{proof}
   We prove the result for the Heaviside-like $\phi$, and the proof for the ReLU-like $\phi$ follows in the same manner. For $\bx\in\RR^d$, by invoking Lemma~\ref{lem:1D_indicator_app_Heaviside}, for sufficiently small $\epsilon_1 >0$ and $\delta \in (0, \frac{1}{3K})$, there exist $dK$ sub-networks 
    \begin{equation*}
        h_{l,i} \in \mathcal{H}^{\phi,2}\left( d, 1, 2,  B_{\epsilon_1}\right),
    \end{equation*}
    with $B_{\epsilon_1} = \frac{12C_1}{\delta \epsilon_1}$ satisfying, for all $l \in [d]$, $i \in [K]$
    \begin{equation*}
        \left| h_{l,i}(\bx) - \mathbbm{1}_{[\frac{i-1}{K},\frac{i}{K})} (x_l)\right| \leq \epsilon_1, \quad x_l \notin \left[\frac{(i-1)}{K}, \frac{(i-1)}{K} + \delta\right] \cup \left[ \frac{i}{K}-\delta, \frac{i}{K} \right],
    \end{equation*}
    and $\|h_{l,i}\|_{L^{\infty}(\RR^d)} \leq 2(1+C_1)$. Furthermore, by lemma~\ref{lemma:all_monomials}, for sufficiently small $\epsilon_2 \in (0,1)$, there exists a neural network 
    \begin{equation*}
        \psi \in \mathcal{H}^{\phi,2}(d, 1,2^d, B_{\epsilon_2}),
    \end{equation*}
    with $B_{\epsilon_2} \lesssim (1 / \epsilon_2)^d$
    such that for $\bq = [q_1, \cdots, q_d] \in \mathbb{R}^d$
    \begin{equation*}
        \sup_{\bq \in [-2(1+C_1), 2(1+C_1)]^d} \left| \psi(\bq) - q_1 q_2 \cdots q_d \right| \leq \epsilon_2.
    \end{equation*}
    Next, for $\bi = (i_1, \cdots, i_d)$, we define the $\bi$-th output of $g$ as 
    \begin{equation*}
        [g(\bx)]_{\bi} = \psi \left( h_{1, i_1}(\bx), \cdots, h_{d,i_d}(\bx)\right).
    \end{equation*}
    For sufficiently small $\epsilon_1$, if $\bx \in \Omega_{\bi, \mathrm{int}}^{K}(\delta)$, we have $|h_{l,i_l}(\bx) - 1| \leq \epsilon_1$, hence
    \begin{equation*}
        \left| \prod_{l=1}^d h_{l,i_l}(\bx) - 1\right| \leq \max \left\{ (1+\epsilon_1)^d-1, 1 -(1-\epsilon_1)^d\right\} < 2d \epsilon_1, \quad \bx \in \Omega_{\bi, \mathrm{int}}^{K}(\delta).
    \end{equation*}
    Conversely, for $\bx \notin \Omega_{\bi}^{K}$, there exists at least $l$ such that $|h_{l, i_l}(\bx)| \leq \epsilon_2$, therefore 
    \begin{equation*}
        \left|\prod_{l=1}^d h_{l,i_l}(\bx) \right| \leq 2^{d}(1+C_1)^{d} \epsilon_1 \quad \bx \notin \Omega_{\bi}^{M, K}.
    \end{equation*}
    By choosing $\epsilon_1 = \frac{\epsilon}{2^{d+1}(1+C_1)^d}$ and $\epsilon_2 =  \frac{\epsilon}{2}$, we obtain
    \begin{equation*}
        \left|[g(\bx)]_{\bi} - \mathbbm{1}_{\Omega_{\bi}^{M, K}} \left(\bx  \right)\right| < 2^d(1+C_1)^d \epsilon_1 + \epsilon_2 < \epsilon, \quad \bx \notin \Omega_{\bi,\mathrm{band}}^{K}(\delta).
    \end{equation*}
    Moreover, since $\|h_{l,i_l}\|_{L^{\infty}(\RR^d)} \leq 2(1+C_1)$, we obtain
    \begin{equation*}
        \left| [g(\bx)]_{\mathrm{i}} \right| \leq 2^d(1+C_1)^d + \epsilon_2 < 2^{d+1} (1+C_1)^d, \quad \bx \in \RR^d,
    \end{equation*}
    Finally, regarding the network architecture, the neural network $g$ consists of $2dK$ neurons in the first hidden layer and  $2^d K^d$  neurons in the second hidden layer. Therefore, for sufficiently large $K$, the width of $g$ is bounded as $\max\{2dK,2^d K^d\} \le 2^{d+1} K^d$. Given that $\|\theta(h_{l,i})\|_{\infty} = B_{\epsilon_1}, \|\theta(\psi)\|_{\infty} = B_{\epsilon_2}$, and the connecting weights between the networks (output layer for $h_{l,i}$ and the input layer for $\psi$)
    are bounded by $O(1)$, we conclude that $\|\theta(g)\|_{\infty}$ is bounded by 
    \begin{equation*}
        B_{\epsilon, \delta} \lesssim \max \left\{ B_{\epsilon_1}, B_{\epsilon_2}, 1\right\} \lesssim  \max \left\{  \frac{3}{\epsilon \delta}, \left( \frac{1}{\epsilon}\right)^d\right\},
    \end{equation*}
    This completes the proof.
\end{proof}

Using the neural network approximations for all indicator functions over the coarse grid in Lemma~\ref{lem:dD_indicator_app}, we can approximate piecewise constant functions over the coarse grid $\{\Omega_{\bi}^K\}_{\bi \in [K]^d}$.

\begin{lemma}
    Let $\phi$ satisfy Assumptions~\ref{ass:smooth}--\ref{ass:piece}. Let $c_{\bi, \bj}$ be given in~\eqref{eq:pwcf} and assume that $\max_{\bi, \bj} \{|c_{\bi,\bj}|\} \leq c_{\max}$. Fix $d\in\mathbb{N}_{+}$ and $K\in\mathbb{N}_{+}$ with $K$ sufficiently large.
    Then for any sufficiently small $\epsilon \in (0,1)$ and any $\delta \in (0, \frac{1}{3K})$, there exists a neural network 
    \begin{equation*}
        g \in \mathcal{H}^{\phi,3} \left( d, K^d, 2^{d+1} K^d, B_{\epsilon, \delta, K}\right),
    \end{equation*}
    with $B_{\epsilon, \delta, K}$ satisfying
    \begin{equation}
        B_{\epsilon, \delta, K} \lesssim \max \left\{ \frac{c_{\max} K^{d}}{\epsilon \delta}, \frac{c_{\max}^d K^{d^2}}{\epsilon^d}\right\},
        \label{eq:norm_pwcf_coarse}
    \end{equation}
    such that, for each $\bj\in[K]^d$, the $\bj$-th output $[g(\bx)]_{\bj}$ satisfies:
    \begin{itemize}
        \item (Approximation) For $C_{\bj}$ given in~\eqref{eq:pwcf-coarse},
        $\left| [g(\bx)]_{\bj} - C_{\bj}(\bx) \right| \leq \epsilon, \quad \bx \notin \cup_{\bi} \Omega_{\bi, \mathrm{band}}^{K}(\delta).$
        \item (Boundedness) For $\bx \in \RR^d$, 
        \begin{equation*}
            |[g(\bx)]_{\bi}| \le \begin{cases}
                c_{\max}2^{d+2} (1+ C_1)^d, &   \textrm{Heaviside-like} \; \phi ,\\
                c_{\max}2^{d+2}, &  \textrm{ReLU-like} \; \phi,
            \end{cases}   
        \end{equation*}
        where $C_1$ is the constant appearing in Assumption~\ref{ass:piece}.
    \end{itemize}
    \label{lem:dD_pwfc_coarse_app}
\end{lemma}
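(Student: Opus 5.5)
We build on the indicator network of Lemma~\ref{lem:dD_indicator_app}. The plan is to take that network, whose first two hidden layers realize the approximate coarse-cell indicators $[h(\bx)]_{\bi}\approx \mathbbm{1}_{\Omega_{\bi}^K}(\bx)$ for $\bi\in[K]^d$, and replace its output layer by a linear map with weights $c_{\bi,\bj}$. Concretely, set
\[
[g(\bx)]_{\bj}=\sum_{\bi\in[K]^d} c_{\bi,\bj}\,[h(\bx)]_{\bi},\qquad \bj\in[K]^d .
\]
Since each $[h(\bx)]_{\bi}$ is itself the output of the product subnetwork $\psi$ (a linear combination of $2^d$ neurons in the second hidden layer), composing with the linear map $(c_{\bi,\bj})$ can be folded into the final affine layer. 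Thus the architecture stays at depth $3$ with width $2^{d+1}K^d$, and the output dimension becomes $K^d$.

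For the approximation bound, fix $\bx\notin\cup_{\bi}\Omega^K_{\bi,\mathrm{band}}(\delta)$. Then for every $\bi$ we have $\bx\notin\Omega^K_{\bi,\mathrm{band}}(\delta)$, so Lemma~\ref{lem:dD_indicator_app} applied with accuracy $\epsilon'$ gives $|[h(\bx)]_{\bi}-\mathbbm{1}_{\Omega_{\bi}^K}(\bx)|\le\epsilon'$ for all $\bi$ simultaneously. Summing these errors gives
\[
|[g(\bx)]_{\bj}-C_{\bj}(\bx)|\le c_{\max}K^d\epsilon',
\]
so we take $\epsilon'=\epsilon/(c_{\max}K^d)$.

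This naive sum alone does not give the boundedness claim, because summing $K^d$ bounded terms would produce a bound growing with $K$. The step I expect to be the main obstacle is therefore the uniform bound $|[g(\bx)]_{\bj}|\le c_{\max}2^{d+2}(1+C_1)^d$. The key observation is locality: for any $\bx$, the one-dimensional factor $h_{l,i_l}(\bx)$ is small (of order $\epsilon_1$) unless $x_l$ lies within the support of $\mathbbm{1}_{[(i_l-1)/K,\,i_l/K)}$ enlarged by its transition zone. These transition zones have width $\delta<1/(3K)$, so for each coordinate at most two indices $i_l$ are non-negligible. Hence at most $2^d$ indices $\bi$ give a non-negligible product. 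Each of these contributes at most $c_{\max}(2^d(1+C_1)^d+\epsilon_2)$, and the remaining $K^d$ terms each contribute at most $c_{\max}(2^d(1+C_1)^d\epsilon_1+\epsilon_2)$.

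To control the tail sum, I would choose $\epsilon_1,\epsilon_2\lesssim \epsilon/(c_{\max}K^d)$ inside the proof of Lemma~\ref{lem:dD_indicator_app}, rather than using it as a black box. Then the tail contributes at most $\epsilon\le 1$, which is absorbed by doubling the constant, giving the factor $2^{d+2}$. If the Heaviside-like tail $C_1/(\beta|t|)$ makes the claim ``at most two indices per coordinate'' delicate, I would instead bound the far indices directly through the decay $|\phi(t)-H(t)|\le C_1/|t|$ with $\beta$ large. For ReLU-like $\phi$, the construction in Lemma~\ref{lem:1D_indicator_app_ReLU} is uniformly close to an exactly compactly supported $\psi$, so this case is cleaner.

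Finally, for the norms, the first-layer weights are $\beta\lesssim 1/(\epsilon_1\delta)\lesssim c_{\max}K^d/(\epsilon\delta)$. The product network weights are $\lesssim\epsilon_2^{-d}\lesssim c_{\max}^dK^{d^2}/\epsilon^d$, and the output weights involving $c_{\bi,\bj}$ are of order $c_{\max}$ times these. Together these give \eqref{eq:norm_pwcf_coarse}.
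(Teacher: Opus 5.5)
Your construction matches the paper's: you recombine the outputs of the network from Lemma~\ref{lem:dD_indicator_app} with the weights $c_{\bi,\bj}$ in the last affine layer. Your approximation argument with accuracy $\epsilon'=\epsilon/(c_{\max}K^d)$ and your norm bookkeeping also match the paper's.

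The problem is the boundedness step. Your locality count allows up to two non-negligible indices per coordinate, hence up to $2^d$ non-negligible terms, each as large as $c_{\max}\bigl(2^d(1+C_1)^d+\epsilon_2\bigr)$. That only gives $|[g(\bx)]_{\bj}|\le c_{\max}4^d(1+C_1)^d+O(\epsilon)$. This exceeds the claimed $c_{\max}2^{d+2}(1+C_1)^d$ as soon as $d\ge 3$, and at $d=2$ it leaves no room for the $O(\epsilon)$ tail. Saying the tail is ``absorbed by doubling the constant'' does not close the gap. That step is only valid if there is a single non-negligible term.

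The missing observation is about where the transition zones sit. The exceptional sets of the one-dimensional indicators in Lemmas~\ref{lem:1D_indicator_app_Heaviside}--\ref{lem:1D_indicator_app_ReLU} are $[a,a+\delta]\cup[b-\delta,b]\subset[a,b]$, so they lie inside the cell rather than straddling its boundary. Correspondingly, $\Omega^K_{\bl,\mathrm{band}}(\delta)\subset\Omega^K_{\bl}$. Hence if $\bx\in\Omega^K_{\bi}$ and $\bl\neq\bi$, then $\bx\notin\Omega^K_{\bl,\mathrm{band}}(\delta)$ and $\mathbbm{1}_{\Omega^K_{\bl}}(\bx)=0$. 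The approximation guarantee of Lemma~\ref{lem:dD_indicator_app}, used purely as a black box, then already gives $|[\psi(\bx)]_{\bl}|\le\epsilon'$. Only the single term with index $\bi$ is non-negligible, and
\[
|[g(\bx)]_{\bj}|\le c_{\max}2^{d+1}(1+C_1)^d+c_{\max}K^d\epsilon'=c_{\max}2^{d+1}(1+C_1)^d+\epsilon\le c_{\max}2^{d+2}(1+C_1)^d
\]
for small $\epsilon$. For $\bx\notin[0,1)^d$, every term is at most $c_{\max}\epsilon'$, so $|[g(\bx)]_{\bj}|\le\epsilon$.

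This is exactly the paper's argument. In particular, you do not need to reopen the proof of Lemma~\ref{lem:dD_indicator_app}, retune $\epsilon_1,\epsilon_2$ internally, or treat the Heaviside-like tails separately.
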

\begin{proof}
     We prove the result for the Heaviside-like $\phi$, and the proof for the ReLU-like $\phi$ follows in the same manner. Let $\psi$ be the neural network constructed in Lemma~\ref{lem:dD_indicator_app}, sufficiently small $\epsilon_{1} \in (0,1)$, and $\delta \in (0,\tfrac{1}{3K})$. We have $\|\theta(\psi)\|_{\infty}\lesssim \max \{3 /(\epsilon_1\delta), (1/ \epsilon_1)^d\}$.
     Define the neural network $g$ by specifying its $\bj$-th output component as
    \begin{equation*}
        [g(\bx)]_{\bj} = \sum_{\bi \in [K]^d} c_{\bi, \bj} [\psi(\bx)]_{\bi}.
    \end{equation*}
    Then, for any $\bx \notin \bigcup_{\bi} \Omega_{\bi,\mathrm{band}}^{K}(\delta)$ and any $\bj\in[K]^d$, we obtain
    \begin{equation*}
        \begin{aligned}
            \left| [g(\bx)]_{\bj} -C_{\bj}(\bx) \right| & = \left| \sum_{\bi \in [K]^d} c_{\bi, \bj} [\psi(\bx)]_{\bi} -\sum_{\bi \in [K]^d} c_{\bi, \bj}\mathbbm{1}_{\Omega_{\bi}^K}(\bx) \right| \\
            & \leq \sum_{\bi \in [K]^d} |c_{\bi, \bj}|  \left| [\psi(\bx)]_{\bi} - \mathbbm{1}_{\Omega_{\bi}^K} (\bx ) \right| \\
            &\leq  c_{\max} K^{d} \epsilon_1.
        \end{aligned}
    \end{equation*}
    By selecting $\epsilon_{1} = \frac{\epsilon}{c_{\max} K^d}$, we obtain $\bj \in [K]^d$
    \begin{equation*}
        |[g(\bx)]_{\bj} - C_{\bj}(\bx)| \leq \epsilon, \quad \bx \notin \cup_{\bi} \Omega_{\bi, \mathrm{band}}^{K}(\delta),\quad \bj \in [K]^d.
    \end{equation*}
    Moreover, if $\bx \notin [0,1)^d$, then $C_{\bj}(\bx) = 0$. Hence, for any $\bj \in [K]^d$
    \begin{equation*}
        |[g(\bx)]_{\bj}| = |[g(\bx)]_{\bj} - C_{\bj}(\bx)| \leq \epsilon.
    \end{equation*}
    On the other hand, if $\bx \in [0,1)^d$, let $\bi\in[K]^d$ be such that $\bx \in \Omega_{\bi}^{K}$. Then 
    \begin{equation*}
        \begin{aligned}
            |[g(\bx)]_{\bj}| & \leq |c_{\bi, \bj}| |[\psi(\bx)]_{\bi}| + \sum_{\bl \in [K]^d, \bl \neq \bi} |c_{\bl, \bj}| |[\psi(\bx)]_{\bl} - \mathbbm{1}_{\Omega_{\bl}^K} (\bx)| \\
            &\leq 2^{d+1}c_{\max}(1+C_1)^d +   \epsilon,
        \end{aligned}
    \end{equation*}
    Therefore, for $\epsilon$ sufficiently small, we obtain the uniform bound for $\bx \in \RR^d$
    \begin{equation*}
        |[g(\bx)]_{\bj}| \leq c_{\max}  2^{d+2} (1 + C_1)^d, \quad \bj \in [K]^d.
    \end{equation*}
    Finally, the input dimension, output dimension, and width of $g$ coincide with those of $\psi$. Moreover, $\|\theta(g)\|_{\infty}$ is bounded by 
    \begin{equation*}
        B_{\epsilon, \delta, K} \leq c_{\max} \|\theta(\psi)\|_{\infty} \lesssim \max \left\{ \frac{c_{\max} K^{d}}{\epsilon \delta}, \frac{c_{\max}^d K^{d^2}}{\epsilon^d}\right\},
    \end{equation*}
    as stated in~\eqref{eq:norm_pwcf_coarse}. This concludes the proof.
\end{proof}

Next, we construct neural network modules that approximate the mapping
\begin{equation*}
    \bx \mapsto \bx - \sum_{\bi \in [K]^d} \ba_{\bi}^K \mathbbm{1}_{\Omega_{\bi}^K} (\bx), 
\end{equation*}
thereby extracting the relative position of $\bx$ within its associated partition grid cell.
\begin{lemma}
    Let $\phi$ satisfy Assumptions~\ref{ass:smooth}--\ref{ass:piece}. Fix $d\in\mathbb{N}_{+}$ and $K\in\mathbb{N}_{+}$ with $K$ sufficiently large.
    Then for any sufficiently small $\epsilon>0$ and any $\delta \in (0, \frac{1}{3K})$, there exists a neural network 
    \begin{equation*}
        g \in \mathcal{H}^{\phi,2}(d, d, 6dK, B_{\epsilon, \delta, K}),
    \end{equation*}
    with $B_{\epsilon, \delta, K} \lesssim K/(\epsilon \delta)$, such that
    \begin{equation}
        \left\| g(\bx) - \left( \bx - \sum_{\bi \in [K]^d} \ba_{\bi}^K \mathbbm{1}_{\Omega_{\bi}^K}(\bx) \right)\right\|_{\infty} \leq \epsilon, \quad \forall \bx \notin \cup_{\bi \in [K]^d} \Omega_{\bi, \mathrm{band}} ^K(\delta). 
        \label{eq:Linfty_relative_pos}
    \end{equation}
     \label{lem:relative_pos_app}
\end{lemma}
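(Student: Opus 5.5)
The map is separable across coordinates, which is what makes a depth-$2$ construction possible. For $\bx\in\Omega_{\bi}^K$ the $l$-th coordinate of $\sum_{\bi}\ba_{\bi}^K\mathbbm{1}_{\Omega_{\bi}^K}(\bx)$ equals $(i_l-1)/K$. Hence, for $\bx\in[0,1)^d$, the $l$-th output coordinate equals
\[
x_l-\sum_{i=1}^{K}\frac{i-1}{K}\mathbbm{1}_{[\frac{i-1}{K},\frac{i}{K})}(x_l)
= x_l-\frac{1}{K}\sum_{i=2}^{K}H\Bigl(x_l-\frac{i-1}{K}\Bigr),
\]
a staircase (floor) function in one variable. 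Outside $\bigcup_{\bi}\Omega^K_{\bi,\mathrm{band}}(\delta)$, each $x_l$ lies at distance at least $\delta$ from every breakpoint $(i-1)/K$. I read the statement on $[0,1]^d$, as is implicit throughout. So it suffices to build, for each $l$, a one-hidden-layer network of width at most $6K$ that approximates this one-dimensional function to accuracy $\epsilon$ away from the $\delta$-neighbourhoods of the breakpoints. Stacking the $d$ coordinate blocks in parallel then gives width $6dK$ and depth $2$.

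\emph{The identity term $x_l$.} I would use Lemma~\ref{lemma:all_monomials} with $m=1$ and $Q=1$, which gives two neurons, accuracy $\epsilon/2$, and weights $O(1/\epsilon)$. In the ReLU-like case one could instead use $\phi(\beta x)-\phi(-\beta x)$ scaled by $1/\beta$, but the monomial lemma already suffices.

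\emph{The staircase term.} For the Heaviside-like case, each jump $H(x_l-t_i)$ with $t_i=(i-1)/K$ is replaced by $\phi(\beta(x_l-t_i))$, one neuron with output weight $1/K$. For $|x_l-t_i|\ge\delta$, Assumption~\ref{ass:piece} bounds the error by $C_1/(\beta\delta)$. Summing over at most $K$ jumps with weight $1/K$ gives total error at most $C_1/(\beta\delta)$, so $\beta\eqsim 1/(\epsilon\delta)$ suffices. The biases $\beta t_i$ are then $O(1/(\epsilon\delta))$, which lies within $K/(\epsilon\delta)$.

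For the ReLU-like case, each jump is replaced by the ramp $\frac{1}{\beta\delta}\bigl[\mathrm{ReLU}(\beta(x-t_i+\delta/2))-\mathrm{ReLU}(\beta(x-t_i-\delta/2))\bigr]$ with each ReLU swapped for $\phi$, as in Lemma~\ref{lem:1D_indicator_app_ReLU}. This uses two neurons per jump, with per-jump error $2C_2/(\beta\delta)$ outside the $\delta$-neighbourhood. The sum over jumps with weight $1/K$ gives total error $O(1/(\beta\delta))$, so again $\beta\eqsim 1/(\epsilon\delta)$. Alternatively, one could reuse the indicator networks of Lemmas~\ref{lem:1D_indicator_app_Heaviside}--\ref{lem:1D_indicator_app_ReLU} for the $K$ intervals directly, with coefficients $(i-1)/K$. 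That costs up to $4K$ neurons, plus $2$ for the identity, so at most $6K$ per coordinate. This is presumably where the constant $6dK$ comes from, and it is the route I would take for uniformity.

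\emph{Accuracy budget and norm bounds.} Taking the indicators with accuracy $\epsilon/(2K)$ each keeps the summed error at most $\epsilon/2$. This is the step that forces $B\lesssim K/(\epsilon\delta)$, since Lemma~\ref{lem:1D_indicator_app_Heaviside} then gives weights of order $K(|a|+|b|+1)/(\epsilon\delta)$ with $|a|,|b|\le 1$. Combined with the $\epsilon/2$ identity error, this yields the bound \eqref{eq:Linfty_relative_pos} in the $\infty$-norm over coordinates.

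The main point needing care is bookkeeping rather than conceptual. First, all blocks must share one hidden layer of total width $6dK$. Second, the output weights $(i-1)/K\le 1$ must not inflate the norm. Third, the coordinate-wise band condition must actually follow from $\bx\notin\bigcup_{\bi}\Omega_{\bi,\mathrm{band}}^K(\delta)$: that condition means $x_l$ lies in an open interior interval for every $l$, which is exactly the hypothesis of the one-dimensional lemmas.
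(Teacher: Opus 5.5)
Your proposal is correct and follows the paper's proof: for each coordinate, the paper adds a two-neuron identity approximator with norm $O(1/\epsilon)$ (Corollary~\ref{corol:id-approx} with $L=2$) to the $K$ one-dimensional indicator networks of Lemma~\ref{lem:1D_indicator_app_Heaviside}, taken at accuracy $\epsilon/(2K)$ and weighted by $(i-1)/K$, which gives width $2d(K+1)\le 6dK$ and norm $O(K/(\epsilon\delta))$, exactly the route you chose. Your alternative of one neuron (or one ramp) per jump with output weight $1/K$ would also work, and would even improve the norm to $O(1/(\epsilon\delta))$, though the lemma does not need this.
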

\begin{proof}
    We prove the result for the Heaviside-like $\phi$, and the proof for the ReLU-like $\phi$ follows in the same manner. We first rewrite the  $\bx - \sum_{\bi \in [K]^d} \ba_{\bi}^K \mathbbm{1}_{\Omega_{\bi}^K} (\bx)$ in the component form as:
    \begin{equation*}
        \bx - \sum_{\bi \in [K]^d} \ba_{\bi}^K \mathbbm{1}_{\Omega_{\bi}^K} (\bx) = \left(
        \begin{array}{c}
            x_1 - \sum_{i=1}^K \frac{i-1}{K} \mathbbm{1}_{[\frac{i-1}{K},\frac{i}{K})} ( x_1)\\
            \cdots \\
            x_d - \sum_{i=1}^K \frac{i-1}{K} \mathbbm{1}_{[\frac{i-1}{K},\frac{i}{K})} ( x_d)
            \end{array}
        \right).
    \end{equation*}
    By Corollary~\ref{corol:id-approx}, for sufficiently small $\epsilon_1$, there exist neural networks
    \begin{equation*}
        \psi_l \in \mathcal{H}^{\phi,2}(d, 1, 2, B_{\epsilon_1}), \quad l \in [d] ,
    \end{equation*}
    with $B_{\epsilon_1} \lesssim 1/\epsilon_1$ such that \begin{equation*}
        |\psi_l(\bx) - x_l| \leq \epsilon_1, \quad \bx \in [0,1)^d.
    \end{equation*}
    Besides, by Lemma~\ref{lem:1D_indicator_app_Heaviside},
    there exist $dK$ sub-networks for sufficiently small $\epsilon_2$
    \begin{equation*}
        h_{l,i_l} \in \mathcal{H}^{\phi,2}\left( d, 1, 2, B_{\epsilon_2, \delta}\right),
    \end{equation*}
    with $B_{\epsilon_2, \delta} \lesssim 1 / (\epsilon_2 \delta) $satisfying, for all $l \in [d]$, $i_l \in [K]$
    \begin{equation*}
        \left| h_{l,i_l}(\bx) - \mathbbm{1}_{[\frac{i_l-1}{K},\frac{i_l}{K})} (x_l)\right| \leq \epsilon_2, \quad \bx \notin \Omega_{\bi, \mathrm{band}}^{K}(\delta). 
    \end{equation*}
    Define the $l$-th output of the constructed neural networks $g$ as 
    \begin{equation*}
        [g(\bx)]_l = \psi_l(\bx) - \sum_{i=1}^K \frac{i-1}{K} h_{l, i }(\bx), \quad l \in [d]. 
    \end{equation*}
    Then, for any $l \in [d]$, for $\bx \notin \Omega_{\bi, \mathrm{band}}^{K}(\delta)$, we have 
    \begin{equation*}
        \begin{aligned}
            & \quad \left| [g(\bx)]_l - \left( x_l - \sum_{i=1}^K \frac{i-1}{K} \mathbbm{1}_{[\frac{i-1}{K},\frac{i}{K})} (x_l) \right)\right| \\
            & \leq |\psi_l(\bx) - x_l| + \sum_{i=1}^K \frac{i - 1}{K} \left| h_{l, i}(\bx) - \mathbbm{1}_{[\frac{i-1}{K},\frac{i}{K})} (x_l)\right| \\
            & \leq \epsilon_1 + K \epsilon_{2} \leq \epsilon,
        \end{aligned}
    \end{equation*}
    where we choose $\epsilon_1 = \frac{\epsilon}{2}$ and $\epsilon_2 = \frac{\epsilon}{2 K}$ in the last inequality. This proves the approximation guaranty in~\eqref{eq:Linfty_relative_pos}.

    Finally, each network $\psi_{l}$ and each subnetwork $h_{l,i}$ employs two hidden neurons. Consequently, $g$ comprises a total of $2d(K+1)$ hidden neurons, and its width is therefore bounded by $6dK$ for all sufficiently large $K$. Moreover, $\|\theta(g)\|_{\infty}$ is bounded by
    \begin{equation*}
        B_{\epsilon,\delta, K} \leq \max \left\{ B_{\epsilon_1}, B_{\epsilon_2, \delta}\right\} \lesssim \frac{K}{\epsilon \delta}.
    \end{equation*}
    This completes the proof.
\end{proof}

Using the relative-position extraction established in Lemma~\ref{lem:relative_pos_app},  we subsequently construct neural network approximations of indicator functions on the refined grid cells.
\begin{lemma}
    Let $\phi$ satisfy Assumptions~\ref{ass:smooth}--\ref{ass:piece}. Fix $d\in\mathbb{N}_{+}$ and $K\in\mathbb{N}_{+}$ with $K$ sufficiently large.
    Then for any sufficiently small $\epsilon>0$ and any $\delta \in (0, \frac{1}{3K^2})$, there exists a neural network   
    \begin{equation*}
        g \in \mathcal{H}^{\phi,4} \left(d, K^d,  2^{d+2} K^d, B_{\epsilon, \delta, K} \right),
    \end{equation*}
    with $B_{\epsilon,\delta,K}$ satisfying
    \begin{equation}
        B_{\epsilon,\delta, K} \lesssim \max \left\{ \frac{K^d}{\epsilon \delta^2}, \frac{K^{d^2}}{\epsilon^d} \right\},
        \label{eq:norm_indicator_relative_pos}
    \end{equation}
    such that, for each $\bj\in[K]^d$, the $\bj$-th output $[g(\bx)]_{\bj}$ satisfies:
    \begin{itemize}
        \item (Approximation) For $\bx \in \cup_{\bi, \bj} \Omega_{\bi, \bj, \mathrm{int}}^{K}(\delta)$, 
        \begin{equation*}
            \left|  [g(\bx)]_{\bj} - \mathbbm{1}_{\Omega_{\bone, \bj}^{K}}  \left(\bx - \sum_{\bi \in [K]^d} \ba_{\bi}^K  \mathbbm{1}_{\Omega_{\bi}^K} (\bx)\right) \right| \leq \frac{\epsilon}{K^d}.
        \end{equation*}
        \item (Boundedness) For $\bx \in [0,1]^d$, 
        \begin{equation*}
            \sum_{\bl \in [K]^d} \left| [g(\bx)]_{\bl} \right|\leq \begin{cases}
                2^{d+2} (1+ C_1)^d, &   \textrm{Heaviside-like} \; \phi ,\\
                2^{d+2}, &  \textrm{ReLU-like} \; \phi,
            \end{cases}   
        \end{equation*}
        where $C_1$ is the constant appearing in Assumption~\ref{ass:piece}.
    \end{itemize}
    \label{lem:relative_pos_indicator_app}
\end{lemma}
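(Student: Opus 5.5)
The construction composes the relative-position extractor of Lemma~\ref{lem:relative_pos_app} with a product-of-1D-indicators module built exactly as in Lemma~\ref{lem:dD_indicator_app}, but now for the refined cells $\Omega_{\bone,\bj}^K$ of the first coarse cell. Writing $\br(\bx)=\bx-\sum_{\bi}\ba_{\bi}^K\mathbbm{1}_{\Omega_{\bi}^K}(\bx)\in[0,1/K)^d$ for $\bx\in[0,1)^d$, we have $\mathbbm{1}_{\Omega_{\bone,\bj}^K}(\br)=\prod_{l=1}^d \mathbbm{1}_{[(j_l-1)/K^2,\,j_l/K^2)}(r_l)$. Accordingly, the first two layers compute $\tilde\br(\bx)$ via Lemma~\ref{lem:relative_pos_app}. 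The indicator layer (layer 3) applies the 1D gadgets $h_{l,j}(r_l)$ of Lemma~\ref{lem:1D_indicator_app_Heaviside} or Lemma~\ref{lem:1D_indicator_app_ReLU} with band width $\delta/2$. Finally, the $d$-fold product network $\psi$ of Lemma~\ref{lemma:all_monomials} is applied on $[-2(1+C_1),2(1+C_1)]^d$ (layer 4), with the linear output of $\psi$ as the final affine map. The linear output of layer 2 is merged into the first affine map of the indicator gadgets. This gives depth $4$ and width $\max\{6dK,\,2dK,\,2^dK^d\}\le 2^{d+2}K^d$ for large $K$.

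\textbf{Error propagation.} Fix $\bx\in\Omega_{\bi,\bj',\mathrm{int}}^K(\delta)$. Then $\bx$ lies outside every coarse band $\Omega_{\bi,\mathrm{band}}^K(\delta)$, since $\delta<1/(3K^2)$ and refined interiors avoid coarse boundaries by at least $\delta$. So Lemma~\ref{lem:relative_pos_app} with accuracy $\epsilon_1$ gives $\|\tilde\br-\br\|_\infty\le\epsilon_1$, and $\br$ lies in the $\delta$-interior of $\Omega_{\bone,\bj'}^K$. Taking $\epsilon_1\le\delta/2$ keeps $\tilde\br$ in the $\delta/2$-interior, so each $h_{l,j}(\tilde r_l)$ is within $\epsilon_2$ of the correct $0/1$ value. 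The product bound from the proof of Lemma~\ref{lem:dD_indicator_app} then gives total error at most $2^d(1+C_1)^d\epsilon_2+\epsilon_3$. I would choose $\epsilon_2\eqsim\epsilon K^{-d}$ and $\epsilon_3=\epsilon/(2K^d)$ to get the $\epsilon/K^d$ accuracy.

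\textbf{Norms.} Lemma~\ref{lem:relative_pos_app} costs $K/(\epsilon_1\delta)\lesssim K/\delta^2$. The indicators at scale $\delta/2$ cost $\lesssim 1/(\epsilon_2\delta)\lesssim K^d/(\epsilon\delta)$. The product costs $\lesssim\epsilon_3^{-d}\lesssim K^{d^2}/\epsilon^d$. Together these are dominated by $\max\{K^d/(\epsilon\delta^2),K^{d^2}/\epsilon^d\}$, since $\delta<1/(3K^2)$ and $\epsilon<1$. Merging the output layer of the position net with the indicator input weights multiplies weights of size $O(1)$ by $\beta\lesssim K^d/(\epsilon\delta)$. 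I would check that this product stays within the stated bound; if it does not, I would instead route the identity through Corollary~\ref{corol:id-approx}, at the cost of one extra layer.

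\textbf{Boundedness of $\sum_{\bl}|[g]_{\bl}|$ (main obstacle).} Termwise bounds give $K^d\cdot 2^{d+1}(1+C_1)^d$, which is too large, so separability must be exploited. Write $[g]_{\bl}\approx\prod_l h_{l,l_l}(\tilde r_l)$ up to $\epsilon_3$ per entry, contributing at most $K^d\epsilon_3\le\epsilon$ in total. Then $\sum_{\bl}\prod_l|h_{l,l_l}|=\prod_l\sum_{j}|h_{l,j}(\tilde r_l)|$, so it suffices to show $\sum_j|h_{l,j}(t)|\le 2(1+C_1)$ uniformly in $t$, including for $t$ in bands or outside $[0,1/K)$.

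In the ReLU-like case, each $h_{l,j}$ is within $4C_2/(\beta\delta)$ of a trapezoid. The trapezoids overlap only pairwise, so their sum is at most $2$, and the $K$ errors add at most $K\epsilon_2$. In the Heaviside-like case, $h_{l,j}(t)=\phi(\beta(t-a_j))-\phi(\beta(t-b_j))$. Using $|\phi-H|\le C_1/(\beta|t-\cdot|)$ with $\beta\ge K^2/\delta$-scale and breakpoints spaced $1/K^2$ apart, the tails sum like a harmonic series of order $C_1\log K/(\beta K^{-2})$, which is $O(1)$ beyond the two nearest gadgets. The nearest gadgets contribute at most $2(1+C_1)$ via the telescoping $H$-part.

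I expect this uniform-in-$t$ summation, together with getting the exact constant $2^{d+2}(1+C_1)^d$, to be the delicate step. The bound only needs to hold for $\bx\in[0,1]^d$, where $\tilde r_l\in[-\epsilon_1,1/K+\epsilon_1]$; on the band regions, Lemma~\ref{lem:relative_pos_app} gives no accuracy, but the uniform bound still applies.
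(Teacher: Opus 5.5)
Your proposal is correct. The construction is the paper's; the route differs only in how the boundedness claim is proved.

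\textbf{The construction.} You use the extractor of Lemma~\ref{lem:relative_pos_app} at accuracy $\delta/2$. You compose it with the two-hidden-layer module of Lemma~\ref{lem:dD_indicator_app} for the cells $\Omega_{\bone,\bl}^K$, at band width $\delta/2$ and accuracy of order $\epsilon/K^d$. This gives depth $4$, and your error and norm bookkeeping match the paper's.

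\textbf{One correction on norms.} The output layer of the position network is $O(1/\delta)$, not $O(1)$. It carries the output weights of the identity approximator of Corollary~\ref{corol:id-approx}, which are of order $1/\epsilon_1$ with $\epsilon_1=\delta/2$. The merged weights are therefore of order $\beta/\delta\eqsim K^d/(\epsilon\delta^2)$. This is exactly the first term of the stated bound, so no extra layer is needed.

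\textbf{The boundedness step.} The paper does not tensorize. The cells $\Omega_{\bone,\bl}^K$ are disjoint, so $\psi_1(\bx)$ lies in at most one of them. Each $d$-dimensional gadget is $\epsilon_1$-accurate off its own band, and in particular $\epsilon_1$-close to $0$ everywhere outside its own cell. Hence $\sum_{\bl}|[g(\bx)]_{\bl}|\le 2^{d+1}(1+C_1)^d+K^d\epsilon_1$ immediately.

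Your factorization $\sum_{\bl}\prod_l|h_{l,l_l}|=\prod_l\sum_j|h_{l,j}|$, together with a one-dimensional bound uniform in $t$, is also valid. You are right that the bound must cover $t$ in bands and outside $[0,1/K)$.

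However, the harmonic tail estimate you single out as the delicate step is not needed. The same disjointness observation works coordinatewise. Each $h_{l,j}$ is within $\epsilon_2$ of $0$ at every $t\notin[(j-1)/K^2,j/K^2)$, in both the Heaviside-like and ReLU-like cases. So $\sum_j|h_{l,j}(t)|\le 2(1+C_1)+K\epsilon_2$, and then $(2(1+C_1)+K\epsilon_2)^d+K^d\epsilon_3\le 2^{d+2}(1+C_1)^d$ for small $\epsilon$.

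Your tail-sum version gives a sharper per-coordinate bound, $1+C_1+o(1)$, but that precision is not used anywhere. The paper's argument is shorter and never has to sum the tails of $\phi-H$ across many gadgets.
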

\begin{proof}
     We prove the result for the Heaviside-like $\phi$, and the proof for the ReLU-like $\phi$ follows in the same manner. Let $\psi_1$ denote the neural network constructed in Lemma~\ref{lem:relative_pos_app} with $\epsilon = \delta / 2$. From the construction of $\psi_1$, we have $\|\theta(\psi_1)\|_{\infty} \lesssim K / \delta^2$. For $\bi, \bj \in [K]^d$
     and $\bx \in \Omega_{\bi, \bj, \mathrm{int}}^{K}(\delta)$, we can deduce the following:
    \begin{equation*}
        \left(\bx - \sum_{\bi \in [K]^d} \ba_{\bi}^K \mathbbm{1}_{\Omega_{\bi}^K }(\bx) \right) \in \Omega_{\bone, \bj, \mathrm{int}}^{K}(\delta),
    \end{equation*}
    which implies that
    \begin{equation*}
        \psi_1(\bx) \in \Omega_{\bone, \bj, \mathrm{int}}^{K} \left( \frac{\delta}{2}\right), \quad \bx \in \Omega_{\bi, \bj, \mathrm{int}}^{K}(\delta) .
    \end{equation*}
    Analogous to the construction in Lemma~\ref{lem:dD_indicator_app}, we construct a neural network $\psi_2$ to approximate the family of indicator functions supported on the refined cells of the bottom-left coarse cell (i.e., $\{\Omega_{\bone, \bj}^K\}_{\bj \in [K]^d}$). For sufficiently small $\epsilon_1 \in (0,1)$ and $\delta \in (0, \frac{1}{3K^2})$, there exists a network $\psi_2 \in \mathcal{H}^{\phi,3}(d, K^d, 2^{d+1} K^d, B_{\epsilon_1, \delta})$ with $B_{\epsilon_1, \delta} \lesssim \max \{1 / (\epsilon_1 \delta), (1/\epsilon_1)^d\}$, such that for all $\bj \in [K]^d$, the following approximation error bound holds:
    \begin{equation*}
        \left| [\psi_2(\bx)]_{\bj} - \mathbbm{1}_{\Omega_{\bone, \bj}^{K}}( \bx)\right| \leq \epsilon_1, \quad \bx \notin \Omega_{\bone, \bj, \mathrm{band}}^{K}\left( \frac{\delta}{2} \right).
    \end{equation*}
    Moreover, the output of $\psi_2$ also satisfies the bound given in \eqref{eq:dD_indicator_bound}.

    Now, define $g = \psi_2 \circ \psi_1$. We then obtain the approximation
    \begin{equation*}
        \left|  [g(\bx)]_{\bj} - \mathbbm{1}_{\Omega_{\bone, \bj}^{K}} \left(\bx - \sum_{\bi \in [K]^d} \ba_{\bi}^K  \mathbbm{1}_{\Omega_{\bi}^K}(\bx)\right)  \right| \leq \epsilon_1, \quad \bx \in \bigcup_{\bi, \bj} \Omega_{\bi, \bj, \mathrm{int}}^K(\delta).
    \end{equation*}
    For a fixed $\bx \in [0,1]^d$, if $\psi_1(\bx) \in \Omega_1^K$, we denote $j$ as the index such that $\psi_1(\bx) \in \Omega_{1,j}^K$. Then due to the boundedness of $\psi_2$, we have the following bound:
    \begin{equation*}
        \begin{aligned}
            \sum_{\bl \in [K]^d} \left| [g(\bx)]_{\bl} \right| & = |g(\bx)|_{\bj} + \sum_{\bl \in [K]^d, \bl \neq \bj} \left|  [\psi_2(\psi_1(\bx))]_{\bl} - \mathbbm{1}_{\Omega_{\bone, \bl}^{K}} \left (\psi_1(\bx) \right) 
            \right| \\
            & \leq 2^{d+1}(1+C_1)^d + K^d \epsilon_1 \leq 2^{d+2}(1+C_1)^d,
        \end{aligned}
    \end{equation*}
    where the last inequality follows from choosing $\epsilon_1 = \epsilon / K^d$ for sufficiently small $\epsilon$. On the other hand, if $\psi_1(\bx) \notin \Omega_1^K$, we can bound the norm similarly by $K^d \epsilon_1 \leq \epsilon$.

    Finally, note that the depth of $\psi_1$ is $2$ and that of $\psi_2$ is $3$, so the depth of composition $g = \psi_2 \circ \psi_1$ is depth $4$. The width of $g$ is bounded by $\max\{6 d K, 2^{d+1} K^d \} \leq 2^{d+2} K^d$ for sufficiently large $K$. Moreover, by the construction of $\psi_1$, the norm of parameters in the output layer of $\psi_1$ is bounded by $O(1/\delta)$, the input layer of $\psi_2$ is bounded by $O(1/(\epsilon_1 \delta))$, therefore the norm of parameters connecting $\psi_1$ and $\psi_2$ are bounded by $O(1/(\epsilon_1 \delta^2))$. Thus, the norm of the network $g$ is bounded by 
    \begin{equation*}
        B_{\epsilon, \delta, K} \lesssim \max \left\{ \|\theta(\psi_1)\|_{\infty}, \|\theta(\psi_2)\|_{\infty}, \frac{1}{\delta^2 \epsilon_1}\right\} \lesssim \max \left\{ \frac{K^d}{\epsilon \delta^2}, \frac{K^{d^2}}{\epsilon^d} \right\}
    \end{equation*}
    as stated in~\eqref{eq:norm_indicator_relative_pos}. This concludes the proof. 
\end{proof}

Finally, by combining the neural network approximations for piecewise constant functions over the coarse grid in Lemma~\ref{lem:dD_pwfc_coarse_app}, with the neural network approximations for indicator functions on the refined grid cells in Lemma~\ref{lem:relative_pos_indicator_app}, we can construct neural network approximations for piecewise constant functions over the refined grid cells, consisting of $K^{2d}$ pieces.
\begin{lemma}
    Let  $\phi$ satisfy Assumptions~\ref{ass:smooth}--\ref{ass:piece}. Let $C(\bx)$ be given in~\eqref{eq:pwcf} and assume that $\max_{\bi, \bj} \{|c_{\bi,\bj}|\} \leq c_{\max}$. Fix $d\in\mathbb{N}_{+}$ and $K\in\mathbb{N}_{+}$ with $K$ sufficiently large.
    Then for any sufficiently small $\epsilon>0$ and any $\delta \in (0, \frac{1}{3K^2})$, there exists a neural network 
     \begin{equation*}
        g \in \mathcal{H}^{\phi,5}(d, 1, 2^{d+3} K^d, B_{\epsilon, \delta, K}),
    \end{equation*}
    with
    \begin{equation}
        B_{\epsilon, \delta, K} \lesssim (c_{\max}+ 1)^{2d} \max \left\{ \frac{K^{d^2}}{\epsilon^d}, \frac{K^{2d}}{\epsilon^{2}}, \frac{K^d}{\epsilon \delta^2} \right\},
        \label{eq:norm_pwcf_refined}
    \end{equation}
   such that the following holds:
    \begin{itemize}
        \item (Approximation) $|g(\bx) - C(\bx)| \leq \epsilon, \quad \bx \in \cup_{\bi,\bj} \Omega^K_{\bi, \bj, \mathrm{int}}(\delta)$.
        \item (Boundedness) For $\bx \in [0,1]^d$,
        \begin{equation*}
            |g(\bx)| \le \begin{cases}
                c_{\max}4^{d+3} (1+ C_1)^{2d}, &   \textrm{Heaviside-like} \; \phi ,\\
                c_{\max}4^{d+3}, &  \textrm{ReLU-like} \; \phi,
            \end{cases}   
        \end{equation*}
        where $C_1$ is the constant appearing in Assumption~\ref{ass:piece}.
        \label{lem:pwcf_refined_app}
    \end{itemize}
\end{lemma}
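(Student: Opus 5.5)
The plan is to implement the reformulation \eqref{eq:reformulate_pwfc} of Lemma~\ref{lem:reformulat_pwfc} directly. In parallel we run two blocks:
\begin{itemize}
\item the coarse piecewise-constant network $g_1$ of Lemma~\ref{lem:dD_pwfc_coarse_app}, which has depth $3$, width $2^{d+1}K^d$, and outputs $[g_1]_{\bj}\approx C_{\bj}$;
\item the refined-indicator network $g_2$ of Lemma~\ref{lem:relative_pos_indicator_app}, which has depth $4$, width $2^{d+2}K^d$, and outputs $[g_2]_{\bj}\approx \mathbbm{1}_{\Omega_{\bone,\bj}^K}(\bx-\sum_{\bi}\ba_{\bi}^K\mathbbm{1}_{\Omega_{\bi}^K}(\bx))$.
\end{itemize}
To synchronize depths, we pad $g_1$ by one layer using the identity approximation of Corollary~\ref{corol:id-approx} applied coordinatewise. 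The inputs to this padding are bounded by $c_{\max}2^{d+2}(1+C_1)^d$, so the extra error is $O(\epsilon)$ at norm cost $O(1/\epsilon)$. We then add a final two-layer product module: by Lemma~\ref{lemma:all_monomials} with $m=2$ and $Q\eqsim c_{\max}+1$, each product $q_1q_2$ is approximated by $2^2=4$ neurons to accuracy $\epsilon_3$ with norm $\lesssim (c_{\max}+1)^{6}/\epsilon_3^2$. Summing the $K^d$ products in the output layer gives $g$. The depth is $4+1=5$, the width is $\max\{2^{d+1}K^d+2^{d+2}K^d,\,4K^d\}\le 2^{d+3}K^d$, and the padded branch stays well within this width.

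For the error on $\bx\in\Omega^K_{\bi,\bj,\mathrm{int}}(\delta)$ we use the telescoping bound
\[
\Big|g-C\Big|\le \sum_{\bl}\big|\psi(u_{\bl},v_{\bl})-u_{\bl}v_{\bl}\big|+\sum_{\bl}|u_{\bl}-C_{\bl}||v_{\bl}|+\sum_{\bl}|C_{\bl}|\,|v_{\bl}-I_{\bl}|,
\]
where $u_{\bl}=[g_1]_{\bl}$ (after padding), $v_{\bl}=[g_2]_{\bl}$, $I_{\bl}$ is the exact refined indicator, and $\psi$ denotes the product module.
\begin{itemize}
\item First term: this is at most $K^d\epsilon_3$, so we choose $\epsilon_3=\epsilon/(3K^d)$. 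This yields the norm contribution $(c_{\max}+1)^{O(1)}K^{2d}/\epsilon^2$, which is the middle entry of \eqref{eq:norm_pwcf_refined}.
\item Second term: since $\sum_{\bl}|v_{\bl}|\le 2^{d+2}(1+C_1)^d$ by the boundedness part of Lemma~\ref{lem:relative_pos_indicator_app}, the coarse accuracy only needs to be $\epsilon/(3\cdot 2^{d+2}(1+C_1)^d)$. Note that the refined interior region lies inside the coarse interior, so Lemma~\ref{lem:dD_pwfc_coarse_app} applies with the same $\delta$, which is allowed because $\delta<1/(3K^2)<1/(3K)$.
\item Third term: it is at most $K^d c_{\max}\cdot \epsilon'/K^d$, using the $\epsilon/K^d$ accuracy of Lemma~\ref{lem:relative_pos_indicator_app} run with $\epsilon'=\epsilon/(3c_{\max})$.
\end{itemize}
Collecting the norms gives the maximum of $c_{\max}^dK^{d^2}/\epsilon^d$, $K^{2d}/\epsilon^2$ and $K^d/(\epsilon\delta^2)$, times a polynomial in $c_{\max}+1$. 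This is absorbed into $(c_{\max}+1)^{2d}$ for $d\ge 3$; for small $d$ we would absorb it by adjusting constants, since $c_{\max}$ is $O(1)$ in the application.

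For boundedness on all of $[0,1]^d$ we use $|g|\le \sum_{\bl}|u_{\bl}||v_{\bl}|+K^d\epsilon_3$. Each $|u_{\bl}|\le c_{\max}2^{d+2}(1+C_1)^d$ plus the padding error, and $\sum_{\bl}|v_{\bl}|\le 2^{d+2}(1+C_1)^d$. Hence $|g|\le c_{\max}4^{d+2}(1+C_1)^{2d}+\epsilon\le c_{\max}4^{d+3}(1+C_1)^{2d}$. The product module must be valid on the full range of $(u_{\bl},v_{\bl})$, including on the bands, which is why $Q$ is taken as the uniform bound rather than a bound valid only on interiors. The ReLU-like case is identical with $C_1$ dropped.

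The main obstacle is the norm bookkeeping at the junctions between blocks. Merging the output layer of one block with the input layer of the next multiplies weights, for example $O(1)$ outputs of $g_2$ times the $O(1/\epsilon_3)$ input weights of the product module. Each such merged layer must be checked not to exceed the three terms in \eqref{eq:norm_pwcf_refined}. The second delicate point is ensuring that the $\ell^1$-type bound on the outputs of $g_2$, rather than a per-coordinate bound, is what keeps the second error term independent of $K^d$.
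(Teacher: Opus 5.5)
Your proposal is correct and follows the paper's proof essentially step for step. The paper also pads the coarse network of Lemma~\ref{lem:dD_pwfc_coarse_app} with coordinatewise identity approximators to depth $4$, runs Lemma~\ref{lem:relative_pos_indicator_app} in parallel, and combines them with $K^d$ product modules from Lemma~\ref{lemma:all_monomials} at accuracy $\epsilon/(2K^d)$, using the same three-term telescoping bound and the same $\ell^1$ control on the refined-indicator outputs. The junction issue you flag is benign, which is how the paper dispatches it: in Lemma~\ref{lemma:all_monomials} the input-layer weights are $O(h)$, i.e.\ small rather than $O(1/\epsilon_3)$, and the large $O(h^{-2})$ weights sit in the final output layer, so merged layers stay within a constant of the largest constituent norm; the loose $c_{\max}$-dependence you point out is equally loose in the paper.
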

\begin{proof}
     We prove the result for the Heaviside-like $\phi$, and the proof for the ReLU-like $\phi$ follows in the same manner. Let $\epsilon_1$ be sufficiently small. We begin by considering the neural network constructed in Lemma~\ref{lem:dD_pwfc_coarse_app} with sufficiently small $\epsilon_1$. From the construction of $\psi_1$, we have  $\|\theta(\psi_1)\|_{\infty}\lesssim \max \{c_{\max} K^{d}/(\epsilon_1 \delta), c_{\max}^d K^{d^2} / \epsilon_1^d\}$. By the approximation guaranty for $\psi_1$, we have 
    \begin{equation*}
        \left| [\psi_1(\bx)]_{\bj} - C_{\bj}(\bx)\right| \leq \epsilon_1, \quad \bx \in \cup_{\bi, \bj} \Omega_{\bi, \bj, \mathrm{int}}^{K}(\delta).
    \end{equation*}
    Next, by Corollary~\ref{corol:id-approx}, we obtain $K^d$ subnetworks $\{\psi_{2,\bj}\}_{\bj \in [K]^d}$, each with depth 3 and $\|\theta(\psi_{2,\bj})\|_{\infty}\lesssim 1 / \epsilon_1$ the norm of  parameters in the input layer for each $\psi_{2,\bj}$ is bounded by $O(1)$. These subnetworks satisfy
    \begin{equation*}
        \left| \psi_{2, \bj}(x) - x\right| \leq \epsilon_1, \quad \mathrm{for} \; |x| \leq c_{\max} 2^{d+2}(1+C_1)^d, \quad  \bj \in [K]^d.
    \end{equation*}
    Define the network $\psi_{3}$ by 
    \begin{equation*}
        [\psi_3(\bx)]_{\bj} = \psi_{2, \bj}( [\psi_1(\bx)]_{\bj}),
    \end{equation*}
    so that $\psi_{3}$ has depth $4$ and width $2^{d+2}K^{d}$. The norm of parameters connecting $\psi_1$ and $\psi_{2, \bj}$ is bounded by $\|\theta(\psi_1)\|_{\infty}$, since the since the weights in the input layer of $\psi_{2,\bj}$ are $O(1)$. Therefore, we have the bound for $\|\theta(\psi_3)\|_{\infty}$ 
    \begin{equation*}
        \|\theta(\psi_3)\|_{\infty} \lesssim \max \left\{  \|\theta(\psi_1)\|_{\infty} , \|\theta(\psi_{2,\bj})\|_{\infty} \right\} \lesssim  \max\left\{ \frac{(c_{\max}+1) K^d}{\epsilon_1 \delta}, \frac{c_{\max}^d K^{d^2}}{\epsilon_1^d}\right\}.
    \end{equation*}
    For the constructed $\psi_3$, we have 
    \begin{equation*}
        \left| [\psi_3(\bx)]_{\bj} - C_{\bj}(\bx)\right| \leq 2 \epsilon_1, \quad \bx \in \cup_{\bi,\bj}\Omega_{\bi, \bj,\mathrm{int}}^{K}(\delta)
    \end{equation*}
    and 
    \begin{equation*}
        \left|[\psi_3(\bx)]_{\bj}  \right| \leq |[\psi_1(\bx)]_{\bj}| + \epsilon_1 \leq c_{\max} 2^{d+3} (1+C_1)^d, \quad \bx \in \RR^d, \; \bj \in [K]^d.
    \end{equation*}
    Let $\psi_4$ be the network constructed in Lemma~\ref{lem:relative_pos_indicator_app} with $\epsilon = \epsilon_1$. The depth and width of $\psi_4$ are $4$ and $2^{d+2} K^d$, respectively. Moreover, $\|\theta(\psi_4)\|_{\infty} \lesssim \max \{K^{d} / (\epsilon_1 \delta^2), K^{d^2} / \epsilon_1^d\}$ and the norm of parameters in the output layer of $\psi_4$ is bounded by $O(K^{d^2} / \epsilon_1^d)$.
    For the constructed $\psi_4$, we have 
    \begin{equation*}
        \left| [\psi_4(\bx)]_{\bj} - \tilde{I}_{\bj}(\bx) \right| \leq \frac{\epsilon_1}{K^d}, \quad \bx \in \cup_{\bi, \bj} \Omega_{\bi, \bj, \mathrm{int}}^{K}(\delta),
    \end{equation*}
    where $\tilde{I}_{\bj}$ is the shorthand notation for 
    \begin{equation*}
        \tilde{I}_{\bj}(\bx) = \mathbbm{1}_{\Omega_{\bone, \bj}^{K}}  \left(\bx - \sum_{\bi \in [K]^d} \ba_{\bi}^K  \mathbbm{1}_{\Omega_{\bi}^K}(\bx)\right)
    \end{equation*}
    and 
    \begin{equation*}
        \sum_{\bl \in [K]^d} |[\psi_4(\bx)]_{\bl}| \leq 2^{d+2}(1 +C_1)^d \quad \bx \in \RR^d.
    \end{equation*}
    Again by Lemma~\ref{lemma:all_monomials}, for sufficiently small $\epsilon_2$, there exist $K^d$ subnetworks $\{\psi_{5,\bj}\}_{\bj\in[K]^{d}}$ such that
    \begin{equation*}
        \left| \psi_{5, \bj}(x,y) - xy\right| \leq \epsilon_2, \quad 0\leq |x|, |y| < \max \{c_{\max},1\} 2^{d+3}(1+C_1)^d.
    \end{equation*}
    The norm of $\psi_{5,\bj}$ is bounded by $O\left( \frac{1}{\epsilon_2^2} \right)$, and the norm of the parameters in the input layer of $\psi_{5,\bj}$ is bounded by $O(1)$. Define the final constructed neural network $g$ as 
    \begin{equation*}
        g(\bx) = \sum_{\bj \in [K]^d} \psi_{5, \bj} \left([\psi_3(\bx)]_{\bj}, [\psi_4(\bx)]_{\bj}\right).
    \end{equation*}
    We have the following approximation error bound for constructed $g$
    \begin{equation*}
        \begin{aligned}
            \left| g(\bx) - C(\bx)\right| &  = \left| \sum_{\bj \in [K]^d} \psi_{5, \bj} \left([\psi_3(\bx)]_{\bj}, [\psi_4(\bx)]_{\bj}\right) - \sum_{\bj \in [K]^d} C_{\bj}(\bx) \tilde{I}_{\bj}(\bx) \right| \\
            & \leq \sum_{\bj \in [K]^d} \left| \psi_{5, \bj} \left([\psi_3(\bx)]_{\bj}, [\psi_4(\bx)]_{\bj}\right) - [\psi_3(\bx)]_{\bj} [\psi_4(\bx)]_{\bj}\right| \\
            &+ \sum_{\bj \in [K]^d} \left|[\psi_3(\bx)]_{\bj} [\psi_4(\bx)]_{\bj} - C_{\bj}(\bx) \tilde{I}_{\bj}(\bx) \right| \\
            &\leq K^d \epsilon_2 + \sum_{\bj \in [K]^d} \left| [\psi_4(\bx)]_{\bj}\right| \left| [\psi_3(\bx)]_{\bj} - C_{\bj}(\bx)\right|  + \sum_{\bj \in [K]^d} |C_{\bj}(\bx)| \left| [\psi_4(\bx)]_{\bj} - \tilde{I}_{\bj}(\bx)\right| \\
            & \leq K^d \epsilon_2 + 2 \epsilon_1 \sum_{\bj \in [K]^d} \left| [\psi_4(\bx)]_{\bj}\right| + c_{\max} \epsilon_1 \\
            & \leq K^d \epsilon_2 + \left(c_{\max} + 2^{d+3}(1 + C_1)^d \right) \epsilon_1 \leq \epsilon, \quad \bx \in \cup_{\bi, \bj} \Omega_{\bi, \bj, \mathrm{int}}^{K}(\delta). 
        \end{aligned}
    \end{equation*}
    where we choose 
    \begin{equation*}
        \epsilon_1 = \frac{\epsilon}{2 \left(c_{\max} + 2^{d+3}(1 + C_1)^d \right)}, \quad \epsilon_2 = \frac{\epsilon}{2 K^d}
    \end{equation*}
    in the last inequality. Moreover, we have the uniform bound for the output of $g$ as 
    \begin{equation*}
        \begin{aligned}
            |g(\bx)| &= \left|\sum_{\bj \in [K]^d}  \psi_{5, \bj} \left([\psi_3(\bx)]_{\bj}, [\psi_4(\bx)]_{\bj}\right) - [\psi_3(\bx)]_{\bj} [\psi_4(\bx)]_{\bj} +  [\psi_3(\bx)]_{\bj} [\psi_4(\bx)]_{\bj}\right| \\
            & \leq \sum_{\bj \in [K]^d} \left| \psi_{5, \bj} \left([\psi_3(\bx)]_{\bj}, [\psi_4(\bx)]_{\bj}\right) - [\psi_3(\bx)]_{\bj} [\psi_4(\bx)]_{\bj}\right| + \sum_{\bj \in [K]^d} \left|  [\psi_3(\bx)]_{\bj} [\psi_4(\bx)]_{\bj}\right| \\
            & \leq \sum_{\bj \in [K]^d} \epsilon_2 + \sum_{\bj \in [K]^d } \left( \max_{\bx \in \RR^d} [\psi_3(\bx)]_{\bj}\right)  [\psi_4(\bx)]_{\bj} \\
            &  \leq \frac{\epsilon}{2} + c_{\max} 2^{d+3} (1+C_1)^d \sum_{\bj \in [K]^d } [\psi_4(\bx)]_{\bj} \\
            & \leq \frac{\epsilon}{2} + c_{\max} 2^{2d+5} (1 + C_1)^{ 2d} \leq c_{\max}4^{d+3} (1 + C_1)^{2d}, \quad \bx \in [0,1]^d. 
        \end{aligned}
    \end{equation*}
    Finally, regarding the architecture of $g$, the depth  is $5$, and the width  can be bounded by $2^{d+2}K^d + 2^{d+2} K^d = 2^{d+3} K^d$. The norm of parameters connecting $\psi_3, \psi_4$ to $ \{\psi_{5,\bj}\}_{\bj}$ is bounded by $\max \left\{ \|\theta(\psi_3)\|_{\infty}, \|\theta(\psi_4)\|_{\infty}\right\}$, since the weights in the input layer of $\{\psi_{5,\bj}\}_{\bj}$ are $O(1)$. Therefore, $\|\theta(g)\|_{\infty}$ is bounded by 
    \begin{equation*}
        \begin{aligned}
             B_{\epsilon,\delta, K} & \lesssim \max \left\{ \|\theta(\psi_3)\|_{\infty}, \|\theta(\psi_4)\|_{\infty},  \|\theta(\psi_{5,\bj})\|_{\infty}\right\}\\
             &\lesssim \max\left\{ \frac{(c_{\max}+1) K^d}{\epsilon_1 \delta}, \frac{c_{\max}^d K^{d^2}}{\epsilon_1^d}, \frac{K^d}{\epsilon_1\delta^2}, \frac{K^{d^2}}{\epsilon_1^d}, \frac{1}{\epsilon_2^2} \right\}\\
             &\lesssim (c_{\max}+ 1)^{2d} \max \left\{ \frac{K^{d^2}}{\epsilon^d}, \frac{K^{2d}}{\epsilon^{2}}, \frac{K^d}{\epsilon \delta^2} \right\},
        \end{aligned}
    \end{equation*}
    as stated in~\eqref{eq:norm_pwcf_refined}. This concludes the proof.
\end{proof}

\begin{remark}
    In fact, one can approximate piecewise constant functions with $K^{2d}$ pieces over refined grids by directly approximating all indicator functions on each of the $K^{2d}$ refined grid cells. However, although the number of non-zero parameters in the newly constructed network is $O(K^{2d})$, the width of the network is also $O(K^{2d})$, leading to a total number of parameters that grows as $O(K^{4d})$. This discrepancy between the count of non-zero parameters and the total parameter space imposes an impractical $\ell^0$-sparsity constraint on the newly constructed network.
\end{remark}

\subsection{Approximation of Piecewise Polynomials}
\label{subsec:piecewise-monomials}
By combining the approximations for piecewise constant functions established in Lemma~\ref{lem:pwcf_refined_app}  along with the approximation for monomials derived in Lemma~\ref{lemma:all_monomials}, one can construct neural network architectures for piecewise polynomials.
\begin{lemma}
    Let $\phi$ satisfy Assumptions~\ref{ass:smooth}--\ref{ass:piece}. Let $d \in \mathbb{N}_{+}$ and $\balpha = (\alpha_1, \cdots \alpha_d) \in \mathbb{N}^d$ satisfy $\|\balpha\|_0 \coloneqq  \sum_{j=1}^{d} \alpha_j = m \ge 1$. Let $C(\bx)$ be given in~\eqref{eq:pwcf} and assume that $\max_{\bi, \bj} \{|c_{\bi,\bj}|\} \leq c_{\max}$. Fix $d\in\mathbb{N}_{+}$ and $K\in\mathbb{N}_{+}$ with $K$ sufficiently large.
    Then for any sufficiently small $\epsilon>0$ and any $\delta \in (0, \frac{1}{3K^2})$, there exists a neural network 
    \begin{equation*}
        g \in \mathcal{H}^{\phi,6}(d, 1, 2^{d+4}K^d, B_{\epsilon,\delta, K}),
    \end{equation*}
    with 
    \begin{equation}
        B_{\epsilon,\delta, K} \lesssim (c_{\max}+ 1)^{3d+m} \max \left\{ \frac{K^{d^2}}{\epsilon^d}, \frac{K^{2d}}{\epsilon^{2}}, \frac{K^d}{\epsilon \delta^2}, \frac{1}{\epsilon^m} \right\},
        \label{eq:norm_piece_poly}
    \end{equation}
    such that the following properties hold:
    \begin{itemize}
        \item (Approximation) $\left| g(\bx) - C(\bx) \bx^{\balpha}\right| \leq \epsilon \quad \bx \in \cup_{\bi,\bj} \Omega^K_{\bi, \bj, \mathrm{int}}(\delta).$
        \item (Boundedness) For $\bx \in [0,1)^d$,
        \begin{equation*}
            |g(\bx)| \le \begin{cases}
                c_{\max}4^{d+4} (1+ C_1)^{2d}, &   \textrm{Heaviside-like} \; \phi ,\\
                c_{\max}4^{d+4}, &  \textrm{ReLU-like} \; \phi,
            \end{cases}   
        \end{equation*}
        where $C_1$ is the constant appearing in Assumption~\ref{ass:piece}.
    \end{itemize}
    \label{lem:app_for_piece_poly}
\end{lemma}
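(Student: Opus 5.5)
The plan is to run two subnetworks in parallel and multiply their outputs with one extra product layer. One subnetwork approximates the piecewise constant $C(\bx)$; the other approximates the monomial $\bx^{\balpha}$. The product layer is built from the central-difference product network of Lemma~\ref{lemma:all_monomials} with $m=2$.

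\textbf{Step 1 (the factor $C$).} Invoke Lemma~\ref{lem:pwcf_refined_app} with accuracy $\epsilon_1$ to obtain $\psi_1\in\mathcal{H}^{\phi,5}(d,1,2^{d+3}K^d,\cdot)$. It satisfies $|\psi_1-C|\le\epsilon_1$ on $\cup_{\bi,\bj}\Omega^K_{\bi,\bj,\mathrm{int}}(\delta)$ and $|\psi_1|\le c_{\max}4^{d+3}(1+C_1)^{2d}=:R_1$ on $[0,1)^d$. Its norm is bounded by \eqref{eq:norm_pwcf_refined} with $\epsilon_1$ in place of $\epsilon$.

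\textbf{Step 2 (the factor $\bx^{\balpha}$).} Invoke Lemma~\ref{lemma:all_monomials} with $Q=1$ and accuracy $\epsilon_1$. This gives a depth-2, width-$2^m$ network $\tilde\psi_2$ with $\sup_{[0,1]^d}|\tilde\psi_2-\bx^{\balpha}|\le\epsilon_1$ and norm $\lesssim\epsilon_1^{-m}$. To make its depth equal to $5$, compose its scalar output with the depth-4 identity network of Corollary~\ref{corol:id-approx}. This network has width $2$ and norm $\lesssim 1/\epsilon_1$, and it is accurate to $\epsilon_1$ on $[-2,2]$. Call the result $\psi_2$, so that $|\psi_2-\bx^{\balpha}|\le 2\epsilon_1$ and $|\psi_2|\le 2$ on $[0,1]^d$. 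The connecting weights are products of an $O(\epsilon_1^{-m})$ output weight with an $O(1)$ input weight, so they stay $O(\epsilon_1^{-m})$. Stacking $\psi_1$ and $\psi_2$ in parallel gives a depth-5 network with width at most $2^{d+3}K^d+2^m$.

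\textbf{Step 3 (product).} By Lemma~\ref{lemma:all_monomials} with $m=2$ and $Q=\max\{R_1,2\}+1$, there is a network $\psi_3(x,y)$ of width $4$ with $|\psi_3(x,y)-xy|\le\epsilon_2$ on $[-Q,Q]^2$. Its norm is $\lesssim Q^{6}/\epsilon_2^2$, which gives polynomial growth in $c_{\max}$, and its input weights are $O(1)$. Set $g=\psi_3(\psi_1,\psi_2)$. This appends one hidden layer, so the total depth is $6$. For $K$ large, $2^{d+3}K^d+2^m+4\le 2^{d+4}K^d$, since $m<\lceil s\rceil$ is fixed.

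The error on the interior set splits as
\[
|g-C\bx^{\balpha}|\le \epsilon_2+|\psi_1||\psi_2-\bx^{\balpha}|+|\bx^{\balpha}||\psi_1-C|\le \epsilon_2+2R_1\epsilon_1+\epsilon_1 .
\]
Choosing $\epsilon_2=\epsilon/2$ and $\epsilon_1=\epsilon/(2(2R_1+1))$ makes this at most $\epsilon$. For boundedness, $|g|\le\epsilon_2+|\psi_1||\psi_2|\le \epsilon/2+2R_1$ on $[0,1)^d$. Since $2R_1=c_{\max}2^{2d+7}(1+C_1)^{2d}$, this is below $c_{\max}4^{d+4}(1+C_1)^{2d}$ for small $\epsilon$, assuming $c_{\max}\gtrsim\epsilon$ (the case $c_{\max}=0$ is trivial with $g\equiv0$).

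\textbf{Norm bookkeeping, the main obstacle.} All error bounds above are routine; the work is in the parameter count. The parameter norm is the maximum of four quantities. The first is \eqref{eq:norm_pwcf_refined} at $\epsilon_1$, which contributes $(c_{\max}+1)^{2d}$ times the first three terms, plus a further factor $(c_{\max}+1)^{d}$ from $\epsilon_1\eqsim\epsilon/(c_{\max}+1)$ raised to the power $d$. The second is $\epsilon_1^{-m}\lesssim(c_{\max}+1)^m\epsilon^{-m}$ from Step 2. The third is the norm of $\psi_3$, of order $(c_{\max}+1)^{6}\epsilon^{-2}$, which is absorbed into the $K^{2d}/\epsilon^2$ term. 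The fourth is the set of merged connecting weights, i.e. output weights of $\psi_1,\psi_2$ times the $O(1)$ input weights of $\psi_3$, which do not exceed the previous bounds. Together these give \eqref{eq:norm_piece_poly} with prefactor $(c_{\max}+1)^{3d+m}$. The care needed here is to check that the composition merges never multiply two large weights. Each interface must therefore route a large weight into an $O(1)$ input layer, as the constructions in Corollary~\ref{corol:id-approx} and Lemma~\ref{lemma:all_monomials} guarantee.
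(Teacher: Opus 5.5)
Your proposal is correct and is essentially the paper's own proof. The paper also pads the monomial network of Lemma~\ref{lemma:all_monomials} to depth $5$ with the identity network of Corollary~\ref{corol:id-approx}, runs it in parallel with the network of Lemma~\ref{lem:pwcf_refined_app}, and feeds both into a width-$4$ product network to reach depth $6$. The differences are cosmetic: you split the product error using $|\psi_1|\le R_1$ where the paper uses $|C|\le c_{\max}$, and you are more careful about the product network's input range and the small-$c_{\max}$ case. The one loose spot in your norm count is that the product network's norm also carries the $Q$-dependent factor $\bigl[\sup_{|t-t_0|\le 2Q}|\phi'''(t)|\bigr]^2$, which your $Q^6/\epsilon_2^2$ omits; the paper shares this gap, and it is harmless because $c_{\max}\le c_2(s,d)$ wherever the lemma is applied.
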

\begin{proof}
    We prove the result for the Heaviside-like $\phi$, and the proof for the ReLU-like $\phi$ follows in the same manner. For sufficiently small $\epsilon_1>0$, by Lemma~\ref{lemma:all_monomials}, there exists a neural network $\psi_1$ with depth $2$, width $2^{m}$, and $\|\theta(\psi_1)\|_{\infty}\lesssim (1 / \epsilon_1)^m$ such that 
    \begin{equation*}
        \sup_{\bx \in [-1,1]^d} |\psi_1(\bx) - x_1^{\alpha_1} \cdots x_d^{\alpha_d}| \leq \epsilon_1,
    \end{equation*}
    and $|\psi_1(\bx)| \leq \frac{3}{2}, \bx \in [-1,1]^d$.     
    
    Next, by Corollary~\ref{corol:id-approx}, there exists a neural network $\nu$ with depth 4, width 2, and parameter norm $\|\theta(\nu)\|_{\infty} \lesssim 1/\epsilon_1$, such that
    \begin{equation*}
        \sup_{x \in [-\frac{3}{2}, \frac{3}{2}]} |\nu(x) - x| \leq \epsilon_1.
    \end{equation*}
    Additionally, $|\nu(x)| < 2$ for $-\frac{3}{2} < x < \frac{3}{2}$, and the parameter norms in the input and output layers are bounded by $O(1)$ and $O(1/\epsilon_1)$, respectively.

    We now define the neural network $\psi_2$ as the composition $\psi_2 \coloneqq  \nu \circ \psi_1$.
    By the properties of the network composition, we have the following approximation bound
    \begin{equation*}
        \left| \psi_2(\bx) - x_1^{\alpha_1} \cdots x_d^{\alpha_d}\right| \leq 5 \epsilon_1, \quad \bx \in [-1,1]^d,
    \end{equation*}
    and $|\psi_2(\bx)| \leq 2, \bx \in [-1,1]^d$. Considering the architecture for $\psi_2$, its depth and width are $5$ and $2^m$, respectively, and $\|\theta(\psi_2)\|_{\infty} \lesssim (1 / \epsilon_1)^m$. Moreover, the norm of the parameters in the output layer of $\psi_2$ is bounded by $O(1 / \epsilon_1)$.

    Let $\psi_3$ be the neural network constructed in Lemma~\ref{lem:pwcf_refined_app} with $\epsilon = \epsilon_1$. We have the approximation bound
    \begin{equation*}
        |\psi_3(\bx) - C(\bx)| \leq \epsilon_1, \quad \bx \in \cup_{\bi, \bj} \Omega_{\bi, \bj, \mathrm{int}}^{K}(\delta).
    \end{equation*}
    Furthermore, by Lemma~\ref{lemma:all_monomials}, there exists a neural network $\psi_4$  with depth $2$, width $4$ and $\|\theta(\psi_4)\|_{\infty} \lesssim (1 / \epsilon_1)^2$, such that 
    \begin{equation*}
        |\psi_4(x, y) - x y | \leq \epsilon_1, \quad 0 \leq |x|, |y| < c_{\max} 4^{d+3} (1+C_1)^{2d}.
    \end{equation*}
    Moreover, the norm for parameters in the input layer for $\psi_4$ is bounded by $O(1)$.

    Finally, we define the constructed neural network $g$ as 
    \begin{equation*}
        g(\bx) = \psi_4(\psi_2(\bx), \psi_3(\bx)), \quad \bx \in [0,1)^d,
    \end{equation*}
    then we obtain the approximation bound for $\bx \in \cup_{\bi, \bj} \Omega_{\bi, \bj, \mathrm{int}}^K(\delta)$
    \begin{equation*}
        \begin{aligned}
            \left|g(\bx)  - C(\bx) \bx^{\balpha}\right| &\leq \left| \psi_4(\psi_2(\bx), \psi_3(\bx)) - \psi_2(\bx) \psi_3(\bx)\right| + |\psi_2(\bx) \psi_3(\bx) - C(\bx) \bx^{\balpha}| \\
            & \leq \epsilon_1 + |\psi_2(\bx)||\psi_3(\bx) - C(\bx)| + |C(\bx)||\psi_2(\bx) - \bx^{\balpha}| \\
            & \leq \epsilon_1 + 2\epsilon_1 + 5c_{\max} \epsilon_1 \leq \epsilon, 
        \end{aligned}
    \end{equation*}
    where we choose $\epsilon_1 = \frac{\epsilon}{3 + 5 c_{\max}}$ in the last inequality. Moreover, we can bound $g(\bx)$ for $\bx \in [0,1)^d$ as follows:
    \begin{equation*}
        \begin{aligned}
            |g(\bx)| & \leq \left| \psi_4(\psi_2(\bx), \psi_3(\bx)) - \psi_2(\bx) \psi_3(\bx)\right| + |\psi_2(\bx) \psi_3(\bx) | \\
            & \leq \epsilon_1 + 2 \times c_{\max} 4^{d+3}(1+C_1)^{2d} \leq c_{\max} 4^{d+4}(1+C_1)^{2d}. 
        \end{aligned}
    \end{equation*}

    Finally, the depth of $g$ is $6$, and its width can be bounded by $2^{d+3}K^d + \max \{2^{m}, 4\} \leq 2^{d+4} K^d$. The norm of parameters connecting $\psi_2, \psi_3, \psi_4$ is bounded by $\max \left\{ \|\theta(\psi_3)\|_{\infty}, \|\theta(\psi_4)\|_{\infty}\right\}$, since the weights in the input layer of $\psi_4$ are $O(1)$. Therefore, $\|\theta(g)\|_{\infty}$ can be bounded by 
    \begin{equation*}
        \begin{aligned}
            B_{\epsilon,\delta, K} & \lesssim \max \left\{ \|\theta(\psi_2)\|_{\infty}, \|\theta(\psi_3)\|_{\infty}, \|\theta(\psi_4)\|_{\infty} \right\}\\
            &\lesssim (c_{\max}+ 1)^{3d+m} \max \left\{ \frac{K^{d^2}}{\epsilon^d}, \frac{K^{2d}}{\epsilon^{2}}, \frac{K^d}{\epsilon \delta^2}, \frac{1}{\epsilon^m} \right\},
        \end{aligned}
    \end{equation*}
    as stated in~\eqref{eq:norm_piece_poly}. This concludes the proof.
\end{proof}

\subsection{Proof of Theorem~\ref{thm:app_L2} (\texorpdfstring{$L^{2}$}{L-2} Approximation)}
\label{subsec:app_f_star}
By combining the piecewise polynomial approximation in Lemma~\ref{lem:piece_poly_app} with the neural network constructions in Lemma~\ref{lem:app_for_piece_poly}, we obtain the following approximation guaranty for   $f^{\star} \in W^{s, \infty}([0,1]^d)$.
\begin{theorem}
    Let $\phi$ satisfy Assumptions~\ref{ass:smooth}--\ref{ass:piece}. For any $s>0$ and any $f^{\star} \in W^{s,\infty}([0,1]^d)$ with
$\|f^{\star}\|_{W^{s,\infty}([0,1])^d}\le 1$. Let $\epsilon \in(0,1)$ be sufficiently small, and we define $K \coloneqq \lceil (2 c_1(s,d) /\epsilon)^{1/2s} \rceil$. Then, for any $\delta \in (0, \frac{1}{3K^2})$, there exists a neural network
    \begin{equation*}
        g \in \mathcal{H}^{\phi,6}(d, 1, M_{\epsilon}, B_{\epsilon,\delta}),
    \end{equation*}
    with 
    \begin{equation}
         M_\epsilon \lesssim \left( \frac{1}{\epsilon}\right)^{\frac{d}{2s}},
         \quad  B_{\epsilon, \delta} \lesssim \max \left\{ \frac{1}{\epsilon^{\max \left\{ \frac{d^2}{2s}+d, \frac{d}{s}+2, \lceil s\rceil \right\} }} ,\frac{1}{\delta^2 \epsilon^{\frac{d}{2s}+1}}\right\},
        \label{eq:norm_width_app_infinity_local}
    \end{equation}
    such that the following properties hold:
    \begin{itemize}
        \item (Approximation) \[
        \|g - f^{\star}\|_{L^\infty\left(\bigcup_{\bi,\bj} \Omega^K_{\bi, \bj, \mathrm{int}}(\delta)\right)} \le \epsilon.
        \]
        \item (Boundedness) For $\bx \in [0,1)^d$,
        \begin{equation}
            |g(\bx)| \le \begin{cases}
                \lceil s\rceil^d c_2(s,d) 4^{d+4}(1+C_1)^{2d}, &   \textrm{Heaviside-like} \; \phi ,\\
                \lceil s\rceil^d c_2(s,d) 4^{d+4}, &  \textrm{ReLU-like} \; \phi,
            \end{cases}  
            \label{eq:upper_bound_app_local_infty}
        \end{equation}
    \end{itemize}
\label{thm:app_infinite_local}
\end{theorem}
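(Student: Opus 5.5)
The plan is to combine Lemma~\ref{lem:piece_poly_app} with Lemma~\ref{lem:app_for_piece_poly}, one monomial at a time, and add the results in parallel. With $K=\lceil (2c_1(s,d)/\epsilon)^{1/2s}\rceil$ we have $c_1 K^{-2s}\le \epsilon/2$. So Lemma~\ref{lem:piece_poly_app} gives a piecewise polynomial
\[
p(\bx)=\sum_{|\balpha|<\lceil s\rceil} C_{\balpha}(\bx)\,\bx^{\balpha},\qquad C_{\balpha}(\bx)=\sum_{\bi,\bj} a_{\balpha,\bi,\bj}\mathbbm{1}_{\Omega^K_{\bi,\bj}}(\bx),
\]
with $\|f^\star-p\|_{L^\infty}\le\epsilon/2$ and $|a_{\balpha,\bi,\bj}|\le c_2(s,d)=:c_{\max}$. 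Each summand $C_{\balpha}\bx^{\balpha}$ has the form treated in Lemma~\ref{lem:app_for_piece_poly}. The number of multi-indices is $N_s\le\lceil s\rceil^d$, a constant.

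For each $\balpha$ with $|\balpha|\ge1$, I apply Lemma~\ref{lem:app_for_piece_poly} with accuracy $\epsilon_1=\epsilon/(2N_s)$ and the given $\delta$. This yields $g_{\balpha}\in\mathcal{H}^{\phi,6}(d,1,2^{d+4}K^d,B)$ such that $|g_{\balpha}-C_{\balpha}\bx^{\balpha}|\le\epsilon_1$ on the interior set $\bigcup\Omega^K_{\bi,\bj,\mathrm{int}}(\delta)$, and $|g_{\balpha}|\le c_{\max}4^{d+4}(1+C_1)^{2d}$ on $[0,1)^d$.

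The term $\balpha=0$ is a pure piecewise constant, so it needs separate handling. I use Lemma~\ref{lem:pwcf_refined_app}, which gives depth $5$, and pad it to depth $6$ with an identity-approximation layer from Corollary~\ref{corol:id-approx}. Alternatively, I apply Lemma~\ref{lem:app_for_piece_poly} with the constant monomial realized by a bias.

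I then stack all $g_{\balpha}$ in parallel as block-diagonal weight matrices and set $g=\sum_{\balpha}g_{\balpha}$ in the output layer. The depth stays $6$. The width is $N_s 2^{d+4}K^d\lesssim\epsilon^{-d/(2s)}$. The norm is the maximum of the individual norms, since summing in the final layer uses weights equal to $1$.

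On the interior set the triangle inequality gives $|g-f^\star|\le N_s\epsilon_1+\epsilon/2\le\epsilon$. Summing the individual bounds gives the stated bound on $|g|$, with factor $\lceil s\rceil^d$.

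The remaining work is the norm bookkeeping. Substituting $K\eqsim\epsilon^{-1/(2s)}$, $c_{\max}=O(1)$ and $m\le\lceil s\rceil-1$ into \eqref{eq:norm_piece_poly} gives three terms:
\[
K^{d^2}/\epsilon^d\eqsim\epsilon^{-(d^2/(2s)+d)},\qquad K^{2d}/\epsilon^2\eqsim\epsilon^{-(d/s+2)},\qquad \epsilon^{-m}\le\epsilon^{-\lceil s\rceil}.
\]
The $\delta$-dependent term $K^d/(\epsilon\delta^2)$ becomes $\delta^{-2}\epsilon^{-(d/(2s)+1)}$. Together these give \eqref{eq:norm_width_app_infinity_local}. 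The rescaling $\epsilon\to\epsilon/(2N_s)$ only changes constants.

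The main obstacle is checking that the parallel combination does not break the architecture constraints. Each sub-network must have exactly depth $6$, so the shallower components need padding with near-identity layers from Corollary~\ref{corol:id-approx}. This padding must keep the norm within the stated bound and keep the intermediate outputs within the ranges where the identity approximation is accurate. I would verify this first, using the uniform output bounds from the lemmas to fix the ranges $Q$.
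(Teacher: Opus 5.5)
Your proposal is correct and follows essentially the same route as the paper. Both use the Bramble--Hilbert piecewise polynomial with $c_1K^{-2s}\le\epsilon/2$, apply Lemma~\ref{lem:app_for_piece_poly} to each monomial term with accuracy $\epsilon/(2\lceil s\rceil^d)$, sum the resulting depth-$6$ networks in parallel, and do the same width and norm bookkeeping. Your separate handling of the $\balpha=0$ term, via Lemma~\ref{lem:pwcf_refined_app} plus an identity layer or a bias, is a small extra point of care: Lemma~\ref{lem:app_for_piece_poly} is stated only for $m\ge 1$, and the paper's proof passes over this case.
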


\begin{proof}
    We prove the result for the Heaviside-like $\phi$, and the proof for the ReLU-like $\phi$ follows in the same manner.
    By Lemma~\ref{lem:piece_poly_app}, for $f^{\star}$, there exists a piecewise polynomial $p = \sum_{|\balpha| < \lceil s\rceil} p_{\balpha}$ defined over partition $\{\Omega_{\bi,\bj}^K\}_{\bi, \bj \in [K]^d}$ such that 
     \begin{equation*}
         \| p -f^{\star}\|_{L^{\infty}([0,1]^d)} \leq c_1(s,d) \|f^{\star}\|_{W^{s,\infty}([0,1]^d)} K^{-2s},
     \end{equation*}
     with the magnitudes of the coefficients in each piecewise monomial $p_{{\balpha}}$ bounded, i.e. $c_{\max} \leq c_2(s,d) \|f^{\star}\|_{W^{s,\infty}([0,1]^d)} \leq c_2(s,d)$. Next, by Lemma~\ref{lem:app_for_piece_poly}, for sufficiently small $\epsilon_1 > 0 $,
     there exists a set of neural networks $\{ \psi_{\balpha}\}_{|\balpha |< \lceil s\rceil}$ such that 
     \begin{equation*}
         |\psi_{\balpha}(\bx) - p_{\balpha}(\bx)| < \epsilon_1, \quad \bx \in \cup_{\bi, \bj} \Omega_{\bi, \bj, \mathrm{int}}^K(\delta),
     \end{equation*}
     with bounded output:
     \begin{equation*}
        |\psi_{\balpha}(\bx)| < c_2(s, d) 4^{d+4}(1+C_1)^{2d}
     \end{equation*}
     Define the neural network $g = \sum_{|\balpha|< \lceil s \rceil} \psi_{\balpha}$. Then we can bound the $L^\infty$ error for $g$ approximating $f^{\star}$ as follows:
     \begin{equation*}
         \begin{aligned}
             \left\| g - f^{\star} \right\|_{L^\infty\left(\bigcup_{\bi,\bj} \Omega^K_{\bi, \bj, \mathrm{int}}(\delta)\right)} & \leq \left\| p - f^{\star} \right\|_{L^\infty\left(\bigcup_{\bi,\bj} \Omega^K_{\bi, \bj, \mathrm{int}}(\delta)\right)} + \left\| g - p \right\|_{L^\infty\left(\bigcup_{\bi,\bj} \Omega^K_{\bi, \bj, \mathrm{int}}(\delta)\right)} \\
             & \leq \|p - f^{\star}\|_{L^{\infty}([0,1]^d)} + \sum_{|\balpha|< \lceil s\rceil} \left\| \psi_{\balpha} - p_{\balpha}\right\|_{L^\infty\left(\bigcup_{\bi,\bj} \Omega^K_{\bi, \bj, \mathrm{int}}(\delta)\right)}\\
             & \leq c_1(s,d) K^{-2s} + \lceil s \rceil^d \epsilon_1,
         \end{aligned}
     \end{equation*}
     Set 
     \begin{equation*}
         K = \left\lceil \left( \frac{2 c_1(s,d)}{\epsilon}\right)^{\frac{1}{2s}} \right\rceil, \quad \epsilon_1 = \frac{\epsilon}{2 \lceil s \rceil^d}.
    \end{equation*}
    Then we obtain
    \begin{equation*}
        \left\| g - f^{\star} \right\|_{L^\infty\left(\bigcup_{\bi,\bj} \Omega^K_{\bi, \bj, \mathrm{int}}(\delta)\right)} \leq \epsilon.
    \end{equation*} 
    And the output for $g$ is bounded by:
    \begin{equation*}
        |g(\bx)| \leq \sum_{|\balpha|< \lceil s\rceil} |\psi_{\balpha}(\bx)| \leq \lceil s \rceil^d c_2(s,d) 4^{d+4}(1 + C_1)^{2d}, \quad \bx \in [0,1)^d.
    \end{equation*}
    And the width of $g$ is bounded by 
    \begin{equation*}
        M_\epsilon = \lceil s\rceil^d 2^{d+4} K^{d} \lesssim \left( \frac{1}{\epsilon} \right)^{\frac{d}{2s}}, 
    \end{equation*}
    and the parameter norm of the network $g$ is bounded by 
    \begin{equation*}
        B_{\epsilon, \delta} \lesssim \max \left\{ \frac{K^{d^2}}{\epsilon_1^d}, \frac{K^{2d}}{\epsilon_1^2}, \frac{K^d}{\epsilon_1 \delta^2}, \max_{|\balpha | < \lceil s\rceil} \frac{1}{\epsilon_1^{|\balpha|}}\right\} \lesssim \max \left\{ \frac{1}{\epsilon^{\max \left\{ \frac{d^2}{2s}+d, \frac{d}{s}+2, \lceil s\rceil \right\} }} ,\frac{1}{\delta^2 \epsilon^{\frac{d}{2s}+1}}\right\},
    \end{equation*}
    as stated in ~\eqref{eq:norm_width_app_L2}. This concludes the proof.
\end{proof}

With the approximation of $f^{\star}$ by $g$ established on $\bigcup_{\bi,\bj\in[K]^d}\Omega^K_{\bi,\bj,\mathrm{int}}(\delta)$ and uniform boundedness ensured on $[0,1)^d$ in Theorem~\ref{thm:app_infinite_local}, we now proceed to the proof of Theorem~\ref{thm:app_L2}.
\begin{proof}[Proof of Theorem~\ref{thm:app_L2}]
    It suffices to prove for sufficiently small $\epsilon < {\epsilon}_{0}$, where $\epsilon_0$ depends on $s$, $d$ and $\phi$. 
    We prove the result for the Heaviside-like $\phi$, and the proof for the ReLU-like $\phi$ follows in the same manner. By Theorem~\ref{thm:app_infinite_local}, for any sufficiently small $\epsilon_1>0$, with $K \coloneqq  \lceil (2 c_1(s,d)/\epsilon_1)^{1/2s} \rceil$ and $\delta \in (0,1/(3K^2))$, there exists a neural network $g$ such that
     \begin{equation}
         \|g - f^{\star}\|_{L^\infty\left(\bigcup_{\bi,\bj} \Omega^K_{\bi, \bj, \mathrm{int}}(\delta)\right)} \le \epsilon_1.
         \label{eq:app_proof}
     \end{equation}
     and, for all $\bx\in[0,1)^d$,
     \begin{equation*}
         |g(\bx)| \leq \lceil s\rceil^d c_2(s,d) 4^{d+4}(1+C_1)^{2d}, 
         \label{eq:bound_proof}
     \end{equation*}
     We estimate the $L^2([0,1]^d)$ approximation  by decomposing the domain into interior ($\cup_{\bi,\bj} \Omega_{\bi,\bj, \mathrm{int}}^K(\delta)$) and boundary regions ($\cup_{\bi,\bj} \Omega_{\bi,\bj, \mathrm{band}}^K(\delta)$):
     \begin{equation*}
         \begin{aligned}
             \|g - f^{\star}\|_{L^2([0,1]^d)} &\leq \underbrace{ \|g - f^{\star}\|_{L^2\left(\bigcup_{\bi,\bj} \Omega^K_{\bi, \bj, \mathrm{int}}(\delta)\right)}}_{(a)} 
             +   \underbrace{\|g - f^{\star}\|_{L^2\left(\bigcup_{\bi,\bj} \Omega^K_{\bi, \bj, \mathrm{band}}(\delta)\right)}}_{(b)}
              \\
             & 
             \leq \epsilon_1 +   \lceil s\rceil^d c_2(s,d)  4^{d+4}(1+C_1)^{2d} \times \sqrt{2d (K^{-2})^{d-1} \delta \times K^{2d}} \\
             & = \epsilon_1 + \sqrt{2d}  c_2(s,d) 4^{d+4}(1+C_1)^{2d} K\sqrt{\delta} .
         \end{aligned}
     \end{equation*}
     Here, term~(a) is controlled by the approximation guarantee in~\eqref{eq:app_proof}, while term~(b) follows from the uniform boundedness~\eqref{eq:bound_proof} together with the measure estimate of the boundary region.

    We now choose
     \begin{equation*}
         \epsilon_1 = \frac{\epsilon}{2}, \quad \delta = \frac{\epsilon^2}{2^{4d+19} c_2^2(s,d) d (1 + C_1)^{4d}K^2},
     \end{equation*}
     which yields
     \begin{equation*}
         \|g - f^{\star}\|_{L^2([0,1]^d)} \leq \epsilon.
     \end{equation*}

     Finally, by the construction in Theorem~\ref{thm:app_infinite_local}, the network width is bounded by $O((1/ \epsilon)^{\frac{d}{2s}})$ and the parameter norm $\|\theta(g)\|_{\infty}$ is bounded by:
     \begin{equation*}
         B_{\epsilon} \lesssim \max \left\{ \frac{1}{\epsilon_1^{\max \left\{ \frac{d^2}{2s}+d, \frac{d}{s}+2, \lceil s\rceil \right\} }} ,\frac{1}{\delta^2 \epsilon_1^{\frac{d}{2s}+1}}\right\} \lesssim \left( \frac{1}{\epsilon}\right)^{\max \left\{\frac{d^2}{2s}+d, \frac{d}{s}+2, \frac{d+4}{2s}+5, \lceil s \rceil\right\}},
     \end{equation*}
    as stated in~\eqref{eq:norm_width_app_L2}. This concludes the proof.
\end{proof}

\subsection{Approximation of Weight Functions}
In the remaining part of this section, we strengthen the approximation result of Theorem~\ref{thm:app_L2} by improving the error guarantee from the $L^2([0,1]^d)$ to $L^{\infty}([0,1]^d)$. The constructed networks given in  Theorem~\ref{thm:app_L2} fail to achieve an uniform $L^{\infty}([0,1]^d)$ approximation on the region $\cup_{\bi, \bj} \Omega_{\bi, \bj, \mathrm{band}}^K(\delta)$. The obstruction stems from the fact that indicator functions associated with this region cannot be uniformly approximated, as shown in Lemma~\ref{lem:dD_indicator_app}. To circumvent this issue, we introduce a weight function that assigns negligible mass to the associated region, thereby suppressing the approximation error.

We begin by formally defining the basis functions employed in the construction of the weight functions, which are tailored to the activation function classes as specified in Assumption~\ref{ass:piece}.
\begin{definition}[Basis function]
    Let $\phi$ satisfy Assumption~\ref{ass:piece}. For any $K \in \mathbb{N}_{+}$ and any $\delta \in \left(0, \frac{1}{12K^2} \right)$, and any $\beta > 0$, we define the basis function $B_{\phi}^{\beta, \delta,K}$ as follows:
    \begin{itemize}
        \item If $\phi$ is Heaviside-like, then:
        \begin{equation*}
            B_{\phi}^{\beta, \delta, K}(x) = \phi \left( \beta ( x - 3 \delta) \right) - \phi \left( \beta (x + 3 \delta - K^{-2})\right).
        \end{equation*}
        \item If $\phi$ is ReLU-like, then:
        \begin{equation*}
            \begin{aligned}
                B_{\phi}^{\beta, \delta, K}(x) = \frac{1}{2\delta \beta} \Big[ \phi\left( \beta(x - 2 \delta) \right) &- \phi \left( \beta(x - 4\delta)\right) \\ 
                &- \phi \left(\beta(x - K^{-2}+4\delta) \right) +  \phi \left(\beta(x - K^{-2}+2\delta) \right)\Big].
            \end{aligned}
        \end{equation*}
    \end{itemize}
    \label{def:basis}
\end{definition}
Building upon the basis functions established in Definition~\ref{def:basis}, we proceed to define the univariate weight functions as follows.
\begin{definition}[Univariate weight function]
     Let $\phi$ satisfy Assumption~\ref{ass:piece}. For any $K \in \mathbb{N}_{+}$ and any $\delta \in \left(0, \frac{1}{12 K^2 }\right) $ and any $\beta>0$, we define the primary weight function $w^{\beta, \delta, K}_{\phi,1}: [0,1] \to \mathbb{R}$  piecewise via translation:
    \begin{equation}
        w^{\beta, \delta, K}_{\phi,1}(x) \coloneqq B_{\phi}^{\beta, \delta, K}\left( x - \frac{k}{K^2}\right), \quad  x \in \left[ \frac{k}{K^2}, \frac{k+1}{K^2}\right),
        \label{eq:w_1}
    \end{equation}
    where $k = 0, 1, \cdots, K^{2}-1$.

    The complementary weight function $w^{\beta, \delta, K}_{\phi, 2}$ is defined as:
   \begin{equation*}
        w^{\beta, \delta, K}_{\phi, 2} (x) \coloneqq 1 - w^{\beta, \delta, K}_{\phi,1}(x), \quad  x \in [0,1).
        \label{eq:w_2}
    \end{equation*}
    Finally, we impose periodic boundary conditions such that $w^{\beta, \delta, K}_{\phi,i}(1) \coloneqq w_{\phi,i}^{\beta, \delta, K}(0)$ for $i \in \{1,2\}$.
    \label{def:univariate_weight}
\end{definition}
The univariate weight functions exhibit the following properties.
\begin{proposition}
    Let $w_{\phi,1}^{\beta, \delta, K}$ and $w_{\phi, 2}^{\beta, \delta, K}$ be the 1D weight functions defined in Definition~\ref{def:univariate_weight}. For any $\epsilon \in (0,1)$ and any $\beta \ge \frac{2}{\delta \epsilon} \max \left\{ C_1,C_2 \right\}$, the following properties hold:
    \begin{itemize}
        \item (Partition of unity) $w_{\phi,1}^{\beta, \delta, K} (x) + w_{\phi,2}^{\beta, \delta, K}(x) = 1, \quad x\in [0,1].$
        \item (Locally quasi-vanishing behavior) The weight functions are effectively supported away from band regions:
        \begin{equation*}
            |w_{\phi,i}^{\beta, \delta, K}(x)| \leq \epsilon, \quad x \in \Omega_{\mathrm{band}}^{K,i}(2 \delta), \quad i=1,2,
        \end{equation*}
        where the band regions \( \Omega_{\mathrm{band}}^{K,i}(2\delta) \) are defined as:
        \begin{equation*}
            \begin{aligned}
                & \Omega_{\mathrm{band}}^{K,1}(2\delta)\coloneqq  \bigcup_{k=0}^{K^2} \left[ -2 \delta + \frac{k}{K^2}, 2\delta + \frac{k}{K^2}\right] \bigcap [0,1], \\
                &\Omega_{\mathrm{band}}^{K,2}(2\delta)\coloneqq  \bigcup_{k=0}^{K^2-1} \left[ \frac{2k+1}{2K^2} - 2\delta, \frac{2k+1}{2K^2}+ 2\delta \right].
            \end{aligned}
        \end{equation*}
        \item (Boundedness) $\max \left\{ \|w_{\phi,1}^{\beta,\delta,K}\|_{L^{\infty}([0,1])} , \|w_{\phi,2}^{\beta,\delta,K}\|_{L^{\infty}([0,1])}\right\} \leq 2C_1 +3$.
    \end{itemize}
    \label{prop:univariate_weight}
\end{proposition}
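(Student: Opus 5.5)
Partition of unity is immediate from Definition~\ref{def:univariate_weight}: on $[0,1)$ we have $w_{\phi,2}^{\beta,\delta,K}=1-w_{\phi,1}^{\beta,\delta,K}$ by definition. At $x=1$ the periodic convention gives $w_{\phi,1}(1)+w_{\phi,2}(1)=w_{\phi,1}(0)+w_{\phi,2}(0)=1$. All remaining work is on a single period. Write $t=x-k/K^2\in[0,K^{-2})$, so that $w_{\phi,1}(x)=B_\phi^{\beta,\delta,K}(t)$. The plan is to compare $B_\phi$ with an idealized piecewise function: for the Heaviside-like case, $H(t-3\delta)-H(t+3\delta-K^{-2})=\mathbbm{1}_{[3\delta,K^{-2}-3\delta)}(t)$; for the ReLU-like case, the same expression with ReLU in place of $\phi$ gives a trapezoid equal to $0$ outside $[2\delta,K^{-2}-2\delta]$, equal to $1$ on $[4\delta,K^{-2}-4\delta]$, and lying in $[0,1]$ in between.

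\textbf{Quasi-vanishing.} For $i=1$, the band $\Omega^{K,1}_{\mathrm{band}}(2\delta)$ corresponds to $t\in[0,2\delta]\cup[K^{-2}-2\delta,K^{-2})$.
\begin{itemize}
\item Heaviside-like case: both arguments $\beta(t-3\delta)$ and $\beta(t+3\delta-K^{-2})$ lie at distance at least $\beta\delta$ from $0$, and on the same side, so the ideal value is $0$. Assumption~\ref{ass:piece} bounds the error by $2C_1/(\beta\delta)\le\epsilon$, using $\beta\ge 2C_1/(\delta\epsilon)$.
\item ReLU-like case: the four kinks are at $2\delta,4\delta,K^{-2}-4\delta,K^{-2}-2\delta$. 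The ideal trapezoid vanishes on the band, and replacing each ReLU by $\phi$ costs at most $C_2/(2\delta\beta)$ per term. The total is $2C_2/(\delta\beta)\le\epsilon$, with the right threshold up to constants; I would adjust the constant or use the sharper bound for paired terms.
\end{itemize}
For $i=2$, the band $\Omega^{K,2}_{\mathrm{band}}(2\delta)$ corresponds to $|t-\tfrac{1}{2K^2}|\le2\delta$. Since $\delta<1/(12K^2)$, we have $\tfrac1{2K^2}-2\delta>4\delta$, so $t$ lies deep inside the plateau where the ideal value is $1$. Both arguments are then at distance at least $\beta\delta$ from the kinks, and the same estimate gives $|w_{\phi,1}-1|\le\epsilon$, i.e.\ $|w_{\phi,2}|\le\epsilon$.

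\textbf{Boundedness.} In the Heaviside-like case, $\|\phi\|_\infty\le C_1+1$, so $|w_{\phi,1}|\le 2C_1+2$ and $|w_{\phi,2}|\le 2C_1+3$. In the ReLU-like case, $|w_{\phi,1}|\le 1+2C_2/(\beta\delta)\le 1+\epsilon\le 2$, so $|w_{\phi,2}|\le 3$, which is within $2C_1+3$.

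\textbf{Main obstacle.} The only delicate step is the bookkeeping of constants and distances in the ReLU-like case, namely making the error $4C_2/(2\delta\beta)$ fit the stated threshold on $\beta$. If needed, I would pair the terms: $\phi(u)-\phi(u-2\beta\delta)$ differs from its ReLU counterpart by at most $2C_2$, which gives error $\le 2C_2/(\delta\beta)\cdot\tfrac12\cdot 2$. Boundedness itself needs no threshold beyond what the quasi-vanishing step already uses.
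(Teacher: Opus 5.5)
Your proposal is correct and follows essentially the paper's proof: reduce to one period, compare $w_{\phi,1}^{\beta,\delta,K}$ with the ideal indicator (Heaviside-like case) or the trapezoid (ReLU-like case), and use the distance-$\delta$ separation on the bands together with Assumption~\ref{ass:piece}. Your ``main obstacle'' is not actually an obstacle. Each of the four ReLU-like terms costs at most $C_2/(2\delta\beta)$, so the total is $2C_2/(\delta\beta)\le\epsilon$ exactly when $\beta\ge 2C_2/(\delta\epsilon)$, and no pairing or change of constants is needed. Your cruder Heaviside-like bound $|w_{\phi,2}^{\beta,\delta,K}|\le 2C_1+3$ (the paper gets $2C_1+2$) still meets the stated bound.
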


\begin{proof}
    By Definition~\ref{def:univariate_weight}, the weight functions $w_{\phi,1}^{\beta,\delta,K}$ and $w_{\phi,2}^{\beta,\delta,K}$ are periodic with period $T = K^{-2}$. It therefore suffices to verify the stated properties on the interval $[0,K^{-2})$. Moreover, the partition-of-unity property follows directly from Definition~\ref{def:univariate_weight}. We then prove the remaining two properties. We first consider Heaviside-like activations and then deal with the ReLU-like case.
    \begin{itemize}
        \item \textbf{(Heaviside-like)} Let $\chi$ be the indicator function defined by
        \begin{equation*}
            \chi(x)\coloneqq H(\beta(x - 3\delta)) - H(\beta(x + 3 \delta - K^{-2}))
        \end{equation*}
        We have the following properties.
        \begin{itemize}
            \item For $x \in [0,2\delta] \cup [\frac{1}{K^2}-2\delta,\frac{1}{K^2}]$, $\chi(x) = 0$.
            \item For $x \in [\frac{1}{2K^2} - 2\delta, \frac{1}{2K^2} + 2 \delta]$, $\chi(x) = 1$.
        \end{itemize}
        On the interval $ I\coloneqq [0, 2\delta] \cup [\frac{1}{2K^2}- 2\delta, \frac{1}{2K^2} + 2 \delta]\cup [\frac{1}{K^2} - 2\delta, \frac{1}{K^2}]$,  the deviation between $w_{\phi,1}^{\beta,\delta,K}(x)$ and $\chi(x)$ is bounded by:
        \begin{equation*}
            \begin{aligned}
                |w_{\phi,1}^{\beta,\delta, K}(x) -\chi(x)| & \leq \left| (\phi -H)(\beta (x - 3 \delta)) \right| + |(\phi - H)(\beta (x + 3 \delta - K^{-2}) )| \\
                & \overset{(a)}{\leq} \frac{C_1}{ \beta}\left( {|x - 3 \delta|}^{-1} + {|x + 3\delta - K^{-2}|}^{-1}\right)  \overset{(b)}{\leq}\frac{2C_1}{\beta \delta} \overset{(c)}{\leq} \epsilon,
            \end{aligned}
        \end{equation*}
        where $(a)$ follows from the Heaviside-like assumption in \eqref{eq:Heaviside-like}, $(b)$ follows from the fact that $x \in I$ ensures the distance from $x$ to $3 \delta$ and $K^{-2}- 3\delta$ is at least $\delta$, and $(c)$ follows from $\beta \geq \frac{2 C_1}{\delta \epsilon}$.

        The locally quasi-vanishing properties for weight functions follow directly:
        \begin{equation*}
            \begin{aligned}
                &|w_{\phi,1}^{\beta, \delta, K}(x)| =|w_{\phi,1}^{\beta, \delta, K}(x) - \chi(x)| \leq \epsilon, \quad x \in [0, 2\delta] \cup \left[\frac{1}{K^2}- 2\delta, \frac{1}{K^2} \right], \\
                &|w_{\phi,2}^{\beta, \delta, K}(x)| = |1 - w_{\phi,1}^{\beta, \delta, K}(x)| = |\chi(x) - w_{\phi,1}^{\beta, \delta, K}(x)| \leq \epsilon, \quad x \in \left[\frac{1}{2K^2} - 2\delta, \frac{1}{2K^2}+2 \delta \right].
            \end{aligned}
        \end{equation*}
        Moreover, by the Heaviside-like assumption in \eqref{eq:Heaviside-like} and $\|\chi\|_{L^{\infty}([0,1])} \leq 1 $, we obtain:
        \begin{equation*}
            \|w_{\phi,1}^{\beta, \delta, K}\|_{L^{\infty}([0,1])} \leq 2C_1 + 1, \quad  \|w_{\phi,2}^{\beta, \delta, K}\|_{L^{\infty}([0,1])} \leq 1 +   \|w_{\phi,1}^{\beta, \delta, K}\|_{L^{\infty}([0,1])} \leq 2C_1 +2.
        \end{equation*}

        \item \textbf{(ReLU-like)} We define the nominal trapezoidal profile $g$ via:
        \begin{equation*}
            \begin{aligned}
                \chi(x)\coloneqq  \frac{1}{2\delta \beta} \Big[ \mathrm{ReLU}(\beta(x - 2 \delta)) &- \mathrm{ReLU}(\beta(x - 4 \delta)) \\
                &- \mathrm{ReLU}(\beta(x -K^{-2} + 4 \delta)) +  \mathrm{ReLU}(\beta(x -K^{-2} + 2\delta)) \Big]. 
            \end{aligned}
        \end{equation*}
        Similar to the Heaviside case, the function $g$ satisfies
        \begin{itemize}
            \item For $x \in [0,2\delta] \cup [\frac{1}{K^2}-2\delta,\frac{1}{K^2}]$, $\chi(x) = 0$.
            \item For $x \in [\frac{1}{2K^2} - 2\delta, \frac{1}{2K^2} + 2 \delta]$, $\chi(x) = 1$.
        \end{itemize}
        For $x \in \mathbb{R}$, the deviation between $w_{\phi,1}^{\beta, \delta, K}(x)$ and $\chi(x)$ is bounded as:
        \begin{equation}
            \begin{aligned}
                 \left|w_{\phi,1}^{\beta, \delta, K}(x) - \chi(x) \right| & \leq  \frac{1}{2 \delta \beta}\sum_{j=1}^2|\phi(\beta (x - 2j\delta)) - \mathrm{ReLU}(\beta(x - 2j \delta))|  \\
                 & + \frac{1}{2 \delta \beta} \sum_{j=1}^2 |\phi(\beta (x - K^{-2}+ 2j\delta)) - \mathrm{ReLU}(\beta(x - K^{-2}  +2 j \delta ))| \\
                 & \overset{(a)}{\leq} \frac{2 C_2}{\delta \beta} \overset{(b)}{\leq}  \epsilon,
            \end{aligned}
            \label{eq:error_w1_g}
        \end{equation}
        where $(a)$ follows from the ReLU-like assumption in \eqref{eq:ReLU-like} and $(b)$ follows from $ \beta \geq \frac{2C_2}{\delta\beta} $. The locally quasi-vanishing properties follow the same logic as the Heaviside-like case.

         Regarding boundedness, since $\| \chi \|_{L^{\infty}([0,1])} \leq 1$, and combining the error bound in \eqref{eq:error_w1_g}, we establish the boundedness for the weight functions:
        \begin{equation*}
            \|w_{\phi,1}^{\beta, \delta,K}\|_{L^{\infty}([0,1])} \leq 1 + \epsilon < 2, \quad \|w_{\phi,2}^{\beta, \delta,K}\|_{L^{\infty}([0,1])} \leq 1 + \|w_{\phi,1}^{\beta, \delta,K}\|_{L^{\infty}([0,1])}<3.
        \end{equation*}
    \end{itemize}
\end{proof}

We now generalize the weight functions to arbitrary dimensions using a tensor product approach.

\begin{definition}[Multivariate weight functions]
    Let the parameters $\beta, \delta, K$ and the activation function $\phi$ satisfy the same conditions given in Definition~\ref{def:univariate_weight}. For $d \in \mathbb{N}_{+}$ and a multi-index $\bv = (v_1, \dots, v_d) \in [2]^d$, we define the $d$-variate weight function $w_{\phi, \bv}^{\beta, \delta, K}: [0,1]^d \to \RR$ as:
    \begin{equation*}
        w_{\phi, \bv}^{\beta, \delta, K}(\bx): = \prod_{l=1}^d  w_{\phi, v_l}^{\beta, \delta, K}(x_l).
    \end{equation*}
    \label{def:multi_weight}
\end{definition}
The multivariate weight functions satisfy the following properties.
\begin{proposition}
    Let $w_{\phi, \bv}^{\beta, \delta, K}$ denote the $d$-variate weight functions defined in Definition ~\ref{def:multi_weight}. For any sufficiently small $\epsilon \in (0,1) $ and any $\beta \ge \frac{2(2C_1+3)^{d-1}}{\delta \epsilon} \max\{C_1,C_2\}$, the following properties hold:
    \begin{itemize}
        \item (Partition of unity) $\sum_{\bv \in [2]^d} w_{\phi,\bv}^{\beta, \delta, K}(\bx) = 1, \quad \bx \in [0,1]^d$.
        \item (Locally quasi-vanishing behavior) The weight function is effectively supported away from the band region $\Omega_{\mathrm{band}}^{K, \bv}(2 \delta)$:
        \begin{equation*}
            |w_{\phi, \bv}^{\beta, \delta, K}(\bx)| \leq \epsilon, \quad \bx \in \Omega_{\mathrm{band}}^{K, \bv}(2 \delta).
        \end{equation*}
        \item (Boundedness) $ \left\|w_{\phi, \bv}^{\beta, \delta, K}\right\|_{L^{\infty}([0,1]^d)} \leq (2 C_1 + 3)^d$.
    \end{itemize}
    \label{prop:multi_weight}
\end{proposition}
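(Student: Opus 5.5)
The plan is to reduce each of the three properties to the univariate statement, Proposition~\ref{prop:univariate_weight}, applied with a rescaled tolerance, and then pass to products coordinatewise.

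\textbf{Partition of unity.} This is pure algebra. Expanding the product over $l$ gives
\[
\sum_{\bv\in[2]^d}\prod_{l=1}^d w^{\beta,\delta,K}_{\phi,v_l}(x_l)=\prod_{l=1}^d\Bigl(w^{\beta,\delta,K}_{\phi,1}(x_l)+w^{\beta,\delta,K}_{\phi,2}(x_l)\Bigr),
\]
and each factor equals $1$ on $[0,1]$ by Proposition~\ref{prop:univariate_weight}, including at $x_l=1$ by the periodic convention.

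\textbf{Boundedness.} Proposition~\ref{prop:univariate_weight} bounds each univariate factor by $2C_1+3$ in $L^\infty([0,1])$, independently of $\beta$. Multiplying the $d$ factors gives $(2C_1+3)^d$.

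\textbf{Quasi-vanishing.} Set $\epsilon'=\epsilon/(2C_1+3)^{d-1}$. The hypothesis $\beta\ge \frac{2(2C_1+3)^{d-1}}{\delta\epsilon}\max\{C_1,C_2\}$ is exactly $\beta\ge\frac{2}{\delta\epsilon'}\max\{C_1,C_2\}$, so Proposition~\ref{prop:univariate_weight} applies with tolerance $\epsilon'$. This gives $|w_{\phi,i}^{\beta,\delta,K}(x)|\le\epsilon'$ on $\Omega^{K,i}_{\mathrm{band}}(2\delta)$ for $i=1,2$.

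Now take $\bx\in\Omega^{K,\bv}_{\mathrm{band}}(2\delta)$. The key geometric step is to show that there is a coordinate $l$ with $x_l\in\Omega^{K,v_l}_{\mathrm{band}}(2\delta)$, the one-dimensional band. Given that, the factor $w_{\phi,v_l}(x_l)$ is at most $\epsilon'$ in absolute value and the other $d-1$ factors are at most $2C_1+3$, so
\[
|w_{\phi,\bv}^{\beta,\delta,K}(\bx)|\le (2C_1+3)^{d-1}\epsilon'=\epsilon .
\]

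\textbf{Main obstacle: the geometric step.} This requires matching index conventions. $\Omega^{K,\bv}_{\bi,\bj}$ is a product of one-dimensional cells, and $\Omega^{K,\bv}_{\bi,\bj,\mathrm{int}}(2\delta)$ is the product of their shrunk versions. So if $\bx$ lies in the cell but not in its interior, some coordinate $x_l$ lies within $2\delta$ of an endpoint of the $l$-th interval.

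Those endpoints are $\frac{2(i_l-1)K+2j_l+v_l\mp1}{2K^2}$. Their parity depends on $v_l$:
\begin{itemize}
\item for $v_l=1$ the endpoints are integer multiples of $K^{-2}$, which is the band $\Omega^{K,1}_{\mathrm{band}}(2\delta)$;
\item for $v_l=2$ the endpoints are odd multiples of $\frac{1}{2K^2}$, which is the band $\Omega^{K,2}_{\mathrm{band}}(2\delta)$.
\end{itemize}
If this labeling turns out to be reversed relative to the definitions, I would relabel $\bv$, since the statement is symmetric under $1\leftrightarrow2$ in each coordinate.

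One further check is needed: cells are intersected with $[0,1]^d$. Near $x_l=0$ or $x_l=1$ the truncated cell's endpoint is $0$ or $1$, and both lie in $\Omega^{K,1}_{\mathrm{band}}$. For $v_l=2$ the truncation at $0$ or $1$ should not create a spurious interior boundary. I would handle this by noting that the interior is defined before intersecting with $[0,1]^d$, so only genuine cell endpoints produce band points.
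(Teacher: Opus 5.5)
Your proposal is correct and follows the paper's proof essentially step for step. You factorize $\sum_{\bv}$ into a product of univariate sums, multiply the univariate $L^\infty$ bounds, and apply Proposition~\ref{prop:univariate_weight} with tolerance $\epsilon/(2C_1+3)^{d-1}$ at a coordinate $x_l$ lying in the one-dimensional band $\Omega^{K,v_l}_{\mathrm{band}}(2\delta)$.

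Your endpoint-parity and truncation check supplies the geometric step that the paper only asserts. It also confirms that the labeling matches the definitions, so the relabeling fallback is never needed; it would not have been valid anyway, since $w_{\phi,1}$ and $w_{\phi,2}$ are not interchangeable.
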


\begin{proof}
    We now establish the three properties: 
    \begin{itemize}
        \item \textbf{(Partition of unity)} Exploiting the tensor product structure, the summation over the multi-index $\bv \in [2]^d$ factorizes into a product of univariate sums:
        \begin{equation*}
            \begin{aligned}
                \sum_{\bv \in [2]^d} w_{\phi,\bv}^{\beta, \delta, K}(\bx) & = \sum_{v_1 \in [2]} w_{\phi, v_1}^{\beta, \delta, K}(x_1) \left( \sum_{v_2 \in [2]} w_{\phi, v_2}^{\beta, \delta, K}(x_2) \left( \cdots \left(\sum_{v_d \in [2]} w_{\phi, v_d}^{\beta, \delta, K}(x_d)  \right)\right) \right) \\
                & = \prod_{l=1}^d \left(\sum_{v_l \in [2]} w_{\phi, v_l}^{\beta, \delta, K}(x_l)\right) = 1, \qquad \bx \in [0,1]^d,
            \end{aligned}
        \end{equation*}
        where we use the partition of unity property for univariate weight functions $w_{\phi, 1}^{\beta, \delta, K}(x) + w_{\phi, 2}^{\beta, \delta, K}(x) = 1$ from Proposition~\ref{prop:univariate_weight}.
        \item \textbf{(Locally quasi-vanishing behavior)} Let $\epsilon_1 \in (0,1)$ and assume $\beta \ge \frac{2}{\delta \epsilon_1} \max \left\{ C_1,C_2 \right\}$. Consider an arbitrary point $\bx \in \Omega_{\mathrm{band}}^{K,\bv}(2 \delta)$. By the definition of the multivariate band region, there exists at least one coordinate index $l \in [d]$ such that $x_l \in \Omega_{\mathrm{band}}^{K, v_l}(2 \delta)$. Thus, we obtain:
        \begin{equation*}
            \left| w_{\phi,\bv}^{\beta, \delta, K}(\bx)\right| = |w_{\phi,v_l}^{\beta, \delta, K }(x_l)| \prod_{k \neq l} |w_{\phi,v_k}^{\beta, \delta, K }(x_k)| \overset{(a)}{\leq} (2C_1+3)^{d-1} \epsilon_1 \overset{(b)}{\leq} \epsilon, \quad \bx \in \Omega_{\mathrm{band}}^{K, \bv}(2 \delta),
        \end{equation*}
        where $(a)$ follows from the locally quasi-vanishing behavior of the univariate weight function  $w_{\phi, v_l}^{\beta, \delta, K}$ and the boundedness of the remaining $d-1$ univariate weight functions as given in Proposition~\ref{prop:univariate_weight}, and $(b)$ is obtained by setting $\epsilon_1 = \epsilon / (2C_1 + 3)^{d-1}$ and choosing $\beta \ge \frac{2(2C_1+3)^{d-1}}{\delta \epsilon} \max\{C_1,C_2\}$.
        \item \textbf{(Boundedness)} Leveraging the univariate bounds established in Proposition~\ref{prop:univariate_weight}, we deduce the uniform bound for multivariate weight functions:
        \begin{equation*}
            \left\|w_{\phi, \bv}^{\beta, \delta, K}\right\|_{L^{\infty}([0,1]^d)} \leq \prod_{l=1}^d \left\| w_{\phi. v_{l}}^{\beta, \delta, K}\right\|_{L^{\infty}([0,1])}\leq(2 C_1 + 3)^d.
        \end{equation*}
    \end{itemize}
\end{proof}

We now construct neural network approximators for the univariate weight functions. Noticing that $w_{\phi,1}^{\beta, \delta, K}$ and $w_{\phi,2}^{\beta, \delta, K}$ exhibit $K^2$ periods on $[0,1]$, a straightforward construction uses a shallow neural network of $O(K^2)$ width. However, for the case $d=1$, this $O(K^2)$ width dominates the width scaling of $O(K)$  established for approximators constructed in Theorem~\ref{thm:app_L2},
thereby inflating the total parameter complexity for the subsequent $L^{\infty}([0,1]^d)$ approximation.
To mitigate this, we exploit the periodicity to approximate the weight functions with a network of width $O(K)$, while maintaining the same order of total parameters. Specifically, we approximate the weight functions on $[0, K^{-1}]$ with $K$ periods, and for all $x \in [0,1]$, we extract the relative position within its associated coarse intervals. Define $a^K$ as:
\begin{equation*}
    a^K(x): = \sum_{i=0}^{K-1} \frac{i}{K} \mathbbm{1}_{[\frac{i}{K},\frac{i+1}{K})} (x),
\end{equation*}
and use the relative information $x- a^K(x)$ as the input to the networks.

The following lemma establishes neural network approximators for univariate weight functions on $[0, 1)$.
\begin{lemma}
    \label{lem:app_wieght_band}
    Let $\phi$ satisfy Assumptions~\ref{ass:smooth}--\ref{ass:piece}, and let $w_{\phi,i}^{\beta, \delta, K}$ for $i=1,2$ denote the univariate weight functions defined in Definition~\ref{def:univariate_weight} with the same parameters $\beta, \delta, K$. For any sufficiently small $\epsilon \in (0,1)$, sufficiently large $K$ and any $\beta \geq \frac{4K}{\delta \epsilon} \max \{C_1, C_2\}$, there exist neural networks $\{\psi_i\}_{i=1}^2$ such that
    \begin{equation*}
        \psi_i \in \mathcal{H}^{\phi,3}(1,1,M_{K},B_{\beta, \epsilon, \delta, K}), \quad i = 1 , 2,
    \end{equation*}
    with 
    \begin{equation}
        M_{K} \lesssim K, \quad B_{\beta, \epsilon, \delta, K} \lesssim \frac{K^2 \beta^2}{\epsilon \delta^2},
        \label{eq:width-norm-app-weight-band}
    \end{equation}
    such that the following properties hold:
    \begin{itemize}
        \item (Approximation) Let $\Omega_{\mathrm{coarse,band}}^{K,i}({\delta})$ denote the band region associated with the coarse cells on $[0,1]$:
        \begin{equation*}
            \begin{aligned}
                 & \Omega_{\mathrm{coarse,band}}^{K,1}({\delta}) \coloneqq  \bigcup_{k=0}^K \left[ -{\delta}+\frac{k}{K},{\delta} + \frac{k}{K} \right] \bigcap [0,1], \\
                 & \Omega_{\mathrm{coarse,band}}^{K,2}({\delta}) \coloneqq  \bigcup_{k=0}^{K-1} \left[\frac{2k+1}{2K}  - {\delta}, \frac{2k+1}{2K} + {\delta}\right],
            \end{aligned}
        \end{equation*}
        Then, for $x \in [0,1) \setminus \Omega_{\mathrm{coarse,band}}^{K,i}({\delta})$,
        \begin{equation*}
            |\psi_i(x) - w_{\phi,i}^{\beta, K, \delta} (x)| \leq \epsilon,  \quad i = 1, 2.
        \end{equation*}
        \item (Boundedness) $\|\psi_i\|_{L^{\infty}(\RR)} \leq 2C_1+4 , \quad i = 1,2.$
    \end{itemize}
\end{lemma}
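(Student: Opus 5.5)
The construction follows the periodicity reduction described just before the statement. The weight function $w_{\phi,1}^{\beta,\delta,K}$ has period $K^{-2}$, and each coarse interval $[\tfrac{i}{K},\tfrac{i+1}{K})$ contains exactly $K$ periods. So on $[0,1)$ we have the identity
\begin{equation*}
w_{\phi,1}^{\beta,\delta,K}(x) = W\bigl(x-a^K(x)\bigr), \qquad W(y)\coloneqq \sum_{k=0}^{K-1} B_{\phi}^{\beta,\delta,K}\Bigl(y-\frac{k}{K^2}\Bigr)\quad \text{for } y\in[0,K^{-1}).
\end{equation*}
This uses that, for $y$ in the $k$-th fine period, the other translates contribute only tails of $\phi$. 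I will make this precise below by an error estimate rather than an exact identity.

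The network $\psi_1$ will be the composition of two blocks.

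\textbf{First block} (depth $2$, width $O(K)$). It approximates the relative position $r(x)=x-a^K(x)$, exactly as in the one-dimensional case of Lemma~\ref{lem:relative_pos_app}. It uses one approximate identity from Corollary~\ref{corol:id-approx} and $K$ indicator approximants from Lemmas~\ref{lem:1D_indicator_app_Heaviside}/\ref{lem:1D_indicator_app_ReLU}, built with band width $\delta$ around the coarse grid points $k/K$. Off $\Omega_{\mathrm{coarse,band}}^{K,1}(\delta)$ it outputs $\tilde r(x)$ with $|\tilde r(x)-r(x)|\le\eta$. The parameters scale like $K/(\eta\delta)$.

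\textbf{Second block} (one hidden layer of width $2K$ in the Heaviside case, $4K$ in the ReLU case). It realizes $W(y)$ literally as a sum of $\phi$-neurons $\phi(\beta(y-k/K^2-c))$. Its input weights equal $\beta$, and its biases are $O(\beta)$. Composing the two blocks gives depth $3$ and width $O(K)$. The weights connecting the output layer of the first block to the input of the second are products of the form $\beta\cdot O(K/(\eta\delta))$. With $\eta\eqsim \epsilon\delta/(K\beta)$ this gives the claimed $K^2\beta^2/(\epsilon\delta^2)$.

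\textbf{Error analysis.} There are two error terms.
\begin{itemize}
\item \emph{Input perturbation.} $\phi$ is Lipschitz (Assumption~\ref{ass:Lip}), so changing $y$ by $\eta$ changes the second block by at most $2K\beta\|\phi\|_{\mathrm{Lip}}\eta$ (with the extra factor $1/(2\delta\beta)$ in the ReLU case). Choosing $\eta$ as above makes this at most $\epsilon/2$.
\item \emph{$W$ versus the periodic weight.} Fix $y$ in the $k$-th fine period. The difference is the sum over $k'\neq k$ of $|B_\phi(y-k'/K^2)|$, since the translate with $k'=k$ coincides with $w_{\phi,1}$.
  \begin{itemize}
  \item Heaviside-like case: each term is $|(\phi-H)(\cdot)-(\phi-H)(\cdot)|$ evaluated at distance at least about $|k-k'|/K^2$. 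It is therefore bounded by $2C_1K^2/(\beta|k-k'|)$. Summing over $k'$ gives $O(C_1K^2\log K/\beta)$. This needs $\beta\gtrsim K^2\log K/\epsilon$, which is larger than the stated $\beta\ge 4K\max\{C_1,C_2\}/(\delta\epsilon)$ unless $\delta\lesssim 1/(K\log K)$.
  \item ReLU-like case: the four ReLUs of each translate cancel exactly outside the bump. Each translate's residual is then at most $2C_2/(\delta\beta)$, and summing over $K$ translates gives $2KC_2/(\delta\beta)$. The hypothesis $\beta\ge 4KC_2/(\delta\epsilon)$ is exactly what makes this at most $\epsilon/2$. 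This is why the hypothesis carries the factor $K$.
  \end{itemize}
\end{itemize}

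\textbf{Main obstacle: Heaviside tails.} I expect the tail summation in the Heaviside case to be the hardest step. I would first try to use $\delta<1/(12K^2)$: each neighbour is at distance at least $\delta$, so every tail is at most $C_1/(\beta\delta)$, and the total over $2K$ neurons is at most $2KC_1/(\beta\delta)\le\epsilon/2$. This also works directly under the hypothesis.

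\textbf{The case $i=2$ and boundedness.}
\begin{itemize}
\item $\psi_2$ is $\mathrm{const}-\psi_1$. The constant $1$ is absorbed into the output bias.
\item The coarse band for $i=2$ sits at the midpoints $(2k+1)/(2K)$. I handle it by shifting the coarse partition by $1/(2K)$ in the relative-position block. The period $K^{-2}$ divides this shift only up to the half-period offset built into $w_{\phi,2}$, so this needs checking.
\item Boundedness: the $\phi$-sum is bounded uniformly. In the Heaviside case the telescoping pairs give at most $(2C_1+1)$ plus tails at most $\epsilon$; in the ReLU case the trapezoids are bounded. Hence $\|\psi_i\|_{L^\infty(\RR)}\le 2C_1+4$.
\end{itemize}
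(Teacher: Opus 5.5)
Your proposal is correct and is essentially the paper's proof. In both, $\psi_1=\eta_1\circ\pi$, where $\pi$ is the relative-position network of Lemma~\ref{lem:relative_pos_app} and $\eta_1$ is one hidden layer realizing $K$ translates of $B_{\phi}^{\beta,\delta,K}$. The error splits into a Lipschitz input-perturbation term and a tail term of order $KC_1/(\beta\delta)$ (resp.\ $KC_2/(\beta\delta)$). That tail bound comes exactly from the $3\delta$-separation (resp.\ exact ReLU cancellation) argument you end up with, so your preliminary $K^2\log K$ estimate can simply be dropped.

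For $i=2$ the paper only says ``symmetrically,'' and your shifted coarse partition is the right reading: $1-\psi_1$ alone would fail near the points $k/K$. The shift only changes the second block to $y\mapsto w_{\phi,2}^{\beta,\delta,K}(y-\tfrac{1}{2K})$ on $[0,K^{-1})$. That is again a sum of $O(K)$ translates with the same tail bounds, so the offset you flagged as ``needs checking'' causes no problem.
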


\begin{proof}
    It suffices to prove the result for $\psi_1$ and $w_{\phi,1}^{\beta, \delta, K}$, as the argument for the complementary $\psi_2$ and $w_{\phi,2}^{\beta, \delta, K}$ follows symmetrically.

    \noindent
    \paragraph{Heaviside-like case.} We first construct a two-layer network $\eta_1$ designed to approximate the weight function on the reference domain $[0, K^{-1}]$:
    \begin{equation}
        \eta_1 (x) \coloneqq  \sum_{k=0}^{K-1} \left[ \phi\left( \beta \left( x - \frac{k}{K^2} - 3 \delta\right)\right) - \phi\left( \beta \left( x - \frac{k+1}{K^2} + 3 \delta\right)\right)\right].
        \label{eq:psi1_construct}
    \end{equation}
    Fix an interval index $k \in \{0, \dots, K-1\}$. For any $x \in [\frac{k}{K^2}, \frac{k+1}{K^2})$, the periodicity of $w_{\phi,1}^{\beta, \delta, K}$ implies that 
    \begin{equation*}
        w_{\phi,1}^{\beta, \delta, K}(x) = w_{\phi,1}^{\beta, \delta, K} \left(x - \frac{k}{K^2} \right), \quad x\in \left[ \frac{k}{K^2}, \frac{k+1}{K^2}\right), \quad k = 0,1, \cdots, K-1.
    \end{equation*}
    Then the approximation error is determined by the tails of the remaining terms $l \neq k$  in ~\eqref{eq:psi1_construct}:
    \begin{equation}
        \begin{aligned}
            |\eta_1 (x) - w_{\phi,1}^{\beta, \delta, K}(x)| &\leq \sum_{l=0, l\neq k}^{K-1} \left|\phi\left( \beta \left( x - \frac{l}{K^2} - 3 \delta\right)\right) - \phi\left( \beta \left( x - \frac{l+1}{K^2} + 3 \delta\right)\right)\right| \\
            & \overset{(a)}{\leq } \sum_{l=0, l\neq k}^{K-1} \left|H\left( \beta \left( x - \frac{l}{K^2} - 3 \delta\right)\right) - H\left( \beta \left( x - \frac{l+1}{K^2} + 3 \delta\right)\right)\right|  \\
            & \; \; \;+ \frac{C_1}{\beta} \sum_{l=0, l\neq k}^{K-1} \left[ \left|x - \frac{l}{K^2}-3 \delta \right|^{-1} + \left|x - \frac{l+1}{K^2}+ 3 \delta \right|^{-1}\right] \\
            & \overset{(b)}{\leq } \frac{2C_1 K}{3 \beta \delta} \overset{(c)}{\leq } \frac{\epsilon}{2}.
        \end{aligned}
        \label{eq:two-layer-app-Heaviside}
    \end{equation}
    Here, Step $(a)$ follows from the Heaviside-like assumption given in ~\eqref{ass:piece}. Step $(b)$ follows because the distance from $x \in [\frac{k}{K^2}, \frac{k+1}{K^2})$ to the switching points $l/K^2+3\delta, (l+1)/K^2-3\delta$ of any $l \neq k$ term is at least $3\delta$. Step $(c)$ follows from $\beta \geq \frac{4C_1 K}{3 \delta \epsilon}$.

    Next, using the approximation guarantee given in ~\eqref{eq:two-layer-app-Heaviside}, we obtain:
    \begin{equation*}
        \|\eta_1\|_{L^{\infty}([0,K^{-1}])} \leq \|w_{\phi,1}^{\beta, \delta, K}\|_{L^{\infty}([0,K^{-1}])} + \frac{\epsilon}{2} \overset{(a)}{\leq } 2C_1 + 3 +\frac{\epsilon}{2} < 2C_1 +4,
    \end{equation*}
    where $(a)$ comes from the boundedness of $w_{\phi,1}^{\beta, \delta, K^{-1}}$ stated in Proposition~\ref{prop:univariate_weight}. For $x \notin [0,K^{-1}]$, a similar analysis to~\eqref{eq:two-layer-app-Heaviside} yields:
    \begin{equation*}
        |\eta_1(x)| \leq \frac{C_1}{\beta}\sum_{k=0}^{K-1} \left[\left| x - \frac{k}{K^2} - 3 \delta \right|^{-1} + \left| x - \frac{k+1}{K^2} + 3 \delta\right|^{-1}\right] \leq \frac{2C_1 K}{3 \beta \delta} \leq \frac{\epsilon}{2} < 1.
    \end{equation*}
    Thus, $\|\eta_1\|_{L^{\infty}(\mathbb{R})} \leq 2C_1 + 4$.

    By Lemma~\ref{lem:relative_pos_app}, there exists a network $\pi \in \mathcal{H}^{\phi,2}(1,1,6K,B_{\epsilon_1, {\delta}, K})$, where $B_{\epsilon_1, {\delta}, K} \lesssim \frac{K}{\epsilon_1 {\delta}}$, approximating the relative position map $x \mapsto x - a^K(x)$ such that:
    \begin{equation}
        |\pi(x) - (x - a^K(x))| \leq \epsilon_1, \quad x \in [0,1] \setminus \Omega_{\mathrm{coarse,band}}^{K, 1} ({\delta}).
        \label{eq:app_nu_relative}
    \end{equation}
    We define the final approximation $\psi_1 \coloneqq \eta_1 \circ \pi$.  The uniform boundedness of $\psi_1$ follows immediately from the boundedness of the outer function $\eta_1$:
    \begin{equation*}
        \|\psi_1\|_{L^{\infty}(\RR)} \leq \|\eta_1\|_{L^{\infty}(\RR)} \leq 2C_1 + 4.
    \end{equation*}
     For any $x \in [0,1] \setminus \Omega_{\mathrm{coarse,band}}^{K, 1} ({\delta})$, the  error between $\psi_1(x)$ and $w_{\phi,1}^{\beta, \delta, K}(x)$ is bounded as:
    \begin{equation*}
        \begin{aligned}
            |\psi_1(x) - w_{\phi,1}^{\beta, \delta, K}(x)| & = |\eta_1(\pi(x)) - w_{\phi,1}^{\beta, \delta, K}(x - a^K(x))| \\
            & \leq \underbrace{|\eta_1 (\pi(x)) - \eta_1(x- a^K(x))|}_{(a)}  + \underbrace{|\eta_1(x - a^K(x)) - w_{\phi,1}^{\beta, \delta,K}(x - a^K(x)) |}_{(b)}\\
            & \leq 2 \beta K \|\phi\|_{\mathrm{Lip}} \epsilon_1 + \frac{\epsilon}{2} \overset{(c)}{\leq} \epsilon,
        \end{aligned}
    \end{equation*}
    where term $(a)$ is bounded by the approximation guarantee given in ~\eqref{eq:app_nu_relative}, and there are $2K$ activations with Lipschitz constant $\|\phi\|_{\mathrm{Lip}}$ in the construction of \eqref{eq:psi1_construct}, while term $(b)$ follows from $x - a^K(x) \in [0, K^{-1}]$ and the approximation guarantee in \eqref{eq:two-layer-app-Heaviside}. Finally, $(c)$ follows from choosing $\epsilon_1 = \frac{\epsilon}{4K \beta \|\phi\|_{\mathrm{Lip}}}$.

    The widths of $\eta_1$ and $\pi$ are bounded by $6K$ and $2K$, respectively; thus, the width of the composition $\psi_1$ is bounded by $6K$. Furthermore, $\|\theta(\psi_1)\|_{\infty}$ is bounded by :
    \begin{equation*}
        \|\theta(\psi_1)\|_{\infty} \leq \|\theta(\eta_1)\|_{\infty} \|\theta(\pi)\|_{\infty} \lesssim \beta B_{\epsilon_1, {\delta},K} \lesssim \frac{K^2 \beta^2}{\epsilon {\delta}}.
    \end{equation*}

    \noindent
    \paragraph{ReLU-like case.} The proof for ReLU-like activations $\phi$ follows similarly to that for Heaviside-like activations. The network $\eta_1$ is reconstructed as:
    \begin{equation*}
        \begin{aligned}
            \eta_1(x): &= \frac{1}{2 \delta \beta}\sum_{k=0}^{K-1} \Bigg[ \phi \left( \beta \left( x - \frac{k}{K^2} - 2\delta\right)\right) - \phi \left( \beta \left( x - \frac{k}{K^2} - 4\delta\right)\right) \\
            & \qquad \qquad \qquad \qquad \qquad - \phi \left( \beta \left( x - \frac{k+1}{K^2} +4\delta\right)\right) + \phi \left( \beta \left( x - \frac{k+1}{K^2} +2\delta\right)\right)
            \Bigg].
        \end{aligned}
    \end{equation*}

    Consider an interval $[\frac{k}{K^2}, \frac{k+1}{K^2})$. For any index $l \neq k$, the associated ReLU trapezoid vanishes on this interval. Consequently, the error is dominated by the deviation of $\phi$ from $\mathrm{ReLU}$:
    \begin{equation}
        \begin{aligned}
            &|\eta_1 (x) - w_{\phi,1}^{\beta, \delta, K}(x)|  \overset{(a)}{\leq} \frac{1}{2\delta \beta} \sum_{l =0, l \neq k}^{K-1} \Big| \mathrm{ReLU}(\beta (x - l/K^2-2 \delta)) -  \mathrm{ReLU}(\beta (x - l/K^2-4 \delta)) \\
           &   -  \mathrm{ReLU}(\beta (x - (l+1)/K^2+4 \delta))  +   \mathrm{ReLU}(\beta (x - (l+1)/K^2+2 \delta))\Big| + 
           \frac{2C_2 K}{\delta \beta} \overset{(b)}{\leq} \frac{\epsilon}{2},
        \end{aligned}
        \label{eq:two-layer-app-ReLU}
    \end{equation}
    where $(a)$ follows from the ReLU-like assumption given in~\eqref{eq:ReLU-like} and $(b)$ follows from $\beta \geq \frac{4C_2 K}{\delta \epsilon} $.

    Define $\psi_1 \coloneqq \eta_1 \circ \pi$. Following the same analysis for Heaviside-like activations, we obtain the boundedness $ \|\psi_1 \|_{L^{\infty}(\RR)} \leq \|\eta_1\|_{L^{\infty}(\RR)} \leq 2C_1 + 4$ and the approximation error between $\psi_1$ and $w_{\phi,1}^{\beta, \delta, K}$ is bounded as:
    \begin{equation*}
        \begin{aligned}
            |\psi_1(x) - w_{\phi,1}^{\beta, \delta, K}(x)| & \leq \underbrace{|\eta_1 (\pi(x)) - \eta_1(x- a^K(x))|}_{(a)}  + \underbrace{|\eta_1(x - a^K(x)) - w_{\phi,1}^{\beta, \delta,K}(x - a^K(x)) |}_{(b)}\\
            & \leq \frac{4  K\|\phi\|_{\mathrm{Lip}} \epsilon_1}{\delta} + \frac{\epsilon}{2} \leq \epsilon, \qquad   x \in [0,1] \setminus \Omega_{\mathrm{coarse, band}}^{K,1}({\delta}),
        \end{aligned}
    \end{equation*}
    where $(a)$ is bounded by the approximation guarantee given in ~\eqref{eq:app_nu_relative}, and there are $4K$ activations with Lipschitz constant $\|\phi\|_{\mathrm{Lip}}$ in the construction of \eqref{eq:psi1_construct}, $(b)$ is bounded by the approximation guarantee given in ~\eqref{eq:two-layer-app-ReLU} and we set $\epsilon_1 = \frac{\delta \epsilon}{8 K\|\phi\|_{\mathrm{Lip}}}$.

    Finally, the depth of $\psi_1$ is 3, and its width is bounded by ${O}(K)$. The parameter norm satisfies. The parameter norm satisfies:
    \begin{equation*}
         \|\theta(\psi_1)\|_{\infty} \leq \|\theta(\eta_1)\|_{\infty}  \|\theta(\pi)\|_{\infty} \lesssim \max \left\{ \beta, \frac{1}{\beta \delta}\right\} B_{\epsilon_1, {\delta},K} \lesssim \frac{K^2 \beta^2}{\epsilon  \delta^2},
    \end{equation*}
    as stated in ~\eqref{eq:width-norm-app-weight-band}. This concludes the proof.
\end{proof}

Lemma~\ref{lem:app_wieght_band} establishes the approximation of the univariate weight functions outside the coarse band region. To achieve a uniform approximation over the entire interval $[0,1]$, we multiply the network output by a spatial quasi-indicator function supported on the complement of the coarse bands. This strategy exploits the fact that within the coarse band regions, the target weight functions are quasi-vanishing, while the neural network output remains uniformly bounded. Consequently, this multiplication effectively suppresses the error in the band regions.

In the following, We provide the definition for the quasi-indicator functions.

\begin{definition}[Quasi-indicator function]
    Let $\phi$ satisfy Assumption~\ref{ass:piece}, with $K \in \mathbb{N}_{+}, {\delta} \in (0, \frac{1}{8K}) $ and $\tilde{\beta}>0$. The primary quasi-indicator function is defined as:
    \begin{itemize}
        \item If $\phi$ is Heaviside-like, then
        \begin{equation}
            I_{\phi,1}^{\tilde{\beta}, {\delta}, K}(x): = \sum_{k=0}^K \left[ \phi \left( \tilde{\beta} \left( x - \frac{k}{K} - \frac{3}{2} {\delta}\right) \right) - \phi \left( \tilde{\beta} \left( x - \frac{k+1}{K} + \frac{3}{2} {\delta}\right)\right)\right].
            \label{eq:quasi-indicator-Heaviside-like}
        \end{equation}
        \item If $\phi$ is ReLU-like, then 
        \begin{equation}
             \begin{aligned}
            I_{\phi,1}^{\tilde{\beta}, {\delta}, K}(x): &= \frac{1}{2 \delta \tilde{ \beta}}\sum_{k=0}^{K} \Bigg[ \phi \left( \tilde{\beta} \left( x - \frac{k}{K} - 2 {\delta}\right)\right) - \phi \left( \tilde{\beta} \left( x - \frac{k}{K} - 4 {\delta}\right)\right) \\
            & \qquad \qquad  - \phi \left( \tilde{\beta} \left( x - \frac{k+1}{K} +4 {\delta}\right)\right) + \phi \left( \tilde{\beta} \left( x - \frac{k+1}{K} +2 {\delta}\right)\right)
            \Bigg].
        \end{aligned}
        \label{eq:quasi-indicator-ReLU-like}
        \end{equation}
    \end{itemize}
    In both cases, the complementary quasi-indicator function is defined by the translation:
    \begin{equation*}
        I_{\phi,2}^{\tilde{\beta}, {\delta},K}(x) \coloneqq  I_{\phi,1}^{\tilde{\beta}, {\delta},K} \left(x+ \frac{1}{2K} \right), \quad x \in [0,1].
    \end{equation*}
    \label{def:quasi-indicator}
\end{definition}

The quasi-indicator functions have the following properties:
\begin{proposition}
    Let $I_{\phi,i}^{\tilde{\beta},{\delta},K}$ denote the quasi-indicator functions defined in Definition~\ref{def:quasi-indicator}. For any sufficiently small $\epsilon \in (0,1)$ and any $\tilde{\beta} \geq \frac{8 K}{{\delta} \epsilon } \max\{C_1, C_2\}$, the following properties hold for $i=1,2$:
    \begin{itemize}
        \item (Indicator approximation) The function approximates the indicator function as:
        \begin{equation*}
            \left| I_{\phi,i}^{\tilde{\beta},{\delta},K} (x)- \mathbbm{1}_{\left(\Omega_{\mathrm{coarse,band}}^{K,i}(2 \delta)\right)^c} \left( x \right)\right| \leq \epsilon, \quad x \in \mathcal{D}^{K,i}({\delta}),
        \end{equation*}
        where $\left(\Omega_{\mathrm{coarse,band}}^{K,i}(2 \delta)\right)^c \coloneqq [0,1]  \setminus \Omega_{\mathrm{coarse,band}}^{K,i}(2 \delta)$ and the region $\mathcal{D}^{K,i}({\delta})$ is defined as:
        \begin{equation*}
            \mathcal{D}^{K,i}({\delta})\coloneqq \left( [0,1]  \setminus \Omega_{\mathrm{coarse,band}}^{K,i}(2 {\delta})  \right) \bigcup  \Omega_{\mathrm{coarse,band}}^{K,i}( {\delta}).
        \end{equation*}
        \item (Boundedness) $\|I_{\phi,i}^{\tilde{\beta}, {\delta}, K}\|_{L^{\infty}([0,1])} \leq 2C_1 +2$.
    \end{itemize}
    \label{prop:quasi-indicator}
\end{proposition}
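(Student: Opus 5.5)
We follow the template of Proposition~\ref{prop:univariate_weight}: compare $I_{\phi,1}^{\tilde\beta,\delta,K}$ with its idealized counterpart, obtained by replacing $\phi$ by $H$ (Heaviside-like case) or by $\mathrm{ReLU}$ with the $1/(2\delta\tilde\beta)$ scaling (ReLU-like case). Call this ideal function $\chi_1$. The case $i=2$ follows by translation, since $I_{\phi,2}^{\tilde\beta,\delta,K}(x)=I_{\phi,1}^{\tilde\beta,\delta,K}(x+\tfrac1{2K})$. This shift maps the midpoints $\tfrac{2k+1}{2K}$ onto the grid points $\tfrac{k+1}{K}$, so it carries $\Omega^{K,2}_{\mathrm{coarse,band}}(\cdot)$ into $\Omega^{K,1}_{\mathrm{coarse,band}}(\cdot)$. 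The ideal function is exact where required. In the Heaviside case, $\chi_1$ is the sum over $k$ of $\mathbbm{1}_{[k/K+\frac32\delta,\,(k+1)/K-\frac32\delta)}$. These pieces are disjoint, so $\chi_1\in\{0,1\}$. It equals $1$ on $[0,1]\setminus\Omega^{K,1}_{\mathrm{coarse,band}}(2\delta)$, since there $x$ lies at distance more than $2\delta$ from every grid point. It equals $0$ on $\Omega^{K,1}_{\mathrm{coarse,band}}(\delta)$, since there $x$ is within $\delta<\frac32\delta$ of a grid point. In the ReLU case, each bracket is a trapezoid that equals $0$ within $2\delta$ of the grid points and $1$ beyond distance $4\delta$. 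This forces a caveat. On the set $2\delta<\operatorname{dist}(x,\tfrac1K\mathbb Z)\le 4\delta$, the ideal trapezoid is not yet $1$. So for the ReLU-like case I would either read the band/transition radii accordingly or rescale the breakpoints to $\tfrac54\delta,\tfrac74\delta$. I would flag this as a point to reconcile with the definition. The intended argument is the same in both cases: the ideal function is exactly $0$ or $1$ on $\mathcal D^{K,1}(\delta)$.

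Next, bound $|I_{\phi,1}-\chi_1|$ on $\mathcal D^{K,1}(\delta)$. In the Heaviside-like case, Assumption~\ref{ass:piece} bounds each of the $2(K+1)$ terms by $C_1/(\tilde\beta\,|x-t|)$, where $t$ is the corresponding switching point. On $\mathcal D^{K,1}(\delta)$, every switching point $k/K\pm\frac32\delta$ is at distance at least $\delta/2$ from $x$. This holds both on the complement of the $2\delta$-band and on the $\delta$-band. The total error is therefore at most $2(K+1)\cdot 2C_1/(\tilde\beta\delta)\le 8KC_1/(\tilde\beta\delta)\le\epsilon$ by the choice of $\tilde\beta$. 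In the ReLU-like case, each of the $4(K+1)$ terms deviates by at most $C_2/(2\delta\tilde\beta)$ uniformly on $\mathbb R$. The error is then at most $2(K+1)C_2/(\delta\tilde\beta)\le\epsilon$, with no distance condition needed.

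Boundedness comes next. In the Heaviside case, I would not use the crude sum of term bounds, which would grow like $K$. Instead, for any $x\in[0,1]$, I would split the terms into those whose switching points are within $\delta/2$ of $x$ (at most two, each contributing at most $2C_1+1$) and the rest, whose deviation sum is at most $O(K C_1/(\tilde\beta\delta))\le\epsilon$. Together with $|\chi_1|\le1$, this gives a bound of order $2C_1+2$ after careful bookkeeping. In the ReLU case, $|\chi_1|\le1$ and the uniform error $\epsilon$ give $\le 2\le 2C_1+2$.

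The main obstacle is this uniform boundedness step near the switching points, and also the exact constant $2C_1+2$. Near a switching point $t$, the pair $\phi(\tilde\beta(x-t))-\phi(\tilde\beta(x-t'))$ can be controlled using the $\min\{1,1/|t|\}$ form of \eqref{eq:Heaviside-like}, and all remaining terms are $\epsilon$-small. I expect the constant to come out as $2C_1+1+\epsilon$, which is below $2C_1+2$.
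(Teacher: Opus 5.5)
Your argument is the paper's: replace $\phi$ by $H$ (resp.\ $\mathrm{ReLU}$), observe that this surrogate is exactly $0$ or $1$ on $\mathcal{D}^{K,1}(\delta)$, use that every transition point is at distance at least $\delta/2$ from such $x$, and sum the tail bounds. For boundedness the paper points to the per-cell argument of Lemma~\ref{lem:app_wieght_band}: the bracket of the cell containing $x$ is at most $1+2C_1$, and all other brackets together are at most $\epsilon$. That is your near/far split in another form.

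To settle the bookkeeping you were unsure about: consecutive switching points are at least $3\delta$ apart, so at most one lies within $\delta/2$ of $x$. What it adds to $\chi_1$ is $|\phi-H|\le C_1$; the quantity to bound is the deviation from $H$, not $|\phi|$, so the per-term figure $2C_1+1$ is wrong. This gives $|I_{\phi,1}|\le 1+C_1+\epsilon\le 2C_1+2$.

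For $i=2$, note that $x+\tfrac{1}{2K}$ ranges over $[\tfrac{1}{2K},1+\tfrac{1}{2K}]$, so the $i=1$ analysis must be run slightly beyond $[0,1]$. It goes through verbatim because the sum includes the $k=K$ bracket, which is exactly why the definition has it.

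Your ReLU caveat is correct, and it identifies a genuine error in the statement. The paper's sketch overlooks it when it asserts that substituting $\mathrm{ReLU}$ ``recovers the indicator function'' on $\mathcal{D}^{K,i}(\delta)$. Concretely:
\begin{itemize}
\item $x=3\delta$ lies in $[0,1]\setminus\Omega^{K,1}_{\mathrm{coarse,band}}(2\delta)\subset\mathcal{D}^{K,1}(\delta)$, and the ReLU surrogate equals $\tfrac12$ there.
\item Everywhere, $|I_{\phi,1}-\chi_1|\le 2(K+1)C_2/(\delta\tilde\beta)\le\epsilon/2$.
\item Hence $|I_{\phi,1}(3\delta)-1|\ge\tfrac12-\tfrac{\epsilon}{2}>\epsilon$ for $\epsilon<\tfrac13$, however large $\tilde\beta$ is.
\end{itemize}

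Of your two remedies, only moving the ramps to $[\tfrac54\delta,\tfrac74\delta]$ from each grid point survives downstream. Lemma~\ref{lem:app_univariate_weight_Linfty} needs $\nu_1\approx 1$ wherever the ReLU-case weight $w_{\phi,1}$ is not small, and $w_{\phi,1}\approx\tfrac12$ at distance $3\delta$ from the coarse grid. Enlarging the radii in the proposition instead would leave a region where neither $\nu_1\approx 1$ nor $w_{\phi,1}\approx 0$ holds. With ramp width $\delta/2$ the prefactor becomes $2/(\delta\tilde\beta)$, so the lower bound on $\tilde\beta$ must be doubled. The constant $16$ in the choice of $\tilde\beta$ in Lemma~\ref{lem:app_univariate_weight_Linfty} must be adjusted accordingly.
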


\begin{proof}
    The proof proceeds analogously to that of Proposition~\ref{prop:univariate_weight} and Lemma~\ref{lem:app_wieght_band}. First, regarding the approximation error, we observe that substituting the exact Heaviside function into \eqref{eq:quasi-indicator-Heaviside-like} (or the exact ReLU function into \eqref{eq:quasi-indicator-ReLU-like}) recovers the indicator function supported on $[0,1] \setminus \Omega_{\mathrm{coarse,band}}^{K,i}(2 {\delta})$ for all $x \in \mathcal{D}^{K,i}({\delta})$. To establish the error bound for the approximator $I_{\phi,1}^{\tilde{\beta},{\delta},K}$, consider an arbitrary $x \in \mathcal{D}^{K,i}({\delta})$. By construction, the distance between $x$ and the transition points $\frac{k+i-1}{2K} \pm \frac{3}{2} {\delta}$ is bounded from below by $\frac{1}{2} {\delta}$. Consequently, invoking the asymptotic properties of $\phi$ (specifically the Heaviside-like condition \eqref{eq:Heaviside-like} or the ReLU-like condition \eqref{eq:ReLU-like}) ensures that the approximation error can be made arbitrarily small for sufficiently large $\tilde{\beta}$. The boundedness arguments for quasi-indicator functions  mirror those established for $\psi_i$ in Lemma~\ref{lem:app_wieght_band}.
\end{proof}

Leveraging the quasi-indicator functions, we now construct neural approximators for the univariate weight functions $w_{\phi,i}^{\beta, \delta, K}$ over the global domain $[0,1]$.
\begin{lemma}
    Let $\phi$ satisfy Assumptions~\ref{ass:smooth}--\ref{ass:piece}. 
    Let $w_{\phi,i}^{\beta, \delta, K}$ ($i=1,2$) denote the univariate weight functions defined in Definition~\ref{def:univariate_weight} with parameters $\beta, \delta, K$. For any sufficiently small $\epsilon \in (0,1)$, sufficiently large $K \in \mathbb{N}_{+}$, $\delta \in (0, \frac{1}{12K^2})$, any $\beta \geq \frac{(24C_1+ 60) K}{\delta \epsilon} \max \{C_1, C_2\}$, there exist neural networks $\{g_i\}_{i=1,2}$ such that
    \begin{equation*}
        g_i \in \mathcal{H}^{\phi,4}(1,1, M_K, B_{\beta, \epsilon,\delta,K}), \quad i=1,2,
    \end{equation*}
    with 
    \begin{equation}
        M_K \lesssim K, \quad B_{\beta, \epsilon,\delta,K} \lesssim \frac{K^2 \beta^2}{ \epsilon^2 \delta^2},
        \label{eq:width-norm-app-weight-global}
    \end{equation}
    such that the following properties hold for $i=1,2$:
    \begin{itemize}
        \item (Uniform approximation) For all $x \in [0,1]$, 
        \begin{equation*}
            |g_i(x) - w_{\phi,i}^{\beta,\delta,K} (x)|  \leq \epsilon.
        \end{equation*}
        \item (Boundedness) $\|g_i\|_{L^{\infty}([0,1])} \leq (2C_1 + 5)^2$.
    \end{itemize}
    \label{lem:app_univariate_weight_Linfty}
\end{lemma}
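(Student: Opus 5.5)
The plan follows the strategy announced just before Definition~\ref{def:quasi-indicator}. For each $i$ we take the network $\psi_i$ of Lemma~\ref{lem:app_wieght_band}, which approximates $w_{\phi,i}^{\beta,\delta,K}$ outside the coarse band $\Omega_{\mathrm{coarse,band}}^{K,i}(\delta)$. We multiply it by the quasi-indicator $I_{\phi,i}^{\tilde\beta,\delta,K}$ of Definition~\ref{def:quasi-indicator}, and the product is realized by a two-input multiplication network from Lemma~\ref{lemma:all_monomials}. Concretely, set
\[
g_i(x)=\mu\bigl(\nu(\psi_i(x)),\,\tilde I_i(x)\bigr),
\]
where the components are as follows. $\psi_i$ is the depth-$3$ network of Lemma~\ref{lem:app_wieght_band}, with accuracy $\epsilon_1$. $\tilde I_i$ is the two-layer network $I_{\phi,i}^{\tilde\beta,\delta,K}$, padded to depth $3$ by composing with the identity approximator of Corollary~\ref{corol:id-approx}. $\nu$ is the identity, absorbed into the input layer of $\mu$. $\mu$ is the depth-$2$, width-$4$ product network with $|\mu(a,b)-ab|\le\epsilon_2$ on $[-(2C_1+4),2C_1+4]\times[-(2C_1+2),2C_1+2]$. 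Running the two branches in parallel and then applying $\mu$ gives depth $4$ and width $O(K)+O(K)+4\lesssim K$.

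The error analysis splits $[0,1]$ into two regions. On $[0,1]\setminus\Omega^{K,i}_{\mathrm{coarse,band}}(2\delta)$, Proposition~\ref{prop:quasi-indicator} gives $|\tilde I_i-1|\le\epsilon_1$, and Lemma~\ref{lem:app_wieght_band} gives $|\psi_i-w_{\phi,i}|\le\epsilon_1$. Hence
\[
|g_i-w_{\phi,i}|\le\epsilon_2+|\psi_i||\tilde I_i-1|+|\psi_i-w_{\phi,i}|\lesssim \epsilon_2+\epsilon_1 .
\]
On the band $\Omega^{K,i}_{\mathrm{coarse,band}}(2\delta)$ we use the key observation that the coarse bands are contained in the fine bands $\Omega^{K,i}_{\mathrm{band}}(2\delta)$ of Proposition~\ref{prop:univariate_weight}, because $k/K=kK/K^2$ and $(2k+1)/(2K)=(2kK+K)/(2K^2)$ (the latter for $K$ odd; otherwise the shift needs a check). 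So $|w_{\phi,i}|\le\epsilon_1$ there, and it suffices to show that $|\tilde I_i|$ is small, giving $|g_i|\le\epsilon_2+(2C_1+4)|\tilde I_i|$. On the inner part $\Omega^{K,i}_{\mathrm{coarse,band}}(\delta)\subset\mathcal D^{K,i}(\delta)$, Proposition~\ref{prop:quasi-indicator} gives $|\tilde I_i|\le\epsilon_1$.

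\textbf{Main obstacle.} The transition zone $\Omega(2\delta)\setminus\Omega(\delta)$ is where the indicator approximation fails and $\tilde I_i$ may be $O(1)$. The first thing I would try is to rely on the weight function being small there: $w_{\phi,i}$ is small on the whole $2\delta$-fine band, since its steps sit at $3\delta$ (Heaviside-like case) or $2\delta$ to $4\delta$ (ReLU-like case). So I would need $\psi_i$ itself to be small in that zone. That holds provided Lemma~\ref{lem:app_wieght_band} is applied with a narrower band parameter, say $\delta/2$, so that its guarantee covers the transition zone, where $\psi_i\approx w_{\phi,i}\approx 0$. Then both factors are controlled in every region: one of them is $O(1)$ and the other is $O(\epsilon_1)$. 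The condition $\beta\gtrsim K/(\delta\epsilon)$ in the statement, with the constant $(24C_1+60)$, is exactly what lets us run Lemma~\ref{lem:app_wieght_band} and Proposition~\ref{prop:univariate_weight} at accuracy $\epsilon_1\eqsim\epsilon/(2C_1+5)$ with these halved bands. We then take $\tilde\beta\eqsim K/(\delta\epsilon)\le\beta$ and $\epsilon_2=\epsilon/2$, and set $\epsilon_1=\epsilon/(c(2C_1+5))$ for an appropriate constant $c$.

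Boundedness follows from
\[
|g_i|\le\epsilon_2+(2C_1+4)(2C_1+2)\le(2C_1+5)^2 .
\]
For the norms, $\psi_i$ contributes $K^2\beta^2/(\epsilon\delta^2)$ and $\tilde I_i$ contributes $\tilde\beta\lesssim\beta$ together with the identity-padding cost $1/\epsilon$. The multiplier $\mu$ contributes $\epsilon^{-2}$, and its input weights are $O(1)$, so no extra factor arises when it is merged with the output layers. Multiplying these gives the stated bound $B\lesssim K^2\beta^2/(\epsilon^2\delta^2)$.
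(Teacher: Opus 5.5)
Your construction and error analysis are essentially the paper's: it also multiplies $\psi_i$ from Lemma~\ref{lem:app_wieght_band} by an identity-padded quasi-indicator through a depth-$2$ product network, and it treats the same three regions. Your ``main obstacle'', however, needs no halving of the band. The transition zone $\Omega^{K,i}_{\mathrm{coarse,band}}(2\delta)\setminus\Omega^{K,i}_{\mathrm{coarse,band}}(\delta)$ already lies in $[0,1)\setminus\Omega^{K,i}_{\mathrm{coarse,band}}(\delta)$, where Lemma~\ref{lem:app_wieght_band} with the original $\delta$ gives $|\psi_i-w_{\phi,i}^{\beta,\delta,K}|\le\epsilon_1$; this is exactly what the paper uses there. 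Moreover, that lemma uses a single $\delta$ for both the weight function and the excluded band, so halving it literally would change the target function.

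Your parity caveat for $i=2$ is legitimate, and the paper's ``analogously'' glosses over it: for even $K$ the points $\frac{2k+1}{2K}$ are multiples of $K^{-2}$, where $w_{\phi,2}^{\beta,\delta,K}\approx 1$. It disappears if you set $g_2:=1-g_1$, which is only an output-layer change. This gives $|g_2-w_{\phi,2}^{\beta,\delta,K}|=|g_1-w_{\phi,1}^{\beta,\delta,K}|\le\epsilon$. It also gives $\|g_2\|_{L^\infty([0,1])}\le 1+\|g_1\|_{L^\infty([0,1])}\le(2C_1+5)^2$, since $g_1$ is in fact bounded by $(2C_1+4)^2$ plus the product-network error.
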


\begin{proof}
   We focus our analysis on the construction of the approximator for $w_{\phi,1}^{\beta, \delta, K}$; the construction for $w_{\phi,2}^{\beta, \delta, K}$ proceeds analogously.

    Firstly, for sufficiently small $\epsilon_1 \in (0,1)$, by Lemma~\ref{lem:app_wieght_band},  there exists a neural network $\psi_1$ of depth $3$, width $O(K)$, and parameter norm $O(\frac{K^2 \beta^2}{\epsilon_1 \delta^2})$ such that 
    \begin{equation}
        |\psi_1(x) - w_{\phi,1}^{\beta, \delta, K}(x) | \leq \epsilon_1, \quad x \in [0,1] \setminus \Omega_{\mathrm{coarse, band}}^{K,1}(\delta),
        \label{eq:app_psi1_univariate_weight}
    \end{equation}
    provided that $\beta \geq \frac{4K}{\delta \epsilon_1} \max \{C_1, C_2\}$. Furthermore, the network satisfies the uniform bound $\|\psi_1\|_{L^{\infty}([0,1])} \leq 2C_1+4$.

    Secondly, according to Definition~\ref{def:quasi-indicator}, the quasi-indicator $I_{\phi,1}^{\tilde{\beta}, {\delta}, K}$ is represented by a neural network of depth $2$, width $O(K)$, and parameter norm $O({\tilde{\beta}}/{{\delta}})$. Furthermore, by Proposition~\ref{prop:quasi-indicator}, upon choosing $\tilde{\beta} = \frac{16K}{\delta \epsilon_2} \max \{C_1, C_2\}$, we obtain the approximation error bound:
     \begin{equation}
       \left| I_{\phi,1}^{\tilde{\beta},{\delta},K} (x)- \mathbbm{1}_{\left(\Omega_{\mathrm{coarse,band}}^{K,1}(2 \delta)\right)^c} \left( x \right) \right| \leq \frac{\epsilon_2}{2}, \quad x \in \mathcal{D}^{K,1}(\delta),
       \label{eq:app_I_indicator}
    \end{equation}
    and the uniform boundedness $\|I_{\phi,i}^{\tilde{\beta},{\delta},K}\|_{L^{\infty}([0,1])} \leq 2C_1 + 2$.

    Next, by invoking Corollary~\ref{corol:id-approx}, for sufficiently small $\epsilon_2 \in (0,1)$, there exists a neural network $\mu$ of depth $2$, width $2$ and parameter norm $O({1}/{\epsilon_2})$, such that
    \begin{equation}
        |\mu(x) - x| \leq \frac{\epsilon_2}{2}, \quad x \in [-(2C_1 + 2), 2C_1+2].
        \label{eq:app_mu_x}
    \end{equation}
    Now, we define the composition of $\mu$ with the quasi-indicator function as $\nu_1 = \mu \circ I_{\phi,1}^{\tilde{\beta}, \delta, K}$. By ~\eqref{eq:app_I_indicator} and \eqref{eq:app_mu_x}, we obtain the following approximation error
    \begin{equation}
        \left|\nu_1(x) -  \mathbbm{1}_{\left(\Omega_{\mathrm{coarse,band}}^{K,1}(2 \delta)\right)^c} \left( x \right) \right| \leq \epsilon_2, \quad x \in \mathcal{D}^{K,1}(\delta).
        \label{eq:app_nu1_coarse_band}
    \end{equation}
    and the uniform boundedness $\|\nu_1\|_{L^{\infty}([0,1])} \leq 2 C_1 + 4$. Structurally, $\nu_1$ is neural network of depth $3$, width $O(K)$, and parameter norm $O(\max\{\tilde{\beta}, 1/ \epsilon_2\}) = O(K / (\delta \epsilon_2))$.

    Subsequently, for sufficiently small $\epsilon_3 \in (0,1)$, by Lemma~\ref{lemma:all_monomials}, there exists a neural network $\eta$ of depth $2$, width $4$, and parameter norm $O(1/\epsilon_3^2)$ such that 
    \begin{equation}
        |\eta(x,y) - x y| \leq  \epsilon_3, \quad 0<   |x|, |y| \leq 2C_1 + 4.
        \label{eq:app_xy}
    \end{equation}
    We define the final approximator as $g_1(x) \coloneqq  \eta(\psi_1(x), \nu_1(x))$. We proceed to bound the global approximation error between $g_1(x)$ and $I_{\phi,1}^{\beta, \delta, K}$ by partitioning the domain $[0,1]$ into three regions. We choose parameters $\epsilon_1 = \epsilon_2 = \frac{\epsilon}{6C_1+15}$ and $\epsilon_3 = \frac{\epsilon}{3}$.

    \begin{itemize}
        \item For $x \in \left(\Omega_{\mathrm{coarse,band}}^{K,1}(2 \delta)\right)^c$, the approximation error is bounded by 
        \begin{equation*}
            \begin{aligned}
                |g_1(x  ) -w_{\phi,1}^{\beta, \delta, K}(x)| &\leq |\eta(\psi_1(x), \nu_1(x)) - \psi_1(x) \nu_1(x)| + \left|\psi_1(x) \nu_1(x) - w_{\phi,1}^{\beta, \delta, K}(x)  \right| \\
                & \leq |\eta(\psi_1(x), \nu_1(x)) - \psi_1(x) \nu_1(x)|  + |\nu_1(x)||\psi_1(x) - w_{\phi,1}^{\beta, \delta, K} (x)| \\
                & \quad + |w_{\phi,1}^{\beta, \delta, K}(x)| \left| \nu_1(x) - \mathbbm{1}_{\left(\Omega_{\mathrm{coarse,band}}^{K,1}(2 \delta)\right)^c} \left( x \right) \right| \\
                & \overset{(a)}{\leq} \epsilon_3 + (2C_1 + 4) \epsilon_1 + (2C_1 + 3) \epsilon_2 <\epsilon,
            \end{aligned}
        \end{equation*}
        where $(a)$ follows from the boundedness of $\nu_1$, $w_{\phi,1}^{\beta, \delta, K}$, and the approximation guarantees in \eqref{eq:app_psi1_univariate_weight}, \eqref{eq:app_nu1_coarse_band}, and \eqref{eq:app_xy}.
        
        \item When $x \in \Omega_{\mathrm{coarse, band}}^{K, 1}(\delta)$, the approximator nearly vanishes:
        \begin{equation*}
            |g_1(x)| \overset{(a)}{\leq} |\psi_1(x)| |\nu_1(x) | + \epsilon_3  \overset{(b)}{\leq} (2C_1 + 4) \epsilon_2 + \epsilon_3 \overset{(c)}{\leq} \frac{2}{3} \epsilon,
        \end{equation*}
        where $(a)$ follows from the approximation guarantee in \eqref{eq:app_xy}, $(b)$ follows from the approximation guarantee in ~\eqref{eq:app_nu1_coarse_band} and the boundedness of $\psi_1$. Moreover, by Proposition~\ref{prop:univariate_weight}, the target univariate function also exhibits near-vanishing behavior as $|w_{\phi,1}^{\beta, \delta, K}(x)| \leq \epsilon_1$ for $x \in \Omega_{\mathrm{coarse,band}}^{K,1}(\delta) \subset \Omega_{\mathrm{coarse,band}}^{K,1}(2 \delta) $ provided $\beta \geq \frac{4K}{\delta \epsilon_1} \max \{C_1, C_2\}$. Thus, the total approximation error is bounded as
        \begin{equation*}
            |g_1(x) - w_{\phi,1}^{\beta, \delta,K}(x)| \leq \epsilon_1 + \frac{2}{3} \epsilon < \epsilon, \quad x \in \Omega_{\mathrm{coarse,band}}^{K,1}(\delta).
        \end{equation*}
        \item When $x \in \Omega_{\mathrm{coarse,band}}^{K,1}(2 \delta) \setminus \Omega_{\mathrm{coarse,band}}^{K,1}( \delta)$, we obtain 
        \begin{equation*}
            \begin{aligned}
               |g_1(x) - w_{\phi,1}^{\beta, \delta, K}(x)| & \overset{(a)}{\leq}|\psi_1(x)||\nu_1(x)| + |w_{\phi,1}^{\beta, \delta, K}(x)| + \epsilon_3 \\
                & \leq |\psi_1(x) - w_{{\phi},1}^{\beta, \delta, K}(x) | |\nu_1(x)| + ( |\nu_1(x)| + 1) |w_{\phi,1}^{\beta, \delta, K}(x)| + \epsilon_3 \\
                & \overset{(b)}{\leq} \epsilon_1 (2C_1 + 4) + \epsilon_1(2C_1 + 5) + \epsilon_3 \leq \epsilon,
            \end{aligned}
        \end{equation*}
        where $(a)$ follows from the approximation guarantee in \eqref{eq:app_xy}, $(b)$ follows from the approximation guarantee in \eqref{eq:app_psi1_univariate_weight}, the boundedness of $\nu_1$, and the locally quasi-vanishing property of $w_{\phi,1}^{\beta, \delta, K}$ when $\beta \geq \frac{4K}{\delta \epsilon_1} \max \{C_1, C_2\}$.
    \end{itemize}

    The uniform boundedness of $g_1$ follows directly from the product bound:
    \begin{equation*}
        |g_1(x)| \leq |\psi_1(x)| |\nu_1(x)| + \epsilon_3 \leq (2C_1+4)^2 + \epsilon_3 \leq (2C_1 + 5)^2, \quad x\in[0,1].
    \end{equation*}
    
    Regarding the network architecture, the composition of the constituent sub-networks yields a final architecture for
    $g_1$ with depth $4$, width $O(K)$, and its parameter norm bounded by
    \begin{equation*}
        \|\theta(g_1)\|_{\infty} \lesssim \max \left\{ \frac{K^2 \beta^2}{\epsilon_1 \delta^2}, \frac{K}{\delta \epsilon_2^2}, \frac{1}{\epsilon_3^2}\right\} \lesssim \frac{K^2 \beta^2}{ \epsilon^2 \delta^2}, 
    \end{equation*}
    as stated in ~\eqref{eq:width-norm-app-weight-global}.   Finally, to ensure the validity of the preceding analysis, we require the parameter $\beta$ to satisfy
    \begin{equation*}
        \beta \geq \frac{4K}{\delta \epsilon_1} \max \{C_1 ,C_2\} = \frac{(24C_1 + 60 K)}{\delta \epsilon} \max\{C_1, C_2\}.
    \end{equation*}
    This concludes the proof.
\end{proof}

\begin{figure}[htbp]
    \centering
    \includegraphics[width=0.95\textwidth]{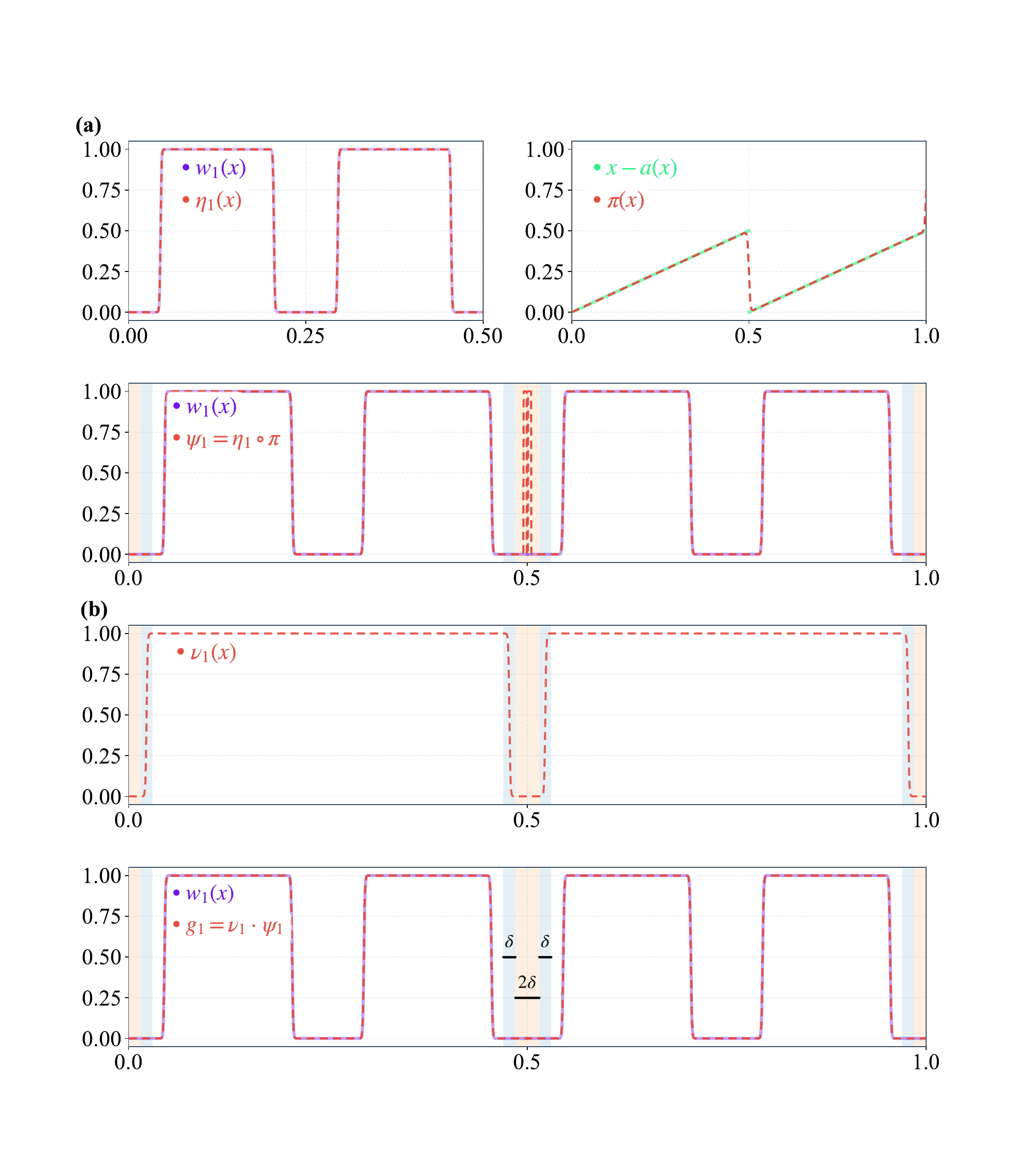}
    \caption{Illustration of the constructive approximation for the weight function $w_1$,  instantiated with $K=2$. Dependencies on indices $\phi, \beta$ and $\delta$ are suppressed for clarity. Panels (a) and (b) visualize the approximators constructed in Lemma~\ref{lem:app_wieght_band} and Lemma~\ref{lem:app_univariate_weight_Linfty}, respectively. The orange shaded region denotes the domain $\Omega_{\mathrm{coarse,band}}^{K,1}(\delta)$, while the blue region corresponds to the difference set $\Omega_{\mathrm{coarse,band}}^{K,1}(2\delta) \setminus \Omega_{\mathrm{coarse,band}}^{K,1}(\delta)$. }
    \label{fig:Linfty}
    \end{figure}

Figure~\ref{fig:Linfty} illustrates our constructive approximation procedure. Panel (a) depicts the construction in Lemma~\ref{lem:app_wieght_band}, where we approximate the weight function $w_1$ on $[0,1]$ up to an exceptional boundary band of width $\delta$. The resulting approximator is denoted by $\psi_1$. This construction combines a local approximator $\eta_1$ on $[0,K^{-1}]$ with an extractor that maps $x$ to its relative position $x-a(x)$. Panel (b) corresponds to Lemma~\ref{lem:app_univariate_weight_Linfty}, where we upgrade the approximation to a uniform one over $[0,1]$ and obtain an approximator $g_1$. The key step is to multiply the local approximator $\psi_1$ by an indicator approximator $\nu_1$ that suppresses the approximation error in the boundary band.

Building upon the neural approximators constructed in Lemma~\ref{lem:app_univariate_weight_Linfty} for the univariate weight functions on $[0,1]$, we now proceed to construct neural approximators for the multivariate functions on $[0,1]^d$.

\begin{lemma}
    \label{lem:app-multi-weight-global}
    Let $\phi$ satisfy Assumptions~\ref{ass:smooth}--\ref{ass:piece}. Let $d \in \mathbb{N}_{+}$, and let $w_{\phi, \bv}^{\beta, \delta, K}$ be the $d$-variate weight functions from Definition~\ref{def:multi_weight} with parameters $\beta, \delta, K$. For any sufficiently small $\epsilon \in (0,1)$, sufficiently large $K$, $\delta \in (0, \frac{1}{12K^2})$, and $\beta \geq \frac{24d(2C_1+5)^{2d+1}}{\delta \epsilon} \max\{ C_1, C_2\}$, there exist neural networks $\{g_{\bv}\}_{\bv \in [2]^d}$ satisfying
    \begin{equation*}
        g_{\bv} \in \mathcal{H}^{\phi,5}(1,1, M_K, B_{\beta, \epsilon, \delta, K}),
    \end{equation*}
    with 
    \begin{equation}
        M_{K} \lesssim K, \quad B_{\beta, \epsilon, \beta, K} \lesssim \max \left\{ \frac{K^2 \beta^2}{\epsilon^2 \delta^2}, \frac{1}{\epsilon^d}\right\},
        \label{eq:app-width-norm-phi-multi-weight}
    \end{equation}
    such that the following properties hold for  $\bv \in [2]^d$:
    \begin{itemize}
        \item (Uniform approximation) For all $x \in [0,1]^d$:
        \begin{equation*}
            |g_{\bv}(\bx) - w_{\phi,\bv}^{\beta, \delta, K}(\bx)| \leq 
            \epsilon.
        \end{equation*}
        \item (Boundedness) $\|{g}_{\bv}\|_{L^{\infty}([0,1]^d)} \leq (2C_1+5)^d$.
    \end{itemize}
\end{lemma}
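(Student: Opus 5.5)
The plan is to exploit the tensor-product structure of Definition~\ref{def:multi_weight}: approximate each univariate factor with Lemma~\ref{lem:app_univariate_weight_Linfty} applied coordinatewise, then multiply the $d$ outputs with one extra $d$-fold product layer from Lemma~\ref{lemma:all_monomials}. This adds one hidden layer to the depth-$4$ univariate networks and gives depth $5$. (The stated input dimension $1$ in $\mathcal{H}^{\phi,5}(1,1,\cdot,\cdot)$ should read $d$; each univariate block reads off a single coordinate $x_l$ through its first-layer weights.)

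\emph{Step 1 (univariate factors).} Fix $\bv\in[2]^d$. For each $l\in[d]$, invoke Lemma~\ref{lem:app_univariate_weight_Linfty} with accuracy $\epsilon_1$ to obtain $g_{v_l}^{(l)}$ of depth $4$, width $O(K)$ and norm $O(K^2\beta^2/(\epsilon_1^2\delta^2))$. These satisfy
\[
\bigl|g_{v_l}^{(l)}(x_l)-w_{\phi,v_l}^{\beta,\delta,K}(x_l)\bigr|\le\epsilon_1
\qquad\text{and}\qquad
\bigl|g_{v_l}^{(l)}\bigr|\le(2C_1+5)^2
\]
on $[0,1]$. The lemma requires $\beta\ge (24C_1+60)K\max\{C_1,C_2\}/(\delta\epsilon_1)$. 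With $\epsilon_1=\epsilon/(2d(2C_1+5)^{2d-2})$, this should follow from the hypothesis on $\beta$ up to the constant bookkeeping. The hypothesis as stated has no explicit factor $K$, so either the factor is absorbed into ``sufficiently large'' constants or the stated lower bound must be read with $K$. I would check this and, if necessary, track it honestly.

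\emph{Step 2 (product).} By Lemma~\ref{lemma:all_monomials} with $m=d$ and $Q=(2C_1+5)^2$, there is a two-layer network $\eta$ of width $2^d$ and norm $O(\epsilon_2^{-d})$ with
\[
\bigl|\eta(\bq)-q_1\cdots q_d\bigr|\le\epsilon_2 \quad\text{on } [-Q,Q]^d.
\]
Set $g_{\bv}(\bx)=\eta\bigl(g^{(1)}_{v_1}(x_1),\dots,g^{(d)}_{v_d}(x_d)\bigr)$. The last linear layer of the parallel univariate blocks is merged with the first layer of $\eta$. The input weights of $\eta$ are $O(1)$, so the merged weights stay within $O(K^2\beta^2/(\epsilon_1^2\delta^2))$. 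The depth is $4+1=5$, the width is $\max\{dO(K),2^d\}\lesssim K$, and the norm is $\max\{K^2\beta^2/(\epsilon^2\delta^2),\epsilon^{-d}\}$ up to constants, as in \eqref{eq:app-width-norm-phi-multi-weight}.

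\emph{Step 3 (error and boundedness).} Telescope the product: for numbers $a_l,b_l$ bounded by $R$,
\[
\Bigl|\prod_l a_l-\prod_l b_l\Bigr|\le dR^{d-1}\max_l|a_l-b_l|.
\]
The bounds are $|w_{\phi,v_l}|\le 2C_1+3$ by Proposition~\ref{prop:univariate_weight} and $|g^{(l)}_{v_l}|\le(2C_1+5)^2$, so this gives total error at most $\epsilon_2+d(2C_1+5)^{2(d-1)}\epsilon_1\le\epsilon$ with $\epsilon_2=\epsilon/2$.

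The main obstacle is boundedness. The naive bound is $(2C_1+5)^{2d}+\epsilon_2$, which is worse than the claimed $(2C_1+5)^d$. I would instead use uniform approximation: $|g_{\bv}|\le |w_{\phi,\bv}|+\epsilon\le(2C_1+3)^d+\epsilon\le(2C_1+5)^d$ by Proposition~\ref{prop:multi_weight}. This requires the product-approximation domain $[-Q,Q]^d$ in Step 2 to contain the outputs of the univariate blocks, which it does by the choice of $Q$.
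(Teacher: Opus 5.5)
Your proposal is correct and follows the paper's proof essentially step for step. The paper also applies Lemma~\ref{lem:app_univariate_weight_Linfty} coordinatewise, composes with a $d$-fold product network from Lemma~\ref{lemma:all_monomials} to reach depth $5$, bounds the error by telescoping, and gets boundedness from $|g_{\bv}|\le |w_{\phi,\bv}^{\beta,\delta,K}|+\epsilon\le(2C_1+3)^d+\epsilon$. Both issues you flagged are genuine typos in the statement. The input dimension should be $d$. The paper's own proof needs $\beta\ge (24C_1+60)K\max\{C_1,C_2\}/(\delta\epsilon_1)$; the factor $K$ is silently dropped in its final display and reinstated when $\beta$ is chosen in the proof of Theorem~\ref{thm:app_Linfty}. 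So it cannot be absorbed into constants and must be read as part of the hypothesis.
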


\begin{proof}
    Consider a sufficiently small $\epsilon_1 \in (0,1)$.
    By Lemma~\ref{lem:app_univariate_weight_Linfty}, there exist $2d$ neural networks $\{\psi_{l,v}\}_{l \in [d], v\in[2]}$ such that  for all $\bx \in [0,1]^d$
    \begin{equation}
        \left| \psi_{l,v} (\bx)-  w_{\phi, v}^{\beta, \delta, K}(x_l) \right| \leq  \epsilon_1, \quad \quad l =1, \cdots, d, \quad v = 1, 2,
        \label{eq:app_psi_univariate_weight_lv}
    \end{equation}
    provided that $\beta \geq \frac{(24C_1 + 60) K}{\delta \epsilon_1} \max \{C_1, C_2\}$. Moreover, each $\psi_{l,v_l}$ has depth $4$, width $O(K)$, parameter norm $O(\frac{K^2 \beta^2}{\epsilon_1 \delta^2})$, and satisfies the uniform bound $\|\psi_{l,v_l}\|_{L^{\infty}} \leq (2C_1+5)^2$.

    Next, for sufficiently small $\epsilon_2 \in (0,1) $, by Lemma~\ref{lemma:all_monomials}, there exists a neural network $ \nu $ of depth 2, width $ 2^d $, and parameter norm $O(1 / \epsilon_2^d)$ such that
    \begin{equation}
       |\nu(x_1, \cdots ,x_d) - x_1 x_2 \cdots x_d| \leq \epsilon_2, \quad 0\leq |x_1|, \cdots,  |x_d| \leq (2C_1+5)^2.
       \label{eq:app_nu_x1xd}
    \end{equation}

    We now define the final neural approximator as $g_{\bv}(\bx) \coloneqq  \nu(\psi_{1,v_1}(\bx), \cdots, \psi_{d, v_d}(\bx))$. The approximation error between $ g_{\bv}(\bx)$ and $w_{\phi, \bv}^{\beta, \delta, K}$ is bounded as:
    \begin{equation}
        \begin{aligned}
            \left|  g_{\bv}(\bx) - w_{\phi, \bv}^{\beta, \delta, K}(\bx)\right| & \overset{(a)}{\leq} \epsilon_2 + \left| \prod_{l=1}^d\psi_{l,v_l}(\bx) - \prod_{l=1}^d w_{\phi, v_l}^{\beta, \delta, K}(x_l)\right| \\
            & \leq \epsilon_2 + \sum_{k=1}^{d} |\psi_{k,v_k}(\bx) - w_{\phi, v_k}^{\beta, \delta, K}(x_l)| \prod_{l=1}^{k-1} |w_{l, v_{l}}^{\beta, \delta,K}(x_l)| \prod_{l^{\prime}=k+1}^d |\psi_{l^{\prime},v_{l^{\prime}}}(\bx) | \\
            & \overset{(b)}{\leq}\epsilon_2 + d (2C_1+5)^{2d} \epsilon_1 \overset{(c)}{\leq} \epsilon,
        \end{aligned}
        \label{eq:app_varphiv_multi_weight}
    \end{equation}
    where $(a)$ follows from the approximation guarantee in \eqref{eq:app_nu_x1xd}, $(b)$ uses the approximation guarantee in \eqref{eq:app_psi_univariate_weight_lv}, along with the boundedness of $\psi_{l,v}$ and $w_{\phi, i}^{\beta, \delta, K}$, and $(c)$  holds by setting $\epsilon_1 = \frac{\epsilon}{2d(2C_1+5)^{2d}}, \epsilon_2 = \frac{\epsilon}{2} $.

    By the boundedness of $ w_{\phi,v_l}^{\beta, \delta, K} $ and the approximation guarantee in \eqref{eq:app_varphiv_multi_weight}, we further deduce that
    \begin{equation*}
        |g_{\bv} (\bx)| \leq \epsilon + (2C_1+3)^d \leq (2C_1+5)^d.
    \end{equation*}
    By the composition of the neural networks involved, we conclude that $g_{\bv}$ is a neural network of depth $5$, width $O(K)$, and its parameter norm is bounded by
    \begin{equation*}
        \|\theta(g_{\bv})\|_{\infty} \lesssim \max \left\{ \frac{K^2 \beta^2}{\epsilon_1 \delta^2}, \frac{1}{\epsilon_2^d}\right\} \lesssim \max \left\{ \frac{K^2 \beta^2}{\epsilon \delta^2}, \frac{1}{\epsilon^d}\right\},
    \end{equation*}
    as stated in ~\eqref{eq:app-width-norm-phi-multi-weight}.
    Finally, to ensure the validity of the above analysis, we require the parameter
     $\beta$ to satisfy
    \begin{equation*}
        \beta \geq \frac{(24C_1+60)K}{\delta \epsilon_1} \max \{C_1, C_2\} = \frac{24 d (2C_1+5)^{2d+1}}{\delta \epsilon} \max\{C_1, C_2\}.
    \end{equation*}
    This concludes the proof. 
\end{proof}

\subsection{Proof of Theorem \ref{thm:app_Linfty} (\texorpdfstring{$L^{\infty}$}{L-infinity} Approximation)}
\label{subapp:L-infty}

Building upon the neural approximators constructed in Lemma~\ref{lem:app-multi-weight-global} for the multivariate weight functions on the global domain $[0,1]^d$, we proceed to construct neural approximators for $f^{\star}$ with respect to the $L^{\infty}([0,1]^d)$ norm. To achieve this, we must extend the approximation guaranties of Theorem~\ref{thm:app_infinite_local}---established in the interior region---to the shifted interior regions.


\begin{theorem}
    \label{thm:app_shifted}
    Suppose $\phi$ and $f^{\star}$ satisfy the assumptions of Theorem~\ref{thm:app_infinite_local}, and let the parameters $\epsilon, K$, and $\delta$ be defined as therein. Then, for any shift index $\bv \in [2]^d$, there exists a neural network
    \begin{equation*}
        g_{\bv} \in \mathcal{H}^{\phi,6} (d, 1, M_{\epsilon}, B_{\epsilon,\delta}),
    \end{equation*}
    with the identical $M_{\epsilon}$ and $B_{\epsilon,\delta}$ specified in \eqref{eq:norm_width_app_infinity_local} of Theorem~\ref{thm:app_infinite_local}, such that the following properties hold for $g_{\bv}$ for $\bv \in [2]^d$:
    \begin{itemize}
        \item (Approximation) \[
        \|g_{\bv} - f^{\star}\|_{L^\infty\left( \Omega_{\mathrm{int}}^{K, \bv}(\delta)\right)} \le \epsilon.
        \]
        \item (Boundedness) For all $\bx \in [0,1)^d$, $g_{\bv}(\bx)$ satisfies the uniform bound established in \eqref{eq:upper_bound_app_local_infty} of Theorem~\ref{thm:app_infinite_local}.
    \end{itemize}
\end{theorem}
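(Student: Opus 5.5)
The plan is to reduce Theorem~\ref{thm:app_shifted} to Theorem~\ref{thm:app_infinite_local} by a change of variables: a translation. For a shift index $\bv\in[2]^d$, set $\btau_{\bv}\coloneqq (v_l-1)/(2K^2)$ coordinatewise. Then each shifted refined cell $\Omega^{K,\bv}_{\bi,\bj}$ (before intersecting with $[0,1]^d$) is the translate by $\btau_{\bv}-\tfrac{1}{2K^2}\bone$ of an unshifted refined cell on the grid of mesh $K^{-2}$. Equivalently, the shifted grid is the standard grid translated by a vector of entries in $\{-\tfrac{1}{2K^2},0\}$, with the index range $[\tilde K]^d$ covering the extra boundary cells.

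First, I extend $f^\star$ to a slightly larger cube, say $[-1,2]^d$, with $W^{s,\infty}$ norm bounded by a constant $C_{\mathrm{ext}}(s,d)$. A Stein extension operator works here, or reflection of sufficiently high order. Next, I rerun the chain Lemma~\ref{lem:piece_poly_app} $\to$ Lemma~\ref{lem:app_for_piece_poly} $\to$ Theorem~\ref{thm:app_infinite_local} on the shifted partition. Concretely, I apply the Bramble--Hilbert construction cell by cell on the shifted refined cells, which are still cubes of side $K^{-2}$. This gives a piecewise polynomial $p_{\bv}$ with error $c_1 K^{-2s}$ and coefficients bounded by $c_2 C_{\mathrm{ext}}$. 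It can be written as $\sum_{\balpha} C_{\balpha,\bv}(\bx)\,\bx^{\balpha}$, where $C_{\balpha,\bv}$ is piecewise constant on the shifted grid.

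The piecewise-constant machinery (Lemmas~\ref{lem:1D_indicator_app_Heaviside}--\ref{lem:pwcf_refined_app}) uses only first-layer affine maps $x_l\mapsto\beta(x_l-\text{breakpoint})$. Shifting every breakpoint by $(v_l-1)/(2K^2)-\tfrac{1}{2K^2}$ therefore only changes first-layer biases, by $O(1)$. The coarse grid gains at most one extra cell per axis ($K+1$ instead of $K$), which changes the width by only a constant factor. This absorbs into $M_\epsilon\lesssim \epsilon^{-d/(2s)}$ and leaves the exponents in $B_{\epsilon,\delta}$ unchanged. The relative-position extractor of Lemma~\ref{lem:relative_pos_app} is applied to the shifted coarse corners in the same way. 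The monomial networks of Lemma~\ref{lemma:all_monomials} are used on $[-1,1]^d\supset[0,1]^d$ as before, so they need no change.

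Combining these as in the proof of Theorem~\ref{thm:app_infinite_local} gives a depth-$6$ network $g_{\bv}$ with the following properties:
\begin{itemize}
  \item an $L^\infty$ error of at most $\epsilon$ on $\bigcup_{\bi,\bj}\Omega^{K,\bv}_{\bi,\bj,\mathrm{int}}(\delta)=\Omega^{K,\bv}_{\mathrm{int}}(\delta)$;
  \item the same output bound \eqref{eq:upper_bound_app_local_infty} on $[0,1)^d$, up to the harmless constant $C_{\mathrm{ext}}$ absorbed in $c_2$.
\end{itemize}

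The main obstacle is the boundary cells created by the shift. These are shifted cells that stick out of $[0,1]^d$, where the half-cells at the edges are cut by the intersection with $[0,1]^d$. The indicator approximations must still be accurate on the interior parts inside $[0,1]^d$. I would handle this through the extension of $f^\star$ together with treating the cut cells as full cells of the enlarged grid. Their interiors then satisfy the same margin-$\delta$ condition as all other cells, so every lemma applies verbatim. The remaining work is to check that the coarse/refined bookkeeping is consistent: namely, that the shifted refined cells still tile shifted coarse cells in the $K^d$-subcell pattern required by Lemma~\ref{lem:reformulat_pwfc}. This holds because the shift is a common translation of both grids.
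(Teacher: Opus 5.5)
Your plan is correct and is essentially the paper's own argument: redo the Bramble--Hilbert step on the shifted cells $\Omega^{K,\bv}_{\bi,\bj}$, then rerun the construction of Lemma~\ref{lem:app_for_piece_poly} on the shifted partition. Your treatment of the cut boundary cells as full cells of an enlarged grid on the network side keeps every indicator breakpoint at distance at least $\delta$ from $\Omega^{K,\bv}_{\mathrm{int}}(\delta)$, even next to $\partial[0,1]^d$, and so makes explicit a point the paper leaves implicit.

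Two small fixes are needed.

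First, drop the extension of $f^\star$. Averaging the Taylor polynomial over the convex cut cell $\Omega^{K,\bv}_{\bi,\bj}\subset[0,1]^d$ already gives error $c_1K^{-2s}$ and coefficients at most $c_2$. Using the extension instead inflates both by $C_{\mathrm{ext}}$, and you only track this for the coefficients. With $K=\lceil(2c_1/\epsilon)^{1/(2s)}\rceil$ fixed as in the statement, you would then no longer get error $\le\epsilon$, nor the literal bound \eqref{eq:upper_bound_app_local_infty}.

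Second, your offset is reversed. By definition the cells for $\bv$ are the standard refined cells translated by $(v_l-1)/(2K^2)$ in coordinate $l$, not by $(v_l-1)/(2K^2)-1/(2K^2)$; so $v_l=1$ is the unshifted grid and $v_l=2$ the half-shifted one. This is only a relabeling, but it must match the bands of $w_{\phi,\bv}^{\beta,\delta,K}$ used in Theorem~\ref{thm:app_Linfty}.
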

\begin{proof}
    The construction proceeds analogously to that of Theorem~\ref{thm:app_infinite_local}. The proof decomposes into two steps. First, we construct polynomial approximations for $f^{\star}$ locally within each shifted refined cell $\Omega_{\bi, \bj}^{K,\bv}$. Second, we approximate the resulting piecewise polynomial defined over the partition $\{\Omega_{\bi, \bj}^{K,\bv}\}_{\bi,\bj \in [\tilde{K}]^d}$ via a neural network. The second step employs the identical approximation scheme established in Lemma~\ref{lem:app_for_piece_poly}, adapted here to the shifted partition of $[0,1]^d$.
\end{proof}

We now present the proof of Theorem~\ref{thm:app_Linfty}.

\begin{proof}[Proof of Theorem~\ref{thm:app_Linfty}]
    We provide the proof for Heaviside-like activation functions; the proof for ReLU-like activations follows similarly. 

    First, let $\epsilon_1\in(0,1)$ be a sufficiently small parameter to be specified later. Following Theorem~\ref{thm:app_infinite_local}, define $K \coloneqq \lceil (2 c_1(s,d) /\epsilon_1)^{1/2s} \rceil$. Throughout the remainder of the proof, fix a parameter $\delta\in\bigl(0,\frac{1}{12K^2}\bigr)$, which will also be chosen later. By Theorem~\ref{thm:app_shifted}, there exists a family of $2^d$ neural networks $\{\psi_{\bv}\}_{\bv \in[2]^d}$ such that
    \begin{equation}
        |\psi_{\bv}(\bx) - f^{\star}(\bx)| \leq \epsilon_1, \quad \bx \in \Omega_{\mathrm{int}}^{K, \bv}(\delta).
        \label{eq:app-psiv-fstar}
    \end{equation}
    Each network $\psi_{\bv}$ has depth $6$, width $O(\epsilon_1^{-d/(2s)})$, and parameter norm $M_{\epsilon_1, \delta}$ as specified in \eqref{eq:norm_width_app_infinity_local}. Moreover, these networks are uniformly bounded as $\|\psi_{\bv}\|_{L^{\infty}([0,1]^d)} \leq \lceil s\rceil^dc_2(s,d) 4^{d+4}(1+C_1)^{2d}\eqqcolon M_{\psi}$.

     Secondly, let $\epsilon_2 \in (0,1)$ be a sufficiently small parameter to be determined later. Invoking Lemma~\ref{lem:app-multi-weight-global} (approximating the weight functions) and Corollary~\ref{corol:id-approx} (approximating the identity function), there exist $2^d$ neural networks $\{\nu_{\bv}\}_{\bv \in [2]^d}$ such that
     \begin{equation}
        |\nu_{\bv}(\bx) - w_{\phi, \bv}^{\beta, \delta,K}(\bx)| \leq \epsilon_2, \quad  \bx \in [0,1]^d,
        \label{eq:app-nuv-w}
    \end{equation}
    provided we choose $\beta = \frac{24d(2C_1+5)^{2d+1} K}{\delta \epsilon_2 } \max\{ C_1, C_2\}$. Each $\nu_{\bv}$ has depth $6$, width $O(K)$, parameter norm $O(\max \{ K^2 \beta^2/(\epsilon_2^2 \delta^2), \epsilon_2^{-d}\})$, and uniform output bound $\|\nu_{\bv} \| \leq (2C_1+6)^d \eqqcolon M_{\nu}$.

    We now bound the approximation error of the weighted combination $\sum_{\bv} \psi_{\bv} \nu_{\bv}$ against $f^{\star}$.
    For any $\bx \in [0,1]^d$, define the index set of ``active'' shifted interiors:
    \begin{equation*}
        V(\bx) \coloneqq  \left\{ \bv \in [2]^d : \bx \in \Omega_{\mathrm{int}}^{K, \bv}(\delta) \right\}.
    \end{equation*}
    For indices $\bv \in V(\bx)$, the approximation $|\psi_{\bv}(\bx) - f^{\star}(\bx)| \leq \epsilon_1$ holds. Conversely, let $\bar{V}(\bx) \coloneqq  [2]^d \setminus V(\bx)$ be the indices where $\bx$ falls into the shifted band region $\Omega_{\mathrm{band}}^{K, \bv}(\delta)$. By Proposition~\ref{prop:multi_weight}, the weight function nearly vanishes in this shifted band region, i.e., $|w_{\phi, \bv}^{\beta,\delta, K}(\bx)| \leq \epsilon_2, \quad \bv \in \bar{{V}}(\bx)$, as we have $\beta \ge \frac{2(2C_1+3)^{d-1}}{\delta \epsilon_2} \max\{C_1,C_2\}$. Since $\delta  < \frac{1}{4K^2}$, every $\bx$ must fall into at least one shifted interior region; hence, ${V}(\bx) \neq \varnothing$.

    Using the partition of unity property $\sum_{\bv} w_{\phi, \bv}^{\beta, \delta, K}(\bx) = 1$ in Proposition~\ref{prop:multi_weight}, we decompose the approximation error as :
    \begin{equation}
        \begin{aligned}
                    & \quad \quad \Bigg| \sum_{\bv \in [2]^d} \psi_{\bv}(\bx) \nu_{\bv}(\bx)  - f^{\star}(\bx)\Bigg| =  \Bigg|\sum_{\bv \in [2]^d} \psi_{\bv}(\bx) \nu_{\bv}(\bx) - \sum_{\bv \in [2]^d}f^{\star}(\bx) w_{\phi, \bv}^{\beta, \delta, K}(\bx)\Bigg| \\
                    & \leq \sum_{\bv \in [2]^d} \left|\psi_{\bv}(\bx)\right| \left|\nu_{\bv}(\bx) - w_{\phi, \bv}^{\beta, \delta, K}(\bx) \right| +  \sum_{\bv \in {V}(\bx)} \Big| \psi_{\bv}(x) - f^{\star}(\bx) \Big| \Big| w_{\phi, \bv}^{\beta, \delta,K}(\bx)\Big|    \\
                    &  \quad +   \sum_{\bv \in \bar{{V}}(\bx) } \left( |f^{\star}(\bx)| + \nu_{\bv}(\bx) \right) |w_{\phi, \bv}^{\beta, \delta, K}(x)| \\
                    & \overset{(a)}{\leq } 2^d M_{\psi} \epsilon_2 + 2^d(2C_1+3) \epsilon_1 + 2^d (1 + M_{\nu}) \epsilon_2  \overset{(b)}{\leq} \frac{\epsilon}{2},
        \end{aligned}
        \label{eq:app-psinu-f*}
    \end{equation}
    where $(a)$ follows from the approximation guarantee in \eqref{eq:app-psiv-fstar}, and \eqref{eq:app-nuv-w}, and the boundedness of $\psi_{\bv}, \nu_{\bv},f^{\star}$ and the locally quasi-vanishing property for $w_{\phi,\bv}^{\beta,\delta, K}$ on $\Omega_{\mathrm{band}}^{K, \bv}(\delta)$ from Proposition~\ref{prop:multi_weight}, $(b)$ holds by choosing 
    \begin{equation*}
        \epsilon_1 = \frac{\epsilon}{2^{d+2}(2C_1+3)}, \quad \epsilon_2 = \frac{\epsilon}{2^{d+2}(M_{\psi} + M_{\nu} + 1)}.
    \end{equation*}

    By Lemma~\ref{lemma:all_monomials}, there exists neural networks $\eta$ of depth $2$, width $4$ and parameter norm $O(1 / \epsilon^2)$, such that 
    \begin{equation}
        |\eta(x,y) - xy| \leq \frac{\epsilon}{2^{d+1}}, \quad 0 \leq |x|, |y| \leq \max \left\{ M_{\nu}, M_{\psi}\right\}.
        \label{eq:app-w-xy}
    \end{equation}
    We define the final approximator $g(\bx) \coloneqq  \sum_{\bv \in [2]^d} \eta(\psi_{\bv}(\bx), \nu_{\bv}(\bx))$. The total error is bounded by: 
    \begin{equation*}
        \begin{aligned}
                |g(\bx) - f^{\star}(\bx)| & \leq  \Big| \sum_{\bv \in [2]^d} \psi_{\bv}(\bx) \nu_{\bv}(\bx)  - f^{\star}(\bx)\Big| + \sum_{\bv \in [2]^d}  \Big| \eta(\psi_{\bv}(\bx), \nu_{\bv}(\bx)) - \psi_{\bv}(\bx) \nu_{\bv}(\bx)  \Big| \\
                & \overset{(a)}{\leq} \frac{\epsilon}{2} + 2^d \frac{\epsilon}{2^{d+1}} \leq \epsilon,
        \end{aligned}
    \end{equation*}
    where $(a)$ follows from the approximation guarantee in~\eqref{eq:app-psinu-f*} and \eqref{eq:app-w-xy}.

    Finally, regarding the architecture, $g$ is realized by summing the compositions of $\eta$ with the parallel sub-networks $\psi_{\bv}$ and $\nu_{\bv}$. By construction, the depth of $g$ is $7$. Regarding the width, recall that $\psi_{\bv}$ scales as $O(\epsilon^{-d/(2s)})$, since $\epsilon_1 \eqsim \epsilon$. Furthermore, given the scaling $K = O(\epsilon^{-1/(2s)})$ from Theorem~\ref{thm:app_infinite_local}, the width of $\nu_{\bv}$ scales as $O(K) = O(\epsilon^{-1/(2s)})$. Consequently, the total width of $g$ is dominated by the former term, bounded by $O(\epsilon^{-d/(2s)})$.
    Noting that $\beta = O(K / (\delta \epsilon_2))$ , $\epsilon_2 \eqsim \epsilon$, choosing $\delta = (24K^2)^{-1}$ (so that $\delta^{-1} = 24 K^2 = O((1 / \epsilon)^{\frac{1}{s}})$), then the parameter norm is bounded by 
    \begin{equation*}
        \max \left\{ \frac{1}{\epsilon_1^{\max \left\{ \frac{d^2}{2s}+d, \frac{d}{s}+2, \lceil s\rceil \right\} }} ,\frac{1}{\delta^2 \epsilon_1^{\frac{d}{2s}+1}}, \frac{K^2 \beta^2}{\delta^2 \epsilon_2^2},\frac{1}{\epsilon_2^d}, \frac{1}{\epsilon^d}\right\} \lesssim \left( \frac{1}{\epsilon}\right)^{\max \left\{ \frac{d^2}{2s}+d, \frac{d}{s}+2, \frac{d+4}{2s}+1, \lceil s \rceil, \frac{6}{s}+4\right\}},
    \end{equation*}
    as stated in~\eqref{eq:app-Linfty-width-norm}. This concludes the proof.
\end{proof}

\section{Learning Theory: Proofs and Technical Details}
\label{app:est}

\begin{lemma}[\citet{schmidt2020nonparametric}]
    Let $n \in \mathbb{N}{\geq 1}$, and let $f^{\star}$ and $\{(\bx_i, y_i)\}_{i=1}^n$ be given in~\eqref{eq:data_nonparametric}. Let $\mathcal{F}_n$ denote a model class, and let $\widehat{f}_n$ be the estimator defined as 
    \begin{equation}
        \widehat{f}_n =  \argmin_{f \in \mathcal{F}_n}  \frac{1}{n} \sum_{i=1}^n (f(\bx_i) - y_i)^2.
        \label{eq:est_ERM}
    \end{equation}
    Assume that ${f^{\star}} \cup \mathcal{F}_n \subset \{f:[0,1]^d \to [-F, F]\}$ for some $F \geq 1$. If the covering number $\mathcal{N}_n \coloneqq  \mathcal{N}(\tau, \mathcal{F}_n, \|\cdot\|_{\infty}) \ge 3 $, then, 
    \begin{equation*}
        \mathbb{E} \left[ \left\| \widehat{f}_n - f^{\star}\right\|_{L^2(\rho)}^2\right] \leq 4 \left[ \inf_{f \in \mathcal{F}_n}  \left\| f - f^{\star} \right\|_{L^2(\rho)}^2 + F^2 \frac{18 \log \mathcal{N}_n + 72 F}{n} + 32 \tau F\right],
    \end{equation*}
    for all $\tau \in (0,1]$.
    \label{lem:aos_lem}
\end{lemma}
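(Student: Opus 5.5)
The lemma is the standard oracle inequality of \citet{schmidt2020nonparametric} (their Lemma 4), and I would reproduce its proof in four steps: a reduction to a finite cover, a decomposition of the risk, control of the noise term, and control of the empirical-to-population discrepancy. Fix $\tau\in(0,1]$ and a minimal $\tau$-cover $\{f_1,\dots,f_{\mathcal N_n}\}$ of $\mathcal F_n$ in $\|\cdot\|_\infty$. Replacing each $f_k$ by its truncation $\mathbb T_F f_k$ does not increase distances to elements of $\mathcal F_n$, so we may take $\|f_k\|_\infty\le F$. Write $\widehat R=\|\widehat f_n-f^\star\|_{L^2(\rho)}^2$ and $\widehat R_n=\frac1n\sum_i(\widehat f_n(\bx_i)-f^\star(\bx_i))^2$.

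\emph{Step 1: basic inequality.} Fix an arbitrary $f\in\mathcal F_n$. Substituting $y_i=f^\star(\bx_i)+\xi_i$ into the ERM property gives
\[
\widehat R_n\le \tfrac1n\textstyle\sum_i(f-f^\star)^2(\bx_i)+\tfrac2n\textstyle\sum_i\xi_i(\widehat f_n-f)(\bx_i).
\]
Taking expectations, the first term becomes $\|f-f^\star\|_{L^2(\rho)}^2$, and the $f$-part of the noise term has mean zero.

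\emph{Step 2: the noise term.} Let $k^*$ be the index with $\|f_{k^*}-\widehat f_n\|_\infty\le\tau$. Then
\[
\tfrac1n\textstyle\sum_i\xi_i\widehat f_n(\bx_i)\le\tfrac1n\textstyle\sum_i\xi_i f_{k^*}(\bx_i)+\tau\,\mathbb E|\xi|.
\]
Conditionally on the design, define
\[
\varepsilon_k=\frac{\sum_i\xi_i(f_k-f^\star)(\bx_i)}{\sqrt n\,\|f_k-f^\star\|_n},
\]
which is $\mathcal N(0,\sigma^2)$, where $\|g\|_n^2=\frac1n\sum_ig(\bx_i)^2$. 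A Gaussian maximal inequality then gives $\mathbb E\max_k\varepsilon_k^2\lesssim\sigma^2\log\mathcal N_n$. Combining this with Cauchy--Schwarz and the estimate $\|f_{k^*}-f^\star\|_n\le\sqrt{\widehat R_n}+\tau$ yields
\[
\mathbb E\widehat R_n\lesssim \|f-f^\star\|^2_{L^2(\rho)}+\sqrt{\mathbb E\widehat R_n}\,\sqrt{\log\mathcal N_n/n}+\tau.
\]
Solving this quadratic inequality in $\sqrt{\mathbb E\widehat R_n}$ gives $\mathbb E\widehat R_n\lesssim\|f-f^\star\|^2+\frac{\log\mathcal N_n}{n}+\tau$. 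Here the noise level must be absorbed into the constants, where $F$ appears; in the stated form $\sigma$ is normalized against $F$.

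\emph{Step 3: from empirical to population risk.} This is the main obstacle, since $\widehat f_n$ depends on the sample. I would use a ghost sample $\bx_i'$ and bound $\mathbb E[\widehat R-\widehat R_n]$ through $f_{k^*}$. For each fixed $k$, the variables $g_k=(f_k-f^\star)^2$ are bounded by $4F^2$ and satisfy $\mathrm{Var}\le4F^2\,\mathbb E g_k$. Applying Bernstein's inequality with a union bound over the $\mathcal N_n$ functions, after normalizing by $r_k=\max(\|f_k-f^\star\|_{L^2(\rho)},\sqrt{\log\mathcal N_n/n})$, gives
\[
\mathbb E\widehat R\le 2\,\mathbb E\widehat R_n+C\,\frac{F^2\log\mathcal N_n+F}{n}+C\tau F.
\]
The $\tau F$ term comes from replacing $\widehat f_n$ by $f_{k^*}$ on squares, using $|a^2-b^2|\le 4F|a-b|$. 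The hypothesis $\mathcal N_n\ge3$ is used so that $\log\mathcal N_n\ge1$, which lets the additive constants from the maximal inequalities be absorbed.

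\emph{Step 4: conclusion.} Combine Steps 1--3, take the infimum over $f\in\mathcal F_n$, and track the constants to obtain the factor $4$ and the displayed coefficients $18$, $72$, $32$. The delicate part is the self-bounding argument in Step 3: it requires the variance-to-mean relation $\mathrm{Var}(g)\le 4F^2\,\mathbb E g$ and care in absorbing $\frac12\mathbb E\widehat R$ back into the left-hand side.
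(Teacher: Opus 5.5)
Your outline is correct and matches the argument the paper relies on. The paper gives no independent proof; it takes the statement from Lemma~4 of \citet{schmidt2020nonparametric}, whose proof is exactly your cover reduction, Gaussian maximal inequality for the noise term, and ghost-sample Bernstein/union-bound step normalized by $r_k$. The constants your sketch leaves untracked come from specializing that lemma to $\varepsilon=1$, which gives the factor $(1+\varepsilon)^2=4$, and to zero optimization gap $\Delta_n=0$ for an exact empirical risk minimizer; the paper's $72F$ then follows from the original $72$ because $F\ge 1$. Your remark that the $\sigma$-free form presupposes normalized noise is correct, since the cited lemma assumes $\mathcal{N}(0,1)$ errors.
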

\begin{remark}
    Lemma~\ref{lem:aos_lem} follows directly from Lemma 4 in \citet{schmidt2020nonparametric} by taking $\epsilon = 1$ and $P_X$ as the uniform distribution on $[0,1]^d$, and identifying $\widehat{f}$ with the empirical risk minimization estimator given in \eqref{eq:est_ERM}.
\end{remark}

\subsection{Covering number bounds}

Lemma~\ref{lem:aos_lem} characterizes the trade-off between approximation accuracy and the complexity of the model class, as measured by the covering number. We now provide an upper bound for the covering number of the model class defined in Section~\ref{sec:preliminary}.

\begin{lemma}[Covering number bound]
    Let $\phi$ satisfy Assumption~\ref{ass:Lip}.  The covering number of $\mathcal{H}^{\phi,L}(d, 1, W, B)$ with input $\bx \in [0,1]^d$
    can be bounded by 
    \begin{equation}
        \log \mathcal{N}(\tau, \mathcal{H}^{\phi,L}(d, 1, M, B), \left\|\cdot\right\|_\infty) \leq 2 (L + d) M^2 \log \left( \frac{4^{L+1} d( \max \{\|\phi\|_{\mathrm{Lip}},1 \} M)^{L} B^{L+1}}{\tau}\right),
        \label{eq:covering_entropy_bound}
    \end{equation}
    for $B \geq \max \left\{ 1, \frac{|\phi(0)|}{\max\{\|\phi\|_{\mathrm{Lip}},1\}(d+1)}\right\}$.
    \label{lem:covering_number}
\end{lemma}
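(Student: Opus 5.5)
We plan to prove the covering bound by the standard parameter-perturbation argument: show that the realization map from parameters to functions on $[0,1]^d$ is Lipschitz with respect to $\|\cdot\|_\infty$ on the parameter box $[-B,B]^P$, and then discretize that box with a uniform grid.

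First, count the parameters. The first layer contributes $M d + M$ parameters, the $L-2$ middle layers contribute $M^2+M$ each, and the last layer contributes $M+1$. Hence the total $P$ satisfies $P \le (L+d)M^2$ once $M\ge 2$, and the factor $2$ in \eqref{eq:covering_entropy_bound} leaves slack for edge cases.

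Second, bound the hidden activations uniformly over $\bx\in[0,1]^d$. Write $\Lambda=\max\{\|\phi\|_{\mathrm{Lip}},1\}$ and let $h_\ell$ be the output of layer $\ell$. Since $|\phi(t)|\le |\phi(0)|+\|\phi\|_{\mathrm{Lip}}|t|$, we get
\[
\|h_1\|_\infty \le |\phi(0)| + \Lambda B(d+1),
\]
and, inductively, $\|h_\ell\|_\infty \le |\phi(0)| + \Lambda B(M\|h_{\ell-1}\|_\infty+1)$. The hypothesis $B\ge |\phi(0)|/(\Lambda(d+1))$ lets us absorb $|\phi(0)|$, giving something like
\[
\|h_\ell\|_\infty \le 2(d+1)(2\Lambda M B)^{\ell}.
\]

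Third, run the perturbation estimate. Take two parameter sets with $\max|\theta-\theta'|\le \eta$. Telescoping layer by layer, the difference at layer $\ell$ obeys
\[
\|h_\ell-h'_\ell\|_\infty \le \Lambda\bigl(M B\|h_{\ell-1}-h'_{\ell-1}\|_\infty + \eta(M\|h_{\ell-1}\|_\infty+1)\bigr).
\]
Unrolling this recursion over $L$ layers and inserting the activation bounds gives
\[
\|f_\theta-f_{\theta'}\|_\infty \le L\,\eta\, C\, d\,(\Lambda M)^{L} B^{L},
\]
with a harmless constant. The factor $L$ can be absorbed into $4^{L}$ by using $L\le 2^L$.

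Finally, discretize. Choose the grid spacing $\eta=\tau/(\text{Lipschitz constant})$. The number of grid points is $(2B/\eta+1)^P\le (4B/\eta)^P$, so
\[
\log\mathcal{N}\le P\log\bigl(4^{L+1}d(\Lambda M)^L B^{L+1}/\tau\bigr).
\]
The main obstacle is bookkeeping the constants: we must match exactly the stated $4^{L+1}$ and $B^{L+1}$, and handle the $d$ versus $M$ fan-in in the first layer. If the constants do not align, we would loosen intermediate bounds using $M\ge d$-free estimates, since the factor $2$ in front gives room.
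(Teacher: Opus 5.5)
Your proposal is correct and takes essentially the same route as the paper: an inductive bound on the hidden activations that absorbs $|\phi(0)|$ using the lower bound on $B$, a layer-by-layer perturbation recursion that bounds the parameter-to-output Lipschitz constant by $4^{L}d(\Lambda MB)^{L}$, and a uniform grid on $[-B,B]^P$ combined with $P\le 2(L+d)M^2$. One small slip: the claim $P=(L-2)M^2+(L+d)M+1\le (L+d)M^2$ is false when $L$ is large relative to $M$ (e.g.\ $d=1$, $M=2$, $L=10$ gives $55>44$), so the factor $2$ is consumed by the parameter count and is not spare slack; this is harmless, because your recursion already gives a Lipschitz constant below $4^{L}d(\Lambda M)^{L}B^{L}$ with room to spare, so the fallback is never needed.
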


\begin{proof}
    Without loss of generality we assume $\|\phi\|_{\mathrm{Lip}} \ge 1$.
    Now suppose that a pair of different two networks $g, \widehat{g} \in \mathcal{H}^{\phi,L}(d, 1, M,B)$ given by 
    \begin{equation*}
        g(\bx) = (\bW_{L} \phi(\cdot) + \bb_{L}) \circ \cdots \circ (\bW_1 \bx + \bb_{1}), \quad \widehat{g}(\bx) = (\widehat{\bW}_{L} \phi(\cdot) + \widehat{\bb}_{L}) \circ \cdots \circ (\widehat{\bW}_1 \bx + \widehat{\bb}_{1})
    \end{equation*}
    with
    \begin{equation*}
        \|\bW_{l} - \widehat{\bW}_l\|_{\infty, \infty} \leq \varrho, \quad \|\bb_l - \widehat{\bb}_l\|_{\infty} \leq \varrho, \quad 1 \leq l \leq L.
    \end{equation*}
    For the network $g, \widehat{g}$, we recurrently define $\{\bz_l\}_{l=0}^L, \{\widehat{\bz}_l\}_{l=0}^L$ as 
    \begin{equation*}
        \begin{aligned}
            & {\bz}_0 \coloneqq  \bx \in [0,1]^d, \quad \bz_1 \coloneqq  \bW_1 \bz_0 + \bb_1, \quad  \bz_l \coloneqq  \bW_l \phi(\bz_{l-1}) + \bb_l, \quad \text{for } 2 \le l \le L, \\
            & \widehat{\bz}_0 \coloneqq  \bx \in [0,1]^d, \quad \widehat{\bz}_1 \coloneqq  \widehat{\bW}_1 \widehat{\bz}_0 + \widehat{\bb}_1, \quad  \widehat{\bz}_l \coloneqq  \widehat{\bW}_l \phi(\widehat{\bz}_{l-1}) + \widehat{\bb}_l, \quad \text{for } 2 \le l \le L,
        \end{aligned}
    \end{equation*}
    The network outputs are then $g(\bx) = \bz_L, \widehat{g}(\bx) = \widehat{\bz}_L$. Firstly, we prove by induction that 
    \begin{equation*}
        \| \phi (\widehat{\bz}_l)\|_{\infty} \leq d(4 \|\phi\|_{\mathrm{Lip}}B)^{l} M^{l-1}, \quad 1 \leq l \leq L-1.
    \end{equation*}
    For the base case $l=1$, we have 
    \begin{equation*}
        \begin{aligned}
            \|\phi(\widehat{\bz}_1)\|_{\infty}  & \leq \|\phi\|_{\mathrm{Lip}} \|\widehat{\bz}_1\|_{\infty} + |\phi(0)| \\
            & \leq \|\phi\|_{\mathrm{Lip}} \left( \| \widehat{\bW}_1\|_{1, \infty} \|\bx\|_{\infty} + \|\widehat{\bb}_1\|_{\infty}\right) + |\phi(0)|  \\
            & \leq \|\phi\|_{\mathrm{Lip}}(d+1) B + |\phi(0)|  \leq 2 \|\phi\|_{\mathrm{Lip}}(d+1)B \leq 4 \|\phi\|_{\mathrm{Lip}} d B,
        \end{aligned}
    \end{equation*}
    For inductive steps, assume that for $k \leq l$, the inequality $\| \phi (\widehat{\bz}_k)\|_{\infty} \leq d(4 \|\phi\|_{\mathrm{Lip}}B)^{k} W^{k-1}$ holds. For $k = l+1$, we have 
    \begin{equation*}
        \begin{aligned}
            \| \phi (\widehat{\bz}_{l+1})\|_{\infty} & \leq \|\phi\|_{\mathrm{Lip}} \left( \| \widehat{\bW}_{l+1}\|_{1, \infty} \| \phi (\widehat{\bz}_l)\|_{\infty} + \|\widehat{\bb}_{l+1}\|_{\infty}\right) + |\phi(0)|  \\
            & \leq \|\phi\|_{\mathrm{Lip}} \left( d(4 \|\phi\|_{\mathrm{Lip}} B)^{l} M^{l-1} \times (MB) + B \right)  + |\phi(0)| \\
            & \leq 2 \|\phi\|_{\mathrm{Lip}} \left( d(4 \|\phi\|_{\mathrm{Lip}} M)^{l} B^{l+1} + B \right) \leq d (4 \|\phi\|_{\mathrm{Lip}} B)^{l+1} M^{l}.
        \end{aligned}
    \end{equation*}
    Thus, by induction, the bound holds for all $1 \leq l \leq L-1$.

    We now bound the error between two networks. In particular, we will prove by induction that
    \begin{equation*}
        \|\bz_{l} - \widehat{\bz}_l\|_{\infty} \leq 2d (4 \|\phi\|_{\mathrm{Lip}} M B)^{l-1}  \varrho.
    \end{equation*}
    For the base case $l=1$, we have:
    \begin{equation*}
        \begin{aligned}
            \|\bz_1 - \widehat{\bz}_1\|_{\infty} & =  \left\|(\bW_1 - \widehat{\bW}_1)\bx + (\bb_1 - \widehat{\bb}_1)\right\|_\infty \\
            & \leq \left\| \bW_1 - \widehat{\bW}_1\right\|_{1, \infty} \|\bx\|_{\infty} + \| \bb_1 - \widehat{\bb}_1 \|_{\infty} \leq (d+1) \varrho < 2 d M \varrho.
        \end{aligned}
    \end{equation*}
    For inductive steps, assume that for $k \leq l$, the inequality  $\|\bz_{l} - \widehat{\bz}_l\|_{\infty} \leq (4 \|\phi\|_{\mathrm{Lip}})^{l} M^{l} B^{l-1} \varrho$ holds. For $k = l+1$, we have 
    \begin{equation*}
        \begin{aligned}
            \left\|\bz_{l+1} - \widehat{\bz}_{l+1}\right\|_\infty &= \left\|\bW_{l+1} \phi(\bz_{l}) + \bb_{l+1} - (\widehat{\bW}_{l+1} \phi(\widehat{\bz}_{l}) + \widehat{\bb}_{l+1})\right\|_\infty \\
            &\le \left\|\bW_{l+1}\right\|_{1,\infty} \left\|\phi(\bz_{l}) - \phi(\widehat{\bz}_{l})\right\|_\infty + \left\|\bW_{l+1} - \widehat{\bW}_{l+1}\right\|_{1, \infty} \left\|\phi(\widehat{\bz}_{l})\right\|_\infty + \left\|\bb_{l+1} - \widehat{\bb}_{l+1}\right\|_\infty \\
            &\le (MB) \|\phi\|_{\mathrm{Lip}} \left\|\bz_{l} - \widehat{\bz}_{l}\right\|_\infty + (M\varrho) \left\|\phi(\widehat{\bz}_{l})\right\|_\infty + \varrho, \\
            & \leq (MB) \|\phi\|_{\mathrm{Lip}} \times 2d (4\|\phi\|_{\mathrm{Lip}} M B)^{l-1} \varrho + (M \varrho) \times  d (4 \|\phi\|_{\mathrm{Lip}} B)^{l} M^{l-1}+ \varrho \\
            & \leq 2 d (4\|\phi\|_{\mathrm{Lip}} MB )^{l}  \varrho.
        \end{aligned}
    \end{equation*}
    Thus, by induction, the approximation bound holds for all $1 \leq l \leq L$. Then by choosing $\varrho = \frac{\tau}{ 2d(4 \|\phi\|_{\mathrm{Lip}} MB)^{L}}$, we have 
    \begin{equation*}
        \|g - \widehat{g}\|_{L^{\infty}([0,1]^d)} \leq \|\bz_L - \widehat{\bz}_L\|_{\infty} \leq (4\|\phi\|_{\mathrm{Lip}} )^{L} M^{L} B^{L-1} \varrho \leq \tau. 
    \end{equation*}
    The total number parameters for $g, \widehat{g}$ is given by 
    \begin{equation*}
        \begin{aligned}
            P  = (M d + M) + \sum_{l=2}^{L-1} (M^2 + M) + (1 \cdot M + 1) = (L - 2) M^2 + (L + d ) M + 1 \leq 2 (L + d) M^2.
        \end{aligned}
    \end{equation*}
    Therefore, the covering number is bounded by 
    \begin{equation*}
        \mathcal{N}(\tau, \mathcal{H}^{\phi,L}(d, 1, M, B), \left\|\cdot\right\|_\infty) \leq \left( \frac{2B}{\varrho}\right)^{P} \leq \left( \frac{4^{L+1} d ( \|\phi\|_{\mathrm{Lip}} M)^{L} B^{L+1}}{\tau}\right)^{2(L+d) M^2},
    \end{equation*}
    which implies that 
    \begin{equation*}
        \log \mathcal{N}(\tau, \mathcal{H}^{\phi,L}(d, 1, M, B), \left\|\cdot\right\|_\infty)  \leq 2 (L+d) M^2 \log \left( \frac{4^{L+1}d( \|\phi\|_{\mathrm{Lip}} M)^{L} B^{L+1}}{\tau}\right),
    \end{equation*}
    as stated in~\eqref{eq:covering_entropy_bound}. This concludes the proof.
\end{proof}

\subsection{Proof of Theorem \ref{thm:sample-com} (Risk Bound)}
By Lemma~\ref{lem:aos_lem}, together with the approximation bound from Theorem~\ref{thm:app_L2} and the complexity estimate for our model class in Lemma~\ref{lem:covering_number}, we prove Theorem~\ref{thm:est} as follows.
\begin{proof}[Proof of Theorem~\ref{thm:est}] The estimator $\mathbb{T}_{F} \widehat{f}_n$ obtained by~\eqref{eq:ERM} can be interpreted as the following excess risk minimization estimator in  $\mathbb{T}_{F} \mathcal{H}^{\phi,L}(d, 1, M_n, B_n)$
\begin{equation*}
    \mathbb{T}_{F} \widehat{f}_n = \argmin_{h \in \mathbb{T}_{F} \mathcal{H}^{\phi,L}(d, 1, M_n, B_n)} \frac{1}{n} \; \sum_{i=1}^n \left(y_i - h(\bx_i)\right)^2,
\end{equation*}
where $\mathbb{T}_{F} \mathcal{H}^{\phi,L}(d, 1, M, B)$ is defined as 
\begin{equation*}
    \mathbb{T}_{F} \mathcal{H}^{\phi,L}(d, 1, M, B) = \left\{ \mathbb{T}_{F} f: f \in \mathcal{H}^{\phi,L}(d, 1, M, B) \right\}.
\end{equation*}
For sufficiently small $\epsilon>0$, invoking the $L^{\infty}$ approximation result established in Theorem~\ref{thm:app_Linfty} with $F = 2$ and recalling that $f^{\star}$ is continuous on $[0,1]^d$, 
we obtain the following approximation bound 
\begin{equation}
    \inf_{h \in \mathbb{T}_{F} \mathcal{H}^{\phi,7}(d, 1, M_{\epsilon}, B_\epsilon) } \|h - f^{\star}\|_{L^2(\rho)} \leq \epsilon,
    \label{eq:app-TF}
\end{equation}
provided that
\begin{equation*}
     M_{\epsilon} \eqsim \left( \frac{1}{\epsilon}\right)^{\frac{d}{2s}}, \quad B_{\epsilon} \eqsim \left( \frac{1}{\epsilon}\right)^{\max \left\{ \frac{d^2}{2s}+d, \frac{d}{s}+2, \frac{d+4}{2s}+1, \lceil s \rceil, \frac{6}{s}+4\right\}}.
\end{equation*}
For $g, \widehat{g} \in \mathcal{H}^{\phi,L}(d, 1, W, B)$, we have the inequality 
\begin{equation*}
    \|\mathbb{T}_{F} g - \mathbb{T}_{F} \widehat{g}\|_{L^{\infty}([0,1]^d)} \leq \|g  - \widehat{g}\|_{L^{\infty}([0,1]^d)},
\end{equation*}
which implies the following covering number bound
\begin{equation}
    \begin{aligned}
        & \quad \; \log \mathcal{N} \left(\tau, \mathbb{T}_{F}\mathcal{H}^{\phi,7}(d, 1, M_{\epsilon}, B_{\epsilon}), \left\|\cdot\right\|_\infty \right) \\
        &\leq \log \mathcal{N} \left(\tau, \mathcal{H}^{\phi,7}(d, 1, M_{\epsilon}, B_{\epsilon}), \left\|\cdot\right\|_\infty \right) \lesssim \left( \frac{1}{\epsilon}\right)^{\frac{d}{s}} \left(\log \frac{1}{\epsilon} + \log \frac{1}{\tau}\right),
    \end{aligned}
    \label{eq:cov-TF}
\end{equation}
where the ``$\lesssim$'' is due to the covering number bound established in Lemma~\ref{lem:covering_number}. Applying Lemma~\ref{lem:aos_lem}, we obtain the following bound
\begin{equation*}
    \mathbb{E}\left[ \left\| \mathbb{T}_{F} \widehat{f}_n - f^{\star}\right\|_{L^2(\rho)}^2  
    \right] \lesssim  \epsilon^2 + \frac{1}{n} \left( \frac{1}{\epsilon}\right)^{\frac{d}{s}} \left(\log \frac{1}{\epsilon} + \log \frac{1}{\tau}\right) + \tau.
\end{equation*}
By selecting $\epsilon \eqsim n^{-\frac{s}{2s + d}}$ and $ \tau \eqsim n^{-\frac{2s}{2s+d}}$, we derive the following convergence rate for excess risk
\begin{equation*}
    \mathbb{E} \left[ \left\| \mathbb{T}_{F} \widehat{f}_n - f^{\star}\right\|_{L^2(\rho)}^2  
    \right] \lesssim n^{-\frac{2s}{2s + d}} \log n,
\end{equation*}
as stated in~\eqref{eq: gen_rate_upper}. Additionally, we obtain the bounds for $M_n$ and $B_n$:
\begin{equation*}
     M_n \eqsim    n^{\frac{d}{4s+2d}} , \quad  B_n \eqsim n^{\max \left\{ \frac{d}{2}, 1, \frac{2s+d+4}{2(2s+d)}, \frac{s \lceil s\rceil}{2s+d},\frac{4s+6}{2s+d} \right\}},
\end{equation*}
as stated in ~\eqref{eq:est_LMBF}. This concludes the proof.
\end{proof}

\subsection{Optimal Risk Bound under \texorpdfstring{$\ell^2$}{l-2} Norm Constraints}

In this part we establish learning guarantees for ERM over neural networks subject to practically relevant $\ell^2$ parameter norm constraints. We begin by formally defining this hypothesis space, denoted by $\widetilde{\mathcal{H}}^{\phi}$, subject to an $\ell^2$ parameter bound:
\begin{equation}
    \begin{aligned}        &\widetilde{\mathcal{H}}^{\phi,L}(d_{\text{in}}, d_{\text{out}},M,B) = \Big\{  \bx \mapsto ( \bW_{L} \phi(\cdot) + \bb_{L}) \circ \cdots \circ (\bW_1 \bx + \bb_{1}): \\
        & \bW_1 \in\mathbb{R}^{M \times d_{\text{in}}} , \bW_L \in \mathbb{R}^{d_{\text{out}} \times M}, \bW_l \in \mathbb{R}^{M \times M}, \;  2 \leq l \leq L-1; \\
        & \; \bb_L \in \mathbb{R}^{d_{\text{out}}}, \bb_l \in \mathbb{R}^{M}, \;  1 \leq l \leq L-1; \; \sqrt{\sum_{l} \|\bW_l\|_F^2 + \|\bb_l\|_2^2} \leq B  \Big\}.
    \end{aligned}
    \label{eq:model-class-l2}
\end{equation}
Subsequently, we define the estimator $\widetilde{f}_n$ obtained by ERM over such class as
\begin{equation} 
    \tilde{f}_n = \argmin_{f \in \widetilde{\mathcal{H}}^{\phi,L}(d, 1,M_n,B_n)} \frac{1}{n} \sum_{i=1}^n \Big( y_i -(\mathbb{T}_F f)(\bx_i)\Big)^2.
    \label{eq:ERM-l2}
\end{equation}
The estimation error of this estimator is characterized by the following theorem.

\begin{theorem}
    \label{thm:est-l2}
    Suppose the assumptions on $\phi$ and $f^{\star}$ from Theorem~\ref{thm:est} hold. For sufficiently large $n \in \mathbb{N}_{+}$ (depending only on $\phi, s, d$), if we choose 
    \begin{equation*}
        L = 7, \quad M_n ~\eqsim n^{\frac{d}{4s+2d}}, \quad B_n ~\eqsim n^{\frac{d}{4s+2d}+\max \left\{ \frac{d}{2}, 1, \frac{2s+d+4}{2(2s+d)}, \frac{s \lceil s\rceil}{2s+d},\frac{4s+6}{2s+d} \right\}} , \quad F=2,
    \end{equation*}
    and let $\widetilde{f}_n$ be the estimator obtained via ~\eqref{eq:ERM-l2}, then 
    \begin{equation}
        \mathbb{E} \left[ \left\| \mathbb{T}_{F} \widetilde{f}_n - f^{\star}\right\|_{L^2(\rho)}^2  
        \right] \lesssim  n^{-\frac{2s}{2s + d}} \log n.
        \label{eq: gen_rate_upper-l2}
    \end{equation}
\end{theorem}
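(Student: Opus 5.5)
The plan is to reduce to the proof of Theorem~\ref{thm:est}, using two norm comparisons between the $\ell^\infty$-constrained class $\mathcal{H}^{\phi,L}$ and the $\ell^2$-constrained class $\widetilde{\mathcal{H}}^{\phi,L}$.

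\textbf{Step 1: approximation transfers to the $\ell^2$ class.} The approximator from Theorem~\ref{thm:app_Linfty} has every parameter bounded by $B_\epsilon$. The total parameter count is $P\le 2(L+d)M_\epsilon^2$, so the Euclidean norm of its parameter vector is at most $\sqrt{P}\,B_\epsilon\lesssim M_\epsilon B_\epsilon$. Hence
\[
\mathcal{H}^{\phi,7}(d,1,M_\epsilon,B_\epsilon)\subset \widetilde{\mathcal{H}}^{\phi,7}(d,1,M_\epsilon,C M_\epsilon B_\epsilon)
\]
for a constant $C$ depending only on $d$. With $\epsilon\eqsim n^{-s/(2s+d)}$ we have $M_\epsilon\eqsim n^{d/(4s+2d)}$. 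This explains the extra factor $n^{d/(4s+2d)}$ in $B_n$, so the approximation bound~\eqref{eq:app-TF} holds verbatim for the $\ell^2$ class. The continuity of $f^\star$ and the truncation at $F=2$ are handled exactly as before.

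\textbf{Step 2: covering numbers.} Conversely, an $\ell^2$ bound $B$ implies that every entry has absolute value at most $B$. This gives
\[
\widetilde{\mathcal{H}}^{\phi,L}(d,1,M,B)\subset\mathcal{H}^{\phi,L}(d,1,M,B).
\]
Lemma~\ref{lem:covering_number} then applies directly and yields
\[
\log\mathcal{N}\bigl(\tau,\mathbb{T}_F\widetilde{\mathcal{H}}^{\phi,7}(d,1,M_n,B_n),\|\cdot\|_\infty\bigr)\lesssim M_n^2\log\frac{M_n B_n}{\tau}.
\]
We need $B_n\ge 1$ and the mild lower bound required by the lemma; this is where ``$n$ sufficiently large'' enters. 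Since $B_n$ is polynomial in $n$, the right-hand side is still $\lesssim n^{d/(2s+d)}(\log n+\log(1/\tau))$. The truncation is $1$-Lipschitz in sup norm, so truncating does not increase covering numbers.

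\textbf{Step 3: combine.} Apply Lemma~\ref{lem:aos_lem} to the truncated class with $F=2$, which bounds both $f^\star$ and the class. The three terms are
\begin{itemize}
\item the approximation term $\epsilon^2\eqsim n^{-2s/(2s+d)}$;
\item the complexity term $n^{-1}n^{d/(2s+d)}\log n$;
\item the discretization term $\tau$, with $\tau\eqsim n^{-2s/(2s+d)}$.
\end{itemize}
Together they give the rate~\eqref{eq: gen_rate_upper-l2}.

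\textbf{Main difficulty.} The only delicate point is Step 1. We must check that $\sqrt{P}$ is dominated by a constant times $M_\epsilon$, which holds because the depth is fixed at $7$. We must also check that the inflated norm only enters the covering bound logarithmically, so it does not affect the rate. Both checks are routine.
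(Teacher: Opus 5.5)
Your proposal is correct and follows essentially the same route as the paper. The paper uses the sandwich $\mathcal{H}^{\phi,L}(d,1,M,B)\subset\widetilde{\mathcal{H}}^{\phi,L}(d,1,M,\sqrt{P}B)\subset\mathcal{H}^{\phi,L}(d,1,M,\sqrt{P}B)$ with $P=O(M^2)$: the first inclusion transfers the approximation bound and accounts for the extra $n^{d/(4s+2d)}$ factor in $B_n$, the second transfers the covering bound, and the trade-off is then balanced exactly as in the proof of Theorem~\ref{thm:est}.
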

\begin{proof}
    We begin by establishing the inclusion relationship between the function classes. Observe that
    \begin{equation}
        \mathcal{H}^{\phi,L}(d, 1, M, B) \subset \widetilde{\mathcal{H}}^{\phi,L}(d, 1, M, \sqrt{P} B) \subset \mathcal{H}^{\phi,L}(d, 1, M, \sqrt{P} B),
        \label{eq:subset}
    \end{equation}
    where $P$ denotes the total number of parameters for networks within these classes, given by
    \begin{equation}
        P= M^2(L-2) + M(L+d) + 1 = O(M^2).
        \label{eq:para-number}
    \end{equation}
    By~\eqref{eq:app-TF}, \eqref{eq:subset} and~\eqref{eq:para-number}  recalling $f^{\star}$ is continuous on $[0,1]^d$, we have the following approximation error bound under $L^2(\rho)$ metric
    \begin{equation}
        \inf_{g \in \mathbb{T}_{F} \widetilde{\mathcal{H}}^{\phi,7}(d, 1, M_{\epsilon}, B_\epsilon) } \|g - f^{\star}\|_{L^2(\rho)} \leq  \epsilon,
    \end{equation}
    provided that 
    \begin{equation*}
        M_{\epsilon} \eqsim \left( \frac{1}{\epsilon}\right)^{\frac{d}{2s}}, \quad B_{\epsilon} \eqsim \left( \frac{1}{\epsilon}\right)^{\frac{d}{2s}+ \max \left\{ \frac{d^2}{2s}+d, \frac{d}{s}+2, \frac{d+4}{2s}+1, \lceil s \rceil, \frac{6}{s}+4\right\}}. 
    \end{equation*}
    Using the covering number bound \eqref{eq:cov-TF}  and the inclusion relationship in~\eqref{eq:subset}, we have
    \begin{equation*}
        \begin{aligned}
             &\quad \; \log \mathcal{N} \left(\tau, \mathbb{T}_{F} \widetilde{\mathcal{H}}^{\phi,7}(d, 1, M_{\epsilon}, B_{\epsilon}), \left\|\cdot\right\|_\infty \right) \\
             &\leq \log \mathcal{N} \left(\tau, \mathbb{T}_{F}{\mathcal{H}}^{\phi,7}(d, 1, M_{\epsilon}, B_{\epsilon}), \left\|\cdot\right\|_\infty \right) \lesssim \left( \frac{1}{\epsilon}\right)^{\frac{d}{s}} \left(\log \frac{1}{\epsilon} + \log \frac{1}{\tau}\right),
        \end{aligned}
    \end{equation*}
    when the input space is $[0,1]^d$.
    By following the same analysis as in the proof of Theorem~\ref{thm:est} to balance approximation error and model complexity, we establish the rate given in \eqref{eq: gen_rate_upper-l2}, provided that:
    \begin{equation*}
        M_n ~\eqsim n^{\frac{d}{4s+2d}}, \quad B_n ~\eqsim n^{\frac{d}{4s+2d}+\max \left\{ \frac{d}{2}, 1, \frac{2s+d+4}{2(2s+d)}, \frac{s \lceil s\rceil}{2s+d},\frac{4s+6}{2s+d} \right\}} .
    \end{equation*}
    This completes the proof. 
\end{proof}

\section{Supplementary Details to Section~\ref{sec:separation}}
\label{app:separation}

In this section, we establish the lower bounds on the approximation error for finite-depth neural networks employing non-smooth activation functions. Additionally, we detail the experimental setup used to compare the generalization error of smooth versus non-smooth activation functions.

\subsection{Proof of Proposition~\ref{prop:relu_lower_bound} (ReLU Lower Bound)}
\label{app:lower-bound}

We derive an \(L^2([0,1])\) lower bound for approximating functions in the Sobolev ball
\(\{f:\|f\|_{W^{s,\infty}([0,1])}\le 1\}\) by constant-depth ReLU networks. The proof consists of two parts:
(i) a direct piecewise-linear lower bound based on the quadratic test function
\(f^\star(x)=\tfrac12 x^2\), and (ii) a general lower bound for ReLU networks. 
While a similar argument for the first part appears in \citet{yarotsky2017error}, we present a self-contained and clarified proof for completeness.



\paragraph{Step 1: A local $L^2$ lower bound for linear approximation of $x^2$.}
\begin{lemma}
\label{lemma:diam}
Let $I\subset\mathbb{R}$ be a closed interval of length $l$, and let $h$ be any linear function. Then
\begin{equation}
\int_I (x^2-h(x))^2\,\dd x \;\ge\; \frac{1}{180}\,l^5.
\label{eq:app-x2L2}
\end{equation}
\end{lemma}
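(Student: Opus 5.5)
The plan is to reduce to a single normalized interval by affine change of variables, and then compute the best $L^2$ linear approximation of a quadratic explicitly via Legendre polynomials.

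\emph{Step 1 (normalize the interval).} Write $I=[c-l/2,\,c+l/2]$ and substitute $x=c+\tfrac{l}{2}t$ with $t\in[-1,1]$. Then
\[
x^2=\tfrac{l^2}{4}t^2+(\text{affine in } t),
\]
and the affine part can be absorbed into $h$, since $h$ ranges over all linear (affine) functions. The Jacobian is $\dd x=\tfrac{l}{2}\,\dd t$. Hence
\[
\int_I (x^2-h(x))^2\,\dd x=\frac{l}{2}\cdot\frac{l^4}{16}\int_{-1}^1 (t^2-\tilde h(t))^2\,\dd t
\]
for some affine $\tilde h$. It therefore suffices to show
\[
\min_{\tilde h}\int_{-1}^1 (t^2-\tilde h(t))^2\,\dd t=\frac{8}{45},
\]
because $\tfrac{l^5}{32}\cdot\tfrac{8}{45}=\tfrac{l^5}{180}$.

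\emph{Step 2 (best affine approximation on $[-1,1]$).} The minimizing $\tilde h$ is the $L^2$ orthogonal projection of $t^2$ onto $\mathrm{span}\{1,t\}$, so the minimum equals the squared norm of the component of $t^2$ orthogonal to that span. This component is
\[
t^2-\tfrac13=\tfrac23 P_2(t),
\]
where $P_2$ is the second Legendre polynomial. It is orthogonal to $1$ because $\int_{-1}^1 (t^2-\tfrac13)\,\dd t=0$, and orthogonal to $t$ by oddness.

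\emph{Step 3 (evaluate the norm).} Expanding,
\[
\int_{-1}^1 (t^2-\tfrac13)^2\,\dd t=\tfrac25-\tfrac49+\tfrac29=\tfrac{8}{45}.
\]

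\emph{Main obstacle.} There is little difficulty here; the only care needed is to verify that the affine terms produced by the change of variables are genuinely absorbed into $h$. Equality in fact holds for the optimal $h$.
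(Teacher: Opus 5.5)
Your proposal is correct and is essentially the paper's argument: both center the interval, absorb the affine part of $x^2$ into $h$, and minimize the resulting quadratic over affine functions. The paper expands the square on $[-l/2,l/2]$, drops the odd terms by symmetry and completes the square in the constant coefficient. You additionally rescale to $[-1,1]$ and identify the minimizer as the orthogonal projection onto $\mathrm{span}\{1,t\}$; this is the same computation organized via Pythagoras, and it yields the same constant $1/180$.
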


\begin{proof}
Let $I=[u,v]$ with $l=v-u$ and midpoint $m=(u+v)/2$. Define $e(x)=h(x)-x^2$. With the shift
$t=x-m$, since $h$ is linear we may write $e(t+m)=-t^2+at+b$ for some $a,b\in\mathbb{R}$. Hence
\begin{align*}
\int_I (e(x))^2\,\dd x
&=\int_{-l/2}^{l/2}(-t^2+at+b)^2\,\dd t \\
&=2\int_{0}^{l/2}\Big(t^4+(a^2-2b)t^2+b^2\Big)\,\dd t \\
&=\frac{l^5}{80}+\frac{l^3}{12}a^2-\frac{l^3}{6}b+lb^2.
\end{align*}
Completing the square in $b$ gives
\[
\int_I (e(x))^2\,\dd x
=\frac{l^5}{80}-\frac{l^5}{144}+\frac{l^3}{12}a^2
+l\Big(b-\frac{l^2}{12}\Big)^2
\;\ge\;\frac{1}{180}l^5,
\]
which proves~\eqref{eq:app-x2L2}.
\end{proof}

\paragraph{Step 2: A lower bound for piecewise-linear approximators.}

We next derive a quantitative lower bound for approximating the quadratic target using Step 1.

\begin{lemma}[Lower bound for approximating $x^2$ by ReLU networks]
\label{lem:lower_bound}
Let $f^\star(x)=\tfrac12 x^2$ on $[0,1]$. For any depth $L\ge 2$ and width $M\ge 2$,
\[
\inf_{g\in \mathcal{H}^{\mathrm{ReLU},L}(1,1,M,\infty)}
\|g-f^\star\|_{L^2([0,1])}
\;\ge\;
\frac{1}{12\sqrt{5}}\,(M+1)^{-2(L-1)}.
\]
\end{lemma}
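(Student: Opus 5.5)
The plan is to combine a breakpoint count for depth-$L$ ReLU networks with Lemma~\ref{lemma:diam}. Any $g\in\mathcal{H}^{\mathrm{ReLU},L}(1,1,M,\infty)$ is continuous and piecewise linear on $\mathbb{R}$. The first step is to bound the number of linear pieces of $g$ restricted to $[0,1]$ by $N\le (M+1)^{L-1}$. The proof is by induction over hidden layers. After the first affine map and ReLU, each of the $M$ neurons is piecewise linear with at most one breakpoint, so the first hidden layer's output vector is linear on at most $M+1$ intervals.

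For the inductive step, suppose every coordinate of layer $\ell$ is linear on a common partition of $\mathbb{R}$ into at most $p_\ell$ intervals. On each such interval, the next preactivation $\bW_{\ell+1}\phi(\cdot)+\bb_{\ell+1}$ is affine in $x$. Applying ReLU to each of its $M$ coordinates adds at most one breakpoint per coordinate inside that interval. This would give $p_{\ell+1}\le (M+1)p_\ell$ and hence $p_{L-1}\le (M+1)^{L-1}$; the final affine layer adds no breakpoints. This count is the crude one, and a slightly looser exponent such as $2(L-1)$ would also suffice. The statement's exponent $2(L-1)$ on the error suggests the authors use a cruder bound like $N\le (M+1)^{2(L-1)/\,\cdot}$; I will reconcile by using whatever count I obtain and checking that the final inequality is at least as strong.

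Next, the pieces of $g$ partition $[0,1]$ into intervals $I_1,\dots,I_N$ with lengths $l_k$ summing to $1$. On each $I_k$, $g$ is linear, so $f^\star-g=\tfrac12(x^2-2g)$ with $2g$ linear. Lemma~\ref{lemma:diam} then gives
\[
\int_{I_k}(f^\star-g)^2\,\dd x\ \ge\ \tfrac14\cdot\tfrac1{180}\,l_k^5.
\]
Summing over $k$ and applying Hölder/Jensen, $\sum_k l_k^5\ge N\,(1/N)^5=N^{-4}$. Therefore
\[
\|g-f^\star\|_{L^2}^2\ge \frac{1}{720}N^{-4},
\]
that is, $\|g-f^\star\|_{L^2}\ge \frac{1}{12\sqrt5}N^{-2}$, since $\sqrt{720}=12\sqrt5$. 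Substituting $N\le (M+1)^{L-1}$ yields $\frac{1}{12\sqrt5}(M+1)^{-2(L-1)}$, exactly the claimed constant and exponent. This confirms the per-layer factor $(M+1)$.

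The main obstacle is the breakpoint-counting induction. Within an interval where layer $\ell$ is linear, each next-layer preactivation is an affine function of $x$ and crosses zero at most once, unless it is identically zero, in which case it contributes no breakpoint. One must also make sure the partitions are taken jointly across all neurons, so that the refinement multiplies counts by at most $(M+1)$ rather than $(M+1)^M$. The key point is that the interval count for the whole layer's output vector is tracked, not a per-neuron count. Finally, I take the infimum over $g$ to conclude.
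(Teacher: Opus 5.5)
Your proposal is correct and is essentially the paper's proof: the layerwise refinement count $K\le (M+1)^{L-1}$, Lemma~\ref{lemma:diam} rescaled by $\tfrac14$ to get $\tfrac{1}{720}l_k^5$ on each piece, and Jensen's $\sum_k l_k^5\ge K^{-4}$ give exactly $\tfrac{1}{12\sqrt5}(M+1)^{-2(L-1)}$. Drop the aside that a piece count with exponent $2(L-1)$ ``would also suffice'': a count of $(M+1)^{2(L-1)}$ would only yield $(M+1)^{-4(L-1)}$, so the sharp count $(M+1)^{L-1}$ is what the stated bound needs, and it is the count you actually use.
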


\begin{proof}
Any ReLU network $g$ is a continuous piecewise linear function on $[0,1]$. Therefore there exists
a partition $\{I_j\}_{j=1}^K$ of $[0,1]$ into intervals such that $g$ is linear on each $I_j$.
In one dimension, each hidden layer of width $M$ can introduce at most $M$ new breakpoints within
each interval produced by the previous layers, so the total number of linear pieces satisfies
\begin{equation}
K \;\le\; (M+1)^{L-1}.
\label{eq:K_bound}
\end{equation}
Since $\sum_{j=1}^K |I_j|=1$ and $x\mapsto x^5$ is convex, Jensen's inequality yields
\begin{equation}
\sum_{j=1}^K |I_j|^5 \;\ge\; K\Big(\frac{1}{K}\sum_{j=1}^K |I_j|\Big)^5 \;=\; \frac{1}{K^4}.
\label{eq:Il-bound}
\end{equation}
On each interval $I_j$, $g$ is linear, so applying Lemma~\ref{lemma:diam} and scaling by $(1/2)^2$
gives
\[
\int_{I_j}\big(f^\star(x)-g(x)\big)^2\,\dd x
=\frac14\int_{I_j}\big(x^2-\tilde h_j(x)\big)^2\,\dd x
\;\ge\;\frac{1}{720}|I_j|^5,
\]
for some linear $\tilde h_j$ (namely $\tilde h_j=2g|_{I_j}$). Summing over $j$ and using
\eqref{eq:K_bound}--\eqref{eq:Il-bound}, we obtain
\[
\|g-f^\star\|_{L^2([0,1])}^2
=\sum_{j=1}^K\int_{I_j}\big(g(x)-f^\star(x)\big)^2\,\dd x
\;\ge\;\frac{1}{720}\sum_{j=1}^K |I_j|^5
\;\ge\;\frac{1}{720}\,K^{-4}
\;\ge\;\frac{1}{720}\,(M+1)^{-4(L-1)}.
\]
Taking square roots proves the claim.
\end{proof}

\paragraph{Step 3: A lower bound for ReLU networks.}
The following is a specialization of the lower bound in \citet{lu2021deep} and \citet{siegel2023optimal}. We restate it here for completeness.

\begin{lemma}[\citet{lu2021deep, siegel2023optimal}]
\label{lem:another-lowerbound}
Fix $L\ge 2$ and let $s>0$. Then there exists a constant $C_{s,L}>0$ such that for every $M\ge 2$,
\[
\sup_{\|f^\star\|_{W^{s,\infty}([0,1])}\le 1}\;
\inf_{g\in \mathcal{H}^{\mathrm{ReLU},L}(1,1,M,\infty)}
\|g-f^\star\|_{L^2([0,1])}
\;\ge\;
C_{s,L}\,(M^2\log M)^{-s}.
\]
\end{lemma}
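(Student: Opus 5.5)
The plan is to combine a VC-dimension (pseudo-dimension) upper bound for constant-depth ReLU networks with the standard bump-function packing argument, as in \citet{lu2021deep} and \citet{siegel2023optimal}.

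\textbf{Step 1: pseudo-dimension bound.} For $\mathcal{H}^{\mathrm{ReLU},L}(1,1,M,\infty)$, the number of parameters is $W\lesssim LM^2$. The bound of Bartlett--Harvey--Liaw--Mehrabian gives pseudo-dimension $\mathrm{Pdim}\lesssim W L\log W$. For fixed $L$ this is $\lesssim M^2\log M$, with the constant depending on $L$. Because the parameter magnitudes are unbounded, this combinatorial bound is the right complexity measure; norm-based bounds are not available here.

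\textbf{Step 2: packing construction.} Fix a smooth bump $\varphi\in C_c^\infty((0,1))$ with $\varphi\ge0$ and $\int\varphi^2>0$. Let $N$ be an integer to be chosen and $x_k=(k-1)/N$. For signs $\sigma\in\{\pm1\}^N$, set
\[
f_\sigma(x)=c\,N^{-s}\sum_{k=1}^N\sigma_k\varphi\big(N(x-x_k)\big).
\]
The supports are disjoint, so $\|f_\sigma\|_{W^{s,\infty}}\le 1$ for a suitable $c=c(s,\varphi)$. For fractional $s$ this uses the standard scaling estimate of the Hölder seminorm across neighboring bumps.

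\textbf{Step 3: from good approximation to shattering.} Suppose every $f_\sigma$ admits some $g_\sigma$ with $\|g_\sigma-f_\sigma\|_{L^2}\le\eta$, where $\eta= c' N^{-s}N^{-1/2}$ and $c'$ is small. Then on at least half of the cells the local $L^2$ error is below a fixed fraction of the bump energy $cN^{-s}\|\varphi(N\cdot)\|_{L^2(\text{cell})}\asymp N^{-s-1/2}$. On those cells, $g_\sigma$ must take the sign of $\sigma_k$ at some point of the cell.

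The sampling points $t_k$ depend on $\sigma$, which creates an apparent obstacle. I would resolve it as Siegel does, in three moves. First, discretize each cell into finitely many candidate points. Second, use the averaging (Sauer--Shelah counting) argument: the number of sign patterns the class realizes on a fixed finite set $S$ is at most $(e|S|/\mathrm{Pdim})^{\mathrm{Pdim}}$. Third, show that realizing correct signs on a constant fraction of cells for all $2^N$ patterns $\sigma$ forces $\mathrm{Pdim}\gtrsim N/\log(\cdot)$. A cleaner alternative is to show that $L^2$ closeness together with a Lipschitz-free argument is unnecessary: instead, take the cell-averages of $g_\sigma\cdot\varphi(N(\cdot-x_k))$. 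These averages are signs of linear functionals of a network, which can be realized as a network of depth $L$ plus a linear output layer integrated numerically on a fine grid. This is the step I expect to be the main obstacle, and I would follow \citet{siegel2023optimal}'s treatment directly.

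In either form, the conclusion is $N\lesssim \mathrm{Pdim}\lesssim M^2\log M$.

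\textbf{Step 4: conclude.} Choose $N\asymp C M^2\log M$ with $C$ large enough that Step 3 fails. Then some $f_\sigma$ has error $\ge \eta\gtrsim N^{-s-1/2}$. Sharpening the choice of $\eta$ (using the $L^2$ error summed over a constant fraction of cells, each contributing $\asymp N^{-2s-1}$, which gives total squared error $\asymp N^{-2s}$) upgrades this to error $\gtrsim N^{-s}=C_{s,L}(M^2\log M)^{-s}$, as claimed.
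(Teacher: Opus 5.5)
Your architecture is the same as in the works the paper cites: the Bartlett--Harvey--Liaw--Mehrabian bound $\mathrm{Pdim}\lesssim M^2\log M$ at fixed $L$, a packing of $N$ disjoint bumps of height $\asymp N^{-s}$, and a conversion of $L^2$ closeness into a shattering statement. The paper gives no proof of its own and simply invokes \citet{lu2021deep} and \citet{siegel2023optimal}, so deferring the conversion to Siegel is no worse than what the paper does.

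However, the version of Step 3 you actually sketch would not close.

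\textbf{The extracted property is too weak.} You extract only that $g_\sigma$ has the sign of $\sigma_k$ at \emph{some} point of each good cell. That carries no decodable information: a single function that changes sign inside every cell has this property for all $2^N$ patterns at once. So no lower bound on the number of sign patterns realized on a discretized set $S$ can follow from it. Even setting that aside, the discretization costs an extra $\log M$ in Sauer--Shelah, because the right-sign set can be split into $\sim (M+1)^{L-1}$ tiny intervals. This is presumably the source of your $N/\log(\cdot)$, which you then silently drop.

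\textbf{``At least half the cells'' is also too weak.} Hamming balls of radius $N/2$ already contain about $2^{N-1}$ vectors, so agreement on only half the coordinates supports no counting argument. You need the disagreement fraction to be a small constant.

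\textbf{The ``cleaner alternative'' fails as stated.} A Riemann sum $\sum_q w_q\, g(x+y_q)$ is not the original network followed by a linear layer. It is a weight-shared network with $Q$ times as many units, and since the weights are unbounded, no fixed $Q$ gives uniform quadrature accuracy.

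What is needed is to keep the full $L^2$ information on a plateau where the bump is constant. Take $\varphi\equiv 1$ on $[\tfrac14,\tfrac34]$ and assume from the outset $\|g_\sigma-f_\sigma\|_{L^2}\le c'N^{-s}$. This is the correct $\eta$, not $c'N^{-s-1/2}$; your Step 4 patch amounts to exactly this. At a point $x_k+z$ with $z\in J:=[\tfrac{1}{4N},\tfrac{3}{4N}]$, a wrong sign forces $|g_\sigma-f_\sigma|\ge cN^{-s}$, so
\[
\frac{1}{|J|}\int_{J}\#\bigl\{k:\ \mathrm{sign}\,g_\sigma(x_k+z)\neq\sigma_k\bigr\}\,dz\;\le\;2N\Big(\frac{c'}{c}\Big)^2=:\gamma N .
\]
Averaging also over $\sigma$ produces one common shift $z^\ast$ with the following property. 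On the fixed set $T=\{x_k+z^\ast\}_{k=1}^N$, at least $2^{N-1}$ patterns $\sigma$ lie within Hamming distance $2\gamma N$ of a pattern realized by the class. Sauer--Shelah then gives
\[
(eN/\mathrm{Pdim})^{\mathrm{Pdim}}\,2^{H(2\gamma)N}\ge 2^{N-1},
\]
where $H$ is the binary entropy. For $c'\ll c$ this forces $N\le K_0(\gamma)\,\mathrm{Pdim}$ with no logarithmic loss; the only logarithm is $\log(N/\mathrm{Pdim})$. No discretization is needed. Choosing $N\asymp K_0M^2\log M$ above this threshold then yields the claimed $C_{s,L}(M^2\log M)^{-s}$.
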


\begin{proof}[Proof of Proposition~\ref{prop:relu_lower_bound}]
By Lemma~\ref{lem:lower_bound}, choosing $f^\star(x)=\tfrac12 x^2$ (which belongs to the Sobolev
unit ball under the standard $W^{s,\infty}$ normalization up to a constant factor) yields
\[
\sup_{\|f^\star\|_{W^{s,\infty}([0,1])}\le 1}\;
\inf_{g\in \mathcal{H}^{\mathrm{ReLU},L}(1,1,M,\infty)}
\|g-f^\star\|_{L^2([0,1])}
\;\ge\;
c\,(M+1)^{-2(L-1)}
\]
for some $c>0$.
On the other hand, Lemma~\ref{lem:another-lowerbound} gives
\[
\sup_{\|f^\star\|_{W^{s,\infty}([0,1])}\le 1}\;
\inf_{g\in \mathcal{H}^{\mathrm{ReLU},L}(1,1,M,\infty)}
\|g-f^\star\|_{L^2([0,1])}
\;\ge\;
C_{s,L}\,(M^2\log M)^{-s}.
\]
Combining the two bounds and using $M+1\le 2M$ and $\log M\ge \log 2$ for $M\ge 2$, we obtain
\[
\sup_{\|f^\star\|_{W^{s,\infty}([0,1])}\le 1}\;
\inf_{g\in \mathcal{H}^{\mathrm{ReLU},L}(1,1,M,\infty)}
\|g-f^\star\|_{L^2([0,1])}
\;\ge\;
C'_{s,L}\,(M\log M)^{-2\min\{L-1,s\}},
\]
after absorbing constants into $C'_{s,L}$.
\end{proof}

\subsection{Setup for Generalization Experiments}
\label{subapp:est_lower}

In this section, we describe the experimental setup of the generalization separation in detail.

\paragraph{Data generation and target function.}
We consider a target $f^\star:[0,1]^d\to\mathbb{R}$ of the form
\[
f^\star(\bx)=\sum_{k=1}^{K} a_k \cos(\bw_k^\top \bx + b_k).
\]
Throughout, we fix the input dimension $d=5$ and the number of random Fourier features $K=50$. The parameters are sampled independently as
\begin{itemize}
    \item $\bw_k \sim \mathcal{N}(0,I_d)$,
    \item $b_k \sim \mathcal{U}(0,2\pi)$,
    \item $a_k \sim \mathcal{N}(0,1)$.
\end{itemize}
For a given sample size $n$, the training dataset $\{(\bx_i,y_i)\}_{i=1}^n$ is generated by sampling $\bx_i \sim \mathcal{U}([0,1]^d)$ and setting
\[
y_i = f^\star(\bx_i) + \xi_i,\qquad \xi_i \sim \mathcal{N}(0,\sigma^2),
\]
with $\sigma=0.1$. We evaluate generalization performance across sample sizes
\[
n \in \{1024, 1448, 2048, 2896, 4096, 5792\}.
\]

\paragraph{Model architecture.}
We use a fully connected network with a single hidden layer to compare different activation functions. The hidden width is fixed to $M=6000$ for all experiments. We compare the non-smooth ReLU activation with two smooth activations, namely GELU and $\tanh$.


\paragraph{Training and hyperparameter tuning.}
We minimize the empirical mean-squared error (MSE)
\[
\frac{1}{n}\sum_{i=1}^{n}\bigl(y_i - f(\bx_i)\bigr)^2,
\]
where $f$ denotes the neural network predictor. Optimization is carried out using full-batch Adam for $50{,}000$ epochs with a cosine learning-rate decay schedule. For each sample size $n$, we perform a grid search over
\begin{itemize}
    \item learning rates $\eta \in \{10^{-4},10^{-3},10^{-2}\}$,
    \item $L^2$ regularization coefficients $\lambda \in \{10^{-5}, 5\times 10^{-5}, 10^{-4}, 5\times 10^{-4}, 10^{-3}, 5\times 10^{-3}, 10^{-2}, 5\times 10^{-2}, 10^{-1}\}$.
\end{itemize}
The hyperparameter pair $(\eta,\lambda)$ is selected by the smallest generalization error on a noiseless test set. We repeat the entire procedure over $5$ independent runs and report the average of the resulting best generalization errors. This protocol mitigates sensitivity to hyperparameter choices and yields a robust estimate of the empirical convergence behavior.

\end{document}